\documentclass{article} 
\usepackage{arxiv}
\usepackage[utf8]{inputenc} 
\usepackage[T1]{fontenc}    
\usepackage{hyperref}       
\usepackage{url}            
\usepackage{booktabs}       
\usepackage{amsfonts}       
\usepackage{nicefrac}       
\usepackage{microtype}      
\usepackage{xcolor}         
\usepackage{graphicx}
\usepackage{subcaption}
\usepackage{wrapfig}
\usepackage{enumitem}
\usepackage{amsmath}
\usepackage{amsthm}
\usepackage{amssymb}
\usepackage{algorithm,algorithmic}
\usepackage{comment}

\usepackage{subcaption}
\usepackage{wrapfig}
\usepackage{enumitem}
\usepackage{amsmath}
\usepackage{amsthm}
\usepackage{amssymb}
\usepackage{algorithm,algorithmic}
\usepackage{multirow}
\usepackage{mathtools}
\usepackage{dsfont}
\usepackage{adjustbox}
\usepackage{array}

\usepackage{amsmath,amsfonts,bm}

\def\eqref#1{equation~\ref{#1}}

\def\1{\bm{1}}

\def\vh{{\bm{h}}}

\def\vu{{\bm{u}}}

\def\mB{{\bm{B}}}

\def\mE{{\bm{E}}}

\def\mI{{\bm{I}}}

\def\mK{{\bm{K}}}

\def\mM{{\bm{M}}}

\def\mR{{\bm{R}}}

\def\mV{{\bm{V}}}

\def\mY{{\bm{Y}}}

\DeclareMathAlphabet{\mathsfit}{\encodingdefault}{\sfdefault}{m}{sl}
\SetMathAlphabet{\mathsfit}{bold}{\encodingdefault}{\sfdefault}{bx}{n}

\def\gA{{\mathcal{A}}}

\def\gH{{\mathcal{H}}}

\def\gL{{\mathcal{L}}}

\def\gS{{\mathcal{S}}}

\def\gU{{\mathcal{U}}}
\def\gV{{\mathcal{V}}}
\def\gW{{\mathcal{W}}}
\def\gX{{\mathcal{X}}}
\def\gY{{\mathcal{Y}}}
\def\gZ{{\mathcal{Z}}}

\def\sP{{\mathbb{P}}}

\newcommand{\E}{\mathbb{E}}

\newcommand{\R}{\mathbb{R}}

\newcommand{\Var}{\mathrm{Var}}

\DeclareMathOperator*{\argmin}{arg\,min}

\usepackage{titletoc}
\usepackage{tikz}
\usetikzlibrary{arrows.meta,positioning}

\allowdisplaybreaks
\newtheorem{theorem}{Theorem}[section]
\newtheorem{lemma}[theorem]{Lemma}
\newtheorem{definition}[theorem]{Definition}

\newtheorem{corollary}[theorem]{Corollary}
\newtheorem{remark}[theorem]{Remark}

\newtheorem{assumption}[theorem]{Assumption}

\newtheorem*{algorithm_state*}{Algorithm}

\AtBeginDocument{
  \setlength{\abovedisplayskip}{6pt plus 2pt minus 2pt}
  \setlength{\belowdisplayskip}{6pt plus 2pt minus 2pt}
  \setlength{\abovedisplayshortskip}{3pt plus 1pt minus 1pt}
  \setlength{\belowdisplayshortskip}{3pt plus 1pt minus 1pt}
  \setlength{\jot}{2pt} 
}

\title{Doubly Robust Proxy Causal Learning with Neural Mean Embeddings}

\author{
\begin{tabular}{c}
Bariscan Bozkurt,
Alexandre Galashov,
Dimitri Meunier,
Zikai Shen,\\
Arthur Gretton,
Houssam Zenati
\end{tabular}
\\[0.5em]
University College London
}

\begin{document}
\maketitle

\begin{abstract}
Unobserved confounding prevents standard covariate adjustment from identifying causal response functions in observational studies. Proxy causal learning addresses this problem through bridge equations involving treatment- and outcome-inducing proxies, avoiding direct recovery of the latent confounder. Existing doubly robust proxy estimators combine outcome and treatment bridges, but typically rely on fixed kernels, sieves, or low-dimensional semiparametric models; existing neural proxy methods are more flexible, but are largely single-bridge estimators. We develop a neural doubly robust framework for proxy causal learning with continuous and structured treatments. Our method introduces a neural mean-embedding estimator for the treatment bridge, combines it with a neural outcome bridge, and estimates the doubly robust correction through a final regression stage. The framework covers population, heterogeneous, and conditional dose-response functions, yielding full response-curve estimators rather than binary-treatment effects. The algorithms use two stages for each bridge and history-aware updates of the final linear layers to stabilize stochastic multi-stage training. We prove consistency of the algorithms showing that the doubly robust error is controlled by the final averaging and regression errors together with the smaller of the outcome- and treatment-side weak-norm bridge errors. Across synthetic and image-valued benchmarks, the proposed estimators outperform existing baselines and single-bridge neural estimators, showing the benefit of combining learned outcome and treatment bridges in a doubly robust construction. Our implementation is available at \url{https://github.com/BariscanBozkurt/DRPCL-Neural-Mean-Embedding}.\looseness=-1
\end{abstract}

\section{Introduction and related works}

Causal inference from observational data is difficult when treatment assignment is confounded. Standard adjustment assumes that all relevant confounders are observed, yielding \(Y^{(a)}\perp A\mid X\) under consistency, no interference, and positivity. This condition underlies covariate adjustment and propensity-score methods \citep{Rubin1980, rosRub83, Guido_Imbens_Nonparametric}, outcome-regression methods \citep{Hill_BayesianNonparametric}, and representation-learning estimators for treatment effects \citep{pmlr_v48_johansson16, NEURIPS2018_a50abba8}. In practice, however, important determinants of treatment and outcome may be unrecorded, noisy, or only indirectly measured. Instrumental-variable methods offer one route beyond observed-confounder adjustment \citep{Reiersl1945ConfluenceAB, Stock_who_invented_IV, Newey_nonparametric_IV}, but require exclusion and independence assumptions that are often restrictive.

Proxy causal learning (PCL) uses a different source of information: proxy variables that are informative about the unobserved confounding \citep{Kuroki2014Mesurement, Miao2018Identifying, TchetgenTchetgen2024IntroProximal}. In the standard PCL setting, observed variables are partitioned into covariates \(X\), treatment proxies \(Z\), and outcome proxies \(W\). As illustrated in Figure~\ref{fig:pcl_graph_tikz}, the treatment \(A\) and outcome \(Y\) are confounded by observed covariates \(X\) and latent variables \(U\), while \(Z\) and \(W\) provide treatment-side and outcome-side information about the same latent confounding. Under conditional independence and completeness assumptions, causal effects can be identified without recovering \(U\) directly.
Identification is instead expressed through bridge functions: functions of the observed proxies, treatment, and covariates whose conditional expectations reproduce the relevant regression or weighting equations. Learning these bridges replaces latent-confounder inference by the solution of Fredholm integral equations of the first kind.

Existing PCL estimators mainly follow outcome-bridge or treatment-bridge routes. Outcome-bridge methods learn a function \(h(A,X,W)\) satisfying
\(
    \E\{Y-h(A,X,W)\mid A,X,Z\}=0,
\)
and then recover response curves by averaging or conditioning the learned bridge. 
Linear and semiparametric bridge estimators were developed in proximal causal inference \citep{TchetgenTchetgen2024IntroProximal, semiparametricProximalCausalInference}; sieve and proxy-control estimators were studied by \citet{deaner2023proxycontrolspaneldata}; and RKHS-based methods provide flexible nonparametric bridge estimators \citep{Mastouri2021ProximalCL, singh2023kernelmethodsunobservedconfounding}. Neural outcome-bridge estimators replace fixed features with learned representations, improving scalability to nonlinear and high-dimensional settings \citep{xu2021deep, kompa2022deep}. These methods estimate the regression-type bridge \(h(A,X,W)\), but leave open the neural estimation of the complementary weighting-type bridge based on \(Z\).

\begin{wrapfigure}{r}{0.36\textwidth}
\vspace{-1.2\baselineskip}
\centering
\begin{tikzpicture}[
    >=Latex,
    line width=0.95pt,
    shorten >=2pt,
    shorten <=2pt,
    every node/.style={font=\small},
    observed/.style={
        circle,
        draw=black,
        fill=yellow!70!olive!35,
        minimum size=10mm,
        inner sep=0pt
    },
    latent/.style={
        circle,
        draw=black,
        dashed,
        fill=white,
        minimum size=10mm,
        inner sep=0pt
    },
    sedge/.style={->},
    dedge/.style={
        <->,
        dashed,
        line width=1.15pt,
        dash pattern=on 4pt off 2pt
    },
    dedgexu/.style={
        <->,
        dashed,
        line width=1.35pt,
        dash pattern=on 4pt off 2pt
    }
]

\node[latent]   (U) at (0, 2.55) {$U$};
\node[observed] (Z) at (-2.10, 0.55) {$Z$};
\node[observed] (X) at (0, 0.55) {$X$};
\node[observed] (W) at (2.10, 0.55) {$W$};
\node[observed] (A) at (-1.35, -1.35) {$A$};
\node[observed] (Y) at (1.35, -1.35) {$Y$};

\draw[sedge] (U) -- (Z);
\draw[sedge] (U) -- (A);
\draw[sedge] (U) -- (Y);

\draw[sedge] (X) -- (Z);
\draw[sedge] (X) -- (W);
\draw[sedge] (X) -- (A);
\draw[sedge] (X) -- (Y);

\draw[sedge] (A) -- (Y);
\draw[sedge] (W) -- (Y);

\draw[dedgexu] (X) -- (U);
\draw[dedge]   (U) -- (W);
\draw[dedge]   (Z) -- (A);

\end{tikzpicture}
\vspace{-0.6\baselineskip}
\caption{Causal graph for PCL setting.}
\label{fig:pcl_graph_tikz}
\vspace{-1.2\baselineskip}
\end{wrapfigure}

Treatment bridges provide the complementary weighting route. For a given causal estimand, a treatment bridge \(\varphi(A,X,Z)\) is characterized by a moment equation of the form
\(
    \E\{r_0(A,X,W)-\varphi(A,X,Z)\mid A,X,W\}=0,
\)
where \(r_0\) denotes the estimand specific density ratio; for the population dose-response curve, \(r_0(a,x,w)=p_A(a)/p_{A\mid X,W}(a\mid x,w)\). This makes treatment bridges analogous to inverse-weighting functions \citep{rosRub83, Robins2000marginal}.
Semiparametric \citep{semiparametricProximalCausalInference}, minimax \citep{Kallus2021Causal}, and kernel-based treatment-bridge estimators \citep{bozkurt2025density} have been developed for proxy causal inference, including extensions to continuous treatments. Recent neural treatment-bridge work moves toward adaptive treatment-side estimation \citep{zhang2025neural}, but is developed for binary treatment effects rather than full response functions with continuous or structured treatments.

Doubly robust PCL combines the outcome- and treatment-bridge routes, mirroring classical doubly robust causal inference, where validity can be retained when one of two complementary components is correctly specified \citep{BangRobins2005, Chernozhukov2018DML, Kennedy2024DRReview}. This is especially useful in PCL because the two bridges use the proxies differently, and it is rarely known in advance whether \(W\) or \(Z\) is more informative about the latent confounding. Semiparametric and kernel doubly robust proxy estimators have been developed \citep{semiparametricProximalCausalInference, wu2024doubly, bozkurt2025densityratiofreedoublyrobust}, but they are either formulated for discrete-treatment contrasts or rely on fixed kernel representations for continuous treatments. This leaves open a neural doubly robust PCL framework that learns both outcome- and treatment-side representations for continuous and structured response functions.\looseness=-1

We develop a neural doubly robust PCL framework for estimating causal response functions, including population, heterogeneous, and conditional dose-response targets. On the outcome side, we build on two-stage neural bridge regression. On the treatment side, we introduce a neural mean-embedding estimator that learns conditional embeddings of the treatment proxy using adaptive features. We then combine the two learned bridges through a final regression stage that estimates the doubly robust correction. The resulting framework covers population dose-response, heterogeneous dose-response, and conditional dose-response targets, and is designed for continuous and structured treatments. To stabilize stochastic multi-stage training, we use history-aware updates of the final linear heads, inspired by \citet{galashov2025closedformlayeroptimization}.

Our contributions are:
\begin{itemize}
    \item We extend doubly robust bridge identification beyond population dose-response curves to heterogeneous and conditional response functions.

    \item We introduce a neural mean-embedding treatment bridge and combine it with a neural outcome bridge to obtain a doubly robust PCL estimator for continuous and structured treatments.

    \item We propose DRPCLNET-V1 and DRPCLNET-V2, two stable multi-stage training algorithms with history-aware closed-form linear-head updates and show significant practical gains over single-bridge neural estimators and existing proxy baselines on synthetic and image-valued benchmarks.

    \item We prove the consistency of our proposed estimators showing that the doubly robust error decomposes into final averaging/regression errors and the smaller of the outcome- and treatment-side weak-norm bridge errors.\looseness=-1
\end{itemize}

The remainder of the paper is organized as follows. Section~\ref{sec:ProblemSetup} introduces the causal targets. Section~\ref{sec:Identification_Main} gives outcome-bridge, treatment-bridge, and doubly robust identification results. Section~\ref{sec:NeuralMeanEmbeddingSectionDoseResponse} presents the neural mean embedding estimators and the resulting doubly robust algorithms. Section~\ref{sec:NumericalExperiments} reports the numerical experiments, and Section~\ref{sec:Conclusion} concludes.

\section{Problem setup and target causal functions}
\label{sec:ProblemSetup}

This section introduces the causal targets studied in the paper within the proxy causal learning framework shown in Figure~\ref{fig:pcl_graph_tikz}. 

\noindent\textbf{Notation.}
Uppercase letters denote random variables and calligraphic letters denote their state spaces; lowercase letters denote realizations. We observe \(O=(Y,A,X,Z,W)\sim \mathbb P\), where \(A\in\gA\) is the treatment, \(Y\in\gY\) is the outcome, \(X\in\gX\) denotes observed covariates, \(Z\in\gZ\) is a treatment proxy, and \(W\in\gW\) is an outcome proxy. We write \(\sP_R\) and \(\sP_{R\mid T}\) for marginal probability laws and conditional laws, and \(\E[\cdot]\) for expectation. When densities or probability mass functions exist, we write them with lowercase \(p\), for example \(p_A(a)\) and \(p_{A\mid X,W}(a\mid x,w)\).

Let $A \in \gA$ denote the treatment, which may be continuous and high dimensional, and let $Y \in \gY$ denote the observed outcome. Let $X \in \gX$ denote the vector of observed covariates, and let $U \in \gU$ denote an unobserved confounder.
Write \(X=(S,V)\), where \(V\in\gV\) is the pre-specified part of the observed covariates used to define groups, and \(S\in\gS\) contains the remaining observed covariates. For example, in a clinical study, \(V\) could include age, sex, or baseline severity. A value \(V=v\) defines the group whose response curve we want to estimate. Our goal is to identify and estimate causal functions describing how the outcome changes under interventions on \(A\), both at the population level and within such \(V=v\) groups. When this decomposition is not needed, we write \(X\) for the full observed covariate vector. We now define the causal targets studied in the paper.

\begin{definition}[Target causal parameters]
\label{def:causal_parameters}
Let \(Y^{(a)}\) denote the potential outcome under the intervention \(A=a\). We consider:
\begin{enumerate}[
    label=(\roman*),
    labelindent=0pt,
    labelwidth=2.2em,
    labelsep=0.5em,
    leftmargin=!,
    itemsep=0.35em
]
    \item \textbf{Dose-response curve:} \(f_{\mathrm{ATE}}(a):=\E[Y^{(a)}]\).

    \item \textbf{Heterogeneous dose-response:} \(f_{\mathrm{CATE}}(a,v):=\E[Y^{(a)}\mid V=v]\).

    \item \textbf{Conditional dose-response:} \(f_{\mathrm{ATT}}(a,a'):=\E[Y^{(a)}\mid A=a']\).
\end{enumerate}
\end{definition}

The subscripts follow standard discrete-treatment terminology: ATE, CATE, and ATT refer to average, conditional average, and average-on-the-treated treatment effects. Here, they index the corresponding response functions rather than discrete-treatment contrasts. Under consistency, \(Y^{(a)}=Y\) whenever \(A=a\), and latent exchangeability, \(Y^{(a)}\perp A\mid X,U\), these targets admit the latent-variable representations: (i) \(f_{\mathrm{ATE}}(a)=\E[\E[Y\mid A=a,X,U]]\), (ii) \(f_{\mathrm{CATE}}(a,v)=\E[\E[Y\mid A=a,S,V=v,U]\mid V=v]\), and (iii) \(f_{\mathrm{ATT}}(a,a')=\E[\E[Y\mid A=a,X,U]\mid A=a']\). These representations show that the causal functions would be identified by adjustment if the latent confounder \(U\) were observed. Proxy causal learning instead identifies the same functions without observing or estimating \(U\), by using the proxies \(Z\) and \(W\) to learn bridge functions. \looseness=-1

Intuitively, \(Z\) is informative about the latent confounding through treatment assignment, while \(W\) is informative about the same latent confounding through the outcome. We formalize these proxy roles through the following conditional independence assumptions, and then impose completeness to ensure that the proxies carry enough information for bridge identification.

\begin{assumption}[Proxy conditional independencies]
\label{assum:ProxyConditionalIndependenceAssumptions}
Assume that the data are generated by a structural causal model compatible with Figure~\ref{fig:pcl_graph_tikz}, and that the following conditional independencies hold: (i) \(Y \perp Z \mid U, A, X\), (ii) \(W \perp Z \mid U, A, X,\) and \(W \perp A \mid U, X.\)
\end{assumption}

Assumption~\ref{assum:ProxyConditionalIndependenceAssumptions} formalizes the roles of $Z$ and $W$ as treatment and outcome proxies, respectively. In addition, identification requires that these proxies be sufficiently informative about the latent confounder $U$ conditional on $(A,X)$ \citep{Miao2018Identifying, singh2023kernelmethodsunobservedconfounding}. This is captured by the following completeness assumption. \looseness=-1

\begin{assumption}[Completeness]
\label{assum:Completeness}
For $\sP_{A,X}$-almost all $(a,x) \in \gA \times \gX$, and for every square-integrable function $\ell \in L^2(\sP_{U \mid A=a, X=x})$: (i) $\E[\ell(U) \mid A=a, X=x, Z] = 0 \enspace \sP_{Z \mid A=a, X=x}\text{-a.e.} \iff \ell(U) = 0 \enspace \sP_{U \mid A=a, X=x}\text{-a.e.},$ (ii) $\E[\ell(U) \mid A=a, X=x, W] = 0 \enspace \sP_{W \mid A=a, X=x}\text{-a.e.} \iff \ell(U) = 0 \enspace \sP_{U \mid A=a, X=x}\text{-a.e.}.$
    
\end{assumption}

In the next section, we show how these assumptions lead to outcome-bridge, treatment-bridge, and doubly robust identification formulas for the target causal functions.

\section{Identification of causal functions}
\label{sec:Identification_Main}

This section gives identification formulas for the causal functions in Definition~\ref{def:causal_parameters}. We first review the outcome-bridge route, then introduce treatment bridges as inverse-weighting analogues, and finally combine both routes to obtain doubly robust formulas.

\subsection{Outcome bridge identification}
\label{sec:Outcome_Bridge_Identification}

Outcome-bridge identification is the standard route in PCL \citep{Miao2018Identifying, Mastouri2021ProximalCL, xu2021deep, kompa2022deep}: it seeks a function \(h(A,X,W)\) such that \(\E[h(A,X,W)\mid A,X,Z]=\E[Y\mid A,X,Z]\). Once such a bridge exists, response functions are obtained by averaging or conditioning \(h\) over the relevant observed distribution.\looseness=-1

\begin{theorem}[Identification via the outcome bridge]
\label{theorem:OutcomeBridgeIdentificationAllCausalFunctions}
Suppose Assumptions~\ref{assum:ProxyConditionalIndependenceAssumptions} and \ref{assum:Completeness} hold. Furthermore, suppose that for $\sP_{A,X}$-almost all $(a,x) \in \gA \times \gX$, there exists an outcome bridge function $h_0(a,x,\cdot) \in L^2\!\left(\sP_{W \mid A=a, X=x}\right)$
such that
\begin{equation}
    \E[Y \mid A=a, X=x, Z]
    =
    \E[h_0(a,x,W) \mid A=a, X=x, Z],
    \qquad
    \sP_{Z \mid A=a, X=x}\text{-a.e.}
    \label{eq:OutcomeBridgeEquation}
\end{equation}
Then, the causal functions are identified by (i) $f_{\mathrm{ATE}}(a) = \E[h_0(a,X,W)]$, (ii) $f_{\mathrm{CATE}}(a,v) = \E[h_0(a,X,W) \mid V=v]$, (iii) $f_{\mathrm{ATT}}(a,a') = \E[h_0(a,X,W) \mid A=a']$.
\end{theorem}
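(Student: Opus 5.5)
The plan is the standard Miao et al.\ argument: first transfer the bridge equation from the observed proxy $Z$ to the latent $U$ via completeness, and then average over the appropriate law. Fix $(a,x)$ in the $\sP_{A,X}$-full-measure set where $h_0(a,x,\cdot)$ exists and Assumption~\ref{assum:Completeness} holds. By Assumption~\ref{assum:ProxyConditionalIndependenceAssumptions}(i), $\E[Y\mid A=a,X=x,Z]=\E\bigl[\E[Y\mid A=a,X=x,U]\mid A=a,X=x,Z\bigr]$. By Assumption~\ref{assum:ProxyConditionalIndependenceAssumptions}(ii) ($W\perp Z\mid U,A,X$),
\[
\E[h_0(a,x,W)\mid A=a,X=x,Z]=\E\bigl[\E[h_0(a,x,W)\mid A=a,X=x,U]\mid A=a,X=x,Z\bigr].
\]
Subtracting, the function
\[
\ell(U)=\E[Y\mid A=a,X=x,U]-\E[h_0(a,x,W)\mid A=a,X=x,U]
\]
has zero conditional mean given $Z$. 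Completeness (i) then gives $\ell=0$ $\sP_{U\mid A=a,X=x}$-a.e. For this we need $\ell\in L^2(\sP_{U\mid A=a,X=x})$. The second term is square-integrable by Jensen, since $h_0(a,x,\cdot)\in L^2(\sP_{W\mid A=a,X=x})$. The first term needs $Y$ to be square-integrable, which I will state as a standing regularity condition; I expect this to be the only delicate step.

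Next, use $W\perp A\mid U,X$ to rewrite $\E[h_0(a,x,W)\mid A=a,X=x,U]=\int h_0(a,x,w)\,d\sP_{W\mid U,X=x}(w)$. This expression does not depend on the conditioning treatment value. Combined with latent exchangeability and consistency, it yields the key identity
\[
\E[Y^{(a)}\mid X=x,U]=\E[Y\mid A=a,X=x,U]=\E[h_0(a,X,W)\mid X=x,U].
\]
A subtle point is that the right-hand side must hold for $\sP_{X,U}$-a.e.\ $(x,U)$, not only under $A=a$. I will handle this by noting that $\E[h_0(a,X,W)\mid X,U]$ is a function of $(X,U)$ alone, and by invoking positivity so that the a.e.\ statements transfer between $\sP_{U\mid A=a,X=x}$ and $\sP_{U\mid X=x}$.

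Finally, apply the tower property to each target. For (i), $f_{\mathrm{ATE}}(a)=\E\bigl[\E[Y^{(a)}\mid X,U]\bigr]=\E\bigl[\E[h_0(a,X,W)\mid X,U]\bigr]=\E[h_0(a,X,W)]$. For (ii), condition on $V=v$; since $V$ is a component of $X$, the inner conditional expectation is unchanged, giving $\E[h_0(a,X,W)\mid V=v]$. For (iii), we need $\E[Y^{(a)}\mid A=a']=\E\bigl[\E[Y^{(a)}\mid X,U,A=a']\mid A=a'\bigr]$. By exchangeability, $Y^{(a)}\perp A\mid X,U$, so the inner term equals $\E[h_0(a,X,W)\mid X,U]$. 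Using $W\perp A\mid X,U$ again, this equals $\E[h_0(a,X,W)\mid X,U,A=a']$, and towering gives $\E[h_0(a,X,W)\mid A=a']$.
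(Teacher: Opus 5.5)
Your proposal is correct and matches the argument the paper relies on. The paper gives no standalone proof, deferring to \citet{Miao2018Identifying} for (i) and to \citet[Theorem~1]{singh2023kernelmethodsunobservedconfounding} for (ii)--(iii), and that argument is exactly yours: lift the bridge equation from $Z$ to $U$ via Assumption~\ref{assum:ProxyConditionalIndependenceAssumptions} and completeness~(i), use $W\perp A\mid U,X$ to make $\E[h_0(a,X,W)\mid X,U]$ free of the treatment value, then average over $\sP_{X,U}$, $\sP_{X,U\mid V=v}$, or $\sP_{X,U\mid A=a'}$; the extra regularity you flag (square-integrable $Y$ and positivity) is left implicit in the paper's setup and its latent-variable representations, so stating it explicitly is appropriate.
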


The dose-response formula is the standard proxy outcome-bridge result of \citet{Miao2018Identifying}. The heterogeneous and conditional formulas follow by conditioning the same bridge representation on \(V=v\) or \(A=a'\), as in \citet[Theorem 1]{singh2023kernelmethodsunobservedconfounding}.

\subsection{Treatment bridge identification}
\label{sec:Treatment_Bridge_Identification}

We next turn to treatment bridge identification, which complements the outcome-bridge view and is closer in spirit to inverse propensity weighting \citep{rosRub83, Robins2000marginal}. This perspective has been developed in semiparametric, minimax, and kernel forms, including recent extensions to continuous treatments and density-ratio-free formulations \citep{semiparametricProximalCausalInference, Kallus2021Causal, wu2024doubly, bozkurt2025density, bozkurt2025densityratiofreedoublyrobust}. It is also the identification route that will motivate our treatment-side estimator.

\begin{theorem}[Identification via treatment bridge]
\label{theorem:TreatmentBridgeIdentificationAllCausalFunctions}
Suppose Assumptions~\ref{assum:ProxyConditionalIndependenceAssumptions} and \ref{assum:Completeness} hold. Assume that for $\sP_{A,X}$-almost all $(a,x) \in \gA \times \gX$, there exist treatment bridge functions $\varphi_0^{\mathrm{ATE}}(a,x,\cdot) \in L^2\!\left(\sP_{Z \mid A=a, X=x}\right)$ and, for any $a' \in \gA$, $\varphi_0^{\mathrm{ATT}}(a,a',x,\cdot) \in L^2\!\left(\sP_{Z \mid A=a, X=x}\right)$. In addition, assume that for $\sP_{A,S,V}$-almost all $(a,s,v) \in \gA \times \gS \times \gV$, there exists $\varphi_0^{\mathrm{CATE}}(a,v,s,\cdot) \in L^2\!\left(\sP_{Z \mid A=a, S=s, V=v}\right)$. Suppose these bridge functions satisfy, almost everywhere with respect to the corresponding conditional law of $W$,
\begin{align}
    &\E[\varphi^{\mathrm{ATE}}_0(a,X,Z) \mid A=a, X, W] = p_A(a)/p_{A \mid X, W}(a \mid X, W),
    \label{eq:TreatmentBridgeEq_ATE} 
    \\
    &\E[\varphi^{\mathrm{CATE}}_0(a,v,S,Z) \mid A=a, S, V=v, W] = p_{A \mid V}(a \mid V=v)/p_{A \mid S, V, W}(a \mid S, V=v, W), \nonumber
    \\
    &\E[\varphi^{\mathrm{ATT}}_0(a,a',X,Z) \mid A=a, X, W] = p_{X, W \mid A}(X, W \mid a')/p_{X, W \mid A}(X, W \mid a). \nonumber
\end{align}
Then, the causal functions are identified by (i) $f_{\mathrm{ATE}}(a) = \E[Y \varphi^{\mathrm{ATE}}_0(a,X,Z) \mid A=a]$, (ii) $f_{\mathrm{CATE}}(a,v) = \E[Y \varphi^{\mathrm{CATE}}_0(a,v,S,Z) \mid A=a, V=v]$, (iii) $f_{\mathrm{ATT}}(a,a') = \E[Y \varphi^{\mathrm{ATT}}_0(a,a',X,Z) \mid A=a].$  
\end{theorem}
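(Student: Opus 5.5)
The plan is to reduce each treatment-bridge formula to the corresponding latent-adjustment representation from Section~\ref{sec:ProblemSetup}, using only the tower property, Assumption~\ref{assum:ProxyConditionalIndependenceAssumptions}, and the defining bridge equations. I will not use the existence of an outcome bridge. Completeness enters only through existence, which the statement assumes. I treat the ATE case in detail; CATE and ATT follow the same template with different density ratios.

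\textbf{Step 1 (condition on $U$ and remove $Z$).} Fix $a$. By the tower property,
$\E[Y\varphi_0^{\mathrm{ATE}}(a,X,Z)\mid A=a]=\E\big[\E[Y\mid A=a,X,U,Z]\,\varphi_0^{\mathrm{ATE}}(a,X,Z)\mid A=a\big]$.
Assumption~\ref{assum:ProxyConditionalIndependenceAssumptions}(i) gives $\E[Y\mid A=a,X,U,Z]=m(a,X,U):=\E[Y\mid A=a,X,U]$. Conditioning on $(A=a,X,U)$ then shows that the target equals $\E\big[m(a,X,U)\,\bar\varphi(a,X,U)\mid A=a\big]$, where $\bar\varphi(a,X,U):=\E[\varphi_0^{\mathrm{ATE}}(a,X,Z)\mid A=a,X,U]$.

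\textbf{Step 2 (compute $\bar\varphi$ from the bridge equation).} This is the key step. The bridge equation~\eqref{eq:TreatmentBridgeEq_ATE} conditions on $W$, not $U$, so I must show that it forces $\bar\varphi(a,X,U)=p_A(a)/p_{A\mid X,U}(a\mid X,U)$. Start from $\E[\varphi_0\mid A=a,X,W]=\E[\bar\varphi(a,X,U)\mid A=a,X,W]$, which holds because $Z\perp W\mid U,A,X$. Next, $W\perp A\mid U,X$ gives, by Bayes,
\[
\frac{p_A(a)}{p_{A\mid X,W}(a\mid X,W)}=\E\!\left[\frac{p_A(a)}{p_{A\mid X,U}(a\mid X,U)}\,\Big|\,A=a,X,W\right].
\]
To verify this, write $p(u\mid a,x,w)=p(u\mid x,w)\,p(a\mid x,u)/p(a\mid x,w)$ and integrate. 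Subtracting the two displays, the function $\ell(U)=\bar\varphi-p_A(a)/p_{A\mid X,U}$ satisfies $\E[\ell(U)\mid A=a,X,W]=0$. Completeness in $W$ (Assumption~\ref{assum:Completeness}(ii)) then gives $\ell=0$ almost surely. Here I also need $\ell\in L^2$, which I will assume as an integrability condition implicit in the setup.

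\textbf{Step 3 (change of measure).} Substituting, the target becomes $\E\big[m(a,X,U)\,p_A(a)/p_{A\mid X,U}(a\mid X,U)\mid A=a\big]$. Since $p_{X,U\mid A}(x,u\mid a)=p_{X,U}(x,u)\,p_{A\mid X,U}(a\mid x,u)/p_A(a)$, this equals $\E[m(a,X,U)]=f_{\mathrm{ATE}}(a)$ by the latent representation (i).

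\textbf{CATE and ATT.} For CATE I repeat the argument conditionally on $V=v$, with $S$ playing the role of $X$. Step 2 then gives $\bar\varphi=p_{A\mid V}(a\mid v)/p_{A\mid S,V,U}$, and Step 3 reweights $p_{S,U\mid A,V}$ into $p_{S,U\mid V}$, yielding representation (ii). For ATT, the analogue of the Bayes identity uses $p_{X,W\mid A}(\cdot\mid a')/p_{X,W\mid A}(\cdot\mid a)=\E\big[p_{X,U\mid A}(X,U\mid a')/p_{X,U\mid A}(X,U\mid a)\mid A=a,X,W\big]$, again via $W\perp A\mid U,X$. Completeness then gives $\bar\varphi=p_{X,U\mid A}(\cdot\mid a')/p_{X,U\mid A}(\cdot\mid a)$, and the change of measure yields $\E[m(a,X,U)\mid A=a']$, which is representation (iii).

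\textbf{Main obstacle.} The hardest part is Step 2: verifying the Bayes-type identities carefully, in particular the ATT density-ratio version, and justifying that the difference $\ell$ is square integrable so that completeness applies.
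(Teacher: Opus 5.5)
Your proposal is correct and follows the paper's proof essentially step for step. The paper proves only the CATE case in the appendix and defers ATE and ATT to \citet{bozkurt2025density}, using the same chain: a Bayes identity via $W\perp A\mid U,X$ shows that the latent ratio solves the $W$-side bridge equation; completeness in $W$ then identifies $\E[\varphi_0\mid A=a,X,U]$ with that ratio; finally, $Y\perp Z\mid U,A,X$ and a change of measure recover the latent adjustment formula. You also make explicit two points the paper leaves implicit, namely the use of $Z\perp W\mid U,A,X$ and the square-integrability of $\ell$, though your opening remark that completeness enters ``only through existence'' contradicts your own Step~2, which invokes Assumption~\ref{assum:Completeness}(ii) directly.
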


The ATE and ATT statements follow from the treatment-bridge identification results of \citet{bozkurt2025density}. We prove the CATE extension in Appendix~\ref{Appendix:Appendix_CATEIdentification_withTreatmentBridge}.

\subsection{Doubly robust identification}


Combining outcome and treatment bridges yields doubly robust formulas that remain valid when either bridge is correctly specified and underlie semiparametric and kernel DR proxy methods \citep{semiparametricProximalCausalInference, wu2024doubly, bozkurt2025densityratiofreedoublyrobust}. The population dose-response formula recovers existing results; the heterogeneous and conditional formulas are new extensions developed here.

The doubly robust identification theorem below extends the bridge structure that appears in the efficient influence functions derived in Appendix~\ref{sec:Appendix_EIF_Discrete} for discrete treatments. That appendix treats \(A\) as discrete, and \(V\) as discrete for the heterogeneous target. Here, we state the corresponding bridge identities directly as identification formulas for general treatments and covariates, including continuous \(A\) and \(V\).

\begin{theorem}[Doubly robust identification]
\label{theorem:DoublyRobustIdentifications}
Suppose Assumptions~\ref{assum:ProxyConditionalIndependenceAssumptions} and \ref{assum:Completeness} hold, and suppose that the outcome bridge $h_0$ from Theorem~\ref{theorem:OutcomeBridgeIdentificationAllCausalFunctions} and the treatment bridge functions from Theorem~\ref{theorem:TreatmentBridgeIdentificationAllCausalFunctions} exist. Then the target causal functions admit the doubly robust representations
\begin{align}
    f_{\mathrm{ATE}}^{\mathrm{(DR)}}(a; h_0, \varphi_0^{\mathrm{ATE}})
    &= \E[\varphi_0^{\mathrm{ATE}}(a,X,Z)\{Y-h_0(a,X,W)\} \mid A=a] + \E[h_0(a,X,W)], \label{eq:DoublyRobustATE_identification} \\
    f_{\mathrm{CATE}}^{\mathrm{(DR)}}(a,v; h_0, \varphi_0^{\mathrm{CATE}})
    &= \E[\varphi_0^{\mathrm{CATE}}(a,v,S,Z)\{Y-h_0(a,X,W)\} \mid A=a, V=v] \nonumber\\&+ \E[h_0(a,X,W) \mid V=v], \nonumber
    \\
    f_{\mathrm{ATT}}^{\mathrm{(DR)}}(a,a'; h_0, \varphi_0^{\mathrm{ATT}})
    &= \E[\varphi_0^{\mathrm{ATT}}(a,a',X,Z)\{Y-h_0(a,X,W)\} \mid A=a] \nonumber\\&+ \E[h_0(a,X,W) \mid A=a']. \nonumber
\end{align}
Moreover, each formula identifies the corresponding causal function if either $h_0$ is a valid outcome bridge or the corresponding treatment bridge function is valid.
\end{theorem}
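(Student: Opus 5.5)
The plan is to prove the two halves of the "either/or" statement separately, for each of the three targets, using Theorems~\ref{theorem:OutcomeBridgeIdentificationAllCausalFunctions} and \ref{theorem:TreatmentBridgeIdentificationAllCausalFunctions} as black boxes. Fix a candidate pair $(h,\varphi)$ with either $h=h_0$ a valid outcome bridge, or $\varphi$ a valid treatment bridge. The argument is identical in structure for ATE, CATE and ATT, so I will write it for ATE and note the changes (conditioning on $V=v$, or the ATT density ratio) for the other two.

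\emph{Case 1: the outcome bridge is valid.} The second term $\E[h_0(a,X,W)]$ equals $f_{\mathrm{ATE}}(a)$ by Theorem~\ref{theorem:OutcomeBridgeIdentificationAllCausalFunctions}(i), so it suffices to show that the correction term vanishes for arbitrary square-integrable $\varphi$. By the tower property,
\begin{equation*}
\E[\varphi(a,X,Z)\{Y-h_0(a,X,W)\}\mid A=a]
= \E\bigl[\varphi(a,X,Z)\,\E[Y-h_0(a,X,W)\mid A=a,X,Z]\,\big|\, A=a\bigr].
\end{equation*}
The inner expectation is zero $\sP_{Z\mid A=a,X}$-a.e. by \eqref{eq:OutcomeBridgeEquation}. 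For CATE we condition on $(A=a,V=v,S,Z)$, which is the same as $(A,X,Z)$ because $X=(S,V)$. For ATT nothing changes.

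\emph{Case 2: the treatment bridge is valid.} By Theorem~\ref{theorem:TreatmentBridgeIdentificationAllCausalFunctions}(i), $\E[Y\varphi_0(a,X,Z)\mid A=a]=f_{\mathrm{ATE}}(a)$. It therefore suffices to show that for an arbitrary $h$,
\begin{equation*}
\E[\varphi_0(a,X,Z)h(a,X,W)\mid A=a]=\E[h(a,X,W)].
\end{equation*}
I will condition on $(A=a,X,W)$ and apply \eqref{eq:TreatmentBridgeEq_ATE}. The left side becomes $\E\bigl[h(a,X,W)\,p_A(a)/p_{A\mid X,W}(a\mid X,W)\mid A=a\bigr]$. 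Writing this as an integral against $p_{X,W\mid A}(x,w\mid a)=p_{A\mid X,W}(a\mid x,w)p_{X,W}(x,w)/p_A(a)$, the ratio cancels and we obtain $\int h(a,x,w)\,p_{X,W}(x,w)\,dx\,dw=\E[h(a,X,W)]$.

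The same change of measure handles the other two targets. For CATE, $p_{S,W\mid A,V}(s,w\mid a,v)\,p_{A\mid V}(a\mid v)/p_{A\mid S,V,W}(a\mid s,v,w)=p_{S,W\mid V}(s,w\mid v)$, which gives $\E[h\mid V=v]$. For ATT, $p_{X,W\mid A}(\cdot\mid a)$ times the ratio gives $p_{X,W\mid A}(\cdot\mid a')$, which yields $\E[h(a,X,W)\mid A=a']$. In each case the correction term reduces to $f-\E[h\,\cdot\,]$, and adding the plug-in term recovers $f$. Taking $h=h_0$ and $\varphi=\varphi_0$ gives the displayed representations.

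I expect the main obstacle to be the measure-theoretic bookkeeping in Case 2 for continuous $A$ and $V$. Conditioning on $A=a$ (and $V=v$) is a conditioning on null events, so the Bayes-rule cancellations must be carried out with respect to regular conditional densities. This requires positivity ($p_{A\mid X,W}>0$ where needed) and absolute continuity of $p_{X,W\mid A}(\cdot\mid a')$ with respect to $p_{X,W\mid A}(\cdot\mid a)$, which are implicit in the existence of the bridges. Integrability of the products $\varphi h$ also has to be checked; I would obtain it from Cauchy--Schwarz using the $L^2$ membership assumed in the two earlier theorems.
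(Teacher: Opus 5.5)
Your proposal is correct and follows essentially the same route as the paper's proof. When $h_0$ is valid, the tower property over $(A=a,X,Z)$ kills the residual correction; when the treatment bridge is valid, conditioning on $(A=a,X,W)$ and the Bayes change of measure turn $\E[\varphi_0 h\mid\cdot]$ into the plug-in term, so only $\E[Y\varphi_0\mid\cdot]$ survives. The differences are that you prove the ATE case explicitly, whereas the paper defers it to the kernel doubly robust reference, and you state the positivity and absolute-continuity conditions that the paper leaves implicit.
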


The ATE representation in Eq.~\ref{eq:DoublyRobustATE_identification} is the covariate-adjusted analogue of the doubly robust dose-response formula in \citet{bozkurt2025densityratiofreedoublyrobust}. The CATE and ATT representations are new extensions; their derivations are given in Appendix~\ref{sec:Appendix_DoublyRobustIdentification}.


\section{Neural mean embedding approach for dose-response estimation}
\label{sec:NeuralMeanEmbeddingSectionDoseResponse}

We introduce neural estimators for dose-response estimation that retain the staged bridge-learning structure of proxy identification while replacing the fixed kernel and sieve feature maps used in existing estimators \citep{Mastouri2021ProximalCL, singh2023kernelmethodsunobservedconfounding, bozkurt2025density} with adaptive trainable representations; fixed-feature baselines are reviewed in Appendix~\ref{sec:ExistingAlgorithmsRBF}. The framework consists of an outcome-bridge network, a treatment-bridge network, and a final doubly robust regression stage. We give the core parameterizations and objectives here, and defer stochastic training details and update rules to Appendices~\ref{sec:DFPCL_Xu_Review},~\ref{Sec:NeuralTreatmentApproach_Appendix}, and~\ref{appendix:DoublyRobustAlgorithmDoseResponse}.


\paragraph{Setup and notation.} Both bridge estimators use a two-stage regression structure followed by a final third stage. We use superscript $(h)$ for the outcome bridge and $(\varphi)$ for the treatment bridge. Given a dataset $\mathcal{D} =\{(a_i,x_i,z_i,w_i,y_i)\}_{i=1}^{n}$, we form two disjoint folds for each bridge. For the outcome bridge, we split it into \(\mathcal D_1^{(h)}=\{(\bar a_i,\bar x_i,\bar z_i,\bar w_i)\}_{i=1}^{n_h}\), in which the outcome proxy $W$ is observed, and \(\mathcal D_2^{(h)}=\{(\tilde a_i,\tilde x_i,\tilde z_i,\tilde y_i)\}_{i=1}^{m_h}\), in which the outcome $Y$ is observed. For the treatment bridge, we split it into \(\mathcal D_1^{(\varphi)}=\{(\bar a_i,\bar x_i,\bar z_i,\bar w_i)\}_{i=1}^{n_\varphi}\), where similarly $W$ is observed, and \(\mathcal D_2^{(\varphi)}=\{(\tilde a_i,\tilde x_i,\tilde z_i,\tilde w_i, \hat r_i)\}_{i=1}^{m_\varphi}\) which additionally includes pre-computed estimates \(\hat r_i=\hat r(\tilde a_i,\tilde x_i,\tilde w_i)\) of the density ratio \(r(a,x,w)=p_A(a)/p_{A\mid X,W}(a\mid x,w)\), see below for more details.



\paragraph{Outcome bridge network.}
OutcomeNet targets the outcome bridge (Eq.~\ref{eq:OutcomeBridgeEquation}) and follows the two-stage neural bridge structure of \citet{xu_neuralBackdoor2023a,xu2021deep}. We add history-centered proximal penalties on the final linear heads to stabilize mini-batch training. We parameterize the bridge -- where \(\phi_{AX,2}^{(h)}(a,x):=\phi_{A,2}^{(h)}(a)\otimes\phi_{X,2}^{(h)}(x)\) -- and the first-stage conditional embedding as


\begin{align}
h(a,x,w)
&=
\vh^\top
\left(
\phi_{AX,2}^{(h)}(a,x)\otimes
\phi_{W,2}^{(h)}(w)
\right),\nonumber
\\
\mu_W^{(h)}(a,x,z)
&:= \E[\phi_{W,2}^{(h)}(W)\mid A=a,X=x,Z=z]
=
\left(\mV^{(h)}\right)^\top
\phi_{AXZ,1}^{(h)}(a,x,z).\nonumber
\end{align}
Here, the subscript \(1\) denotes first-stage embedding features, while the subscript \(2\) denotes second-stage bridge features. The first stage learns the (outcome) proxy embedding by minimizing over $\mathcal D_1^{(h)}$:\looseness=-1

\begin{align}
\hat{\gL}_{h,1}(\theta_1^{(h)},\mV^{(h)})
&=
\frac{1}{n_h}
\sum_{i=1}^{n_h}
\left\|
\phi_{W,2}^{(h)}(\bar w_i)
-
(\mV^{(h)})^\top
\phi_{AXZ,1}^{(h)}(\bar a_i,\bar x_i,\bar z_i)
\right\|_2^2
+
\lambda_1^{(h)}
\left\|
\mV^{(h)}-\hat{\mV}_t^{(h)}
\right\|_F^2.\nonumber
\end{align}
Here, $\theta_1^{(h)}$ denotes the parameters of the first-stage neural network $\phi_{AXZ,1}^{(h)}$. Minimizing it yields an estimate $\hat{\mu}_W^{(h)}(a,x,z)$. Then, the second stage fits the outcome bridge over $\mathcal D_2^{(h)}$, replacing the unobserved $\phi_{W,2}^{(h)}(w)$ with this estimate:
\begin{align}
\hat{\gL}_{h,2}(\theta_2^{(h)},\vh)
&=
\frac{1}{m_h}
\sum_{i=1}^{m_h}
\ell_{h,2}\!\left(
\tilde y_i,\,
\vh^\top
\left[
\phi_{AX,2}^{(h)}(\tilde a_i,\tilde x_i)
\otimes
\hat\mu_W^{(h)}(\tilde a_i,\tilde x_i,\tilde z_i)
\right]
\right)
+
\lambda_2^{(h)}
\left\|
\vh-\hat{\vh}_t
\right\|_2^2,\nonumber
\end{align}
where $\theta_2^{(h)}$ collects the parameters of the second-stage networks $\phi_{AX,2}^{(h)}$ and $\phi_{W,2}^{(h)}$. The history-centered penalties keep the current linear heads close to their previous iterates and stabilize stochastic mini-batch training. Once \(\hat h\) is learned, we estimate the outcome-bridge dose-response by the empirical average over an evaluation sample
\(\mathcal D_3^{(h)}=\{(x_i^\circ,w_i^\circ)\}_{i=1}^{t_h}\):
\begin{align}
\hat f_{\mathrm{ATE}}^{(h)}(a)
=
\frac{1}{t_h}
\sum_{i=1}^{t_h}
\hat h(a,x_i^{\circ},w_i^{\circ}),
\label{eq:OutcomeNetThirdStage}
\end{align}
\paragraph{Treatment bridge network.} TreatmentNet targets the treatment bridge (Eq.~\ref{eq:TreatmentBridgeEq_ATE}).
We parameterize the treatment bridge and first-stage conditional embedding as
\begin{align}
\varphi(a,x,z)
&=
\bm{\varphi}^{\top}
\left(
\phi_{AX,2}^{(\varphi)}(a,x)
\otimes
\phi_{Z,2}^{(\varphi)}(z)
\right),\nonumber
\\
\mu_Z^{(\varphi)}(a,x,w)
&:=
\E[\phi_{Z,2}^{(\varphi)}(Z)\mid A=a,X=x,W=w]
=
\left(\mV^{(\varphi)}\right)^\top
\phi_{AXW,1}^{(\varphi)}(a,x,w). \nonumber
\end{align}

For the population dose-response curve, the treatment-bridge target is \(r(a,x,w)=p_A(a)/p_{A\mid X,W}(a\mid x,w)\) and we denote by \(\hat r\) its estimate (see above). The first stage learns the (treatment) proxy embedding by minimizing over \(\mathcal D_1^{(\varphi)} \):
\begin{align}
\hat{\gL}_{\varphi,1}(\theta_1^{(\varphi)},\mV^{(\varphi)})
&=
\frac{1}{n_\varphi}
\sum_{i=1}^{n_\varphi}
\left\|
\phi_{Z,2}^{(\varphi)}(\bar z_i)
-
(\mV^{(\varphi)})^\top
\phi_{AXW,1}^{(\varphi)}(\bar a_i,\bar x_i,\bar w_i)
\right\|_2^2
+
\lambda_1^{(\varphi)}
\left\|
\mV^{(\varphi)}-\hat{\mV}_t^{(\varphi)}
\right\|_F^2.\nonumber
\end{align}
Minimizing it yields an estimate \(\hat\mu_Z^{(\varphi)} 
\). Then, the second stage fits the treatment bridge over \( \mathcal D_2^{(\varphi)} \), regressing \(\hat r\) onto the learned proxy embedding:
\begin{align}
\hat{\gL}_{\varphi,2}(\theta_2^{(\varphi)},\bm{\varphi})
&=
\frac{1}{m_\varphi}
\sum_{i=1}^{m_\varphi}
\ell_{\varphi,2}\!\left(
\hat r_i,\,
\bm{\varphi}^{\top}
\left[
\phi_{AX,2}^{(\varphi)}(\tilde a_i,\tilde x_i)
\otimes
\hat\mu_Z^{(\varphi)}(\tilde a_i,\tilde x_i,\tilde w_i)
\right]
\right)
+
\lambda_2^{(\varphi)}
\left\|
\bm{\varphi}-\hat{\bm{\varphi}}_t
\right\|_2^2.\nonumber
\end{align}
After learning \(\hat\varphi\), the treatment-bridge dose-response is obtained by a third-stage regression: we form pseudo-outcomes \(\{y_i^{(\varphi)}\}_{i = 1}^{t_\varphi}=\{y_i\hat\varphi(a_i,x_i,z_i)\}_{i = 1}^{t_\varphi}\) and fit \(f^{(\varphi)}(\cdot;\theta_3^{(\varphi)})\) so that
\(
f^{(\varphi)}(a;\theta_3^{(\varphi)})
\approx
\E[Y\hat\varphi(a,X,Z)\mid A=a].
\)

\paragraph{Training schedule.} Both bridge networks are trained by alternating between updating the featurizers by gradient descent and updating the linear heads by their closed-form history-centered ridge solutions. Importantly, the proxy-side second-stage featurizer, \(\phi_{W,2}^{(h)}\) for OutcomeNet or \(\phi_{Z,2}^{(\varphi)}\) for TreatmentNet, affects the first-stage regression target. Therefore, during the second-stage feature update, gradients are propagated through the closed-form first-stage solution for \(\mV\), following the deep feature learning strategy of \citet{xu2021deep, xu2021learning}. For squared losses, the second-stage linear head has a closed-form history-centered ridge update. For differentiable robust losses such as Huber or log-cosh \citep{huber1964robust}, the closed-form ridge update for the linear head is replaced by a small number of L-BFGS steps \citep{liu1989limited}. This training schedule preserves the two-stage bridge structure while allowing flexible neural representations and robust second-stage objectives.


\paragraph{Doubly robust unification.}

We combine OutcomeNet and TreatmentNet via Eq.~\ref{eq:DoublyRobustATE_identification} via two novel neural implementations, unlike \citet{bozkurt2025densityratiofreedoublyrobust} who use fixed kernel features.\looseness=-1

Our first implementation, that we call \textit{Doubly Robust PCL Network-V1} (DRPCLNET-V1), directly learns the bridge-weighted residual correction. Given fitted bridges \(\hat h\) and \(\hat\varphi\), we construct pseudo-outcomes
\(
y_i^{(\kappa,1)}
=
\hat\varphi(a_i,x_i,z_i)\{y_i-\hat h(a_i,x_i,w_i)\},
\)
and fit a regression network \(k^{(\kappa,1)}(\cdot;\theta_1^{(\kappa)})\) such that
\begin{align}
k^{(\kappa,1)}(a;\theta_1^{(\kappa)})
\approx
\E[\hat\varphi(a,X,Z)\{Y-\hat h(a,X,W)\}\mid A=a].
\label{eq:DR_ThirdStageMain}
\end{align}
The resulting estimator is
\(
\hat f_{\mathrm{ATE}}^{\mathrm{(DR1)}}(a)
=
\hat f_{\mathrm{ATE}}^{(h)}(a)
+
k^{(\kappa,1)}(a;\theta_1^{(\kappa)}).
\)

Our second implementation, \textit{DRPCLNET-V2}, uses the decomposition
\[
f_{\mathrm{ATE}}^{\mathrm{(DR)}}(a)
=
\E[h_0(a,X,W)]
+
\E[Y\varphi_0(a,X,Z)\mid A=a]
-
\E[\varphi_0(a,X,Z)h_0(a,X,W)\mid A=a].
\]
OutcomeNet estimates the first term and TreatmentNet estimates the second. We learn only the interaction term by constructing pseudo-outcomes
\(
y_i^{(\kappa,2)}
=
\hat\varphi(a_i,x_i,z_i)\hat h(a_i,x_i,w_i),
\)
and fitting \(k^{(\kappa,2)}(\cdot;\theta_2^{(\kappa)})\) to approximate
\(
k^{(\kappa,2)}(a;\theta_2^{(\kappa)})
\approx
\E[\hat\varphi(a,X,Z)\hat h(a,X,W)\mid A=a].
\)
The resulting estimator is
\(
\hat f_{\mathrm{ATE}}^{\mathrm{(DR2)}}(a)
=
\hat f_{\mathrm{ATE}}^{(h)}(a)
+
f^{(\varphi)}(a;\theta_3^{(\varphi)})
-
k^{(\kappa,2)}(a;\theta_2^{(\kappa)}).
\)
Full algorithmic details for both variants are given in Appendix~\ref{appendix:DoublyRobustAlgorithmDoseResponse}.

The CATE and ATT estimators follow the same multi-stage construction with target-specific treatment bridges and conditional response regressions. Details are deferred to Appendices~\ref{sec:NeuralMeanEmbedding_HeterogeneousDoseResponse} and~\ref{sec:NeuralMeanEmbedding_ConditionalDoseResponse}.

\section{Consistency results}
\label{sec:consistency_main}

We present the consistency result for population dose-response estimation, focusing on DRPCLNET-V1. Similarly, DRPCLNET-V2 follows from the same decomposition after replacing the residual correction by the interaction-regression term. Full assumptions, empirical-process definitions, and proofs are deferred to Appendix~\ref{app:outcome_bridge_consistency}, Appendix~\ref{app:treatment_bridge_consistency_rademacher}, Appendix~\ref{app:treatment_dose_response_rademacher}, and Appendix~\ref{app:dr_ate_rademacher}. The analysis treats the neural estimators as exact minimizers of the sample-split empirical squared-loss objectives defined in the appendix, with first-stage solutions plugged into the second-stage objectives. It establishes consistency in weak bridge norms \citep{bennett2025inference}; optimization guarantees for the nonconvex stochastic training procedure, along the lines of \citet{chen2025towards}, are left for future work.


Let \(m_0(a,x,z):=\E[Y\mid A=a,X=x,Z=z]\) and \(r_0(a,x,w):=p_A(a)/p_{A\mid X,W}(a\mid x,w)\). Define the conditional expectation operators \(T_hh(a,x,z):=\E[h(a,x,W)\mid A=a,X=x,Z=z]\) and \(T_\varphi\varphi(a,x,w):=\E[\varphi(a,x,Z)\mid A=a,X=x,W=w]\). 
We measure bridge errors in the weak norms induced by \(T_h\) and \(T_\varphi\): \(\mathcal R_h^{\mathrm{weak}}:=\|T_h\hat h-m_0\|_{L^2(\sP_{A,X,Z})}^2\), which equals \(\|T_h(\hat h-h_0)\|_{L^2(\sP_{A,X,Z})}^2\) when \(T_hh_0=m_0\), and \(\mathcal R_\varphi^{\mathrm{weak},\hat r}:=\|T_\varphi\hat\varphi-\hat r\|_{L^2(\sP_{A,X,W})}^2\), where \(\hat r\) is the fitted density-ratio target used to train TreatmentNet. We write \(\mathcal E_r:=\|\hat r-r_0\|_{L^2(\sP_{A,X,W})}^2\) for the density-ratio estimation error.

Let \(\nu_h=(n_h,m_h)\) and \(\nu_\varphi=(n_\varphi,m_\varphi)\) denote the stage-wise sample sizes for outcome and treatment networks.
Up to fixed bounded constants, define
\[
\rho_{h,\nu_h}
:=
\kappa_{h,2,\nu_h}
+
\kappa_{h,1,\nu_h}
+
\Delta_{h,1,\nu_h}
+
\Delta_{h,2,\nu_h},
\qquad
\rho_{\varphi,\nu_\varphi}^{\hat r}
:=
\kappa_{\varphi,2,\nu_\varphi}^{\hat r}
+
\kappa_{\varphi,1,\nu_\varphi}
+
\Delta_{\varphi,1,\nu_\varphi}
+
\Delta_{\varphi,2,\nu_\varphi}^{\hat r}.
\]
Here \(\kappa_{h, 1,\nu_h}\) and \(\kappa_{\varphi,1,\nu_\varphi}\) are first-stage approximation errors for the neural conditional mean embeddings of \(W\mid A,X,Z\) and \(Z\mid A,X,W\), respectively; \(\kappa_{h, 2,\nu_h}\) is the second-stage approximation error for \(m_0\), and \(\kappa_{\varphi,2,\nu_\varphi}^{\hat r}\) is the second-stage approximation error for the fitted ratio target \(\hat r\). The \(\Delta\)-terms are empirical-process errors controlled by Rademacher complexities of the corresponding first- and second-stage loss classes, as defined in the appendix; see also \citet{bartlett2005} and \citet{foster2023orthogonal}.
By Theorem~\ref{thm:outcome_projected_rademacher} and Theorem~\ref{thm:treatment_projected_plugin_rademacher}, \(\mathcal R_h^{\mathrm{weak}}=O_p(\rho_{h,\nu_h})\) and \(\mathcal R_\varphi^{\mathrm{weak},\hat r}=O_p(\rho_{\varphi,\nu_\varphi}^{\hat r})\). Moreover, the oracle treatment-side projected residual satisfies
\[
\|T_\varphi\hat\varphi-r_0\|_{L^2(\sP_{A,X,W})}^2
\le
2\mathcal R_\varphi^{\mathrm{weak},\hat r}
+
2\mathcal E_{r}.
\]
The term \(\mathcal E_r\) is the error of the precomputed density-ratio estimator \(\hat r\); its rate depends on the chosen density-ratio method and is not analyzed in the TreatmentNet bridge-learning bound.

\begin{theorem}[Dose-response consistency of OutcomeNet, TreatmentNet, and DRPCLNET]
\label{thm:main_dose_response_consistency}
Suppose the conditions of Theorem~\ref{thm:outcome_projected_rademacher},
Theorem~\ref{thm:treatment_projected_plugin_rademacher},
Theorem~\ref{thm:treatment_dose_response_rademacher}, and
Theorem~\ref{thm:dr_ate_rademacher} hold. Let
\(
\rho_{\varphi,3}
:=
\kappa_{\varphi,3,t_h}
+
\Delta_{\varphi,3,t_h}
\)
denote the third-stage regression error for estimating
\(a\mapsto \E[Y\hat\varphi(a,X,Z)\mid A=a]\) in TreatmentNet. Let
\(
\rho_{\mathrm{DR},1}
:=
\kappa_{\mathrm{DR},\nu_{\mathrm{DR}}}
+
\Delta_{\mathrm{DR},\nu_{\mathrm{DR}}}
\)
denote the final DRPCLNET-V1 regression error for estimating the residual correction in
Eq.~\ref{eq:DR_ThirdStageMain}. Finally, let \(t_h\) be the
evaluation-sample size in Eq.~\ref{eq:OutcomeNetThirdStage}, so that the
empirical averaging error of OutcomeNet satisfies
\(
\mathcal E_{\mu,h}
=
O_p\!\left(t_h^{-1}\right).
\)
Then:\looseness=-1
\begin{enumerate}[label=(\roman*), leftmargin=*, itemsep=0.25em]
    \item OutcomeNet satisfies
    \[
    \|\hat f_{\mathrm{ATE}}^{(h)}-f_{\mathrm{ATE}}\|_{L^2(\sP_A)}^2
    =
    O_p\!\left(
    \rho_{h,\nu_h}
    +
    t_h^{-1}
    \right).
    \]

    \item TreatmentNet satisfies
    \[
    \|\hat f_{\mathrm{ATE}}^{(\varphi)}-f_{\mathrm{ATE}}\|_{L^2(\sP_A)}^2
    =
    O_p\!\left(
    \rho_{\varphi,\nu_\varphi}^{\hat r}
    +
    \mathcal E_r
    +
    \rho_{\varphi,3}
    \right).
    \]

    \item DRPCLNET-V1 satisfies
    \[
    \|\hat f_{\mathrm{ATE}}^{\mathrm{DR1}}-f_{\mathrm{ATE}}\|_{L^2(\sP_A)}^2
    =
    O_p\!\left(
    t_h^{-1}
    +
    \rho_{\mathrm{DR},1}
    +
    \min\left\{
    \rho_{h,\nu_h},
    \rho_{\varphi,\nu_\varphi}^{\hat r}
    +
    \mathcal E_r
    \right\}
    \right).
    \]
\end{enumerate}
\end{theorem}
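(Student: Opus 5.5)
The plan is to reduce each part to the bridge-level weak-norm bounds of Theorem~\ref{thm:outcome_projected_rademacher} and Theorem~\ref{thm:treatment_projected_plugin_rademacher}. The tool is the standard trick of moving an operator from one bridge to the other. The key identities, holding pointwise in \(a\), are
\begin{align*}
\E[h(a,X,W)] &= \E[r_0(a,X,W)\,h(a,X,W)\mid A=a], \\
\E[\varphi(a,X,Z)\,h(a,X,W)\mid A=a] &= \E[\varphi\,(T_h h)\mid A=a] = \E[(T_\varphi\varphi)\,h\mid A=a].
\end{align*}
The first holds because \(r_0\) is a density ratio converting \(\sP_{X,W}\) into \(\sP_{X,W\mid A=a}\). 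The second is the tower property over \((A,X,Z)\) or \((A,X,W)\). Integrating over \(a\sim\sP_A\) and using \(p_A(a)\,p_{X,Z\mid A}=p_{A,X,Z}\) will turn the \(L^2(\sP_A)\) norms of these pointwise quantities into \(L^2(\sP_{A,X,Z})\) or \(L^2(\sP_{A,X,W})\) norms via Jensen and Cauchy--Schwarz. Boundedness of \(\hat h\), \(\hat\varphi\), \(r_0\), and the density-ratio factors is taken from the appendix conditions.

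For (i), I split \(\hat f^{(h)}_{\mathrm{ATE}}-f_{\mathrm{ATE}}\) into the empirical averaging error, which is \(O_p(t_h^{-1})\) by a variance bound on bounded \(\hat h\) using the independent sample \(\mathcal D_3^{(h)}\), and the term \(\E[(\hat h-h_0)(a,X,W)]\). I rewrite the latter as \(\E[\varphi_0(a,X,Z)\,(T_h\hat h-m_0)\mid A=a]\), using \(r_0=T_\varphi\varphi_0\) and then the tower property. Cauchy--Schwarz with bounded \(\varphi_0\), followed by integration over \(a\), gives \(O_p(\mathcal R_h^{\mathrm{weak}})=O_p(\rho_{h,\nu_h})\).

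For (ii), I write
\[
f^{(\varphi)}-f_{\mathrm{ATE}} = \bigl(f^{(\varphi)}-\E[Y\hat\varphi\mid A=a]\bigr) + \E[(\hat\varphi-\varphi_0)\,m_0\mid A=a].
\]
The first piece is \(\rho_{\varphi,3}\) by Theorem~\ref{thm:treatment_dose_response_rademacher}. For the second, I replace \(m_0\) by \(T_h h_0\) and move the operator: it equals \(\E[(T_\varphi\hat\varphi-r_0)\,h_0\mid A=a]\). Cauchy--Schwarz and the displayed inequality \(\|T_\varphi\hat\varphi-r_0\|^2\le 2\mathcal R^{\mathrm{weak},\hat r}_\varphi+2\mathcal E_r\) then give \(\rho^{\hat r}_{\varphi,\nu_\varphi}+\mathcal E_r\).

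For (iii), I split \(\hat f^{\mathrm{DR1}}-f_{\mathrm{ATE}}\) into three pieces: the averaging error \(t_h^{-1}\), the regression error \(k^{(\kappa,1)}-\E[\hat\varphi(Y-\hat h)\mid A=a]\), which is \(\rho_{\mathrm{DR},1}\) by Theorem~\ref{thm:dr_ate_rademacher}, and the population DR bias
\[
B(a)=\E[\hat h]+\E[\hat\varphi(Y-\hat h)\mid A=a]-f_{\mathrm{ATE}}(a).
\]
The main step is the mixed-bias identity for \(B\). Using \(\E[\hat h]=\E[r_0\hat h\mid A=a]\) and \(f=\E[\varphi_0 m_0\mid A=a]\), I obtain
\[
B(a)=\E\bigl[(\hat\varphi-\varphi_0)(m_0-T_h\hat h)\mid A=a\bigr]
= -\E\bigl[(T_\varphi\hat\varphi-r_0)(\hat h-h_0)\mid A=a\bigr].
\]
The two forms give two bounds. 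In the first, \(\hat\varphi-\varphi_0\) is bounded and \(m_0-T_h\hat h\) is controlled in weak norm, giving \(O_p(\rho_{h,\nu_h})\). In the second, \(\hat h-h_0\) is bounded and \(T_\varphi\hat\varphi-r_0\) is controlled, giving \(O_p(\rho^{\hat r}_\varphi+\mathcal E_r)\). Since both bounds hold, \(B\) is bounded by their minimum, and combining the three pieces via \((x+y+z)^2\le3(x^2+y^2+z^2)\) completes the argument.

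The main obstacle is this last step. It requires uniform boundedness of \(\hat\varphi-\varphi_0\) and \(\hat h-h_0\), which must come from the clipped hypothesis classes assumed in the appendix. It also requires a change of measure from the conditional law given \(A=a\), integrated over \(\sP_A\), to the joint law. I would handle the latter by noting that \(\int \E[g\mid A=a]^2\,d\sP_A(a)\le\|g\|^2_{L^2(\sP_{A,X,Z})}\) by Jensen, so no density bound is needed beyond boundedness of the bridges themselves.
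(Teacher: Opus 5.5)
Your proposal is correct and is essentially the paper's proof. Parts (i) and (ii) are the paper's weak-residual lemmas: you move \(T_h\) or \(T_\varphi\) onto the complementary oracle bridge and apply Cauchy--Schwarz. Part (iii) is the mixed-bias identity of Lemma~\ref{lem:ate_dr_identity_projected}, conditioned on \((X,Z)\) or \((X,W)\) as in Lemma~\ref{lem:dr_projected_population_bound}, together with the averaging and final-regression errors.

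The only minor differences are these. In (i) you route through \(h_0\) and \(r_0\), whereas the paper uses \(f_{\mathrm{ATE}}(a)=\E[Y\varphi_0(a,X,Z)\mid A=a]\) directly and so needs no outcome bridge there. Your argument also goes through with ess-sup conditional second-moment bounds on \(\varphi_0\), \(h_0\), \(\hat\varphi-\varphi_0\) and \(\hat h-h_0\) (the paper's \(C_\varphi, C_h, B_{e,\varphi}, B_{e,h}\)) rather than uniform boundedness. Because these bounds involve the oracle bridges, they are separate assumptions, not consequences of the bounded hypothesis classes.
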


Theorem~\ref{thm:main_dose_response_consistency} gives double robustness in weak-norm form. The DRPCLNET-V1 estimator is consistent when the empirical outcome-bridge averaging in Eq.~\ref{eq:OutcomeNetThirdStage} and the DR stage-3 residual regression in Eq.~\ref{eq:DR_ThirdStageMain} are consistent, and either the outcome-side weak-norm error \(\mathcal R_h^{\mathrm{weak}}\) vanishes or the oracle treatment-side weak-norm error \(\|T_\varphi\hat\varphi-r_0\|_{L^2(\sP_{A,X,W})}^2\) vanishes. Equivalently, on the treatment side it suffices that \(\mathcal R_\varphi^{\mathrm{weak},\hat r}+\mathcal E_r\to0\), since \(\|T_\varphi\hat\varphi-r_0\|_{L^2(\sP_{A,X,W})}^2\le 2\mathcal R_\varphi^{\mathrm{weak},\hat r}+2\mathcal E_r\). Importantly, our result does not require controlling strong $L^2$-norm convergence of either $\widehat{h}$ to $h_0$ or $\widehat{\varphi}$ to $\varphi_0$. Since the bridge functions $h_0$ and $\varphi_0$ are solutions to ill-posed inverse problems, controlling $L^2$-norms would require restrictive assumptions on measure of ill-posedness \citep{blundell2007semi, chen2018optimal}. Our result builds on weak norm rates given in Theorem~\ref{thm:outcome_projected_rademacher}, Theorem~\ref{thm:treatment_projected_plugin_rademacher}, and Theorem~\ref{thm:dr_ate_rademacher}. We achieve this by building on the doubly robust identification formula in \citet[Lemma E.22]{bozkurt2025densityratiofreedoublyrobust} and \citet[Theorem 4.1]{deaner2023proxycontrolspaneldata}.\looseness=-1

The corresponding consistency guarantees for heterogeneous and conditional dose-response estimation follow by the same weak-norm arguments and are given in Appendix~\ref{app:cate_consistency_rademacher} and ~\ref{app:att_consistency_rademacher}, respectively.\looseness=-1

\section{Numerical experiments}
\label{sec:NumericalExperiments}

We evaluate the proposed neural doubly robust proxy estimators against OutcomeNet, TreatmentNet, and applicable proxy baselines, including DRKPV \citep{bozkurt2025densityratiofreedoublyrobust}, PKDR \citep{wu2024doubly}, and KPV \citep{Mastouri2021ProximalCL, singh2023kernelmethodsunobservedconfounding}. Performance is measured by causal MSE between estimated and ground-truth response curves over the evaluation grid. Baselines are included only when compatible with the benchmark: DRKPV is omitted in the high-dimensional synthetic setting with observed \(X\), PKDR is omitted for image-valued dSprites treatments, and CATE comparisons use KPV since doubly robust proxy baselines are not available for that target. Implementation details, architectures, hyperparameters, and density-ratio estimation procedures are given in Appendix~\ref{sec:Appendix_NumericalExperimentsSupp}.

\noindent\textbf{Low-dimensional continuous dose-response.}
We first consider the low-dimensional continuous-treatment benchmark of \citet{wu2024doubly}. This setting contains nonlinear latent confounding and noisy proxy measurements. The latent variables are \(U_1 \sim \mathcal U[-1,2]\) and \(U_2 \sim \mathcal U[0,1]-\mathbf 1\{0\leq U_1\leq 1\}\). The outcome and proxies are generated as \(W=(U_2+\mathcal U[-1,1],\, U_1+\mathcal N(0,1))^\top\), \(Z=(U_2+\mathcal N(0,1),\, U_1+\mathcal U[-1,1])^\top\), \(A=U_1+\mathcal N(0,1)\), and \(Y=3\cos\{2(0.3U_2+0.3U_1+0.2)+1.5A\}+\mathcal N(0,1)\).\looseness=-1

\noindent\textbf{High-dimensional synthetic dose-response.}
We next evaluate performance in the high-dimensional synthetic benchmark adapted from \citet{singh2023kernelmethodsunobservedconfounding}. In this setting, the observed covariates \(X\), treatment proxy \(Z\), and outcome proxy \(W\) are high-dimensional vectors, with unobserved confounding induced by latent noise components shared across the proxy, treatment, and outcome mechanisms. The continuous treatment assignment depends nonlinearly on summaries of \(X\) and \(Z\), together with a latent confounding term. The outcome depends on the structural dose-response component \(f_{\mathrm{ATE}}(a)=a^2+1.2a\), summaries of \(X\) and \(W\), a treatment-covariate interaction, and latent noise. Appendix~\ref{sec:app_dgps} gives the full data-generating process.

\noindent\textbf{Image-valued dSprites dose-response.}
We also consider a structured high-dimensional treatment benchmark based on dSprites \citep{dsprites17, xu2021deep}. Here, each treatment \(A\in\mathbb R^{4096}\) is a flattened \(64\times64\) image, and the target response is determined by the image functional \(f_{\mathrm{ATE}}(A)=\{(\mathrm{vec}(B)^\top A)^2-3000\}/500\), where \(B_{ij}=|32-j|/32\). The outcome is confounded by the latent vertical position \(\mathrm{posY}\), with \(Y=12(\mathrm{posY}-0.5)^2 f_{\mathrm{ATE}}(A)+\mathcal N(0,0.5^2)\). The treatment proxy \(Z\) consists of the latent factors \((\mathrm{scale},\mathrm{rotation},\mathrm{posX})\), while the outcome proxy \(W\) is a separate image sharing the same \(\mathrm{posY}\).\looseness=-1

\noindent\textbf{Synthetic heterogeneous dose-response.}
To evaluate CATE estimation, we adapt the binary-treatment benchmark of \citet{Abrevaya02102015} to the proxy causal learning setting. We use \(V=\epsilon_0\) as the covariate indexing heterogeneity, and define the latent confounders as \(U_1=1+2V+\epsilon_1\), \(U_2=1+2V+\epsilon_2\), and \(U_3=(V-1)^2+\epsilon_3\), where \(\epsilon_i\sim\mathcal U[-0.5,0.5]\). The observed outcome is \(Y=VU_1U_2U_3+\nu\) when \(A=1\) and \(Y=0\) when \(A=0\), with \(\nu\sim\mathcal N(0,0.25^2)\). The ground-truth heterogeneous response at \(A=1\) is \(f_{\mathrm{CATE}}(1,v)=v(1+2v)^2(v-1)^2\). Proxies are noisy measurements of the latent confounders: \(W_1=U_1+\mathcal U[-b,b]\), \(W_2=U_2+\mathcal N(0,\sigma^2)\), \(W_3=U_3+\mathcal N(0,\sigma^2)\), \(Z_1=U_1+\mathcal N(0,\sigma^2)\), \(Z_2=U_2+\mathcal U[-b,b]\), and \(Z_3=U_3+\mathcal U[-b,b]\), with \(b=\sigma=0.1\).\looseness=-1

\begin{figure*}[t]
    \centering
    \begin{subfigure}[b]{0.48\textwidth}
        \centering
        \includegraphics[width=\textwidth]{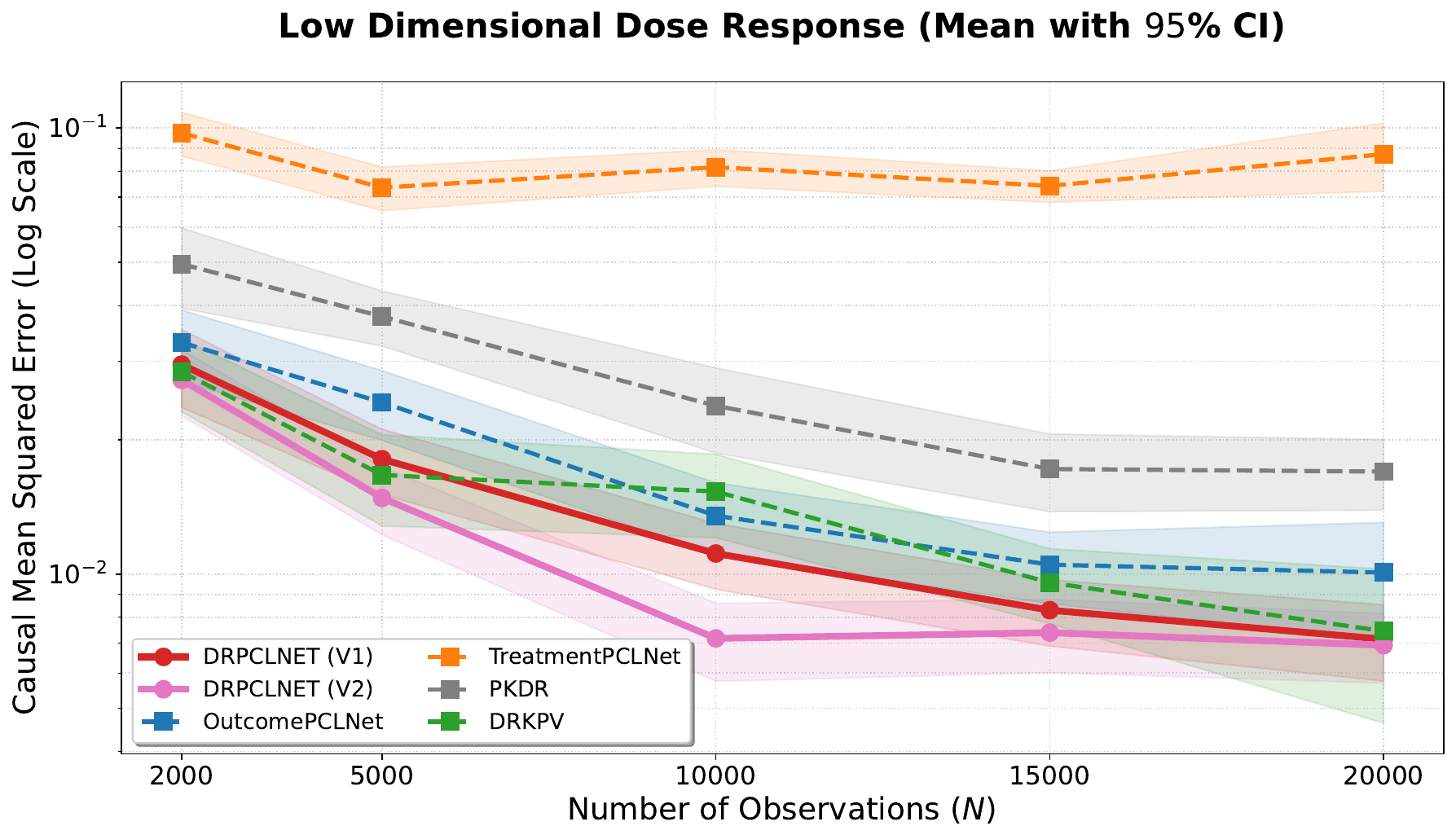}
        \caption{Low-dimensional dose-response}
        \label{fig:low_dim_ate}
    \end{subfigure}
    \hfill
    \begin{subfigure}[b]{0.48\textwidth}
        \centering
        \includegraphics[width=\textwidth]{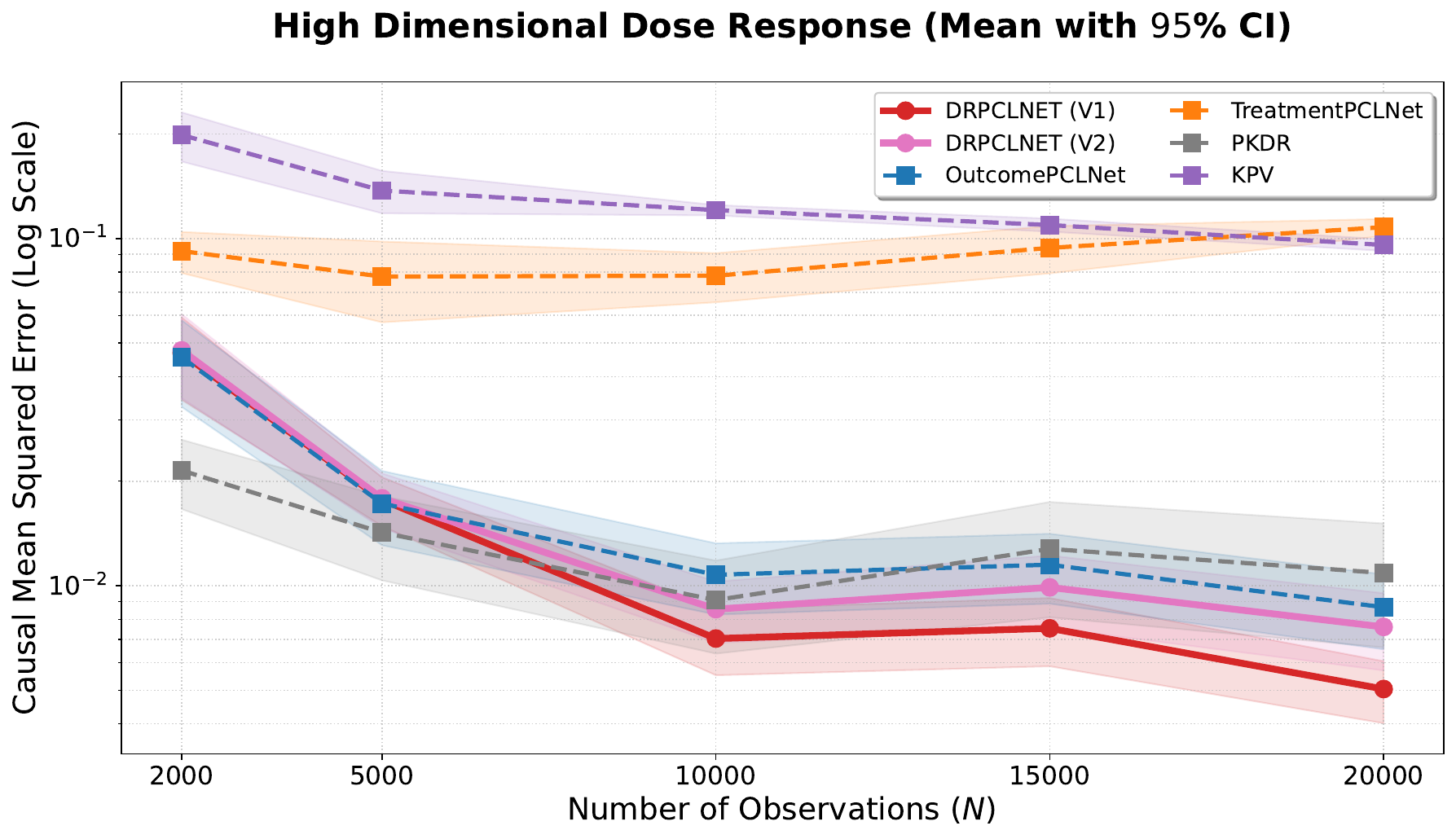}
        \caption{High-dimensional dose-response}
        \label{fig:high_dim}
    \end{subfigure}

    \vspace{0.8em}

    \begin{subfigure}[b]{0.48\textwidth}
        \centering
        \includegraphics[width=\textwidth]{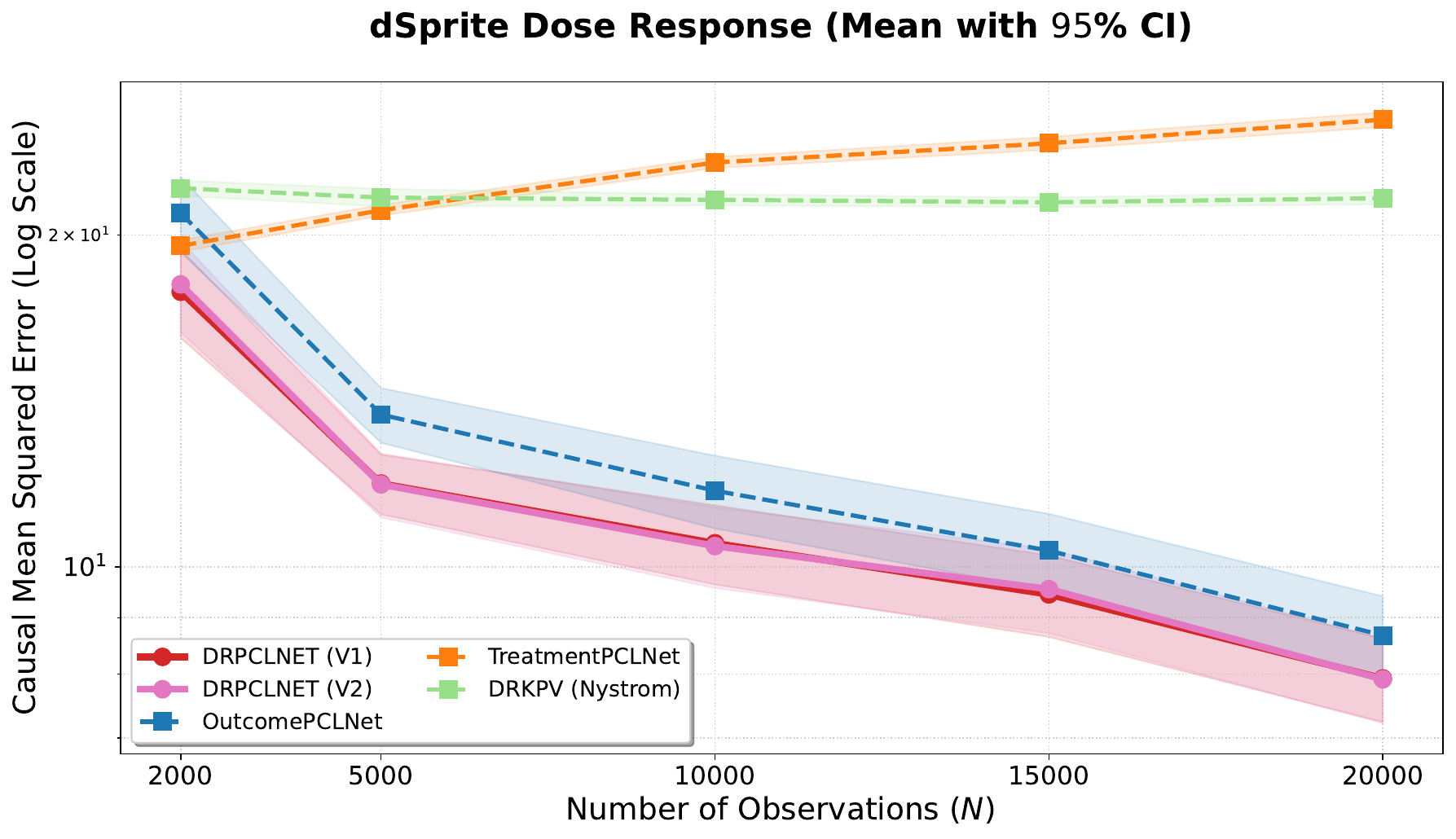}
        \caption{Image-valued dSprites dose-response}
        \label{fig:dsprite}
    \end{subfigure}
    \hfill
    \begin{subfigure}[b]{0.48\textwidth}
        \centering
        \includegraphics[width=\textwidth]{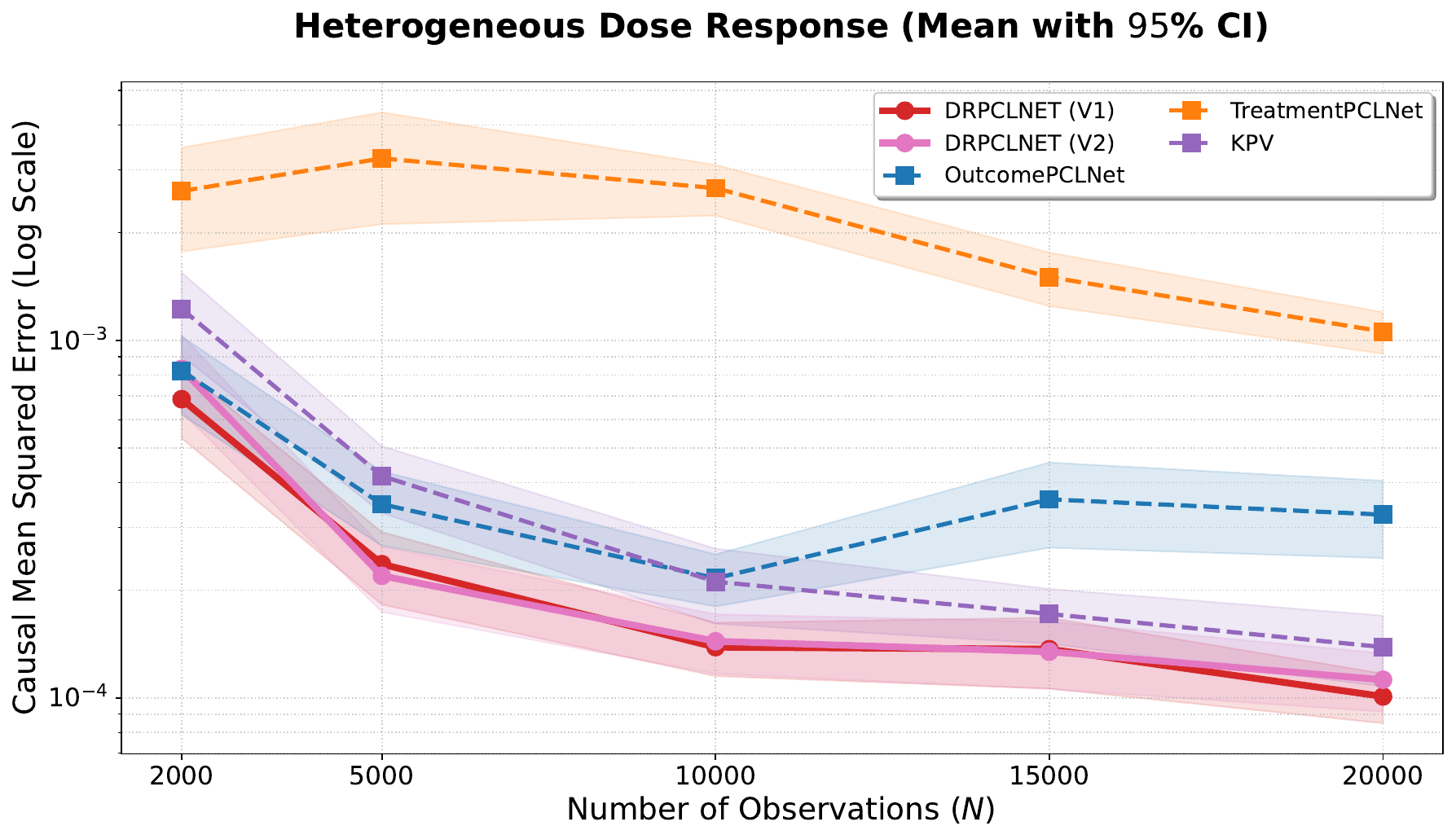}
        \caption{Synthetic heterogeneous response}
        \label{fig:low_dim_cate}
    \end{subfigure}
    
    \caption{Estimator comparison across the main benchmark settings. Each panel reports causal MSE on a logarithmic scale as the sample size increases. DRKPV (Nystrom) denotes the low-rank approximation used for dSprites experiment as exact DRKPV exceeded memory limits for large sample sizes.\looseness=-1}
    \label{fig:main_benchmarks_grid}
\end{figure*}

Across the four benchmarks, DRPCLNET-V1 and DRPCLNET-V2 are consistently among the strongest performers, often improving over single-bridge neural estimators and kernel baselines as sample size increases. These results suggest combining learned outcome and treatment bridges stabilizes causal function estimation across low-dimensional and structured/high-dimensional settings. Additional ATT experiments, ablations, and proxy-misspecification analyses appear in Appendix~\ref{appendix:ATT_experiment_and_ablations}.\looseness=-1

\section{Conclusion}
\label{sec:Conclusion}

\noindent\textbf{Summary.}
We introduced a neural doubly robust framework for proxy causal learning with continuous and structured treatments. The method combines a neural outcome bridge with a new neural mean-embedding treatment bridge, and uses final regression stages to estimate population, heterogeneous, and conditional response functions. We also established weak-norm error bounds showing that the doubly robust error is controlled by the final averaging/regression errors and the smaller of the outcome- and treatment-side bridge errors. Empirically, the proposed estimators perform favorably against single-bridge neural estimators and existing proxy baselines across synthetic and image-valued benchmarks. As a broader impact, the method may support better causal analysis from imperfect observational data, but incorrect conclusions may affect downstream decisions if the proxy variables do not satisfy the assumptions needed for bridge identification.

\noindent\textbf{Limitations and future work.}
The neural estimators require more tuning than kernel or semiparametric baselines, and TreatmentNet depends on a precomputed density-ratio estimate whose error enters our bounds. Reducing hyperparameter sensitivity and analyzing the nonconvex stochastic optimization remain important future directions.



\newpage
\bibliography{bibliography}
\newpage
\appendix

\startcontents[appendix]

\section*{Appendix Contents}
\printcontents[appendix]{}{1}{\setcounter{tocdepth}{2}}
\bigskip

\newpage
\section{Treatment bridge identification of the heterogeneous dose-response }
\label{Appendix:Appendix_CATEIdentification_withTreatmentBridge}

In this appendix, we prove the treatment-bridge identification result for the heterogeneous dose-response \(f_{\mathrm{CATE}}(a,v)\) stated in Theorem~\ref{theorem:TreatmentBridgeIdentificationAllCausalFunctions}. The corresponding ATE and ATT statements follow from \citet{bozkurt2025density}. To match the notation of the main text, we write \(X=(S,V)\) and use \(p\) to denote densities or probability mass functions whenever they exist.

\begin{proof}[Proof of the CATE statement in Theorem~\ref{theorem:TreatmentBridgeIdentificationAllCausalFunctions}]
Fix \(a \in \gA\) and \(v \in \gV\), and write \(\varphi_0 = \varphi_0^{\mathrm{CATE}}\) for brevity. By assumption, \(\varphi_0\) satisfies
\begin{equation}
\E[\varphi_0(a,v,S,Z) \mid A=a, S, V=v, W]
=
\frac{p_{A \mid V}(a \mid v)}{p_{A \mid S,V,W}(a \mid S,v,W)}
\qquad
\sP_{W \mid A=a, S, V=v}\text{-a.e.}
\label{eq:app_cate_bridge_eq}
\end{equation}

Define
\[
r_a(v,S,U) \coloneqq \frac{p_{A \mid V}(a \mid v)}{p_{A \mid S,V,U}(a \mid S,v,U)}.
\]
Using \(W \perp A \mid U,X\) from Assumption~\ref{assum:ProxyConditionalIndependenceAssumptions}, with \(X=(S,V)\), Bayes' rule gives
\begin{align*}
\E[r_a(v,S,U) \mid A=a, S, V=v, W]
&=
\int \frac{p_{A \mid V}(a \mid v)}{p_{A \mid S,V,U}(a \mid S,v,u)}
\, p_{U \mid A,S,V,W}(u \mid a,S,v,W)\,du \\
&=
\frac{p_{A \mid V}(a \mid v)}{p_{A \mid S,V,W}(a \mid S,v,W)}
\end{align*}
\(\sP_{W \mid A=a,S,V=v}\)-almost everywhere. Comparing this identity with Equation \ref{eq:app_cate_bridge_eq}, we obtain
\[
\E\!\left[
\E[\varphi_0(a,v,S,Z) \mid A=a,S,V=v,U] - r_a(v,S,U)
\;\middle|\;
A=a,S,V=v,W
\right] = 0.
\]
By Assumption~\ref{assum:Completeness}, applied conditionally on \(A=a\) and \(X=(S,V)\), it follows that
\begin{equation}
\E[\varphi_0(a,v,S,Z) \mid A=a,S,V=v,U]
=
\frac{p_{A \mid V}(a \mid v)}{p_{A \mid S,V,U}(a \mid S,v,U)}
\qquad
\sP_{U \mid A=a,S,V=v}\text{-a.e.}
\label{eq:app_cate_bridge_identity}
\end{equation}

Now define
\[
m_a(S,U,V) \coloneqq \E[Y \mid A=a,S,V,U].
\]
Then
\begin{align*}
&\E[Y \varphi_0(a,v,S,Z) \mid A=a,V=v] \\
&\quad=
\E\!\left[
\E[Y \varphi_0(a,v,S,Z) \mid A=a,S,V=v,U]
\;\middle|\;
A=a,V=v
\right] \\
&\quad=
\E\!\left[
\E[Y \mid A=a,S,V=v,U]\,
\E[\varphi_0(a,v,S,Z) \mid A=a,S,V=v,U]
\;\middle|\;
A=a,V=v
\right] \\
&\quad=
\E\!\left[
m_a(S,U,v)\,
\frac{p_{A \mid V}(a \mid v)}{p_{A \mid S,V,U}(a \mid S,v,U)}
\;\middle|\;
A=a,V=v
\right],
\end{align*}
where the second equality uses \(Y \perp Z \mid U,A,X\) from Assumption~\ref{assum:ProxyConditionalIndependenceAssumptions}, with \(X=(S,V)\), and the third equality uses Equation \ref{eq:app_cate_bridge_identity}.

Expanding the last conditional expectation gives
\begin{align*}
&\E\!\left[
m_a(S,U,v)\,
\frac{p_{A \mid V}(a \mid v)}{p_{A \mid S,V,U}(a \mid S,v,U)}
\;\middle|\;
A=a,V=v
\right] \\
&\quad=
\int m_a(s,u,v)\,
\frac{p_{A \mid V}(a \mid v)}{p_{A \mid S,V,U}(a \mid s,v,u)}
\, p_{S,U \mid A,V}(s,u \mid a,v)\, ds\,du \\
&\quad=
\int m_a(s,u,v)\, p_{S,U \mid V}(s,u \mid v)\, ds\,du \\
&\quad=
\E[m_a(S,U,v) \mid V=v].
\end{align*}
Finally, by consistency and latent exchangeability conditional on \((X,U)=(S,V,U)\),
\[
\E[m_a(S,U,v) \mid V=v]
=
\E[Y^{(a)} \mid V=v]
=
f_{\mathrm{CATE}}(a,v).
\]
Therefore,
\[
f_{\mathrm{CATE}}(a,v)
=
\E[Y \varphi_0^{\mathrm{CATE}}(a,v,S,Z) \mid A=a,V=v].
\]
This proves the claim.
\end{proof}

\section{Doubly robust identification of causal functions: dose, heterogeneous, and conditional response curves}
\label{sec:Appendix_DoublyRobustIdentification}

In this section, we prove Theorem \ref{theorem:DoublyRobustIdentifications}. The key point is that each doubly robust formula remains valid if either the outcome bridge or the corresponding treatment bridge is correctly specified.

\begin{proof}
We prove the result separately for the ATE, CATE, and ATT. Throughout, we use the notation \(X=(S,V)\), and we write \(h_0(a,X,W)\) for the outcome bridge.

The ATE formula is the covariate adjusted version of \citet{bozkurt2025densityratiofreedoublyrobust}[Theorem 2.7]. So, below, we prove CATE and ATT extensions.



\paragraph{CATE.}
Consider
\[
f_{\mathrm{CATE}}^{\mathrm{(DR)}}(a,v;h,\varphi)
=
\E[\varphi(a,v,S,Z)\{Y-h(a,X,W)\}\mid A=a,V=v]
+\E[h(a,X,W)\mid V=v].
\]

If \(h=h_0\), then
\begin{align*}
f_{\mathrm{CATE}}^{\mathrm{(DR)}}(a,v;h_0,\varphi)
&=
\E\!\left[
\varphi(a,v,S,Z)\,
\E[Y-h_0(a,X,W)\mid A=a,S,V=v,Z]
\middle| A=a,V=v
\right] \\
&\qquad + \E[h_0(a,X,W)\mid V=v] \\
&=
\E[h_0(a,X,W)\mid V=v]
=
f_{\mathrm{CATE}}(a,v),
\end{align*}
where the second equality again follows from Equation \ref{eq:OutcomeBridgeEquation}, since \(X=(S,V)\), and the last equality follows from Theorem \ref{theorem:OutcomeBridgeIdentificationAllCausalFunctions}.

If \(\varphi=\varphi_0^{\mathrm{CATE}}\), then
\begin{align*}
&\E[\varphi_0^{\mathrm{CATE}}(a,v,S,Z)h(a,X,W)\mid A=a,V=v] \\
&\qquad=
\E\!\left[
\E[\varphi_0^{\mathrm{CATE}}(a,v,S,Z)\mid A=a,S,V=v,W]\,
h(a,X,W)
\middle| A=a,V=v
\right] \\
&\qquad=
\E\!\left[
\frac{p_{A\mid V}(a\mid v)}{p_{A\mid S,V,W}(a\mid S,v,W)}\,h(a,X,W)
\middle| A=a,V=v
\right] \\
&\qquad=
\int h(a,s,v,w)\,
\frac{p_{A\mid V}(a\mid v)}{p_{A\mid S,V,W}(a\mid s,v,w)}\,
p_{S,W\mid A,V}(s,w\mid a,v)\,ds\,dw \\
&\qquad=
\int h(a,s,v,w)\,p_{S,W\mid V}(s,w\mid v)\,ds\,dw
=
\E[h(a,X,W)\mid V=v].
\end{align*}
Hence,
\begin{align*}
f_{\mathrm{CATE}}^{\mathrm{(DR)}}(a,v;h,\varphi_0^{\mathrm{CATE}})
&=
\E[Y\varphi_0^{\mathrm{CATE}}(a,v,S,Z)\mid A=a,V=v]
\\&-\E[\varphi_0^{\mathrm{CATE}}(a,v,S,Z)h(a,X,W)\mid A=a,V=v] + \E[h(a,X,W)\mid V=v] \\
&=
\E[Y\varphi_0^{\mathrm{CATE}}(a,v,S,Z)\mid A=a,V=v]
=
f_{\mathrm{CATE}}(a,v),
\end{align*}
where the last equality follows from Theorem \ref{theorem:TreatmentBridgeIdentificationAllCausalFunctions}.

\paragraph{ATT.}
Consider
\[
f_{\mathrm{ATT}}^{\mathrm{(DR)}}(a,a';h,\varphi)
=
\E[\varphi(a,a',X,Z)\{Y-h(a,X,W)\}\mid A=a]
+\E[h(a,X,W)\mid A=a'].
\]

If \(h=h_0\), then
\begin{align*}
f_{\mathrm{ATT}}^{\mathrm{(DR)}}(a,a';h_0,\varphi)
&=
\E\!\left[
\varphi(a,a',X,Z)\,
\E[Y-h_0(a,X,W)\mid A=a,X,Z]
\middle| A=a
\right] \\
&\qquad + \E[h_0(a,X,W)\mid A=a'] \\
&=
\E[h_0(a,X,W)\mid A=a']
=
f_{\mathrm{ATT}}(a,a'),
\end{align*}
by Equation \ref{eq:OutcomeBridgeEquation} and Theorem \ref{theorem:OutcomeBridgeIdentificationAllCausalFunctions}.

If \(\varphi=\varphi_0^{\mathrm{ATT}}\), then
\begin{align*}
&\E[\varphi_0^{\mathrm{ATT}}(a,a',X,Z)h(a,X,W)\mid A=a] \\
&\qquad=
\E\!\left[
\E[\varphi_0^{\mathrm{ATT}}(a,a',X,Z)\mid A=a,X,W]\,
h(a,X,W)
\middle| A=a
\right] \\
&\qquad=
\E\!\left[
\frac{p_{X,W\mid A}(X,W\mid a')}{p_{X,W\mid A}(X,W\mid a)}\,h(a,X,W)
\middle| A=a
\right] \\
&\qquad=
\int h(a,x,w)\,p_{X,W\mid A}(x,w\mid a')\,dx\,dw
=
\E[h(a,X,W)\mid A=a'].
\end{align*}
Therefore,
\begin{align*}
f_{\mathrm{ATT}}^{\mathrm{(DR)}}(a,a';h,\varphi_0^{\mathrm{ATT}})
&=
\E[Y\varphi_0^{\mathrm{ATT}}(a,a',X,Z)\mid A=a]
\\&-\E[\varphi_0^{\mathrm{ATT}}(a,a',X,Z)h(a,X,W)\mid A=a]
+\E[h(a,X,W)\mid A=a'] \\
&=
\E[Y\varphi_0^{\mathrm{ATT}}(a,a',X,Z)\mid A=a]
=
f_{\mathrm{ATT}}(a,a'),
\end{align*}
where the last equality follows from Theorem \ref{theorem:TreatmentBridgeIdentificationAllCausalFunctions}.

This proves the double robustness of all three identification formulas.
\end{proof}

\section{Semiparametric efficiency theory and influence functions in the discrete setting}
\label{sec:Appendix_EIF_Discrete}

In this section, we derive and formally establish the Efficient Influence Functions (EIFs) for the bridge-based target causal functions. We focus specifically on the discrete treatment setting, where $A$ takes values in a finite set $\mathcal{A}$. The restriction to discrete \(A\)
is important: pointwise dose-response functionals are pathwise differentiable in
the usual semiparametric sense for atoms \(A=a\), whereas continuous-treatment
pointwise effects require a different local or smoothed efficiency theory \citep{kennedy2017, Colangelo2020, wu2024doubly, zenati2025doubledebiased}. For
the CATE result below we also take \(V\) to be discrete, so that
\(\mathbb P(V=v)>0\).

Throughout this section, \(O=(Y,A,W,Z,X)\), and for CATE we write
\(X=(S,V)\) and use \(h_0(a,S,V,W)\) as shorthand for \(h_0(a,X,W)\). Define
\[
\pi_a:=\mathbb P(A=a),\qquad
\pi_v:=\mathbb P(V=v),\qquad
\pi_{a\mid v}:=\mathbb P(A=a\mid V=v).
\]
All expectations are taken under the true observed law unless otherwise stated.

\paragraph{Semiparametric regularity.}
In addition to the bridge existence, uniqueness, square-integrability, and
positivity conditions stated below, we use the same tangent-space regularity
condition as in \citep{semiparametricProximalCausalInference} and
\citep{bozkurt2025densityratiofreedoublyrobust}. Namely, the relevant
conditional expectation operators are assumed to have dense range; the stronger
surjectivity condition on the operators and their adjoints used by
\citep{semiparametricProximalCausalInference} is sufficient. For the ATE and ATT formulas,
one may take
\[
Tg:=\E[g(W,A,X)\mid Z,A,X],
\qquad
T^\ast r:=\E[r(Z,A,X)\mid W,A,X],
\]
with \(T:L^2(\sP_{W,A,X})\to L^2(\sP_{Z,A,X})\). For the CATE formula, with
\(X=(S,V)\), take
\[
T_{\mathrm C}g:=\E[g(W,A,S,V)\mid Z,A,S,V],
\qquad
T_{\mathrm C}^\ast r:=\E[r(Z,A,S,V)\mid W,A,S,V].
\]
This condition is used only to conclude that the influence functions derived
below are canonical gradients, hence efficient.

\begin{theorem}[Efficient influence functions for discrete targets]
\label{theorem:EIFs_formulation}
Let Assumptions~\ref{assum:ProxyConditionalIndependenceAssumptions}
and~\ref{assum:Completeness} hold. Suppose that there exists a unique outcome
bridge \(h_0\) satisfying Eq.~\ref{eq:OutcomeBridgeEquation}, and unique treatment
bridges \(\varphi^{\mathrm{ATE}}_0,\varphi^{\mathrm{CATE}}_0,
\varphi^{\mathrm{ATT}}_0\) satisfying the treatment bridge equations
\ref{eq:TreatmentBridgeEq_ATE}. Assume the
corresponding positivity and support-overlap conditions hold. In particular,
\(\pi_a>0\), \(\pi_{a'}>0\), \(\pi_v>0\), and \(\pi_{a\mid v}>0\) whenever the
corresponding target is considered. Assume also the semiparametric regularity
condition above.

Then \(f_{\mathrm{ATE}}(a)\), \(f_{\mathrm{ATT}}(a,a')\), and, for discrete
\(V\), \(f_{\mathrm{CATE}}(a,v)\) are pathwise differentiable. Their EIFs are as
follows.

\medskip
\noindent\textbf{(i) ATE.} For any \(a\in\mathcal A\),
\begin{align}
\psi^{\mathrm{EIF}}_{\mathrm{ATE}}(O;a)
&=
\frac{\mathds 1\{A=a\}}{\pi_a}\,
\varphi^{\mathrm{ATE}}_0(a,X,Z)
\bigl\{Y-h_0(A,X,W)\bigr\}
+h_0(a,X,W)
-f_{\mathrm{ATE}}(a).
\label{eq:Our_EIF_ATE}
\end{align}

\medskip
\noindent\textbf{(ii) ATT / conditional dose-response.} For any
\(a,a'\in\mathcal A\),
\begin{align}
\psi^{\mathrm{EIF}}_{\mathrm{ATT}}(O;a,a')
&=
\frac{\mathds 1\{A=a\}}{\pi_a}\,
\varphi^{\mathrm{ATT}}_0(a,a',X,Z)
\bigl\{Y-h_0(A,X,W)\bigr\} \\
&+
\frac{\mathds 1\{A=a'\}}{\pi_{a'}}
\bigl\{h_0(a,X,W)-f_{\mathrm{ATT}}(a,a')\bigr\}.
\label{eq:Our_EIF_ATT}
\end{align}

\medskip
\noindent\textbf{(iii) CATE, with discrete \(V\).} For any \(a\in\mathcal A\)
and any \(v\) with \(\pi_v>0\),
\begin{align}
\psi^{\mathrm{EIF}}_{\mathrm{CATE}}(O;a,v)
&=
\frac{\mathds 1\{V=v\}}{\pi_v}
\Bigg[
\frac{\mathds 1\{A=a\}}{\pi_{a\mid v}}\,
\varphi^{\mathrm{CATE}}_0(a,v,S,Z)
\bigl\{Y-h_0(A,S,V,W)\bigr\}
\nonumber\\
&\hspace{4.5cm}
+h_0(a,S,v,W)
-f_{\mathrm{CATE}}(a,v)
\Bigg].
\label{eq:Our_EIF_CATE}
\end{align}
\end{theorem}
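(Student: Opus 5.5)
We derive each influence function by the standard pathwise-derivative calculation along regular parametric submodels $\{\sP_t\}$ with score $S(O)$. We show $\frac{d}{dt}f(\sP_t)|_{t=0}=\E[\psi(O)S(O)]$ with $\psi$ as displayed and $\E[\psi]=0$. We then argue that $\psi$ lies in the closure of the tangent space, using the dense-range regularity condition, so that it is the canonical gradient.

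\textbf{ATE.} Write $f_{\mathrm{ATE}}(a)=\E[h_t(a,X,W)]$, where $h_t$ solves the outcome bridge equation under $\sP_t$. Differentiating gives two pieces.
\begin{itemize}
\item The first piece, $\E[\{h_0(a,X,W)-f\}S]$, comes from perturbing the marginal law of $(X,W)$.
\item The second piece, $\E[\dot h(a,X,W)]$, comes from perturbing the bridge. Differentiating $\E_t[Y-h_t(a,X,W)\mid A=a,X,Z]=0$ gives $T\dot h=\E[\{Y-h_0\}S_{Y,W\mid A,X,Z}\mid A=a,X,Z]$.
\end{itemize}
To convert $\E[\dot h]$ into a score inner product, we use the treatment bridge as the adjoint representer. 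By the computation in the proof of Theorem~\ref{theorem:DoublyRobustIdentifications}, $\E[g(a,X,W)]=\E[\varphi_0(a,X,Z)g(a,X,W)\mid A=a]$ for any $g$. Taking $g=\dot h$ and conditioning on $(A,X,Z)$ gives
\[
\E[\dot h]=\E[\varphi_0\,T\dot h\mid A=a]=\E\!\big[\tfrac{\mathds 1\{A=a\}}{\pi_a}\varphi_0\{Y-h_0\}S\big].
\]
Here we replace the conditional score by the full score, because $\E[Y-h_0\mid A,X,Z]=0$ kills the $(A,X,Z)$ part of the score. We also note that the $\pi_a$ appearing from conditioning on $A=a$ is not itself perturbed in a way that contributes, since the residual has conditional mean zero.

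\textbf{ATT.} We write $f_{\mathrm{ATT}}=\E[\mathds 1\{A=a'\}h_t(a,X,W)]/\pi_{a'}$. Differentiating the ratio produces the term $\frac{\mathds 1\{A=a'\}}{\pi_{a'}}\{h_0-f\}$. The $\dot h$ term is handled exactly as for the ATE, now using the ATT bridge identity $\E[g\mid A=a']=\E[\varphi_0^{\mathrm{ATT}}g\mid A=a]$.

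\textbf{CATE.} We write $f_{\mathrm{CATE}}=\E[\mathds 1\{V=v\}h_t]/\pi_v$. This gives the $\frac{\mathds 1\{V=v\}}{\pi_v}\{h_0-f\}$ term. The bridge-derivative term uses the CATE identity from Appendix~\ref{sec:Appendix_DoublyRobustIdentification}; the factor $\mathds 1\{A=a,V=v\}/(\pi_v\pi_{a\mid v})$ equals $\mathds 1\{A=a,V=v\}/\mathbb P(A=a,V=v)$.

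The main obstacle is rigorously justifying the existence and differentiability of $t\mapsto h_t$ and the interchange above, given that the bridge solves an ill-posed equation. Following \citet{semiparametricProximalCausalInference}, we avoid differentiating $h_t$ directly. Instead, we differentiate the moment identity $\E_t[\varphi_0\{Y-h_t\}\mid A=a]=0$ together with $\E_t[h_t]=\E_t[\varphi_0 h_t\mid A=a]$, which needs only uniqueness of the bridges and $L^2$ bounds. Efficiency then follows because, under the surjectivity or dense-range conditions on $T$ and $T^\ast$, the model's tangent space is dense in $L^2_0$, so any gradient is the canonical one.
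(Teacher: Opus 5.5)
Your derivative computations are the paper's. For each target you differentiate $\E_t[h_t]$ (or the ratio $N_t/D_t$), linearize the bridge equation to $\E[\dot h\mid A=a,X,Z]=\E[\{Y-h_0\}S\mid A=a,X,Z]$ (with $S$ your score), and turn $\E[\dot h]$, $\E[\dot h\mid A=a']$ or $\E[\dot h\mid V=v]$ into a score inner product via the treatment-bridge reweighting identity. Your efficiency step is also the paper's: dense range of $T$ makes the tangent space all of $L^2_0$, so the gradient is unique, and the paper's decomposition $G=G_1+G_2$ establishes exactly this because it works for any mean-zero $G$. As in your bullet computations, the paper simply posits a differentiable path $h_t$ of outcome bridges.

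The step that fails is your final paragraph, which you offer as the rigorous route. The identity $\E_t[h_t(a,X,W)]=\E_t[\varphi_0(a,X,Z)h_t(a,X,W)\mid A=a]$ does not hold along a general submodel. The reason is that $\varphi_0$ solves the treatment-bridge equation only under $\sP_0$, while under $\sP_t$ both the law of $Z$ given $(A,X,W)$ and the target $p_t(a)/p_t(a\mid X,W)$ move. Your other identity, $\E_t[\varphi_0\{Y-h_t\}\mid A=a]=0$, is valid, since it uses only the outcome-bridge equation under $\sP_t$.

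If both identities held for all $t$, you would get $f(\sP_t)=\E_t[\varphi_0(a,X,Z)Y\mid A=a]$ along every path. Its derivative is represented by the IPW-type function $\pi_a^{-1}\mathds 1\{A=a\}\{\varphi_0(a,X,Z)Y-f_{\mathrm{ATE}}(a)\}$. On $\{A\neq a\}$ this function is $0$, whereas Eq.~\ref{eq:Our_EIF_ATE} equals $h_0(a,X,W)-f_{\mathrm{ATE}}(a)$, so the two cannot both be gradients in a model whose tangent space is $L^2_0$.

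Two further problems remain. Differentiating those identities still produces $\dot h$, so nothing is avoided. And uniqueness plus $L^2$ bounds cannot in general give regularity of $t\mapsto h_t$, precisely because $T^{-1}$ is unbounded.

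You can either keep the paper's assumption of a differentiable bridge path, or use a mixed-bias expansion. For the latter, set $m_t(h,\varphi):=\E_t[\varphi\{Y-h\}\mid A=a]+\E_t[h]$ and let $\varphi_t$ be the treatment bridge under $\sP_t$. Then
\[
f(\sP_t)=m_t(h_t,\varphi_0)=m_t(h_0,\varphi_0)+\E_t[(\varphi_t-\varphi_0)(h_t-h_0)\mid A=a].
\]
The term $m_t(h_0,\varphi_0)$ differentiates directly to $\E[\psi^{\mathrm{EIF}}_{\mathrm{ATE}}S]$. Because $\E_t[h_t-h_0\mid A=a,X,Z]=\E_t[Y-h_0\mid A=a,X,Z]=O(t)$, the remainder is $o(t)$ once $\|\varphi_t-\varphi_0\|\to0$. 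That continuity of a bridge along paths is an extra regularity assumption you must impose; it does not follow from uniqueness.
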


\begin{proof}
The ATE formula is the covariate-adjusted version of
\citet[Theorem~B.4]{bozkurt2025densityratiofreedoublyrobust}. It is also
obtained from the EIF of \citet[Theorem~3.1]{semiparametricProximalCausalInference}
after replacing the classical treatment bridge \(q_0\), which satisfies
\(\E[q_0(a,X,Z)\mid A=a,X,W]=1/p(a\mid X,W)\), by the normalized bridge
\(\varphi^{\mathrm{ATE}}_0(a,X,Z)=\pi_a q_0(a,X,Z)\). We give the derivations
for ATT and CATE because the conditioning events change the derivative and the
normalization.

Let \(\{P_t:t\in(-\delta,\delta)\}\) be a regular parametric submodel through
the true law \(P_0=P\), and let
\[
\dot\ell(O):=\left.\frac{\partial}{\partial t}\log p_t(O)\right|_{t=0}
\]
be its score. Let \(h_t\) denote a differentiable path of outcome bridges along
the submodel, and write
\(\dot h:=\left.\partial h_t/\partial t\right|_{t=0}\).

\paragraph{ATT.}
Fix \(a,a'\in\mathcal A\) and write
\[
\psi:=f_{\mathrm{ATT}}(a,a')
=\E[h_0(a,X,W)\mid A=a'].
\]
Define
\[
N_t:=\E_t[\mathds 1\{A=a'\}h_t(a,X,W)],
\qquad
D_t:=P_t(A=a'),
\]
so that \(\psi_t=N_t/D_t\). Differentiating the ratio at \(t=0\) gives
\begin{align}
\left.\frac{\partial \psi_t}{\partial t}\right|_{t=0}
&=
\frac{1}{\pi_{a'}}
\left[
\left.\frac{\partial N_t}{\partial t}\right|_{t=0}
-\psi
\left.\frac{\partial D_t}{\partial t}\right|_{t=0}
\right].
\label{eq:att_ratio_derivative}
\end{align}
The denominator derivative is
\[
\left.\frac{\partial D_t}{\partial t}\right|_{t=0}
=
\E[\mathds 1\{A=a'\}\dot\ell(O)].
\]
For the numerator,
\begin{align}
\left.\frac{\partial N_t}{\partial t}\right|_{t=0}
&=
\E[\mathds 1\{A=a'\}h_0(a,X,W)\dot\ell(O)]
+
\E[\mathds 1\{A=a'\}\dot h(a,X,W)].
\label{eq:att_num_derivative}
\end{align}

It remains to express the second term in Eq.~\ref{eq:att_num_derivative} as an
inner product with the score. The bridge equation along the submodel is
\[
\E_t[Y-h_t(A,X,W)\mid A,X,Z]=0.
\]
Differentiating this equation and using the conditional mean-zero bridge
residual gives, on the event \(\{A=a\}\),
\begin{align}
\E[\dot h(a,X,W)\mid A=a,X,Z]
=
\E[\{Y-h_0(a,X,W)\}\dot\ell(O)\mid A=a,X,Z].
\label{eq:att_linearized_bridge}
\end{align}
The ATT treatment bridge satisfies
\[
\E[\varphi^{\mathrm{ATT}}_0(a,a',X,Z)\mid A=a,X,W]
=
\frac{p(X,W\mid A=a')}{p(X,W\mid A=a)}.
\]
Therefore, for any square-integrable \(g(X,W)\),
\begin{align}
\E[\mathds 1\{A=a\}\varphi^{\mathrm{ATT}}_0(a,a',X,Z)g(X,W)]
&=
\pi_a
\E[\varphi^{\mathrm{ATT}}_0(a,a',X,Z)g(X,W)\mid A=a]
\nonumber\\
&=
\pi_a
\int g(x,w)
\frac{p(x,w\mid A=a')}{p(x,w\mid A=a)}
p(x,w\mid A=a)\,dx\,dw
\nonumber\\
&=
\pi_a \E[g(X,W)\mid A=a'].
\label{eq:att_reweighting_identity}
\end{align}
Taking \(g=\dot h(a,\cdot,\cdot)\) in Eq.~\ref{eq:att_reweighting_identity} yields
\[
\E[\mathds 1\{A=a'\}\dot h(a,X,W)]
=
\frac{\pi_{a'}}{\pi_a}
\E[\mathds 1\{A=a\}\varphi^{\mathrm{ATT}}_0(a,a',X,Z)\dot h(a,X,W)].
\]
Combining this identity with Eq.~\ref{eq:att_linearized_bridge} gives
\[
\frac{1}{\pi_{a'}}
\E[\mathds 1\{A=a'\}\dot h(a,X,W)]
=
\E\!\left[
\frac{\mathds 1\{A=a\}}{\pi_a}
\varphi^{\mathrm{ATT}}_0(a,a',X,Z)
\{Y-h_0(a,X,W)\}\dot\ell(O)
\right].
\]
Substituting this display and Eq.~\ref{eq:att_num_derivative} into
Eq.~\ref{eq:att_ratio_derivative} gives
\[
\left.\frac{\partial \psi_t}{\partial t}\right|_{t=0}
=
\E[\psi^{\mathrm{EIF}}_{\mathrm{ATT}}(O;a,a')\dot\ell(O)].
\]
The candidate in Eq.~\ref{eq:Our_EIF_ATT} is mean zero because the first term has
conditional mean zero given \((A=a,X,Z)\), and the second term is centered
conditional on \(A=a'\).

It remains to check efficiency. Let \(\mathcal F=(Z,A,X)\) and
\(\varepsilon:=Y-h_0(A,X,W)\). Under the bridge-restricted semiparametric model,
the tangent space can be written as \(\Lambda=\Lambda_1+\Lambda_2\), where
\[
\Lambda_1
=
\{s_1(\mathcal F)\in L^2(\sP):\E[s_1]=0\},
\]
and
\[
\Lambda_2
=
\left\{
s_2\in L^2(\sP):
\E[s_2\mid \mathcal F]=0,\ 
\E[\varepsilon s_2\mid\mathcal F]\in\overline{\mathcal R(T)}
\right\}.
\]
Set \(H_{\mathrm{ATT}}(X,W):=h_0(a,X,W)-\psi\), and decompose
\(G_{\mathrm{ATT}}:=\psi^{\mathrm{EIF}}_{\mathrm{ATT}}\) as
\[
G_{\mathrm{ATT},1}:=\E[G_{\mathrm{ATT}}\mid\mathcal F]
=
\frac{\mathds 1\{A=a'\}}{\pi_{a'}}
\E[H_{\mathrm{ATT}}(X,W)\mid\mathcal F],
\qquad
G_{\mathrm{ATT},2}:=G_{\mathrm{ATT}}-G_{\mathrm{ATT},1}.
\]
Then \(G_{\mathrm{ATT},1}\in\Lambda_1\). Also
\(\E[G_{\mathrm{ATT},2}\mid\mathcal F]=0\). Moreover
\(\E[\varepsilon G_{\mathrm{ATT},2}\mid\mathcal F]\) is a square-integrable
\(\mathcal F\)-measurable function; by the dense-range condition for \(T\), it
belongs to \(\overline{\mathcal R(T)}\). Hence
\(G_{\mathrm{ATT},2}\in\Lambda_2\), so
\(G_{\mathrm{ATT}}\in\Lambda\). Since it represents the pathwise derivative and
belongs to the tangent space, it is the canonical gradient.

\paragraph{CATE.}
Fix \(a\in\mathcal A\) and \(v\) with \(\pi_v>0\). Write
\[
\psi:=f_{\mathrm{CATE}}(a,v)
=
\E[h_0(a,S,v,W)\mid V=v].
\]
Define
\[
N_t:=\E_t[\mathds 1\{V=v\}h_t(a,S,v,W)],
\qquad
D_t:=P_t(V=v),
\]
so that \(\psi_t=N_t/D_t\). The quotient rule gives
\begin{align}
\left.\frac{\partial \psi_t}{\partial t}\right|_{t=0}
&=
\frac{1}{\pi_v}
\left[
\left.\frac{\partial N_t}{\partial t}\right|_{t=0}
-\psi
\left.\frac{\partial D_t}{\partial t}\right|_{t=0}
\right],
\label{eq:cate_ratio_derivative}
\end{align}
with
\[
\left.\frac{\partial D_t}{\partial t}\right|_{t=0}
=
\E[\mathds 1\{V=v\}\dot\ell(O)]
\]
and
\begin{align}
\left.\frac{\partial N_t}{\partial t}\right|_{t=0}
&=
\E[\mathds 1\{V=v\}h_0(a,S,v,W)\dot\ell(O)]
+
\E[\mathds 1\{V=v\}\dot h(a,S,v,W)].
\label{eq:cate_num_derivative}
\end{align}

The bridge equation along the submodel is
\[
\E_t[Y-h_t(A,S,V,W)\mid A,S,V,Z]=0.
\]
Differentiating at \(t=0\) gives
\begin{align}
\E[\dot h(A,S,V,W)\mid A,S,V,Z]
=
\E[\{Y-h_0(A,S,V,W)\}\dot\ell(O)\mid A,S,V,Z].
\label{eq:cate_linearized_bridge}
\end{align}
The CATE treatment bridge satisfies
\[
\E[\varphi^{\mathrm{CATE}}_0(a,v,S,Z)\mid A=a,S,V=v,W]
=
\frac{\pi_{a\mid v}}{p(A=a\mid S,V=v,W)}.
\]
Consequently, for any square-integrable \(g(S,W)\),
\begin{align}
&\E[\mathds 1\{V=v\}\mathds 1\{A=a\}
\varphi^{\mathrm{CATE}}_0(a,v,S,Z)g(S,W)]
\nonumber\\
&\qquad
=
\pi_v\pi_{a\mid v}
\int g(s,w)
\frac{\pi_{a\mid v}}{p(A=a\mid s,v,w)}
p(s,w\mid A=a,V=v)\,ds\,dw
\nonumber\\
&\qquad
=
\pi_v\pi_{a\mid v}
\int g(s,w)p(s,w\mid V=v)\,ds\,dw
\nonumber\\
&\qquad
=
\pi_{a\mid v}\E[\mathds 1\{V=v\}g(S,W)].
\label{eq:cate_reweighting_identity}
\end{align}
Taking \(g=\dot h(a,\cdot,v,\cdot)\) in Eq.~\ref{eq:cate_reweighting_identity}
gives
\[
\E[\mathds 1\{V=v\}\dot h(a,S,v,W)]
=
\frac{1}{\pi_{a\mid v}}
\E[\mathds 1\{V=v\}\mathds 1\{A=a\}
\varphi^{\mathrm{CATE}}_0(a,v,S,Z)\dot h(a,S,v,W)].
\]
Combining this display with Eq.~\ref{eq:cate_linearized_bridge} yields
\[
\frac{1}{\pi_v}
\E[\mathds 1\{V=v\}\dot h(a,S,v,W)]
=
\E\!\left[
\frac{\mathds 1\{V=v\}}{\pi_v}
\frac{\mathds 1\{A=a\}}{\pi_{a\mid v}}
\varphi^{\mathrm{CATE}}_0(a,v,S,Z)
\{Y-h_0(A,S,V,W)\}\dot\ell(O)
\right].
\]
Substituting this identity and Eq.~\ref{eq:cate_num_derivative} into
Eq.~\ref{eq:cate_ratio_derivative} gives
\[
\left.\frac{\partial \psi_t}{\partial t}\right|_{t=0}
=
\E[\psi^{\mathrm{EIF}}_{\mathrm{CATE}}(O;a,v)\dot\ell(O)].
\]
Thus Eq.~\ref{eq:Our_EIF_CATE} is an influence function. It is mean zero by the
outcome bridge equation and by the centering of
\(h_0(a,S,v,W)-f_{\mathrm{CATE}}(a,v)\) conditional on \(V=v\).

For efficiency, let \(\mathcal F_{\mathrm C}:=(Z,A,S,V)\) and
\(\varepsilon:=Y-h_0(A,S,V,W)\). The tangent space is
\(\Lambda_{\mathrm C}=\Lambda_{\mathrm C,1}+\Lambda_{\mathrm C,2}\), where
\[
\Lambda_{\mathrm C,1}
=
\{s_1(\mathcal F_{\mathrm C})\in L^2(\sP):\E[s_1]=0\},
\]
and
\[
\Lambda_{\mathrm C,2}
=
\left\{
s_2\in L^2(\sP):
\E[s_2\mid \mathcal F_{\mathrm C}]=0,\ 
\E[\varepsilon s_2\mid\mathcal F_{\mathrm C}]
\in\overline{\mathcal R(T_{\mathrm C})}
\right\}.
\]
Set \(H_{\mathrm C}(S,W):=h_0(a,S,v,W)-\psi\), and write
\(G_{\mathrm C}:=\psi^{\mathrm{EIF}}_{\mathrm{CATE}}(O;a,v)\). Then
\[
G_{\mathrm C,1}:=\E[G_{\mathrm C}\mid\mathcal F_{\mathrm C}]
=
\frac{\mathds 1\{V=v\}}{\pi_v}
\E[H_{\mathrm C}(S,W)\mid\mathcal F_{\mathrm C}],
\qquad
G_{\mathrm C,2}:=G_{\mathrm C}-G_{\mathrm C,1}.
\]
The first component belongs to \(\Lambda_{\mathrm C,1}\). The second satisfies
\(\E[G_{\mathrm C,2}\mid\mathcal F_{\mathrm C}]=0\), and
\(\E[\varepsilon G_{\mathrm C,2}\mid\mathcal F_{\mathrm C}]\) is a
square-integrable \(\mathcal F_{\mathrm C}\)-measurable function. By the
dense-range condition for \(T_{\mathrm C}\), this conditional moment belongs to
\(\overline{\mathcal R(T_{\mathrm C})}\). Hence
\(G_{\mathrm C,2}\in\Lambda_{\mathrm C,2}\), so
\(G_{\mathrm C}\in\Lambda_{\mathrm C}\). Since it represents the pathwise
derivative and lies in the tangent space, it is the canonical gradient. This
proves Eq.~\ref{eq:Our_EIF_CATE} and completes the proof.
\end{proof}

\begin{remark}[Relation to classical proximal ATE identification]
\label{rem:ate_eif_relation_to_cui}
In the discrete-treatment case, our ATE representation is a normalized version of
\citet{semiparametricProximalCausalInference}. For a fixed treatment level
\(a\), their treatment bridge may be written as
\[
\E[q^{\mathrm{ATE}}_0(a,X,Z)\mid A=a,X,W]
=
\frac{1}{p(A=a\mid X,W)}.
\]
Their corresponding doubly robust representation of the counterfactual mean is
\begin{align}
 f_{\mathrm{ATE}}^{\mathrm{(DR)}}(a;h_0,q^{\mathrm{ATE}}_0)
 &=
 \E\!\big[\mathds 1\{A=a\}q^{\mathrm{ATE}}_0(a,X,Z)
 \{Y-h_0(A,X,W)\}\big]
 +\E[h_0(a,X,W)].
\label{eq:Cui_ATE_DR_identification}
\end{align}
Equivalently, its influence-function moment is
\[
\psi^{\mathrm{EIF}}_{q^{\mathrm{ATE}}_0}(O;a)
=
\mathds 1\{A=a\}q^{\mathrm{ATE}}_0(a,X,Z)
\{Y-h_0(A,X,W)\}
+h_0(a,X,W)-f_{\mathrm{ATE}}(a).
\]
Our bridge in Eq.~\ref{eq:TreatmentBridgeEq_ATE} satisfies
\[
\E[\varphi^{\mathrm{ATE}}_0(a,X,Z)\mid A=a,X,W]
=
\frac{p(a)}{p(A=a\mid X,W)}.
\]
Thus, when the bridge solution is unique,
\(\varphi^{\mathrm{ATE}}_0(a,X,Z)=p(a)q^{\mathrm{ATE}}_0(a,X,Z)\). Substituting this into
Eq.~\ref{eq:Cui_ATE_DR_identification} gives
\[
\E\!\left[
\varphi^{\mathrm{ATE}}_0(a,X,Z)
\{Y-h_0(a,X,W)\}\mid A=a
\right]
+\E[h_0(a,X,W)],
\]
which is exactly Eq.~\ref{eq:DoublyRobustATE_identification}. Therefore the
normalized bridge does not change the discrete EIF; it rewrites the same moment
as a conditional expectation given \(A=a\). This form is the one used by our
neural mean-embedding estimator and avoids an explicit sparse indicator moment
when the target is indexed by a continuous or structured treatment value.
\end{remark}

\begin{remark}[Relation to the classical proximal ATT EIF]
\label{rem:att_eif_relation_to_cui}
The ATT formula above is not obtained by multiplying the whole ATE EIF by
\(\mathds 1\{A=a'\}/\pi_{a'}\). The bridge-residual term remains on the event
\(\{A=a\}\), while the centering term is evaluated on the event \(\{A=a'\}\).

Consider the binary case \(a=0\), \(a'=1\), and define
\(\mu_0:=f_{\mathrm{ATT}}(0,1)=\E[Y^{(0)}\mid A=1]\). Cui et al.'s ATT bridge
is restricted to the control arm and satisfies
\begin{align*}
\E[h^{\mathrm{ATT}}_0(X,W)\mid A=0,X,Z]
&=
\E[Y\mid A=0,X,Z],\\
\E[q^{\mathrm{ATT}}_0(X,Z)\mid A=0,X,W]
&=
\frac{p(A=1\mid X,W)}{p(A=0\mid X,W)}.
\end{align*}
Their proof of the ATT EIF first derives the influence function for
\(\mu_0\), namely
\begin{align}
\psi^{\mathrm{EIF}}_{\mu_0}(O)
=
\frac{\mathds 1\{A=0\}}{\pi_1}
q^{\mathrm{ATT}}_0(X,Z)\{Y-h^{\mathrm{ATT}}_0(X,W)\}
+
\frac{\mathds 1\{A=1\}}{\pi_1}
\{h^{\mathrm{ATT}}_0(X,W)-\mu_0\}.
\label{eq:Cui_ATT_EIF}
\end{align}
Equivalently,
\begin{align}
\mu_0
&=
\E\!\left[
\frac{\mathds 1\{A=0\}}{\pi_1}
q^{\mathrm{ATT}}_0(X,Z)\{Y-h^{\mathrm{ATT}}_0(X,W)\}
+
\frac{\mathds 1\{A=1\}}{\pi_1}h^{\mathrm{ATT}}_0(X,W)
\right]
\nonumber\\
&=
\E\!\left[
\frac{\mathds 1\{A=0\}}{\pi_1}
q^{\mathrm{ATT}}_0(X,Z)\{Y-h^{\mathrm{ATT}}_0(X,W)\}
\right]
+
\E[h^{\mathrm{ATT}}_0(X,W)\mid A=1].
\label{eq:Cui_ATT_Identification}
\end{align}
Our normalized ATT bridge satisfies
\[
\E[\varphi^{\mathrm{ATT}}_0(0,1,X,Z)\mid A=0,X,W]
=
\frac{p(X,W\mid A=1)}{p(X,W\mid A=0)}
=
\frac{\pi_0}{\pi_1}
\frac{p(A=1\mid X,W)}{p(A=0\mid X,W)}.
\]
Thus, under uniqueness,
\(\varphi^{\mathrm{ATT}}_0(0,1,X,Z)=(\pi_0/\pi_1)q^{\mathrm{ATT}}_0(X,Z)\), and
Eq.\ref{eq:Our_EIF_ATT} reduces to Eq.~\ref{eq:Cui_ATT_EIF} with
\(h^{\mathrm{ATT}}_0(X,W)=h_0(0,X,W)\).

The usual binary ATT contrast is
\[
\tau_{\mathrm{ATT}}
:=\E[Y^{(1)}-Y^{(0)}\mid A=1]
=
\E[Y\mid A=1]-f_{\mathrm{ATT}}(0,1).
\]
Its EIF is therefore
\[
\frac{\mathds 1\{A=1\}}{\pi_1}\{Y-\E[Y\mid A=1]\}
-
\psi^{\mathrm{EIF}}_{\mathrm{ATT}}(O;0,1).
\]
Hence \(\psi^{\mathrm{EIF}}_{\mathrm{ATT}}(O;a,a')\) in
Theorem~\ref{theorem:EIFs_formulation} is the EIF for the conditional
counterfactual mean \(f_{\mathrm{ATT}}(a,a')\); the EIF for an ATT contrast is
obtained by adding the observed-arm component and taking the difference.
\end{remark}

\begin{remark}[Continuous and structured treatments]
\label{rem:continuous_structured_treatments}
The EIFs in Theorem~\ref{theorem:EIFs_formulation} are discrete-treatment EIFs.
They justify the corresponding discrete doubly robust moment equations. The
identification formulas in Theorem~\ref{theorem:DoublyRobustIdentifications}
are still stated for continuous or structured \(A\), provided the bridge
equations are interpreted through regular conditional laws and the required
positivity conditions hold. However, the pointwise semiparametric efficiency
theory for continuous treatments is not obtained by replacing
\(\mathds 1\{A=a\}/p(a)\) with a density ratio. Existing continuous-treatment
DR theory typically works with localized or smoothed target functionals
\citep{kennedy2017, Colangelo2020, wu2024doubly, zenati2025doubledebiased}.
Our contribution in the continuous and high-dimensional setting is therefore an
identification and estimation construction based on normalized treatment
bridges and neural mean embeddings, not a closed-form pointwise continuous-treatment EIF.
\end{remark}

\begin{remark}[Heterogeneous dose-response identification]
\label{rem:heterogeneous_dr_identification}
Equation~\ref{eq:DoublyRobustATE_identification} extends the doubly robust
proximal identification formula from population dose-response curves to the
heterogeneous response function
\(f_{\mathrm{CATE}}(a,v)=\E[Y^{(a)}\mid V=v]\). We are not aware of a previous
proximal doubly robust identification formula for the full curve
\((a,v)\mapsto f_{\mathrm{CATE}}(a,v)\). This is distinct from the binary ATT
case: the conditioning event is an observed effect modifier \(V=v\), not the
realized treatment arm.
\end{remark}

\section{Review of existing multi-stage kernel algorithms with fixed feature maps}
\label{sec:ExistingAlgorithmsRBF}

We briefly review the kernel-based multi-stage estimators that serve as the main fixed-feature baselines for our method. In particular, the doubly robust kernel estimator DRKPV of \citet{bozkurt2025densityratiofreedoublyrobust} combines two components: the outcome-bridge estimator KPV of \citet{Mastouri2021ProximalCL} and the treatment-bridge estimator KAP of \citet{bozkurt2025density}. Both methods are built in RKHSs with feature maps fixed by the choice of kernels. This makes them a natural starting point for our neural construction, where these fixed feature maps are replaced by adaptive learned representations.

\subsection{Kernel Proxy Variable: outcome-bridge method}

The Kernel Proxy Variable (KPV) method of \citet{Mastouri2021ProximalCL} estimates the outcome bridge function \(h_0 : \gA \times \gX \times \gW \to \mathbb{R}\) by solving Equation \ref{eq:OutcomeBridgeEquation}. In the RKHS formulation, one assumes that \(h_0 \in \gH\), where \(\gH = \gH_{\gA} \otimes \gH_{\gX} \otimes \gH_{\gW}\). Then the dose-response curve admits the representation
\[
f_{\mathrm{ATE}}(a)
=
\E[h_0(a,X,W)]
=
\left\langle h_0,\; \phi_{\gA}(a) \otimes \mu_{XW} \right\rangle_{\gH},
\qquad
\mu_{XW} := \E[\phi_{\gX}(X)\otimes\phi_{\gW}(W)].
\]
Replacing \(h_0\) and \(\mu_{XW}\) by their empirical estimators yields
\[
\hat f_{\mathrm{ATE}}(a)
=
\left\langle \hat h,\; \phi_{\gA}(a)\otimes \hat\mu_{XW} \right\rangle_{\gH},
\qquad
\hat\mu_{XW}
=
\frac{1}{n}\sum_{i=1}^n \phi_{\gX}(x_i)\otimes\phi_{\gW}(w_i).
\]

To estimate \(\hat h\), KPV uses two stages. The first stage estimates the conditional mean embedding
\[
\mu_{W\mid A,X,Z}(a,x,z) := \E[\phi_{\gW}(W)\mid A=a,X=x,Z=z].
\]
Under standard CME regularity conditions, there exists an operator \(V_{W\mid A,X,Z}\in \gS_2(\gH_{\gA\gX\gZ},\gH_{\gW})\) such that \citep{HilbertSpaceEmbeddingforConditionalDistributions, grunewalder2012conditional, park2020measure, Mastouri2021ProximalCL}
\[
\mu_{W\mid A,X,Z}(a,x,z)
=
V_{W\mid A,X,Z}\bigl(\phi_{\gA}(a)\otimes\phi_{\gX}(x)\otimes\phi_{\gZ}(z)\bigr)
\]
Given first-stage samples \(\{(\bar a_i,\bar x_i,\bar z_i,\bar w_i)\}_{i=1}^{n_h}\), KPV estimates this operator by vector-valued ridge regression:
\begin{equation}
\hat{\gL}_{h,1}(V)
=
\frac{1}{n_h}\sum_{i=1}^{n_h}
\left\|
\phi_{\gW}(\bar w_i)
-
V\!\left(\phi_{\gA}(\bar a_i)\otimes\phi_{\gX}(\bar x_i)\otimes\phi_{\gZ}(\bar z_i)\right)
\right\|_{\gH_{\gW}}^2
+
\lambda_{h,1}\|V\|_{\gS_2}^2.
\label{eq:KPV_FirstStageSampleObjective_revised}
\end{equation}
The corresponding CME estimator has the closed form
\[
\hat\mu_{W\mid A,X,Z}(a,x,z)
=
\sum_{i=1}^{n_h}\beta_i(a,x,z)\,\phi_{\gW}(\bar w_i),
\]
where
\[
\bm{\beta}(a,x,z)
=
\left(
\mK_{\bar A\bar A}\odot \mK_{\bar X\bar X}\odot \mK_{\bar Z\bar Z}
+
n_h\lambda_{h,1}\mI
\right)^{-1}
\left(
\mK_{\bar A a}\odot \mK_{\bar X x}\odot \mK_{\bar Z z}
\right).
\]

In the second stage, KPV substitutes \(\hat\mu_{W\mid A,X,Z}\) into the empirical bridge loss. Given second-stage samples \(\{(\tilde a_i,\tilde x_i,\tilde z_i,\tilde y_i)\}_{i=1}^{m_h}\), it solves
\begin{equation}
\hat{\gL}_{h,2}(h)
=
\frac{1}{m_h}\sum_{i=1}^{m_h}
\left(
\tilde y_i
-
\left\langle
h,\,
\phi_{\gA}(\tilde a_i)\otimes\phi_{\gX}(\tilde x_i)\otimes
\hat\mu_{W\mid A,X,Z}(\tilde a_i,\tilde x_i,\tilde z_i)
\right\rangle_{\gH}
\right)^2
+
\lambda_{h,2}\|h\|_{\gH}^2.
\label{eq:KPV_sampleObjective_revised}
\end{equation}
Its minimizer admits the closed form
\[
\hat h(a,x,w)
=
\bm{\alpha}^{\top}
\left(
\mK_{\tilde A a}\odot \mK_{\tilde X x}\odot \mB^{\top}\mK_{\bar W w}
\right),
\]
where
\[
\mB
=
\left(
\mK_{\bar A\bar A}\odot\mK_{\bar X\bar X}\odot\mK_{\bar Z\bar Z}
+
n_h\lambda_{h,1}\mI
\right)^{-1}
\left(
\mK_{\bar A\tilde A}\odot\mK_{\bar X\tilde X}\odot\mK_{\bar Z\tilde Z}
\right),
\]
and
\[
\mM
=
\mK_{\tilde A\tilde A}\odot \mK_{\tilde X\tilde X}\odot
\left(\mB^{\top}\mK_{\bar W\bar W}\mB\right),
\qquad
\bm{\alpha}
=
\left(\mM + m_h\lambda_{h,2}\mI\right)^{-1}\tilde{\mY}.
\]

The key limitation for our purposes is that the feature maps \(\phi_{\gA},\phi_{\gX},\phi_{\gZ},\phi_{\gW}\) are fixed once the kernels are chosen. The neural estimator introduced later keep the same multi-stage structure, but replace these fixed RKHS representations with adaptive feature maps learned from data \citep{xu2021deep}. We also note that \citet{singh2023kernelmethodsunobservedconfounding} provides a kernel implementation for heterogeneous and conditional treatment effects; in our experiments, we compare against this baseline using the implementation style of \citet{Mastouri2021ProximalCL} together with the numerically more stable formulation of \citet[Appendix F]{xu2024kernelsingleproxycontrol}, which is computationally easier to handle than the original variants.

\subsection{Kernel Alternative Proxy: treatment bridge-based method}
\label{sec:KernelAlternativeProxySummaryAppendix}

Complementing the outcome-bridge framework, \citet{bozkurt2025density} introduced the Kernel Alternative Proxy (KAP) algorithm for treatment-bridge estimation. Since the original closed-form derivation is developed in the setting without additional observed covariates, we review that version here. Our neural treatment-bridge formulation in Section~\ref{Sec:NeuralTreatmentApproach_Appendix} will later extend the same idea to the case with observed covariates \(X\).

In the original KAP formulation, the treatment bridge \(\varphi_0 \in \gH_{\gA}\otimes \gH_{\gZ}\) is defined by
\[
\E[\varphi_0(a,Z)\mid A=a,W] = \frac{p_W(W)p_A(a)}{p_{W,A}(W,a)}.
\]
Given such a bridge, the dose-response is identified by
\[
f_{\mathrm{ATE}}(a) = \E[Y\varphi_0(a,Z)\mid A=a],
\]
as stated in Theorem~\ref{theorem:TreatmentBridgeIdentificationAllCausalFunctions} in the no-\(X\) setting.

To estimate \(\varphi_0\), KAP considers the regularized squared loss
\[
\gL_{\varphi,2}(\varphi)
=
\E\!\left[\left(r(W,A)-\E[\varphi(A,Z)\mid W,A]\right)^2\right]
+
\lambda_{\varphi,2}\|\varphi\|_{\gH_{\gA}\otimes\gH_{\gZ}}^2,
\]
where \(r(W,A):=p_W(W)p_A(A)/p_{W,A}(W,A)\). A key observation is that the cross-term can be rewritten using the identity \(r(w,a)p_{W,A}(w,a)=p_W(w)p_A(a)\), leading to the equivalent objective
\begin{align}
\gL_{\varphi,2}(\varphi)
&=
\E\!\left[\E[\varphi(A,Z)\mid W,A]^2\right]
-2\,\E_W\E_A\!\left[\E[\varphi(A,Z)\mid W,A]\right]
+\lambda_{\varphi,2}\|\varphi\|_{\gH_{\gA}\otimes\gH_{\gZ}}^2
+\mathrm{const.},
\label{eq:KernelAlternativeProxyPopulationLoss}
\end{align}
where \(\E_W\E_A[\cdot]\) denotes expectation under the product distribution \(p_W(w)p_A(a)\).

As in KPV, the multi-stage structure arises because the conditional expectation is not directly tractable. Using the reproducing property,
\[
\E[\varphi(A,Z)\mid W,A]
=
\left\langle
\varphi,\,
\phi_{\gA}(A)\otimes\mu_{Z\mid A,W}(A,W)
\right\rangle_{\gH_{\gA}\otimes\gH_{\gZ}},
\]
where \(\mu_{Z\mid A,W}(a,w):=\E[\phi_{\gZ}(Z)\mid A=a,W=w]\) is the conditional mean embedding of the treatment proxy. Under standard CME regularity conditions, there exists an operator \(V_{Z\mid A,W}\in \gS_2(\gH_{\gA}\otimes\gH_{\gW},\gH_{\gZ})\) such that \citep{HilbertSpaceEmbeddingforConditionalDistributions, grunewalder2012conditional, park2020measure, bozkurt2025density}
\[
\mu_{Z\mid A,W}(a,w)
=
V_{Z\mid A,W}\bigl(\phi_{\gA}(a)\otimes\phi_{\gW}(w)\bigr).
\]

Given first-stage samples \(\{(\bar a_i,\bar w_i,\bar z_i)\}_{i=1}^{n_\varphi}\), KAP estimates this operator by vector-valued ridge regression:
\[
\hat{\gL}_{\varphi,1}(V)
=
\frac{1}{n_\varphi}\sum_{i=1}^{n_\varphi}
\left\|
\phi_{\gZ}(\bar z_i)
-
V\bigl(\phi_{\gA}(\bar a_i)\otimes\phi_{\gW}(\bar w_i)\bigr)
\right\|_{\gH_{\gZ}}^2
+
\lambda_{\varphi,1}\|V\|_{\gS_2}^2.
\]
The corresponding CME estimator has the form
\[
\hat\mu_{Z\mid A,W}(a,w)=\hat V_{Z\mid A,W}\bigl(\phi_{\gA}(a)\otimes\phi_{\gW}(w)\bigr)
=\sum_{i=1}^{n_\varphi}\beta_i(a,w)\phi_{\gZ}(\bar z_i),
\]
with
\[
\bm{\beta}(a,w)
=
\left(
\mK_{\bar A\bar A}\odot\mK_{\bar W\bar W}
+
n_\varphi\lambda_{\varphi,1}\mI
\right)^{-1}
\left(
\mK_{\bar A a}\odot\mK_{\bar W w}
\right).
\]

In the second stage, the CME estimate is substituted into Equation~\ref{eq:KernelAlternativeProxyPopulationLoss}. Given second-stage samples \(\{(\tilde a_i,\tilde w_i)\}_{i=1}^{m_\varphi}\), the empirical objective becomes
\begin{align}
\hat{\gL}_{\varphi,2}(\varphi)
&=
\frac{1}{m_\varphi}\sum_{i=1}^{m_\varphi}
\left\langle
\varphi,\,
\phi_{\gA}(\tilde a_i)\otimes\hat\mu_{Z\mid A,W}(\tilde a_i,\tilde w_i)
\right\rangle^2
\nonumber\\
&\quad
-\frac{2}{m_\varphi(m_\varphi-1)}
\sum_{i\neq j}
\left\langle
\varphi,\,
\phi_{\gA}(\tilde a_i)\otimes\hat\mu_{Z\mid A,W}(\tilde a_i,\tilde w_j)
\right\rangle
+
\lambda_{\varphi,2}\|\varphi\|_{\gH_{\gA}\otimes\gH_{\gZ}}^2.
\label{eq:2StageRegressionObjectiveKAP}
\end{align}
By the generalized representer theorem \citep{GeneralizedRepresenterTheorem}, the minimizer \(\hat\varphi\) lies in the span of the corresponding data-dependent features and admits a closed-form solution.

Finally, KAP estimates the dose-response through a third-stage regression. Indeed,
\begin{align}
\E[Y\hat\varphi(a,Z)\mid A=a]
&=
\left\langle
\hat\varphi,\,
\phi_{\gA}(a)\otimes \E[Y\phi_{\gZ}(Z)\mid A=a]
\right\rangle.
\label{eq:KAP_dose_response_approximation}
\end{align}
The term \(\mu_{YZ\mid A}(a):=\E[Y\phi_{\gZ}(Z)\mid A=a]\) is estimated by vector-valued ridge regression, leading to a fully closed-form estimator. Using the notation of \citet{bozkurt2025density}, the final estimator can be written as \(\hat f_{\mathrm{ATE}}(a)=\bm{\alpha}^\top \mE(a)\), where \(\mE(a)\) is assembled from the first- and third-stage kernel matrices; the explicit expression is given in Algorithm 4.1 of \citet{bozkurt2025density}.

As in KPV, the main limitation for our purposes is that all feature maps are fixed once the kernels are chosen. Our neural treatment-bridge estimator keeps the same multi-stage logic, but replaces these fixed RKHS representations by adaptive learned features.

\subsection{Doubly Robust Kernel Proxy Variable}
\label{sec:DoublyRobustKPV}

The Doubly Robust Kernel Proxy Variable (DRKPV) estimator of \citet{bozkurt2025densityratiofreedoublyrobust} combines the kernel outcome-bridge estimator KPV with the kernel treatment-bridge estimator KAP. In the notation of the previous two subsections, it targets the doubly robust representation
\[
f_{\mathrm{ATE}}^{\mathrm{(DR)}}(a)
=
\E[\varphi_0(a,Z)\{Y-h_0(a,W)\}\mid A=a]
+
\E[h_0(a,W)].
\]
Thus, the estimator consists of three parts: an outcome-bridge term \(\E[h_0(a,W)]\), a treatment-bridge term \(\E[Y\varphi_0(a,Z)\mid A=a]\), and a correction term \(\E[\varphi_0(a,Z)h_0(a,W)\mid A=a]\). The first two are obtained from KPV and KAP, respectively.

The only additional ingredient is therefore the cross term \(\E[\varphi_0(a,Z)h_0(a,W)\mid A=a]\). Following \citet{bozkurt2025densityratiofreedoublyrobust}, this quantity is approximated by plugging in the learned bridge functions and writing
\[
\E[\hat\varphi(a,Z)\hat h(a,W)\mid A=a]
=
\left\langle
\hat\varphi \otimes \hat h,\,
\mu_{ZW\mid A}(a)\otimes \phi_{\gA}(a)\otimes \phi_{\gA}(a)
\right\rangle,
\]
where \(\mu_{ZW\mid A}(a):=\E[\phi_{\gZ}(Z)\otimes \phi_{\gW}(W)\mid A=a]\). This conditional mean embedding is estimated by a further vector-valued ridge regression on a third split of the data. If \(\xi_i(a)=\bigl[(\mK_{AA}+t\lambda_{\mathrm{DR}}\mI)^{-1}\mK_{Aa}\bigr]_i\), then the resulting closed-form approximation of the cross term is
\[
{\E[\hat \varphi(a,Z) \hat h(a,W)\mid A=a]}
\approx
\sum_{i=1}^t \xi_i(a)\,\hat\varphi(a,z_i)\hat h(a,w_i).
\]

Putting the pieces together, DRKPV takes the form
\[
\hat f_{\mathrm{ATE}}^{\mathrm{(DR)}}(a)
=
\hat f^{(h)}_{\mathrm{ATE}}(a)
+
\hat f^{(\varphi)}_{\mathrm{ATE}}(a)
-
\hat f^{(\mathrm{cross})}_{\mathrm{ATE}}(a),
\]
where \(\hat f^{(h)}_{\mathrm{ATE}}(a)\) is the KPV estimate, \(\hat f^{(\varphi)}_{\mathrm{ATE}}(a)\) is the KAP estimate, and \(\hat f^{(\mathrm{cross})}_{\mathrm{ATE}}(a)\) is the conditional mean embedding approximation above. By construction, this estimator inherits the doubly robust identification property: it remains valid if either the outcome bridge or the treatment bridge is correctly specified. As with KPV and KAP, however, all feature maps are fixed by the choice of kernels, which is precisely the limitation addressed by our neural extension.

\section{Neural mean embedding-based proxy causal learning for dose-response}
\label{sec:NeuralMeanEmbeddingSectionDoseResponse_Appendix}

In this sextion, we elaborate the algorithmic structure of the Neural Mean Embedding (NME) framework for dose-response estimation. The guiding idea is to preserve the multi-stage bridge-based structure induced by the identification strategy, while replacing fixed RKHS feature maps by trainable neural representations. This allows the representation itself to adapt to the data and makes the resulting estimators suitable for stochastic optimization in high-dimensional and structured settings.

Our construction consists of three components:
\begin{enumerate}
    \item \textbf{Outcome bridge network.} We begin with the Deep Feature Proxy Causal Learning (DFPCL) architecture of \citet{xu2021deep}, which provides a neural parameterization of the outcome bridge estimator. We refine this component by incorporating proximal closed-form updates for the final linear layer and by introducing a hybrid optimization scheme for the second-stage head. This latter modification allows the use of general robust regression losses, such as the Huber loss, rather than restricting the training objective to mean squared error.

    \item \textbf{Treatment bridge network.} We next introduce a neural estimator of the treatment bridge function in Section~\ref{Sec:NeuralTreatmentApproach_Appendix}. This construction follows the same principles as the outcome-side network: adaptive feature learning, proximal closed-form updates for the last linear layer, and hybrid optimization in the second stage.

    \item \textbf{Neural doubly robust unification.} Finally, in Section~\ref{appendix:DoublyRobustAlgorithmDoseResponse}, we combine the outcome- and treatment-bridge components into a fully neural doubly robust estimator of the dose-response curve. This final stage is designed to leverage the complementary strengths of both bridge functions.
\end{enumerate}

\subsection{Dose-response curve estimation: outcome bridge-based approach}
\label{sec:DFPCL_Xu_Review}

We first describe the outcome-bridge component, which builds on the DFPCL architecture of \citet{xu2021deep}. We refer to this component as \emph{OutcomeNet}. To distinguish it from the treatment-side construction introduced later in Section~\ref{Sec:NeuralTreatmentApproach_Appendix}, we attach the superscript \((h)\) to all quantities associated with the outcome bridge.

Let \(\mathcal D=\{(a_i,y_i,w_i,z_i,x_i)\}_{i=1}^N\) be an observational dataset of size \(N\). As in the fixed-feature multi-stage estimators, the learning procedure is split into two stages. The first stage learns a neural conditional mean embedding of the outcome proxy, while the second stage learns the outcome bridge itself. To reflect this structure, we partition \(\mathcal D\) into two subsets,
\[
\mathcal D^{(h)}_1=\{(\bar a_i,\bar y_i,\bar w_i,\bar z_i,\bar x_i)\}_{i=1}^{n_h},
\qquad
\mathcal D^{(h)}_2=\{(\tilde a_i,\tilde y_i,\tilde w_i,\tilde z_i,\tilde x_i)\}_{i=1}^{m_h},
\]
where \(\mathcal D^{(h)}_1\) is used in the first stage and \(\mathcal D^{(h)}_2\) in the second stage. Depending on the implementation, these two subsets may be disjoint or partially overlapping.

The neural parameterization mirrors the two-stage bridge structure reviewed in the kernel setting. The outcome bridge function is modeled as
\begin{align}
h(a,x,w)
&=
\vh^\top
\left(
\phi_{A,2}^{(h)}(a)\otimes
\phi_{X,2}^{(h)}(x)\otimes
\phi_{W,2}^{(h)}(w)
\right),
\label{Eq:DFPCL_SecondStageModel}
\\
\E\!\left[\phi_{W,2}^{(h)}(W)\mid A=a,X=x,Z=z\right]
&=
\left(\mV^{(h)}\right)^\top \phi_{AXZ,1}^{(h)}(a,x,z).
\label{Eq:DFPCL_FirstStageModel}
\end{align}

Equation~\ref{Eq:DFPCL_FirstStageModel} is the first-stage neural conditional mean embedding: it predicts the second-stage proxy features \(\phi_{W,2}^{(h)}(W)\) from the variables \((A,X,Z)\). Equation~\ref{Eq:DFPCL_SecondStageModel} then uses these features to evaluate the bridge function \(h(a,x,w)\). This is the neural mean embedding analogue of the two-stage KPV construction reviewed earlier, but with learnable feature maps replacing fixed RKHS features.

More specifically:
\begin{itemize}
    \item \(\phi_{AXZ,1}^{(h)}:\gA\times\gX\times\gZ\to\mathbb R^{d_{h,1}}\) denotes the first-stage feature extractor, parameterized by \(\theta^{(h)}_1\).
    
    \item \(\phi_{A,2}^{(h)}\), \(\phi_{X,2}^{(h)}\), and \(\phi_{W,2}^{(h)}\) denote the second-stage feature extractors for \(A\), \(X\), and \(W\), jointly parameterized by \(\theta^{(h)}_2\).
    
    \item \(\mV^{(h)}\) is the first-stage linear operator mapping the learned features of \((A,X,Z)\) to predicted proxy features in the representation space of \(W\).
    
    \item \(\vh\) is the second-stage linear head mapping the tensorized representation of \((A,X,W)\) to the scalar bridge value \(h(a,x,w)\).
\end{itemize}

This parameterization separates the two roles played by the neural model. The first stage learns how the information contained in \((A,X,Z)\) predicts the proxy representation of \(W\). The second stage then uses this learned proxy representation to fit the outcome bridge. In this way, the architecture preserves the logic of proximal bridge estimation while allowing the representation spaces themselves to be adapted during training.

OutcomeNet is trained in two stages. The first stage learns a neural conditional mean embedding of the outcome proxy, and the second stage uses this learned representation to estimate the outcome bridge function. As in the fixed-feature setting, this two-stage structure is dictated by the bridge equation itself. The difference is that, in the neural setting, both the feature maps and the final linear layers are learned from data.

The first stage estimates the neural conditional mean embedding
\[
\E\!\left[\phi_{W,2}^{(h)}(W)\mid A=a,X=x,Z=z\right]
=
\left(\mV^{(h)}\right)^\top \phi_{AXZ,1}^{(h)}(a,x,z)
\]
by minimizing the regularized least-squares objective
\begin{align}
    \hat{\gL}_{h,1}(\theta_{1}^{(h)}, \mV^{(h)})
    &=
    \frac{1}{n_h}
    \sum_{i = 1}^{n_h}
    \left\|
    \phi_{W,2}^{(h)}(\bar{w}_i)
    -
    \left(\mV^{(h)}\right)^\top
    \phi_{AXZ,1}^{(h)}(\bar{a}_i, \bar{x}_i, \bar{z}_i)
    \right\|_2^2
    +
    \lambda^{(h)}_{1}\|\mV^{(h)}\|_F^2,
    \label{Eq:DFPCL_FirstStageLoss}
\end{align}
where \(\lambda^{(h)}_{1}\) is the regularization parameter for the first-stage linear layer and \(\|\cdot\|_F\) denotes the Frobenius norm. Importantly, although Equation \ref{Eq:DFPCL_FirstStageLoss} depends on the second-stage proxy featurizer \(\phi_{W,2}^{(h)}\), the parameters \(\theta_{W,2}^{(h)}\) are not updated in this stage. Instead, \(\phi_{W,2}^{(h)}(W)\) is treated as the target representation to be predicted from \((A,X,Z)\).

In the second stage, the learned first-stage operator \(\hat{\mV}^{(h)}\) is used to construct predicted proxy features on \(\mathcal{D}_2^{(h)}\). The outcome bridge is then estimated by minimizing
\begin{align}
    \hat{\gL}_{h,2}(\theta_{2}^{(h)}, \vh)
    &=
    \frac{1}{m_h}
    \sum_{i = 1}^{m_h}
    \Big(
    \tilde{y}_i
    -
    \vh^\top
    \big(
    \phi_{A,2}^{(h)}(\tilde{a}_i)
    \otimes
    \phi_{X,2}^{(h)}(\tilde{x}_i)
    \otimes
    \hat{\mV}^{(h)}(\theta_{W,2}^{(h)})^\top
    \phi_{AXZ,1}^{(h)}(\tilde{a}_i,\tilde{x}_i,\tilde{z}_i)
    \big)
    \Big)^2
    \nonumber\\
    &\qquad
    +
    \lambda^{(h)}_{2}\|\vh\|_2^2,
    \label{Eq:DFPCL_SecondStageLoss}
\end{align}
where \(\theta_{2}^{(h)}=\{\theta_{A,2}^{(h)},\theta_{X,2}^{(h)},\theta_{W,2}^{(h)}\}\) collects the parameters of the second-stage featurizers.

A key difficulty is that Equation \ref{Eq:DFPCL_SecondStageLoss} depends on \(\theta_{W,2}^{(h)}\) only implicitly, through the first-stage optimum \(\hat{\mV}^{(h)}(\theta_{W,2}^{(h)})\). Indeed, changing \(\theta_{W,2}^{(h)}\) changes the target representation in the first stage, and therefore also changes the first-stage optimum itself. Following \citet{xu2021deep, xu2021learning}, this dependence is handled approximately by differentiating through the closed-form solution for \(\hat{\mV}^{(h)}(\theta_{W,2}^{(h)})\), while treating the first-stage feature representation \(\phi_{AXZ,1}^{(h)}\) as fixed. This yields a tractable approximation to the full bi-level gradient.

The resulting learning procedure alternates between gradient-based updates of the neural feature extractors and analytical updates of the final linear layers.

\paragraph{Neural parameter update (Stage 1).}
The first-stage feature extractor is updated by a gradient step on Equation \ref{Eq:DFPCL_FirstStageLoss} with learning rate \(\eta_1^{(h)}\):
\(
\theta_{1}^{(h)}
\leftarrow
\theta_{1}^{(h)}
-
\eta_1^{(h)}
\frac{\partial \hat{\gL}_{h,1}}{\partial \theta_{1}^{(h)}}.
\)

\paragraph{Closed-form update of the first-stage linear layer.}
Fixing the feature extractors, the first-stage operator \(\mV^{(h)}\) is updated analytically as
\begin{align}
    \hat{\mV}^{(h)}(\theta_{W,2}^{(h)})
    =
    \left(
    \Phi_{AXZ,1}^{(h)} \Phi_{AXZ,1}^{(h)\top}
    +
    n_h\lambda^{(h)}_{1}\mI
    \right)^{-1}
    \Phi_{AXZ,1}^{(h)} \Phi_{W,2}^{(h)\top},
    \label{eq:OutcomeBridgeFirstLayerClosedFormSolution}
\end{align}
where the feature matrices are formed from \(\mathcal{D}_1^{(h)}\) as
\begin{align*}
     \Phi_{W,2}^{(h)}
     &=
     \begin{bmatrix}
     \phi_{W,2}^{(h)}(\bar{w}_1) & \cdots & \phi_{W,2}^{(h)}(\bar{w}_{n_h})
     \end{bmatrix}
     \in \mathbb{R}^{d_{W,2}^{(h)} \times n_h},\\
     \Phi_{AXZ,1}^{(h)}
     &=
     \begin{bmatrix}
     \phi_{AXZ,1}^{(h)}(\bar{a}_1,\bar{x}_1,\bar{z}_1) & \cdots &
     \phi_{AXZ,1}^{(h)}(\bar{a}_{n_h},\bar{x}_{n_h},\bar{z}_{n_h})
     \end{bmatrix}
     \in \mathbb{R}^{d_{h,1} \times n_h}.
\end{align*}
Here \(d_{W,2}^{(h)}\) is the output dimension of \(\phi_{W,2}^{(h)}\), and \(d_{h,1}\) is the output dimension of \(\phi_{AXZ,1}^{(h)}\).

\paragraph{Neural parameter update (Stage 2).}
The second-stage featurizers are updated by a gradient step on Equation \ref{Eq:DFPCL_SecondStageLoss} with learning rate \(\eta_2^{(h)}\):
\(
\theta_{2}^{(h)}
\leftarrow
\theta_{2}^{(h)}
-
\eta_2^{(h)}
\frac{\partial \hat{\gL}_{h,2}}{\partial \theta_{2}^{(h)}}.
\)
For \(\theta_{W,2}^{(h)}\), the gradient is propagated through the closed-form expression in Equation \ref{eq:OutcomeBridgeFirstLayerClosedFormSolution}.

\paragraph{Closed-form update of the second-stage linear layer.}
Fixing the second-stage featurizers, the final linear head \(\vh\) is updated analytically as
\begin{align}
    \hat{\vh}
    =
    \left(
    \Psi_2^{(h)} \Psi_2^{(h)\top}
    +
    m_h \lambda^{(h)}_{2}\mI
    \right)^{-1}
    \Psi_2^{(h)} \mY_2^\top,
    \label{eq:OutcomeBridgeSecondLayerClosedFormSolution}
\end{align}
where the \(i\)-th column of \(\Psi_2^{(h)}\in\mathbb{R}^{d_{h,2}\times m_h}\) is
\begin{align*}
\Psi_{2,i}^{(h)}
&=
\phi_{A,2}^{(h)}(\tilde{a}_i)
\otimes
\phi_{X,2}^{(h)}(\tilde{x}_i)
\otimes
\hat{\mV}^{(h)\top}
\phi_{AXZ,1}^{(h)}(\tilde{a}_i,\tilde{x}_i,\tilde{z}_i),
\qquad
i=1,\ldots,m_h,
\\
\Psi_2^{(h)}
&=
\begin{bmatrix}
\Psi_{2,1}^{(h)} & \cdots & \Psi_{2,m_h}^{(h)}
\end{bmatrix}, \enspace \text{and} \quad
\mY_2
=
\begin{bmatrix}
\tilde{y}_1 & \cdots & \tilde{y}_{m_h}
\end{bmatrix}.
\end{align*}
Here \(d_{h,2}=d_{A,2}^{(h)} d_{X,2}^{(h)} d_{W,2}^{(h)}\).

Following \citet{xu2021deep, xu2021learning}, it is helpful to perform several first-stage updates before each second-stage update. In practice, we carry out \(T_{h,1}\) first-stage updates for every \(T_{h,2}\) second-stage updates, typically with \(T_{h,1}\ge T_{h,2}\), so that the neural conditional mean embedding remains sufficiently accurate throughout training.


\paragraph{Proximal closed-form updates.}
The analytical updates in Equations \ref{eq:OutcomeBridgeFirstLayerClosedFormSolution} and \ref{eq:OutcomeBridgeSecondLayerClosedFormSolution} are exact minimizers of the corresponding quadratic objectives on a fixed dataset. In stochastic mini-batch training, however, directly replacing the last-layer parameters by these batchwise optima can lead to unstable trajectories, since each batch induces a different local optimum. Following \citet{galashov2025closedformlayeroptimization}, we therefore replace the zero-centered ridge penalties by proximal penalties centered at the previous iterates. This keeps the batchwise closed-form updates close to the current parameter state and substantially stabilizes training.

In the present two-stage setting, there is one additional subtlety. The stage-2 loss depends on the first-stage optimum through the predicted proxy features. Consequently, in addition to the persistent first-stage iterate \(\hat{\mV}^{(h)}_t\), we introduce an auxiliary first-stage solution that is recomputed \emph{on the fly} on the current second-stage batch. This auxiliary operator is not stored as a separate model parameter. Its only role is to evaluate the second-stage loss with the current proxy representation \(\phi_{W,2}^{(h)}\). To define it, note that \(\mathcal D^{(h)}_2\) must also retain the proxy observations \(\tilde w_i\), even though they do not appear explicitly in the quadratic bridge loss.

Let \(\mathcal B^{(h)}_1 \subset \mathcal D^{(h)}_1\) and \(\mathcal B^{(h)}_2 \subset \mathcal D^{(h)}_2\) denote mini-batches sampled at iteration \(t\). We define the proximal batch losses
\begin{align}
\hat{\gL}_{h,1}^{\mathrm{prox}}\!\left(\theta_1^{(h)},\mV^{(h)};\theta_2^{(h)},\mathcal B_1^{(h)}\right)
&=
\frac{1}{|\mathcal B_1^{(h)}|}
\sum_{i\in \mathcal B_1^{(h)}}
\left\|
\phi_{W,2}^{(h)}(\bar w_i)
-
\left(\mV^{(h)}\right)^\top
\phi_{AXZ,1}^{(h)}(\bar a_i,\bar x_i,\bar z_i)
\right\|_2^2 \nonumber\\
&+
\lambda_1^{(h)}
\left\|
\mV^{(h)}-\hat{\mV}^{(h)}_t
\right\|_F^2,
\label{Eq:DFPCL_FirstStageProximalLoss}
\\
\hat{\gL}_{h,2}^{\mathrm{prox}}\!\left(\theta_2^{(h)},\vh;\theta_1^{(h)},\mathcal B_2^{(h)}\right)
&=
\frac{1}{|\mathcal B_2^{(h)}|}
\sum_{i\in \mathcal B_2^{(h)}}
\Big(
\tilde y_i
-
\vh^\top
\big(
\phi_{A,2}^{(h)}(\tilde a_i)
\otimes
\phi_{X,2}^{(h)}(\tilde x_i)
\otimes
\check{\mV}^{(h)\top}_t
\phi_{AXZ,1}^{(h)}(\tilde a_i,\tilde x_i,\tilde z_i)
\big)
\Big)^2
\nonumber\\
&\qquad
+
\lambda_2^{(h)}
\left\|
\vh-\hat{\vh}_t
\right\|_2^2,
\label{Eq:DFPCL_SecondStageProximalLoss}
\end{align}
where \(\lambda_1^{(h)}\) and \(\lambda_2^{(h)}\) are proximal regularization parameters, \(\hat{\mV}^{(h)}_t\) and \(\hat{\vh}_t\) are the previous iterates, and \(\check{\mV}^{(h)}_t\) denotes the auxiliary on-the-fly first-stage solution computed from the current second-stage batch.

More precisely, we define the persistent first-stage update by
\begin{align}
\hat{\mV}^{(h)}_{t+1}\!\left(\theta_1^{(h)};\theta_2^{(h)},\mathcal B_1^{(h)}\right)
=
\argmin_{\mV^{(h)}}
\hat{\gL}_{h,1}^{\mathrm{prox}}\!\left(\theta_1^{(h)},\mV^{(h)};\theta_2^{(h)},\mathcal B_1^{(h)}\right),
\label{eq:minimizer_first_stage_batch}
\end{align}
and the auxiliary on-the-fly first-stage operator by
\begin{align}
\check{\mV}^{(h)}_{t}\!\left(\theta_1^{(h)};\theta_2^{(h)},\mathcal B_2^{(h)}\right)
=
\argmin_{\mV^{(h)}}
\hat{\gL}_{h,1}^{\mathrm{prox}}\!\left(\theta_1^{(h)},\mV^{(h)};\theta_2^{(h)},\mathcal B_2^{(h)}\right).
\label{eq:minimizer_first_stage_batch_aux}
\end{align}
The first quantity is the actual stage-1 parameter update, while the second is an auxiliary batchwise solve used only inside the stage-2 objective. 

Given \(\check{\mV}^{(h)}_{t}\), the second-stage proximal update is
\begin{align}
\hat{\vh}_{t+1}\!\left(\theta_1^{(h)};\theta_2^{(h)},\mathcal B_2^{(h)}\right)
=
\argmin_{\vh}
\hat{\gL}_{h,2}^{\mathrm{prox}}\!\left(\theta_2^{(h)},\vh;\theta_1^{(h)},\mathcal B_2^{(h)}\right).
\label{eq:minimizer_second_stage_batch}
\end{align}

Expanding these minimizers yields the proximal closed-form updates. For any batch \(\mathcal B\), let
\[
\Phi_{AXZ,1}^{(h)}(\mathcal B)
=
\begin{bmatrix}
\phi_{AXZ,1}^{(h)}(a_i,x_i,z_i)
\end{bmatrix}_{i\in\mathcal B},
\qquad
\Phi_{W,2}^{(h)}(\mathcal B)
=
\begin{bmatrix}
\phi_{W,2}^{(h)}(w_i)
\end{bmatrix}_{i\in\mathcal B},
\]
where the tuples \((a_i,x_i,z_i,w_i)\) are read from the corresponding batch. Then
\begin{align}
\hat{\mV}^{(h)}_{t+1}\!\left(\theta_1^{(h)};\theta_2^{(h)},\mathcal B_1^{(h)}\right)
&=
\left(
\Phi_{AXZ,1}^{(h)}(\mathcal B_1^{(h)})
\Phi_{AXZ,1}^{(h)}(\mathcal B_1^{(h)})^\top
+
|\mathcal B_1^{(h)}|\lambda_1^{(h)} \mI
\right)^{-1}\nonumber\\
&\quad\times\left(
\Phi_{AXZ,1}^{(h)}(\mathcal B_1^{(h)})
\Phi_{W,2}^{(h)}(\mathcal B_1^{(h)})^\top
+
|\mathcal B_1^{(h)}|\lambda_1^{(h)}\hat{\mV}^{(h)}_t
\right),
\label{eq:OutcomeBridgeFirstLayerProximalClosedFormSolution}
\\
\check{\mV}^{(h)}_{t}\!\left(\theta_1^{(h)};\theta_2^{(h)},\mathcal B_2^{(h)}\right)
&=
\left(
\Phi_{AXZ,1}^{(h)}(\mathcal B_2^{(h)})
\Phi_{AXZ,1}^{(h)}(\mathcal B_2^{(h)})^\top
+
|\mathcal B_2^{(h)}|\lambda_1^{(h)} \mI
\right)^{-1}\nonumber\\
&\quad\times\left(
\Phi_{AXZ,1}^{(h)}(\mathcal B_2^{(h)})
\Phi_{W,2}^{(h)}(\mathcal B_2^{(h)})^\top
+
|\mathcal B_2^{(h)}|\lambda_1^{(h)}\hat{\mV}^{(h)}_t
\right).
\label{eq:OutcomeBridgeFirstLayerProximalClosedFormSolutionAux}
\end{align}
The corresponding second-stage feature matrix is built from \(\mathcal B_2^{(h)}\) columnwise as
\(
\Psi_{2,i}^{(h)}
=
\phi_{A,2}^{(h)}(\tilde a_i)
\otimes
\phi_{X,2}^{(h)}(\tilde x_i)
\otimes
\check{\mV}^{(h)\top}_{t}
\phi_{AXZ,1}^{(h)}(\tilde a_i,\tilde x_i,\tilde z_i),
\quad i\in \mathcal B_2^{(h)},
\)
and we write
\(
\Psi_2^{(h)}(\mathcal B_2^{(h)})
=
\begin{bmatrix}
\Psi_{2,i}^{(h)}
\end{bmatrix}_{i\in\mathcal B_2^{(h)}},
\quad
\mY_2(\mathcal B_2^{(h)})
=
\begin{bmatrix}
\tilde y_i
\end{bmatrix}_{i\in\mathcal B_2^{(h)}}.
\)
Then the second-stage proximal update is
\begin{align}
\hat{\vh}_{t+1}\!\left(\theta_1^{(h)};\theta_2^{(h)},\mathcal B_2^{(h)}\right)
&=
\left(
\Psi_2^{(h)}(\mathcal B_2^{(h)})
\Psi_2^{(h)}(\mathcal B_2^{(h)})^\top
+
|\mathcal B_2^{(h)}|\lambda_2^{(h)}\mI
\right)^{-1}\nonumber\\
&\quad\times\left(
\Psi_2^{(h)}(\mathcal B_2^{(h)})
\mY_2(\mathcal B_2^{(h)})^\top
+
|\mathcal B_2^{(h)}|\lambda_2^{(h)}\hat{\vh}_t
\right).
\label{eq:OutcomeBridgeSecondLayerProximalClosedFormSolution}
\end{align}
Equivalently, the batch-size factors may be absorbed into \(\lambda_1^{(h)}\) and \(\lambda_2^{(h)}\); we write them explicitly here only to match the averaged losses in Equations \ref{Eq:DFPCL_FirstStageProximalLoss} and \ref{Eq:DFPCL_SecondStageProximalLoss}.

Finally, note that the stage-2 loss depends on \(\theta_{W,2}^{(h)}\) through the auxiliary operator \(\check{\mV}^{(h)}_t\). In practice, gradients with respect to \(\theta_{W,2}^{(h)}\) are therefore propagated through the closed-form mapping in Equation \ref{eq:OutcomeBridgeFirstLayerProximalClosedFormSolutionAux}, while the first-stage representation \(\phi_{AXZ,1}^{(h)}\) is treated as fixed, exactly as in the non-proximal DFPCL update.

\paragraph{Flexible second-stage optimization with robust loss functions.}
The proximal update in Equation \ref{eq:OutcomeBridgeSecondLayerProximalClosedFormSolution} is available only for a quadratic second-stage loss. To allow greater flexibility, we replace the squared loss in the second stage by a general differentiable regression loss \(\ell_{h,2}(\cdot,\cdot)\). In particular, this allows the use of robust losses such as the Huber loss \citep{huber1964robust}, which can be less sensitive to outliers than mean squared error.

To define the resulting objective, recall that the stage-2 prediction uses the auxiliary first-stage operator computed on the current second-stage batch. The generalized second-stage batch loss is then
\begin{equation}
\hat{\gL}_{h,2}^{\mathrm{gen}}\!\left(\theta_2^{(h)},\vh;\theta_1^{(h)},\mathcal B_2^{(h)}\right)
=
\frac{1}{|\mathcal B_2^{(h)}|}
\sum_{i\in\mathcal B_2^{(h)}}
\ell_{h,2}\!\left(\tilde y_i,\vh^\top \Psi_{2,i}^{(h)}\right)
+
\lambda_2^{(h)}\|\vh-\hat{\vh}_t\|_2^2.
\label{Eq:General_SecondStageLoss_OutcomeBridge}
\end{equation}

Once the loss is no longer quadratic, the exact proximal closed-form update for \(\vh\) is no longer available. We therefore use a two-step strategy inside each stage-2 iteration.

\begin{enumerate}
    \item \textbf{Featurizer update.}  
    We first update the second-stage neural parameters \(\theta_2^{(h)}\) by a gradient step on Equation \ref{Eq:General_SecondStageLoss_OutcomeBridge}, using the current estimate of the head and the current auxiliary first-stage operator \(\check{\mV}_t^{(h)}\).

    \item \textbf{Head refinement.}  
    Holding the feature extractors fixed, we then refine the second-stage linear head \(\vh\) by approximately minimizing Equation \ref{Eq:General_SecondStageLoss_OutcomeBridge} with respect to \(\vh\). In our implementation, this inner optimization is performed by \(K_h\) steps of L-BFGS \citep{liu1989limited}, using the \texttt{PyTorch} implementation \citep{Ansel_PyTorch_2_Faster_2024}.
\end{enumerate}

This numerical refinement plays the same role as the closed-form update in the quadratic case: it keeps the final linear layer close to the minimizer of the current second-stage objective. When \(\ell_{h,2}\) is chosen to be the squared loss and the inner optimization is run to convergence, the resulting update recovers the proximal quadratic solution in Equation \ref{eq:OutcomeBridgeSecondLayerProximalClosedFormSolution}. The advantage of Equation \ref{Eq:General_SecondStageLoss_OutcomeBridge} is therefore not conceptual but practical: it preserves the same multi-stage bridge-learning structure while allowing robust second-stage losses.

The complete OutcomeNet training procedure is summarized in Algorithm \ref{algo:DeepOutcomeBridgeFunction}.

\begin{algorithm}[H]
{\footnotesize
\textbf{Input:} Datasets \(\mathcal{D}_1^{(h)}=\{(\bar a_i,\bar w_i,\bar z_i,\bar x_i)\}_{i=1}^{n_h}\) and \(\mathcal{D}_2^{(h)}=\{(\tilde a_i,\tilde y_i,\tilde w_i,\tilde z_i,\tilde x_i)\}_{i=1}^{m_h}\).\\
\textbf{Parameters:} Neural parameters \(\theta_1^{(h)},\theta_2^{(h)}\); linear layers \(\mV^{(h)}, \vh\).\\
\textbf{Design choices:} Second-stage loss \(\ell_{h,2}\).\\
\textbf{Hyperparameters:} Learning rates \(\eta_1^{(h)}, \eta_2^{(h)}\); proximal coefficients \(\lambda_1^{(h)}, \lambda_2^{(h)}\); update counts \(T_{h,1},T_{h,2}\); inner optimization count \(K_h\).\\
\textbf{Output:} Optimized OutcomeNet parameters \(\{\theta_1^{(h)},\theta_2^{(h)},\hat{\mV}^{(h)},\hat{\vh}\}\).
\begin{algorithmic}[1]
\REPEAT
    \STATE Sample mini-batches \(\mathcal B_1^{(h)}\subset \mathcal D_1^{(h)}\) and \(\mathcal B_2^{(h)}\subset \mathcal D_2^{(h)}\).

    \FOR{\(t_1=1\) to \(T_{h,1}\)}
        \STATE Update \(\theta_1^{(h)}\) by one gradient step on Equation \ref{Eq:DFPCL_FirstStageProximalLoss} using \(\mathcal B_1^{(h)}\).
        \STATE Update the persistent first-stage operator \(\hat{\mV}^{(h)}\) with Equation \ref{eq:OutcomeBridgeFirstLayerProximalClosedFormSolution} using \(\mathcal B_1^{(h)}\).
    \ENDFOR

    \FOR{\(t_2=1\) to \(T_{h,2}\)}
        \STATE Compute the auxiliary first-stage operator \(\check{\mV}_t^{(h)}\) on \(\mathcal B_2^{(h)}\) with Equation \ref{eq:OutcomeBridgeFirstLayerProximalClosedFormSolutionAux}.
        \STATE Update \(\theta_2^{(h)}\) by one gradient step on Equation \ref{Eq:General_SecondStageLoss_OutcomeBridge} using \(\mathcal B_2^{(h)}\).
        \STATE Recompute \(\check{\mV}_t^{(h)}\) on \(\mathcal B_2^{(h)}\) with the updated \(\theta_2^{(h)}\).
        \STATE Update \(\vh\) by running \(K_h\) steps of L-BFGS on Equation \ref{Eq:General_SecondStageLoss_OutcomeBridge}, holding \(\theta_2^{(h)}\) fixed.
    \ENDFOR
\UNTIL{convergence}
\end{algorithmic}
}
\caption{Outcome bridge network (OutcomeNet)}
\label{algo:DeepOutcomeBridgeFunction}
\end{algorithm}

\paragraph{Dose-response estimation.}
Once the outcome bridge \(\hat h\) has been learned, the dose-response curve is obtained by plugging \(\hat h\) into the identifying representation \(f_{\mathrm{ATE}}(a)=\E[h_0(a,X,W)]\). Concretely, given an evaluation sample \(\mathcal D_3^{(h)}=\{(x_i^{\circ},w_i^{\circ})\}_{i=1}^{t_h}\) drawn from the observed marginal law of \((X,W)\), we estimate the dose-response by the empirical average
\begin{equation}
\hat f_{\mathrm{ATE}}^{(h)}(a)
=
\frac{1}{t_h}\sum_{i=1}^{t_h}\hat h(a,x_i^{\circ},w_i^{\circ}).
\label{eq:OutcomeBridgeDoseResponseEstimator}
\end{equation}
Using Equation \ref{Eq:DFPCL_SecondStageModel}, this estimator can be written equivalently as
\begin{equation}
\hat f_{\mathrm{ATE}}^{(h)}(a)
=
\hat{\vh}^{\top}
\left(
\phi_{A,2}^{(h)}(a)\otimes \hat\mu_{XW}^{(h)}
\right),
\qquad
\hat\mu_{XW}^{(h)}
:=
\frac{1}{t_h}\sum_{i=1}^{t_h}
\phi_{X,2}^{(h)}(x_i^{\circ})\otimes \phi_{W,2}^{(h)}(w_i^{\circ}).\nonumber
\end{equation}
Thus, after learning the bridge function, dose-response estimation reduces to averaging the learned bridge over the empirical distribution of the observed covariates and outcome proxies. In practice, \(\mathcal D_3^{(h)}\) may be chosen as a held-out sample or as one of the previously used splits. In our implementation, we use the second-stage sample \(\mathcal D_2^{(h)}\) for this empirical averaging step.

\subsection{Dose-response curve estimation: treatment bridge-based approach}
\label{Sec:NeuralTreatmentApproach_Appendix}

We now introduce the treatment bridge-based component of our neural mean embedding framework, which we refer to as \emph{TreatmentNet}. In contrast to the kernel KAP derivation reviewed earlier, the present neural formulation explicitly incorporates the observed covariates \(X\). The target of this component is the treatment bridge \(\varphi_0(a,x,z)\), which satisfies the conditional moment relation
\[
\E[\varphi_0(a,X,Z)\mid A=a,X,W]
=
r(a,X,W),
\qquad
r(a,x,w):=\frac{p_A(a)\,p_{X,W}(x,w)}{p_{A,X,W}(a,x,w)}.
\]
Accordingly, we view treatment-bridge learning as a regression problem in which the unknown conditional expectation \(\E[\varphi(A,X,Z)\mid A,X,W]\) is fitted to the density-ratio target \(r(A,X,W)\).

In practice, the ratio \(r(A,X,W)\) is not known and is replaced by a pre-computed estimator \(\hat r(A,X,W)\); see Appendix~\ref{section:densratio_estimation} for details. Let \(\mathcal D=\{(a_i,y_i,w_i,z_i,x_i)\}_{i=1}^N\) be the full dataset, and split it into
\[
\mathcal D_1^{(\varphi)}
=
\{(\bar a_i,\bar x_i,\bar w_i,\bar z_i)\}_{i=1}^{n_\varphi},
\qquad
\mathcal D_2^{(\varphi)}
=
\{(\tilde a_i,\tilde x_i,\tilde w_i,\tilde z_i,\tilde{\hat r}_i)\}_{i=1}^{m_\varphi},
\]
where \(\tilde{\hat r}_i:=\hat r(\tilde a_i,\tilde x_i,\tilde w_i)\). The first split is used to learn the conditional mean embedding of the treatment proxy, and the second split is used to learn the treatment bridge.

The treatment bridge and the first-stage neural conditional mean embedding are parameterized as
\begin{align}
\varphi(a,x,z)
&=
\bm{\varphi}^\top
\left(
\phi_{AX,2}^{(\varphi)}(a,x)\otimes \phi_{Z,2}^{(\varphi)}(z)
\right),\nonumber
\\
\E[\phi_{Z,2}^{(\varphi)}(Z)\mid A=a,X=x,W=w]
&=
\left(\mV^{(\varphi)}\right)^\top \phi_{AXW,1}^{(\varphi)}(a,x,w).\nonumber
\end{align}
Here:
\begin{itemize}
    \item \(\phi_{AXW,1}^{(\varphi)}:\gA\times\gX\times\gW\to\mathbb R^{d_{\varphi,1}}\) is the first-stage feature extractor, parameterized by \(\theta_1^{(\varphi)}\).
    \item \(\phi_{AX,2}^{(\varphi)}\) and \(\phi_{Z,2}^{(\varphi)}\) are the second-stage feature extractors, jointly parameterized by \(\theta_2^{(\varphi)}\).
    \item \(\mV^{(\varphi)}\) is the first-stage linear operator mapping features of \((A,X,W)\) to predicted proxy features.
    \item \(\bm{\varphi}\) is the second-stage linear head mapping the tensorized representation of \((A,X,Z)\) to the bridge value \(\varphi(a,x,z)\).
\end{itemize}

As on the outcome side, TreatmentNet is trained by a bi-level multi-stage procedure. The first stage learns the conditional mean embedding of the proxy representation \(\phi_{Z,2}^{(\varphi)}(Z)\), and the second stage regresses the density-ratio target \(\hat r\) onto the resulting learned representation.

The first-stage batch loss, for a mini-batch \(\mathcal B_1^{(\varphi)}\subset \mathcal D_1^{(\varphi)}\), is
\begin{align}
\hat{\gL}_{\varphi,1}^{\mathrm{prox}}\!\left(\theta_1^{(\varphi)},\mV^{(\varphi)};\theta_2^{(\varphi)},\mathcal B_1^{(\varphi)}\right)
&=
\frac{1}{|\mathcal B_1^{(\varphi)}|}
\sum_{i\in \mathcal B_1^{(\varphi)}}
\left\|
\phi_{Z,2}^{(\varphi)}(\bar z_i)
-
\left(\mV^{(\varphi)}\right)^\top
\phi_{AXW,1}^{(\varphi)}(\bar a_i,\bar x_i,\bar w_i)
\right\|_2^2\nonumber\\
&+
\lambda_1^{(\varphi)}
\left\|
\mV^{(\varphi)}-\hat{\mV}^{(\varphi)}_t
\right\|_F^2.
\label{Eq:Alternative_DFPCL_FirstStageLoss}
\end{align}
As in OutcomeNet, the targets \(\phi_{Z,2}^{(\varphi)}(Z)\) depend on the second-stage proxy featurizer, but \(\theta_{Z,2}^{(\varphi)}\) is not updated in this stage; the feature map acts only as the target representation.

For fixed \(\theta_1^{(\varphi)}\) and \(\theta_2^{(\varphi)}\), the proximal first-stage minimizer is
\begin{align}
\hat{\mV}^{(\varphi)}_{t+1}\!\left(\theta_1^{(\varphi)};\theta_2^{(\varphi)},\mathcal B_1^{(\varphi)}\right)
&=
\left(
\Phi_{AXW,1}^{(\varphi)}(\mathcal B_1^{(\varphi)})
\Phi_{AXW,1}^{(\varphi)}(\mathcal B_1^{(\varphi)})^\top
+
|\mathcal B_1^{(\varphi)}|\lambda_1^{(\varphi)}\mI
\right)^{-1}
\nonumber\\
&\qquad\times
\left(
\Phi_{AXW,1}^{(\varphi)}(\mathcal B_1^{(\varphi)})
\Phi_{Z,2}^{(\varphi)}(\mathcal B_1^{(\varphi)})^\top
+
|\mathcal B_1^{(\varphi)}|\lambda_1^{(\varphi)}\hat{\mV}^{(\varphi)}_t
\right),
\label{eq:TreatmentBridgeFirstLayerProximalClosedFormSolution}
\end{align}
where
\(
\Phi_{AXW,1}^{(\varphi)}(\mathcal B)
=
\begin{bmatrix}
\phi_{AXW,1}^{(\varphi)}(a_i,x_i,w_i)
\end{bmatrix}_{i\in\mathcal B},
\quad
\Phi_{Z,2}^{(\varphi)}(\mathcal B)
=
\begin{bmatrix}
\phi_{Z,2}^{(\varphi)}(z_i)
\end{bmatrix}_{i\in\mathcal B}.
\)

The second stage is trained on a mini-batch \(\mathcal B_2^{(\varphi)}\subset \mathcal D_2^{(\varphi)}\). As on the outcome side, the second-stage prediction depends on the current first-stage optimum. We therefore distinguish between:
\begin{itemize}
    \item the \emph{persistent} first-stage operator \(\hat{\mV}_t^{(\varphi)}\), which is updated on batches from \(\mathcal D_1^{(\varphi)}\), and
    \item an \emph{auxiliary on-the-fly} operator \(\check{\mV}_t^{(\varphi)}\), recomputed on the current second-stage batch \(\mathcal B_2^{(\varphi)}\).
\end{itemize}
The auxiliary operator is not stored as a separate model parameter. Its only role is to evaluate the second-stage loss using the current proxy feature representation \(\phi_{Z,2}^{(\varphi)}\), exactly as in the outcome-bridge formulation.

Formally, \(\check{\mV}_t^{(\varphi)}\) is defined as the minimizer of the first-stage proximal objective evaluated on \(\mathcal B_2^{(\varphi)}\):
\begin{align}
\check{\mV}^{(\varphi)}_{t}\!\left(\theta_1^{(\varphi)};\theta_2^{(\varphi)},\mathcal B_2^{(\varphi)}\right)
&=
\left(
\Phi_{AXW,1}^{(\varphi)}(\mathcal B_2^{(\varphi)})
\Phi_{AXW,1}^{(\varphi)}(\mathcal B_2^{(\varphi)})^\top
+
|\mathcal B_2^{(\varphi)}|\lambda_1^{(\varphi)}\mI
\right)^{-1}
\nonumber\\
&\qquad\times
\left(
\Phi_{AXW,1}^{(\varphi)}(\mathcal B_2^{(\varphi)})
\Phi_{Z,2}^{(\varphi)}(\mathcal B_2^{(\varphi)})^\top
+
|\mathcal B_2^{(\varphi)}|\lambda_1^{(\varphi)}\hat{\mV}^{(\varphi)}_t
\right).
\label{eq:TreatmentBridgeFirstLayerProximalClosedFormSolutionAux}
\end{align}

Using this auxiliary operator, the second-stage feature vector for sample \(i\in \mathcal B_2^{(\varphi)}\) is
\(
\Psi_{2,i}^{(\varphi)}
=
\phi_{AX,2}^{(\varphi)}(\tilde a_i,\tilde x_i)
\otimes
\check{\mV}_t^{(\varphi)\top}
\phi_{AXW,1}^{(\varphi)}(\tilde a_i,\tilde x_i,\tilde w_i).
\)
The quadratic second-stage proximal loss is therefore
\begin{align}
\hat{\gL}_{\varphi,2}^{\mathrm{prox}}\!\left(\theta_2^{(\varphi)},\bm{\varphi};\theta_1^{(\varphi)},\mathcal B_2^{(\varphi)}\right)
&=
\frac{1}{|\mathcal B_2^{(\varphi)}|}
\sum_{i\in \mathcal B_2^{(\varphi)}}
\Big(
\tilde{\hat r}_i
-
\bm{\varphi}^\top \Psi_{2,i}^{(\varphi)}
\Big)^2
+
\lambda_2^{(\varphi)}
\left\|
\bm{\varphi}-\hat{\bm{\varphi}}_t
\right\|_2^2.
\label{eq:TreatmentSecondStageProximalLoss}
\end{align}

A difficulty, analogous to the one encountered in OutcomeNet, is that Equation \ref{eq:TreatmentSecondStageProximalLoss} depends on \(\theta_{Z,2}^{(\varphi)}\) only implicitly, through the first-stage optimum \(\check{\mV}_t^{(\varphi)}\). Following \citet{xu2021deep, xu2021learning}, we handle this dependence by differentiating through the closed-form mapping in Equation \ref{eq:TreatmentBridgeFirstLayerProximalClosedFormSolutionAux}, while treating the first-stage feature matrix \(\Phi_{AXW,1}^{(\varphi)}\) as fixed when computing gradients with respect to \(\theta_{Z,2}^{(\varphi)}\).

For fixed features, the corresponding proximal second-stage minimizer is
\begin{align}
\hat{\bm{\varphi}}_{t+1}\!\left(\theta_1^{(\varphi)};\theta_2^{(\varphi)},\mathcal B_2^{(\varphi)}\right)
&=
\left(
\Psi_2^{(\varphi)}(\mathcal B_2^{(\varphi)})
\Psi_2^{(\varphi)}(\mathcal B_2^{(\varphi)})^\top
+
|\mathcal B_2^{(\varphi)}|\lambda_2^{(\varphi)}\mI
\right)^{-1}
\nonumber\\
&\qquad\times
\left(
\Psi_2^{(\varphi)}(\mathcal B_2^{(\varphi)})
\mR_2(\mathcal B_2^{(\varphi)})^\top
+
|\mathcal B_2^{(\varphi)}|\lambda_2^{(\varphi)}\hat{\bm{\varphi}}_t
\right),
\label{eq:TreatmentBridgeSecondLayerProximalClosedFormSolution}
\end{align}
where
\(
\Psi_2^{(\varphi)}(\mathcal B_2^{(\varphi)})
=
\begin{bmatrix}
\Psi_{2,i}^{(\varphi)}
\end{bmatrix}_{i\in\mathcal B_2^{(\varphi)}},
\quad
\mR_2(\mathcal B_2^{(\varphi)})
=
\begin{bmatrix}
\tilde{\hat r}_i
\end{bmatrix}_{i\in\mathcal B_2^{(\varphi)}}.
\)

As on the outcome side, the closed-form update above is available only for a quadratic second-stage loss. To allow greater flexibility, we replace the squared loss in Equation \ref{eq:TreatmentSecondStageProximalLoss} by a general differentiable regression loss \(\ell^{(\varphi)}_{2}(\cdot,\cdot)\), leading to
\begin{align}
\hat{\gL}_{\varphi,2}^{\mathrm{gen}}\!\left(\theta_2^{(\varphi)},\bm{\varphi};\theta_1^{(\varphi)},\mathcal B_2^{(\varphi)}\right)
&=
\frac{1}{|\mathcal B_2^{(\varphi)}|}
\sum_{i\in \mathcal B_2^{(\varphi)}}
\ell^{(\varphi)}_{2}\!\left(
\tilde{\hat r}_i,\,
\bm{\varphi}^\top \Psi_{2,i}^{(\varphi)}
\right)
+
\lambda_2^{(\varphi)}
\left\|
\bm{\varphi}-\hat{\bm{\varphi}}_t
\right\|_2^2.
\label{Eq:General_SecondStageLoss_TreatmentBridge}
\end{align}

As in OutcomeNet, we optimize Equation \ref{Eq:General_SecondStageLoss_TreatmentBridge} by a two-step stage-2 procedure: first updating the neural feature extractors \(\theta_2^{(\varphi)}\) by gradient descent, and then approximately minimizing the resulting objective with respect to \(\bm{\varphi}\) by running \(K_\varphi\) steps of L-BFGS \citep{liu1989limited}, using the \texttt{PyTorch} implementation \citep{Ansel_PyTorch_2_Faster_2024}. 

The complete training procedure is summarized in Algorithm~\ref{algo:DeepTreatmentBridgeFunction}.

\begin{algorithm}[H]
{\footnotesize
\textbf{Input:} Datasets \(\mathcal D_1^{(\varphi)}=\{(\bar a_i,\bar x_i,\bar w_i,\bar z_i)\}_{i=1}^{n_\varphi}\) and \(\mathcal D_2^{(\varphi)}=\{(\tilde a_i,\tilde x_i,\tilde w_i,\tilde z_i,\tilde{\hat r}_i)\}_{i=1}^{m_\varphi}\), where \(\tilde{\hat r}_i\) denotes a pre-computed estimate of \(r(\tilde a_i,\tilde x_i,\tilde w_i)\) (see Appendix~\ref{section:densratio_estimation}).\\
\textbf{Parameters:} Neural parameters \(\theta_1^{(\varphi)}, \theta_2^{(\varphi)}\); linear layers \(\mV^{(\varphi)}, \bm{\varphi}\).\\
\textbf{Design choice:} Second-stage loss \(\ell^{(\varphi)}_{2}\).\\
\textbf{Hyperparameters:} Learning rates \(\eta_1^{(\varphi)}, \eta_2^{(\varphi)}\); proximal coefficients \(\lambda_1^{(\varphi)}, \lambda_2^{(\varphi)}\); update counts \(T_{\varphi,1}, T_{\varphi,2}\); inner optimization count \(K_\varphi\).\\
\textbf{Output:} Optimized TreatmentNet parameters \(\{\theta_1^{(\varphi)},\theta_2^{(\varphi)},\hat{\mV}^{(\varphi)},\hat{\bm{\varphi}}\}\).
\begin{algorithmic}[1]
\REPEAT
    \STATE Sample mini-batches \(\mathcal B_1^{(\varphi)}\subset \mathcal D_1^{(\varphi)}\) and \(\mathcal B_2^{(\varphi)}\subset \mathcal D_2^{(\varphi)}\).

    \FOR{\(t_1=1\) to \(T_{\varphi,1}\)}
        \STATE Update \(\theta_1^{(\varphi)}\) by one gradient step on Equation \ref{Eq:Alternative_DFPCL_FirstStageLoss} using \(\mathcal B_1^{(\varphi)}\).
        \STATE Update the persistent first-stage operator \(\hat{\mV}^{(\varphi)}\) with Equation \ref{eq:TreatmentBridgeFirstLayerProximalClosedFormSolution} using \(\mathcal B_1^{(\varphi)}\).
    \ENDFOR

    \FOR{\(t_2=1\) to \(T_{\varphi,2}\)}
        \STATE Compute the auxiliary first-stage operator \(\check{\mV}_t^{(\varphi)}\) on \(\mathcal B_2^{(\varphi)}\) with Equation \ref{eq:TreatmentBridgeFirstLayerProximalClosedFormSolutionAux}.
        \STATE Update \(\theta_2^{(\varphi)}\) by one gradient step on Equation \ref{Eq:General_SecondStageLoss_TreatmentBridge} using \(\mathcal B_2^{(\varphi)}\).
        \STATE Recompute \(\check{\mV}_t^{(\varphi)}\) on \(\mathcal B_2^{(\varphi)}\) with the updated \(\theta_2^{(\varphi)}\).
        \STATE Update \(\bm{\varphi}\) by running \(K_\varphi\) steps of L-BFGS on Equation \ref{Eq:General_SecondStageLoss_TreatmentBridge}, holding \(\theta_2^{(\varphi)}\) fixed.
    \ENDFOR
\UNTIL{convergence}
\end{algorithmic}
}
\caption{Treatment bridge network (TreatmentNet)}
\label{algo:DeepTreatmentBridgeFunction}
\end{algorithm}

\paragraph{Dose-response estimation with the treatment bridge.}
The preceding two-stage procedure yields an estimator \(\hat\varphi\) of the treatment bridge satisfying the identifying relation in Equation \ref{eq:TreatmentBridgeEq_ATE}. To recover the final dose-response curve, we must still estimate the conditional mean
\[
f_{\mathrm{ATE}}(a)=\E[Y\varphi_0(a,X,Z)\mid A=a].
\]
Unlike the outcome-bridge formulation, this target is not obtained by a simple marginal average; it is itself a regression function in the treatment level \(a\). We therefore introduce a third stage, in which the bridge-transformed outcome is regressed on the treatment.

To this end, we construct a pseudo-outcome dataset
\(
\mathcal D_3^{(\varphi)}
=
\{(a_k, y_k^{\mathrm{pseudo}})\}_{k=1}^{n_{\varphi,3}},
\)
where
\(
y_k^{\mathrm{pseudo}}
=
y_k\,\hat\varphi(a_k,x_k,z_k),
\quad \text{and} \quad
\hat\varphi(a_k,x_k,z_k)
=
\hat{\bm{\varphi}}^\top
\left(
\phi_{AX,2}^{(\varphi)}(a_k,x_k)\otimes \phi_{Z,2}^{(\varphi)}(z_k)
\right).
\)
The third stage then fits a regression network \(f^{(\varphi)}(\cdot;\theta_3^{(\varphi)})\) to approximate
\[
f^{(\varphi)}(a;\theta_3^{(\varphi)})
\approx
\E[Y\hat\varphi(a,X,Z)\mid A=a].
\]
In practice, \(f^{(\varphi)}(\cdot;\theta_3^{(\varphi)})\) is trained by minimizing a regression loss over \(\mathcal D_3^{(\varphi)}\),
\begin{equation}
\hat{\gL}_{\varphi,3}(\theta_3^{(\varphi)})
=
\frac{1}{n_{\varphi,3}}
\sum_{k=1}^{n_{\varphi,3}}
\ell^{(\varphi)}_{3}\!\left(
y_k^{\mathrm{pseudo}},
f^{(\varphi)}(a_k;\theta_3^{(\varphi)})
\right),
\label{eq:TreatmentBridgeThirdStageLoss}
\end{equation}
where \(\ell^{(\varphi)}_{3}\) may be any differentiable regression loss. Unless otherwise stated, we use the squared loss in our implementation. This final stage converts the bridge-transformed observations into an explicit estimate of the dose-response curve as a function of \(a\). In our implementation, \(\mathcal D_3^{(\varphi)}\) is constructed from the second-stage split \(\mathcal D_2^{(\varphi)}\), although a separate third-stage split could also be used.

\subsection{Dose-response curve estimation: doubly robust approach}
\label{appendix:DoublyRobustAlgorithmDoseResponse}

We now combine the learned outcome and treatment bridges into a doubly robust estimator of the dose-response curve. Recall the identifying representation
\[
f_{\mathrm{ATE}}^{\mathrm{(DR)}}(a;h_0,\varphi_0)
=
\E[\varphi_0(a,X,Z)\{Y-h_0(a,X,W)\}\mid A=a]
+
\E[h_0(a,X,W)].
\]
Given estimators \(\hat h\) from OutcomeNet and \(\hat\varphi\) from TreatmentNet, the only remaining task is to estimate the conditional expectation terms appearing in this formula. We describe two implementations of this final stage. The first one is the version used in our main experiments.

Let
\(
\mathcal D^{(\kappa)}=\{(a_i,y_i,w_i,z_i,x_i)\}_{i=1}^{n_\kappa}
\)
be a sample on which both \(\hat h\) and \(\hat\varphi\) can be evaluated. In our implementation, we reuse the second-stage split for this purpose, although a separate split could also be used.

\paragraph{Version 1: direct residual regression.}
The most direct implementation is to regress the bridge-weighted residual
\[
\hat\varphi(a,X,Z)\{Y-\hat h(a,X,W)\}
\]
on the treatment value \(A\). We therefore construct the pseudo-outcome dataset
\(
\mathcal D_1^{(\kappa)}
=
\{(a_i,y_i^{(\kappa,1)})\}_{i=1}^{n_\kappa},
\quad \text{where} \quad
y_i^{(\kappa,1)}
=
\hat\varphi(a_i,x_i,z_i)\bigl(y_i-\hat h(a_i,x_i,w_i)\bigr).
\)
We then fit a regression network \(k^{(\kappa,1)}(\cdot;\theta_1^{(\kappa)})\) to approximate
\[
k^{(\kappa,1)}(a;\theta_1^{(\kappa)})
\approx
\E[\hat\varphi(a,X,Z)\{Y-\hat h(a,X,W)\}\mid A=a].
\]
More generally, this network can be trained with any differentiable regression loss. Writing \(\ell^{(\kappa)}_{1}\) for this loss, the third-stage objective is
\begin{equation}
\hat{\gL}_{\kappa,1}(\theta_1^{(\kappa)})
=
\frac{1}{n_\kappa}
\sum_{i=1}^{n_\kappa}
\ell^{(\kappa)}_{1}\!\left(
y_i^{(\kappa,1)},
k^{(\kappa,1)}(a_i;\theta_1^{(\kappa)})
\right).
\label{eq:DRVersion1ThirdStageLoss}
\end{equation}

Combining this residual regression with the outcome-bridge estimator yields
\begin{align}
\hat f_{\mathrm{ATE}}^{\mathrm{(DR1)}}(a)
&=
\hat f_{\mathrm{ATE}}^{(h)}(a)
+
k^{(\kappa,1)}(a;\theta_1^{(\kappa)}) =
\frac{1}{n_\kappa}\sum_{i=1}^{n_\kappa}\hat h(a,x_i,w_i)
+
k^{(\kappa,1)}(a;\theta_1^{(\kappa)}).
\label{eq:DRVersion1Estimator}
\end{align}
This is the most direct implementation of the doubly robust formula, since the additional network learns the entire correction term in one step.We name this version DRPCLNET V1. 

\paragraph{Version 2: decoupled decomposition.}
An alternative implementation is obtained by expanding the doubly robust formula as
\[
f_{\mathrm{ATE}}^{\mathrm{(DR)}}(a;h_0,\varphi_0)
=
\E[h_0(a,X,W)]
+
\E[Y\varphi_0(a,X,Z)\mid A=a]
-
\E[\varphi_0(a,X,Z)h_0(a,X,W)\mid A=a].
\]
The first two terms are already estimated by OutcomeNet and TreatmentNet, respectively. It therefore remains only to estimate the interaction term
\(
\E[\hat\varphi(a,X,Z)\hat h(a,X,W)\mid A=a].
\)
For this purpose, we construct the pseudo-outcome dataset
\[
\mathcal D_2^{(\kappa)}
=
\{(a_i,y_i^{(\kappa,2)})\}_{i=1}^{n_\kappa},
\qquad
y_i^{(\kappa,2)}
=
\hat\varphi(a_i,x_i,z_i)\hat h(a_i,x_i,w_i),
\]
and fit a regression network \(k^{(\kappa,2)}(\cdot;\theta_2^{(\kappa)})\) such that
\[
k^{(\kappa,2)}(a;\theta_2^{(\kappa)})
\approx
\E[\hat\varphi(a,X,Z)\hat h(a,X,W)\mid A=a].
\]
As above, this network may be trained with any differentiable regression loss. Denoting this loss by \(\ell^{(\kappa)}_{2}\), we minimize
\begin{equation}
\hat{\gL}_{\kappa,2}(\theta_2^{(\kappa)})
=
\frac{1}{n_\kappa}
\sum_{i=1}^{n_\kappa}
\ell^{(\kappa)}_{2}\!\left(
y_i^{(\kappa,2)},
k^{(\kappa,2)}(a_i;\theta_2^{(\kappa)})
\right).
\label{eq:DRVersion2ThirdStageLoss}
\end{equation}

The resulting decoupled doubly robust estimator is
\begin{align}
\hat f_{\mathrm{ATE}}^{\mathrm{(DR2)}}(a)
&=
\hat f_{\mathrm{ATE}}^{(h)}(a)
+
\hat f_{\mathrm{ATE}}^{(\varphi)}(a)
-
k^{(\kappa,2)}(a;\theta_2^{(\kappa)}) \nonumber\\
&=
\frac{1}{n_\kappa}\sum_{i=1}^{n_\kappa}\hat h(a,x_i,w_i)
+
f^{(\varphi)}(a;\theta_3^{(\varphi)})
-
k^{(\kappa,2)}(a;\theta_2^{(\kappa)}).
\label{eq:DRVersion2Estimator}
\end{align}
In particular, Version 2 reuses the treatment-bridge third-stage regression and only learns the interaction term. We name this version DRPCLNET V2. 

The complete training procedures for DRPCLNET (V1) and (V2) are summarized in Algorithms \ref{algo:DeepDoublyRobustPCL_Version1} and \ref{algo:DeepDoublyRobustPCL_Version2}
\begin{algorithm}[H]
{\footnotesize
\textbf{Input:} A dataset \(\mathcal D^{(\kappa)}=\{(a_i,y_i,w_i,z_i,x_i)\}_{i=1}^{n_\kappa}\) on which both \(\hat h\) and \(\hat\varphi\) can be evaluated.\\
\textbf{Subroutines:} Algorithms \ref{algo:DeepOutcomeBridgeFunction} and \ref{algo:DeepTreatmentBridgeFunction}.\\
\textbf{Design choice:} Third-stage regression loss \(\ell^{(\kappa)}_{1}\).\\
\textbf{Hyperparameters:} Hyperparameters required by Algorithms \ref{algo:DeepOutcomeBridgeFunction} and \ref{algo:DeepTreatmentBridgeFunction}, together with the optimization hyperparameters for the correction network \(k^{(\kappa,1)}(\cdot;\theta_1^{(\kappa)})\).\\
\textbf{Output:} Doubly robust dose-response estimator \(\hat f_{\mathrm{ATE}}^{\mathrm{(DR1)}}(a)\).
\begin{algorithmic}[1]
\STATE Train OutcomeNet via Algorithm \ref{algo:DeepOutcomeBridgeFunction} to obtain \(\hat h(a,x,w)\).
\STATE Train TreatmentNet via Algorithm \ref{algo:DeepTreatmentBridgeFunction} to obtain \(\hat\varphi(a,x,z)\).
\STATE Construct the pseudo-outcome dataset
\[
\mathcal D_1^{(\kappa)}
=
\{(a_i,y_i^{(\kappa,1)})\}_{i=1}^{n_\kappa},
\qquad
y_i^{(\kappa,1)}
=
\hat\varphi(a_i,x_i,z_i)\bigl(y_i-\hat h(a_i,x_i,w_i)\bigr).
\]
\STATE Train the correction network \(k^{(\kappa,1)}(\cdot;\theta_1^{(\kappa)})\) on \(\mathcal D_1^{(\kappa)}\) using the loss in Equation \ref{eq:DRVersion1ThirdStageLoss}.
\STATE Define
\[
\hat f_{\mathrm{ATE}}^{\mathrm{(DR1)}}(a)
=
\frac{1}{n_\kappa}\sum_{i=1}^{n_\kappa}\hat h(a,x_i,w_i)
+
k^{(\kappa,1)}(a;\theta_1^{(\kappa)}).
\]
\STATE \textbf{Return} \(\hat f_{\mathrm{ATE}}^{\mathrm{(DR1)}}(a)\).
\end{algorithmic}}
\caption{Doubly robust proxy causal learning network (DRPCLNET), Version 1}
\label{algo:DeepDoublyRobustPCL_Version1}
\end{algorithm}

\begin{algorithm}[H]
{\footnotesize
\textbf{Input:} A dataset \(\mathcal D^{(\kappa)}=\{(a_i,y_i,w_i,z_i,x_i)\}_{i=1}^{n_\kappa}\) on which both \(\hat h\) and \(\hat\varphi\) can be evaluated.\\
\textbf{Subroutines:} Algorithms \ref{algo:DeepOutcomeBridgeFunction} and \ref{algo:DeepTreatmentBridgeFunction}.\\
\textbf{Design choice:} Third-stage regression loss \(\ell^{(\kappa)}_{2}\).\\
\textbf{Hyperparameters:} Hyperparameters required by Algorithms \ref{algo:DeepOutcomeBridgeFunction} and \ref{algo:DeepTreatmentBridgeFunction}, together with the optimization hyperparameters for the correction network \(k^{(\kappa,2)}(\cdot;\theta_2^{(\kappa)})\).\\
\textbf{Output:} Doubly robust dose-response estimator \(\hat f_{\mathrm{ATE}}^{\mathrm{(DR2)}}(a)\).
\begin{algorithmic}[1]
\STATE Train OutcomeNet via Algorithm \ref{algo:DeepOutcomeBridgeFunction} to obtain \(\hat h(a,x,w)\).
\STATE Train TreatmentNet via Algorithm \ref{algo:DeepTreatmentBridgeFunction} to obtain \(\hat\varphi(a,x,z)\) and the treatment-bridge dose-response regression \(f^{(\varphi)}(a;\theta_3^{(\varphi)})\).
\STATE Construct the pseudo-outcome dataset
\[
\mathcal D_2^{(\kappa)}
=
\{(a_i,y_i^{(\kappa,2)})\}_{i=1}^{n_\kappa},
\qquad
y_i^{(\kappa,2)}
=
\hat\varphi(a_i,x_i,z_i)\hat h(a_i,x_i,w_i).
\]
\STATE Train the correction network \(k^{(\kappa,2)}(\cdot;\theta_2^{(\kappa)})\) on \(\mathcal D_2^{(\kappa)}\) using the loss in Equation \ref{eq:DRVersion2ThirdStageLoss}.
\STATE Define
\[
\hat f_{\mathrm{ATE}}^{\mathrm{(DR2)}}(a)
=
\frac{1}{n_\kappa}\sum_{i=1}^{n_\kappa}\hat h(a,x_i,w_i)
+
f^{(\varphi)}(a;\theta_3^{(\varphi)})
-
k^{(\kappa,2)}(a;\theta_2^{(\kappa)}).
\]
\STATE \textbf{Return} \(\hat f_{\mathrm{ATE}}^{\mathrm{(DR2)}}(a)\).
\end{algorithmic}}
\caption{Doubly robust proxy causal learning network (DRPCLNET), Version 2}
\label{algo:DeepDoublyRobustPCL_Version2}
\end{algorithm}

\section{Neural mean embedding-based proxy causal learning for heterogeneous dose-response}
\label{sec:NeuralMeanEmbedding_HeterogeneousDoseResponse}

In this section, we derive the doubly robust algorithm for heterogeneous dose-response curve. In particular, similar to dose-response counterpart, we firs derive the outcome bridge-based algorithm. Then, we derive the analogous treatment bridge algorithm, and then combine both into doubly robust algorithm.

\subsection{Heterogeneous dose-response estimation: outcome bridge method}
\label{Appendix:HeterogeneousOutcomeAlgorithm}

Throughout this subsection, we write \(X=(S,V)\), where \(V\) denotes the conditioning variable and \(S\) collects the remaining observed covariates. Recall that the target is given by
\(
f_{\mathrm{CATE}}(a,v)=\E[h_0(a,v,S,W)\mid V=v].
\)
According to Theorem \ref{theorem:OutcomeBridgeIdentificationAllCausalFunctions}, the corresponding outcome bridge must satisfy
\[
\E[h_0(a,v,S,W)\mid A=a,V=v,S,Z]
=
\E[Y\mid A=a,V=v,S,Z].
\]
Thus, the overall learning procedure remains the same as in Appendix \ref{sec:DFPCL_Xu_Review}: the first stage learns a neural conditional mean embedding of the outcome proxy, and the second stage learns the outcome bridge itself. The only structural change is that the heterogeneity variable \(V\) is now included explicitly in both stages.

Let
\(
\mathcal D_1^{(h)}
=
\{(\bar a_i,\bar v_i,\bar s_i,\bar z_i,\bar w_i)\}_{i=1}^{n_h},
\quad
\mathcal D_2^{(h)}
=
\{(\tilde a_i,\tilde v_i,\tilde s_i,\tilde z_i,\tilde w_i,\tilde y_i)\}_{i=1}^{m_h}
\)
denote the first- and second-stage data splits. We parameterize the bridge and the first-stage neural conditional mean embedding as
\begin{align}
h(a,v,s,w)
&=
\vh^\top
\left(
\phi_{A,2}^{(h)}(a)
\otimes
\phi_{V,2}^{(h)}(v)
\otimes
\phi_{S,2}^{(h)}(s)
\otimes
\phi_{W,2}^{(h)}(w)
\right),
\label{Eq:Heterogeneous_DFPCL_SecondStageModel}
\\
\E[\phi_{W,2}^{(h)}(W)\mid A=a,V=v,S=s,Z=z]
&=
\left(\mV^{(h)}\right)^\top
\phi_{AVSZ,1}^{(h)}(a,v,s,z).
\label{Eq:Heterogeneous_DFPCL_FirstStageModel}
\end{align}
Here \(\phi_{AVSZ,1}^{(h)}\) is the first-stage feature extractor, parameterized by \(\theta_1^{(h)}\), while \(\phi_{A,2}^{(h)}\), \(\phi_{V,2}^{(h)}\), \(\phi_{S,2}^{(h)}\), and \(\phi_{W,2}^{(h)}\) are the second-stage feature extractors, jointly parameterized by \(\theta_2^{(h)}\).

For a mini-batch \(\mathcal B_1^{(h)}\subset \mathcal D_1^{(h)}\), the first-stage proximal loss is
\begin{align}
\hat{\gL}_{h,1}^{\mathrm{prox}}\!\left(\theta_1^{(h)},\mV^{(h)};\theta_2^{(h)},\mathcal B_1^{(h)}\right)
&=
\frac{1}{|\mathcal B_1^{(h)}|}
\sum_{i\in \mathcal B_1^{(h)}}
\left\|
\phi_{W,2}^{(h)}(\bar w_i)
-
\left(\mV^{(h)}\right)^\top
\phi_{AVSZ,1}^{(h)}(\bar a_i,\bar v_i,\bar s_i,\bar z_i)
\right\|_2^2\nonumber\\&
+
\lambda_1^{(h)}
\left\|
\mV^{(h)}-\hat{\mV}^{(h)}_t
\right\|_F^2.
\label{Eq:Heterogeneous_DFPCL_FirstStageProximalLoss}
\end{align}
As in the population-level case, the proxy feature map \(\phi_{W,2}^{(h)}\) acts as the first-stage target and is not updated during this stage.

For fixed \(\theta_1^{(h)}\) and \(\theta_2^{(h)}\), the corresponding proximal first-stage minimizer is
\begin{align}
\hat{\mV}^{(h)}_{t+1}\!\left(\theta_1^{(h)};\theta_2^{(h)},\mathcal B_1^{(h)}\right)
&=
\left(
\Phi_{AVSZ,1}^{(h)}(\mathcal B_1^{(h)})
\Phi_{AVSZ,1}^{(h)}(\mathcal B_1^{(h)})^\top
+
|\mathcal B_1^{(h)}|\lambda_1^{(h)}\mI
\right)^{-1}
\nonumber\\
&\qquad\times
\left(
\Phi_{AVSZ,1}^{(h)}(\mathcal B_1^{(h)})
\Phi_{W,2}^{(h)}(\mathcal B_1^{(h)})^\top
+
|\mathcal B_1^{(h)}|\lambda_1^{(h)}\hat{\mV}^{(h)}_t
\right),
\label{eq:HeterogeneousOutcomeBridgeFirstLayerProximalClosedFormSolution}
\end{align}
where
\(
\Phi_{AVSZ,1}^{(h)}(\mathcal B)
=
\begin{bmatrix}
\phi_{AVSZ,1}^{(h)}(a_i,v_i,s_i,z_i)
\end{bmatrix}_{i\in\mathcal B},
\quad \text{and} \quad
\Phi_{W,2}^{(h)}(\mathcal B)
=
\begin{bmatrix}
\phi_{W,2}^{(h)}(w_i)
\end{bmatrix}_{i\in\mathcal B}.
\)

The second stage again requires an auxiliary on-the-fly first-stage solve on the current second-stage batch. Specifically, for a mini-batch \(\mathcal B_2^{(h)}\subset \mathcal D_2^{(h)}\), we define
\begin{align}
\check{\mV}^{(h)}_{t}\!\left(\theta_1^{(h)};\theta_2^{(h)},\mathcal B_2^{(h)}\right)
&=
\left(
\Phi_{AVSZ,1}^{(h)}(\mathcal B_2^{(h)})
\Phi_{AVSZ,1}^{(h)}(\mathcal B_2^{(h)})^\top
+
|\mathcal B_2^{(h)}|\lambda_1^{(h)}\mI
\right)^{-1}
\nonumber\\
&\qquad\times
\left(
\Phi_{AVSZ,1}^{(h)}(\mathcal B_2^{(h)})
\Phi_{W,2}^{(h)}(\mathcal B_2^{(h)})^\top
+
|\mathcal B_2^{(h)}|\lambda_1^{(h)}\hat{\mV}^{(h)}_t
\right).
\label{eq:HeterogeneousOutcomeBridgeFirstLayerProximalClosedFormSolutionAux}
\end{align}
As before, this auxiliary operator is not a separate model parameter; it is recomputed only to evaluate the current second-stage loss with the updated proxy representation.

Using \(\check{\mV}^{(h)}_t\), the heterogeneous second-stage feature vector is
\[
\Psi_{2,i}^{(h)}
=
\phi_{A,2}^{(h)}(\tilde a_i)
\otimes
\phi_{V,2}^{(h)}(\tilde v_i)
\otimes
\phi_{S,2}^{(h)}(\tilde s_i)
\otimes
\check{\mV}_t^{(h)\top}
\phi_{AVSZ,1}^{(h)}(\tilde a_i,\tilde v_i,\tilde s_i,\tilde z_i),
\qquad i\in \mathcal B_2^{(h)}.
\]
For a general differentiable regression loss \(\ell_{h,2}\), the second-stage objective becomes
\begin{align}
\hat{\gL}_{h,2}^{\mathrm{gen}}\!\left(\theta_2^{(h)},\vh;\theta_1^{(h)},\mathcal B_2^{(h)}\right)
&=
\frac{1}{|\mathcal B_2^{(h)}|}
\sum_{i\in \mathcal B_2^{(h)}}
\ell_{h,2}\!\left(
\tilde y_i,\,
\vh^\top \Psi_{2,i}^{(h)}
\right)
+
\lambda_2^{(h)}
\left\|
\vh-\hat{\vh}_t
\right\|_2^2.
\label{Eq:Heterogeneous_DFPCL_SecondStageProximalLoss}
\end{align}
When \(\ell_{h,2}\) is the squared loss, the corresponding proximal closed-form update is
\begin{align}
\hat{\vh}_{t+1}\!\left(\theta_1^{(h)};\theta_2^{(h)},\mathcal B_2^{(h)}\right)
&=
\left(
\Psi_2^{(h)}(\mathcal B_2^{(h)})
\Psi_2^{(h)}(\mathcal B_2^{(h)})^\top
+
|\mathcal B_2^{(h)}|\lambda_2^{(h)}\mI
\right)^{-1}
\nonumber\\
&\qquad\times
\left(
\Psi_2^{(h)}(\mathcal B_2^{(h)})
\mY_2(\mathcal B_2^{(h)})^\top
+
|\mathcal B_2^{(h)}|\lambda_2^{(h)}\hat{\vh}_t
\right),
\label{eq:HeterogeneousOutcomeBridgeSecondLayerProximalClosedFormSolution}
\end{align}
where
\(
\Psi_2^{(h)}(\mathcal B_2^{(h)})
=
\begin{bmatrix}
\Psi_{2,i}^{(h)}
\end{bmatrix}_{i\in\mathcal B_2^{(h)}},
\quad \text{and} \quad
\mY_2(\mathcal B_2^{(h)})
=
\begin{bmatrix}
\tilde y_i
\end{bmatrix}_{i\in\mathcal B_2^{(h)}}.
\)
For non-quadratic choices of \(\ell_{h,2}\), we proceed exactly as in Appendix \ref{sec:DFPCL_Xu_Review}: we first update the feature extractors by gradient descent, and then refine \(\vh\) by a finite number of L-BFGS steps while holding the features fixed.

The bi-level optimization above yields an estimator \(\hat h\) of the heterogeneous outcome bridge. To recover the heterogeneous dose-response curve \(f_{\mathrm{CATE}}(a,v)\), we still need to integrate out the remaining variables \(S\) and \(W\) conditional on \(V=v\). Using the identifying formula from Theorem \ref{theorem:OutcomeBridgeIdentificationAllCausalFunctions} and the parameterization in Equation \ref{Eq:Heterogeneous_DFPCL_SecondStageModel}, we obtain
\begin{align}
f_{\mathrm{CATE}}(a,v)
&\approx
\E[\hat h(a,v,S,W)\mid V=v] \nonumber\\
&=
\E\!\left[
\vh^\top
\left(
\phi_{A,2}^{(h)}(a)\otimes
\phi_{V,2}^{(h)}(v)\otimes
\phi_{S,2}^{(h)}(S)\otimes
\phi_{W,2}^{(h)}(W)
\right)
\middle| V=v
\right] \nonumber\\
&=
\vh^\top
\left(
\phi_{A,2}^{(h)}(a)\otimes
\phi_{V,2}^{(h)}(v)\otimes
\E[\phi_{S,2}^{(h)}(S)\otimes \phi_{W,2}^{(h)}(W)\mid V=v]
\right).
\label{eq:HTE_OutcomeBridge_Expansion}
\end{align}
Thus, once \(\hat h\) has been learned, the heterogeneous curve is determined by the conditional mean embedding of the joint second-stage features of \((S,W)\) given \(V=v\).

To estimate this quantity, we introduce a third-stage dataset
\(
\mathcal D_3^{(h)}
=
\{(v_i,\vu_i)\}_{i=1}^{n_{h,3}},
\quad \text{where}\quad
\vu_i
=
\phi_{S,2}^{(h)}(s_i)\otimes \phi_{W,2}^{(h)}(w_i)
\in \mathbb R^{d_{S,2}^{(h)}d_{W,2}^{(h)}}.
\)
We then train a regression network \(f^{(h)}(\cdot;\theta_3^{(h)})\) so that
\[
f^{(h)}(v;\theta_3^{(h)})
\approx
\E[\vu\mid V=v].
\]
Finally, the heterogeneous dose-response estimator is
\begin{equation}
\hat f_{\mathrm{CATE}}^{(h)}(a,v)
=
\vh^\top
\left(
\phi_{A,2}^{(h)}(a)\otimes
\phi_{V,2}^{(h)}(v)\otimes
f^{(h)}(v;\theta_3^{(h)})
\right).
\label{eq:HeterogeneousOutcomeBridgeFinalEstimator}
\end{equation}

The heterogeneous OutcomeNet construction can thus be viewed as a structured extension of the population-level outcome-bridge procedure. The first two stages remain unchanged at the methodological level: the first stage learns a neural conditional mean embedding of the outcome proxy, and the second stage estimates the bridge function itself, with the sole modification that the heterogeneity variable \(V\) is now incorporated explicitly into the feature representations. The additional ingredient specific to the heterogeneous setting is the third-stage regression \(f^{(h)}(v;\theta_3^{(h)})\), which estimates the conditional mean embedding of the joint \((S,W)\)-features given \(V=v\). Combining this regression with the learned bridge head yields the estimator in Equation \ref{eq:HeterogeneousOutcomeBridgeFinalEstimator}. The resulting procedure is summarized in Algorithm \ref{algo:HeterogeneousDeepOutcomeBridgeFunction}.

\begin{algorithm}[H]
{\footnotesize
\textbf{Input:} Datasets \(\mathcal D_1^{(h)}=\{(\bar a_i,\bar v_i,\bar s_i,\bar z_i,\bar w_i)\}_{i=1}^{n_h}\) and \(\mathcal D_2^{(h)}=\{(\tilde a_i,\tilde v_i,\tilde s_i,\tilde z_i,\tilde w_i,\tilde y_i)\}_{i=1}^{m_h}\).\\
\textbf{Design choice:} Third-stage regression loss \(\ell_{h,3}\).\\
\textbf{Output:} Heterogeneous outcome-bridge estimator \(\hat f_{\mathrm{CATE}}^{(h)}(a,v)\).
\begin{algorithmic}[1]
\STATE Train the first two stages exactly as in Algorithm \ref{algo:DeepOutcomeBridgeFunction}, with the following replacements:
\[
X=(V,S), \qquad
\phi_{AXZ,1}^{(h)} \text{ replaced by } \phi_{AVSZ,1}^{(h)}, \qquad
\phi_{A,2}^{(h)}\otimes \phi_{X,2}^{(h)} \text{ replaced by } \phi_{A,2}^{(h)}\otimes \phi_{V,2}^{(h)}\otimes \phi_{S,2}^{(h)}.
\]
\STATE Construct the third-stage dataset
\[
\mathcal D_3^{(h)}
=
\{(v_i,\mathbf{c}_i)\}_{i=1}^{n_{h,3}},
\qquad
\mathbf{c}_i
=
\phi_{S,2}^{(h)}(s_i)\otimes \phi_{W,2}^{(h)}(w_i).
\]
\STATE Train the regression network \(f^{(h)}(\cdot;\theta_3^{(h)})\) on \(\mathcal D_3^{(h)}\) to approximate \(\E[\mathbf{c}\mid V=v]\).
\STATE Return
\[
\hat f_{\mathrm{CATE}}^{(h)}(a,v)
=
\vh^\top
\left(
\phi_{A,2}^{(h)}(a)\otimes
\phi_{V,2}^{(h)}(v)\otimes
f^{(h)}(v;\theta_3^{(h)})
\right).
\]
\end{algorithmic}
}
\caption{Heterogeneous OutcomeNet}
\label{algo:HeterogeneousDeepOutcomeBridgeFunction}
\end{algorithm}

\subsection{Heterogeneous dose-response estimation: treatment bridge method}
\label{Appendix:HeterogeneousTreatmentAlgorithm}

We now extend TreatmentNet to the heterogeneous dose-response function. As in the outcome-bridge subsection, we write \(X=(S,V)\), where \(V\) denotes the effect modifiers of interest and \(S\) collects the remaining observed covariates. Recall that, according to Theorem \ref{theorem:TreatmentBridgeIdentificationAllCausalFunctions}, the heterogeneous dose-response is identified through the treatment bridge by
\[
f_{\mathrm{CATE}}(a,v)=\E[Y\varphi_0(a,v,S,Z)\mid A=a,V=v],
\]
where the corresponding bridge function satisfies
\[
\E[\varphi_0(a,v,S,Z)\mid A=a,V=v,S,W]
=
\frac{p_{A\mid V}(a\mid v)}{p_{A\mid S,V,W}(a\mid s,v,w)}.
\]
Thus, compared with the population-level treatment-bridge formulation, the only structural change is again the explicit inclusion of \(V\) in the bridge and in the neural feature maps.

In practice, the conditional density ratio is not known and is replaced by an estimator \(\hat r(a,v,s,w)\approx p_{A\mid V}(a\mid v)/p_{A\mid S,V,W}(a\mid s,v,w)\); see Appendix~\ref{section:densratio_estimation}. Let
\(
\mathcal D_1^{(\varphi)}
=
\{(\bar a_i,\bar v_i,\bar s_i,\bar w_i,\bar z_i)\}_{i=1}^{n_\varphi},
\quad \text{and} \quad
\mathcal D_2^{(\varphi)}
=
\{(\tilde a_i,\tilde v_i,\tilde s_i,\tilde w_i,\tilde z_i,\tilde{\hat r}_i)\}_{i=1}^{m_\varphi}
\)
denote the first- and second-stage splits. We parameterize the heterogeneous treatment bridge and the first-stage neural conditional mean embedding as
\begin{align}
\varphi(a,v,s,z)
&=
\bm{\varphi}^\top
\left(
\phi_{AVS,2}^{(\varphi)}(a,v,s)\otimes \phi_{Z,2}^{(\varphi)}(z)
\right),\nonumber
\\
\E[\phi_{Z,2}^{(\varphi)}(Z)\mid A=a,V=v,S=s,W=w]
&=
\left(\mV^{(\varphi)}\right)^\top
\phi_{AVSW,1}^{(\varphi)}(a,v,s,w).\nonumber
\end{align}
Here \(\phi_{AVSW,1}^{(\varphi)}\) is the first-stage feature extractor, parameterized by \(\theta_1^{(\varphi)}\), while \(\phi_{AVS,2}^{(\varphi)}\) and \(\phi_{Z,2}^{(\varphi)}\) are the second-stage feature extractors, jointly parameterized by \(\theta_2^{(\varphi)}\).

As in the population-level TreatmentNet, the estimation proceeds by a bi-level multi-stage procedure. The first stage learns a neural conditional mean embedding of the treatment proxy, and the second stage regresses the estimated density-ratio target onto the corresponding learned representation.

For a mini-batch \(\mathcal B_1^{(\varphi)}\subset \mathcal D_1^{(\varphi)}\), the first-stage proximal loss is
\begin{align}
\hat{\gL}_{\varphi,1}^{\mathrm{prox}}\!\left(\theta_1^{(\varphi)},\mV^{(\varphi)};\theta_2^{(\varphi)},\mathcal B_1^{(\varphi)}\right)
&=
\frac{1}{|\mathcal B_1^{(\varphi)}|}
\sum_{i\in \mathcal B_1^{(\varphi)}}
\left\|
\phi_{Z,2}^{(\varphi)}(\bar z_i)
-
\left(\mV^{(\varphi)}\right)^\top
\phi_{AVSW,1}^{(\varphi)}(\bar a_i,\bar v_i,\bar s_i,\bar w_i)
\right\|_2^2\nonumber\\&
+
\lambda_1^{(\varphi)}
\left\|
\mV^{(\varphi)}-\hat{\mV}^{(\varphi)}_t
\right\|_F^2.
\label{Eq:Alternative_Heterogeneous_DFPCL_FirstStageLoss}
\end{align}
As before, the target representation \(\phi_{Z,2}^{(\varphi)}(Z)\) depends on the second-stage proxy featurizer, but \(\theta_{Z,2}^{(\varphi)}\) is not updated in this stage. For fixed \(\theta_1^{(\varphi)}\) and \(\theta_2^{(\varphi)}\), the corresponding proximal first-stage minimizer is
\begin{align}
\hat{\mV}^{(\varphi)}_{t+1}\!\left(\theta_1^{(\varphi)};\theta_2^{(\varphi)},\mathcal B_1^{(\varphi)}\right)
&=
\left(
\Phi_{AVSW,1}^{(\varphi)}(\mathcal B_1^{(\varphi)})
\Phi_{AVSW,1}^{(\varphi)}(\mathcal B_1^{(\varphi)})^\top
+
|\mathcal B_1^{(\varphi)}|\lambda_1^{(\varphi)}\mI
\right)^{-1}
\nonumber\\
&\qquad\times
\left(
\Phi_{AVSW,1}^{(\varphi)}(\mathcal B_1^{(\varphi)})
\Phi_{Z,2}^{(\varphi)}(\mathcal B_1^{(\varphi)})^\top
+
|\mathcal B_1^{(\varphi)}|\lambda_1^{(\varphi)}\hat{\mV}^{(\varphi)}_t
\right),
\label{eq:HeterogeneousTreatmentBridgeFirstLayerProximalClosedFormSolution}
\end{align}
where
\(
\Phi_{AVSW,1}^{(\varphi)}(\mathcal B)
=
\begin{bmatrix}
\phi_{AVSW,1}^{(\varphi)}(a_i,v_i,s_i,w_i)
\end{bmatrix}_{i\in\mathcal B},
\quad \text{and} \quad
\Phi_{Z,2}^{(\varphi)}(\mathcal B)
=
\begin{bmatrix}
\phi_{Z,2}^{(\varphi)}(z_i)
\end{bmatrix}_{i\in\mathcal B}.
\)

The second stage again requires an auxiliary on-the-fly first-stage solve on the current second-stage batch. Thus, for \(\mathcal B_2^{(\varphi)}\subset \mathcal D_2^{(\varphi)}\), we define
\begin{align}
\check{\mV}^{(\varphi)}_{t}\!\left(\theta_1^{(\varphi)};\theta_2^{(\varphi)},\mathcal B_2^{(\varphi)}\right)
&=
\left(
\Phi_{AVSW,1}^{(\varphi)}(\mathcal B_2^{(\varphi)})
\Phi_{AVSW,1}^{(\varphi)}(\mathcal B_2^{(\varphi)})^\top
+
|\mathcal B_2^{(\varphi)}|\lambda_1^{(\varphi)}\mI
\right)^{-1}
\nonumber\\
&\qquad\times
\left(
\Phi_{AVSW,1}^{(\varphi)}(\mathcal B_2^{(\varphi)})
\Phi_{Z,2}^{(\varphi)}(\mathcal B_2^{(\varphi)})^\top
+
|\mathcal B_2^{(\varphi)}|\lambda_1^{(\varphi)}\hat{\mV}^{(\varphi)}_t
\right).
\label{eq:HeterogeneousTreatmentBridgeFirstLayerProximalClosedFormSolutionAux}
\end{align}
As in the population-level case, this auxiliary operator is used only to evaluate the current second-stage loss with the current proxy representation.

Using \(\check{\mV}^{(\varphi)}_t\), the heterogeneous second-stage feature vector is
\[
\Psi_{2,i}^{(\varphi)}
=
\phi_{AVS,2}^{(\varphi)}(\tilde a_i,\tilde v_i,\tilde s_i)
\otimes
\check{\mV}_t^{(\varphi)\top}
\phi_{AVSW,1}^{(\varphi)}(\tilde a_i,\tilde v_i,\tilde s_i,\tilde w_i),
\qquad
i\in \mathcal B_2^{(\varphi)}.
\]
The quadratic second-stage proximal loss is therefore
\begin{align}
\hat{\gL}_{\varphi,2}^{\mathrm{prox}}\!\left(\theta_2^{(\varphi)},\bm{\varphi};\theta_1^{(\varphi)},\mathcal B_2^{(\varphi)}\right)
&=
\frac{1}{|\mathcal B_2^{(\varphi)}|}
\sum_{i\in \mathcal B_2^{(\varphi)}}
\Big(
\tilde{\hat r}_i-\bm{\varphi}^\top \Psi_{2,i}^{(\varphi)}
\Big)^2
+
\lambda_2^{(\varphi)}
\left\|
\bm{\varphi}-\hat{\bm{\varphi}}_t
\right\|_2^2.
\label{eq:HeterogeneousTreatmentSecondStageProximalLoss}
\end{align}

For fixed features, the corresponding proximal second-stage minimizer is
\begin{align}
\hat{\bm{\varphi}}_{t+1}\!\left(\theta_1^{(\varphi)};\theta_2^{(\varphi)},\mathcal B_2^{(\varphi)}\right)
&=
\left(
\Psi_2^{(\varphi)}(\mathcal B_2^{(\varphi)})
\Psi_2^{(\varphi)}(\mathcal B_2^{(\varphi)})^\top
+
|\mathcal B_2^{(\varphi)}|\lambda_2^{(\varphi)}\mI
\right)^{-1}
\nonumber\\
&\qquad\times
\left(
\Psi_2^{(\varphi)}(\mathcal B_2^{(\varphi)})
\mR_2(\mathcal B_2^{(\varphi)})^\top
+
|\mathcal B_2^{(\varphi)}|\lambda_2^{(\varphi)}\hat{\bm{\varphi}}_t
\right),
\label{eq:HeterogeneousTreatmentBridgeSecondLayerProximalClosedFormSolution}
\end{align}
where
\(
\Psi_2^{(\varphi)}(\mathcal B_2^{(\varphi)})
=
\begin{bmatrix}
\Psi_{2,i}^{(\varphi)}
\end{bmatrix}_{i\in\mathcal B_2^{(\varphi)}},
\quad \text{and} \quad
\mR_2(\mathcal B_2^{(\varphi)})
=
\begin{bmatrix}
\tilde{\hat r}_i
\end{bmatrix}_{i\in\mathcal B_2^{(\varphi)}}.
\)

As before, the quadratic loss can be replaced by a general differentiable regression loss \(\ell^{(\varphi)}_{2}\), yielding
\begin{align}
\hat{\gL}_{\varphi,2}^{\mathrm{gen}}\!\left(\theta_2^{(\varphi)},\bm{\varphi};\theta_1^{(\varphi)},\mathcal B_2^{(\varphi)}\right)
&=
\frac{1}{|\mathcal B_2^{(\varphi)}|}
\sum_{i\in \mathcal B_2^{(\varphi)}}
\ell^{(\varphi)}_{2}\!\left(
\tilde{\hat r}_i,\,
\bm{\varphi}^\top \Psi_{2,i}^{(\varphi)}
\right)
+
\lambda_2^{(\varphi)}
\left\|
\bm{\varphi}-\hat{\bm{\varphi}}_t
\right\|_2^2.
\label{Eq:General_SecondStageLoss_HeterogeneousTreatmentBridge}
\end{align}
This extension is again practically useful, since the conditional density-ratio estimates \(\hat r\) may be noisy or heavy-tailed. In our implementation, we optimize Equation \ref{Eq:General_SecondStageLoss_HeterogeneousTreatmentBridge} by the same two-step stage-2 strategy used earlier: a gradient update for the feature extractors, followed by \(K_\varphi\) steps of L-BFGS \citep{liu1989limited, Ansel_PyTorch_2_Faster_2024} for the linear head.

The procedure above yields an estimator \(\hat\varphi\) of the heterogeneous treatment bridge. To recover the heterogeneous dose-response curve, we introduce a third regression stage targeting
\[
f_{\mathrm{CATE}}(a,v)=\E[Y\varphi_0(a,v,S,Z)\mid A=a,V=v].
\]
We therefore construct a pseudo-outcome dataset
\(
\mathcal D_3^{(\varphi)}
=
\{((a_i,v_i), y_i^{\mathrm{pseudo}})\}_{i=1}^{n_{\varphi,3}},
\quad \text{where} \quad
y_i^{\mathrm{pseudo}}
=
y_i\,\hat\varphi(a_i,v_i,s_i,z_i),
\)
and
\(
\hat\varphi(a_i,v_i,s_i,z_i)
=
\hat{\bm{\varphi}}^\top
\left(
\phi_{AVS,2}^{(\varphi)}(a_i,v_i,s_i)\otimes \phi_{Z,2}^{(\varphi)}(z_i)
\right).
\)
We then train a regression network \(f^{(\varphi)}(\cdot,\cdot;\theta_3^{(\varphi)})\) such that
\[
f^{(\varphi)}(a,v;\theta_3^{(\varphi)})
\approx
\E[Y\hat\varphi(a,v,S,Z)\mid A=a,V=v].
\]
More generally, this network may be trained with any differentiable regression loss \(\ell^{(\varphi)}_{3}\). Writing the corresponding empirical objective as
\[
\hat{\gL}_{\varphi,3}(\theta_3^{(\varphi)})
=
\frac{1}{n_{\varphi,3}}
\sum_{i=1}^{n_{\varphi,3}}
\ell^{(\varphi)}_{3}\!\left(
y_i^{\mathrm{pseudo}},
f^{(\varphi)}(a_i,v_i;\theta_3^{(\varphi)})
\right).
\]
 This third-stage regression completes the heterogeneous TreatmentNet construction and summarized in Algorithm \ref{algo:HeterogeneousDeepTreatmentBridgeFunction}.

\begin{algorithm}[H]
{\footnotesize
\textbf{Input:} Datasets \(\mathcal D_1^{(\varphi)}=\{(\bar a_i,\bar v_i,\bar s_i,\bar w_i,\bar z_i)\}_{i=1}^{n_\varphi}\) and \(\mathcal D_2^{(\varphi)}=\{(\tilde a_i,\tilde v_i,\tilde s_i,\tilde w_i,\tilde z_i,\tilde{\hat r}_i)\}_{i=1}^{m_\varphi}\).\\
\textbf{Design choices:} Second-stage loss \(\ell^{(\varphi)}_{2}\); third-stage loss \(\ell^{(\varphi)}_{3}\).\\
\textbf{Output:} Heterogeneous treatment-bridge estimator \(\hat f_{\mathrm{CATE}}^{(\varphi)}(a,v)\).
\begin{algorithmic}[1]
\STATE Train the first two stages exactly as in Algorithm \ref{algo:DeepTreatmentBridgeFunction}, with the following replacements:
\[
X=(V,S), \qquad
\phi_{AXW,1}^{(\varphi)} \text{ replaced by } \phi_{AVSW,1}^{(\varphi)}, \qquad
\phi_{AX,2}^{(\varphi)} \text{ replaced by } \phi_{AVS,2}^{(\varphi)}.
\]
\STATE Construct the third-stage pseudo-outcome dataset
\[
\mathcal D_3^{(\varphi)}
=
\{((a_i,v_i),y_i^{\mathrm{pseudo}})\}_{i=1}^{n_{\varphi,3}},
\qquad
y_i^{\mathrm{pseudo}}
=
y_i\,\hat\varphi(a_i,v_i,s_i,z_i).
\]
\STATE Train the regression network \(f^{(\varphi)}(\cdot,\cdot;\theta_3^{(\varphi)})\) on \(\mathcal D_3^{(\varphi)}\) to approximate \(\E[Y\hat\varphi(a,v,S,Z)\mid A=a,V=v]\).
\STATE Return \(\hat f_{\mathrm{CATE}}^{(\varphi)}(a,v)=f^{(\varphi)}(a,v;\theta_3^{(\varphi)})\).
\end{algorithmic}
}
\caption{Heterogeneous TreatmentNet}
\label{algo:HeterogeneousDeepTreatmentBridgeFunction}
\end{algorithm}

\subsection{Heterogeneous dose-response estimation: doubly robust method}
\label{Appendix:HeterogeneousDRAlgorithm}

Having obtained heterogeneous bridge estimators from Appendices~\ref{Appendix:HeterogeneousOutcomeAlgorithm} and~\ref{Appendix:HeterogeneousTreatmentAlgorithm}, we now combine them into a doubly robust estimator of the heterogeneous dose-response curve. Recall the identifying representation
\begin{align*}
f_{\mathrm{CATE}}^{\mathrm{(DR)}}(a,v;h_0,\varphi_0)
&=
\E[\varphi_0(a,v,S,Z)\{Y-h_0(a,v,S,W)\}\mid A=a,V=v]
\\&+
\E[h_0(a,v,S,W)\mid V=v].
\end{align*}
Given estimators \(\hat h\) and \(\hat\varphi\), the remaining task is therefore to estimate the conditional expectation terms appearing in this expression over the joint space \((A,V)\). As in the population-level case, we describe two implementations of this final stage. 
Let
\(
\mathcal D^{(\kappa)}
=
\{(a_i,v_i,s_i,w_i,z_i,y_i)\}_{i=1}^{n_\kappa}
\)
be a sample on which both \(\hat h\) and \(\hat\varphi\) can be evaluated. In our implementation, we reuse the second-stage split for this purpose, although a separate split may also be used.

\paragraph{Version 1: direct residual regression.}
The most direct implementation is to regress the bridge-weighted residual
\[
\hat\varphi(a,v,S,Z)\{Y-\hat h(a,v,S,W)\}
\]
on the pair \((A,V)\). We therefore construct the pseudo-outcome dataset
\[
\mathcal D_1^{(\kappa)}
=
\{((a_i,v_i),y_i^{(\kappa,1)})\}_{i=1}^{n_\kappa},
\qquad
y_i^{(\kappa,1)}
=
\hat\varphi(a_i,v_i,s_i,z_i)\bigl(y_i-\hat h(a_i,v_i,s_i,w_i)\bigr).
\]
We then fit a regression network \(k^{(\kappa,1)}(\cdot,\cdot;\theta_1^{(\kappa)})\) to approximate
\[
k^{(\kappa,1)}(a,v;\theta_1^{(\kappa)})
\approx
\E[\hat\varphi(a,v,S,Z)\{Y-\hat h(a,v,S,W)\}\mid A=a,V=v].
\]
More generally, this network may be trained with any differentiable regression loss. Writing \(\ell^{(\kappa)}_{1}\) for this loss, the third-stage objective is
\begin{equation}
\hat{\gL}_{\kappa,1}(\theta_1^{(\kappa)})
=
\frac{1}{n_\kappa}
\sum_{i=1}^{n_\kappa}
\ell^{(\kappa)}_{1}\!\left(
y_i^{(\kappa,1)},
k^{(\kappa,1)}(a_i,v_i;\theta_1^{(\kappa)})
\right).
\label{eq:HeterogeneousDRVersion1ThirdStageLoss}
\end{equation}

Combining this residual regression with the heterogeneous OutcomeNet estimator gives
\begin{align}
\hat f_{\mathrm{CATE}}^{\mathrm{(DR1)}}(a,v)
&=
\hat f_{\mathrm{CATE}}^{(h)}(a,v)
+
k^{(\kappa,1)}(a,v;\theta_1^{(\kappa)}) \nonumber\\
&=
\vh^\top
\left(
\phi_{A,2}^{(h)}(a)\otimes
\phi_{V,2}^{(h)}(v)\otimes
f^{(h)}(v;\theta_3^{(h)})
\right)
+
k^{(\kappa,1)}(a,v;\theta_1^{(\kappa)}).
\label{eq:HeterogeneousDRVersion1Estimator}
\end{align}

The complete procedure is summarized in Algorithm \ref{algo:DeepHeterogeneousDoublyRobustPCL_Version1}.

\paragraph{Version 2: decoupled decomposition.}
Alternatively, we may expand the doubly robust formula as
\begin{align*}
f_{\mathrm{CATE}}^{\mathrm{(DR)}}(a,v;h_0,\varphi_0)
&=
\E[h_0(a,v,S,W)\mid V=v]
+
\E[Y\varphi_0(a,v,S,Z)\mid A=a,V=v]
\\&-
\E[\varphi_0(a,v,S,Z)h_0(a,v,S,W)\mid A=a,V=v].
\end{align*}
The first two terms are already estimated by the heterogeneous OutcomeNet and TreatmentNet procedures. It therefore remains only to estimate the interaction term
\[
\E[\hat\varphi(a,v,S,Z)\hat h(a,v,S,W)\mid A=a,V=v].
\]
For this purpose, we construct the pseudo-outcome dataset
\(
\mathcal D_2^{(\kappa)}
=
\{((a_i,v_i),y_i^{(\kappa,2)})\}_{i=1}^{n_\kappa},
\quad \text{where} \quad
y_i^{(\kappa,2)}
=
\hat\varphi(a_i,v_i,s_i,z_i)\hat h(a_i,v_i,s_i,w_i).
\)
We then fit a correction network \(k^{(\kappa,2)}(\cdot,\cdot;\theta_2^{(\kappa)})\) such that
\[
k^{(\kappa,2)}(a,v;\theta_2^{(\kappa)})
\approx
\E[\hat\varphi(a,v,S,Z)\hat h(a,v,S,W)\mid A=a,V=v].
\]
As above, this network may be trained with any differentiable regression loss. Denoting the loss by \(\ell^{(\kappa)}_{2}\), we minimize
\begin{equation}
\hat{\gL}_{\kappa,2}(\theta_2^{(\kappa)})
=
\frac{1}{n_\kappa}
\sum_{i=1}^{n_\kappa}
\ell^{(\kappa)}_{2}\!\left(
y_i^{(\kappa,2)},
k^{(\kappa,2)}(a_i,v_i;\theta_2^{(\kappa)})
\right).
\label{eq:HeterogeneousDRVersion2ThirdStageLoss}
\end{equation}

The corresponding decoupled heterogeneous doubly robust estimator is
\begin{align}
&\hat f_{\mathrm{CATE}}^{\mathrm{(DR2)}}(a,v)
=
\hat f_{\mathrm{CATE}}^{(h)}(a,v)
+
\hat f_{\mathrm{CATE}}^{(\varphi)}(a,v)
-
k^{(\kappa,2)}(a,v;\theta_2^{(\kappa)}) \nonumber\\
&=
\vh^\top
\left(
\phi_{A,2}^{(h)}(a)\otimes
\phi_{V,2}^{(h)}(v)\otimes
f^{(h)}(v;\theta_3^{(h)})
\right)
+
f^{(\varphi)}(a,v;\theta_3^{(\varphi)})
-
k^{(\kappa,2)}(a,v;\theta_2^{(\kappa)}).
\label{eq:HeterogeneousDRVersion2Estimator}
\end{align}

In comparison, Version 1 directly regresses the full bridge-weighted residual. Version 2 instead reuses the heterogeneous treatment-bridge regression and only learns the interaction term. Both constructions are compatible with the same learned bridge functions and differ only in how the final correction is parameterized. Both procedures for DRPCLNET (V1) and (V2) are summarized in Algorithms \ref{algo:DeepHeterogeneousDoublyRobustPCL_Version1} and \ref{algo:DeepHeterogeneousDoublyRobustPCL_Version2}.

\begin{algorithm}[H]
{\footnotesize
\textbf{Input:} A dataset \(\mathcal D^{(\kappa)}=\{(a_i,v_i,s_i,w_i,z_i,y_i)\}_{i=1}^{n_\kappa}\) on which both \(\hat h\) and \(\hat\varphi\) can be evaluated.\\
\textbf{Subroutines:} Algorithms \ref{algo:HeterogeneousDeepOutcomeBridgeFunction} and \ref{algo:HeterogeneousDeepTreatmentBridgeFunction}.\\
\textbf{Design choice:} Third-stage regression loss \(\ell^{(\kappa)}_{1}\).\\
\textbf{Hyperparameters:} Hyperparameters required by Algorithms \ref{algo:HeterogeneousDeepOutcomeBridgeFunction} and \ref{algo:HeterogeneousDeepTreatmentBridgeFunction}, together with the optimization hyperparameters for the correction network \(k^{(\kappa,1)}(\cdot,\cdot;\theta_1^{(\kappa)})\).\\
\textbf{Output:} Heterogeneous doubly robust estimator \(\hat f_{\mathrm{CATE}}^{\mathrm{(DR1)}}(a,v)\).
\begin{algorithmic}[1]
\STATE Train the heterogeneous OutcomeNet via Algorithm \ref{algo:HeterogeneousDeepOutcomeBridgeFunction} to obtain \(\hat h(a,v,s,w)\) and \(\hat f_{\mathrm{CATE}}^{(h)}(a,v)\).
\STATE Train the heterogeneous TreatmentNet via Algorithm \ref{algo:HeterogeneousDeepTreatmentBridgeFunction} to obtain \(\hat\varphi(a,v,s,z)\).
\STATE Construct the pseudo-outcome dataset
\[
\mathcal D_1^{(\kappa)}
=
\{((a_i,v_i),y_i^{(\kappa,1)})\}_{i=1}^{n_\kappa},
\qquad
y_i^{(\kappa,1)}
=
\hat\varphi(a_i,v_i,s_i,z_i)\bigl(y_i-\hat h(a_i,v_i,s_i,w_i)\bigr).
\]
\STATE Train the correction network \(k^{(\kappa,1)}(\cdot,\cdot;\theta_1^{(\kappa)})\) on \(\mathcal D_1^{(\kappa)}\) using the loss in Equation \ref{eq:HeterogeneousDRVersion1ThirdStageLoss}.
\STATE Define
\[
\hat f_{\mathrm{CATE}}^{\mathrm{(DR1)}}(a,v)
=
\hat f_{\mathrm{CATE}}^{(h)}(a,v)
+
k^{(\kappa,1)}(a,v;\theta_1^{(\kappa)}).
\]
\STATE \textbf{Return} \(\hat f_{\mathrm{CATE}}^{\mathrm{(DR1)}}(a,v)\).
\end{algorithmic}}
\caption{Heterogeneous DRPCLNET, Version 1}
\label{algo:DeepHeterogeneousDoublyRobustPCL_Version1}
\end{algorithm}

\begin{algorithm}[H]
{\footnotesize
\textbf{Input:} A dataset \(\mathcal D^{(\kappa)}=\{(a_i,v_i,s_i,w_i,z_i,y_i)\}_{i=1}^{n_\kappa}\) on which both \(\hat h\) and \(\hat\varphi\) can be evaluated.\\
\textbf{Subroutines:} Algorithms \ref{algo:HeterogeneousDeepOutcomeBridgeFunction} and \ref{algo:HeterogeneousDeepTreatmentBridgeFunction}.\\
\textbf{Design choice:} Third-stage regression loss \(\ell^{(\kappa)}_{2}\).\\
\textbf{Hyperparameters:} Hyperparameters required by Algorithms \ref{algo:HeterogeneousDeepOutcomeBridgeFunction} and \ref{algo:HeterogeneousDeepTreatmentBridgeFunction}, together with the optimization hyperparameters for the correction network \(k^{(\kappa,2)}(\cdot,\cdot;\theta_2^{(\kappa)})\).\\
\textbf{Output:} Heterogeneous doubly robust estimator \(\hat f_{\mathrm{CATE}}^{\mathrm{(DR2)}}(a,v)\).
\begin{algorithmic}[1]
\STATE Train the heterogeneous OutcomeNet via Algorithm \ref{algo:HeterogeneousDeepOutcomeBridgeFunction} to obtain \(\hat h(a,v,s,w)\) and \(\hat f_{\mathrm{CATE}}^{(h)}(a,v)\).
\STATE Train the heterogeneous TreatmentNet via Algorithm \ref{algo:HeterogeneousDeepTreatmentBridgeFunction} to obtain \(\hat\varphi(a,v,s,z)\) and \(\hat f_{\mathrm{CATE}}^{(\varphi)}(a,v)\).
\STATE Construct the pseudo-outcome dataset
\[
\mathcal D_2^{(\kappa)}
=
\{((a_i,v_i),y_i^{(\kappa,2)})\}_{i=1}^{n_\kappa},
\qquad
y_i^{(\kappa,2)}
=
\hat\varphi(a_i,v_i,s_i,z_i)\hat h(a_i,v_i,s_i,w_i).
\]
\STATE Train the correction network \(k^{(\kappa,2)}(\cdot,\cdot;\theta_2^{(\kappa)})\) on \(\mathcal D_2^{(\kappa)}\) using the loss in Equation \ref{eq:HeterogeneousDRVersion2ThirdStageLoss}.
\STATE Define
\[
\hat f_{\mathrm{CATE}}^{\mathrm{(DR2)}}(a,v)
=
\hat f_{\mathrm{CATE}}^{(h)}(a,v)
+
\hat f_{\mathrm{CATE}}^{(\varphi)}(a,v)
-
k^{(\kappa,2)}(a,v;\theta_2^{(\kappa)}).
\]
\STATE \textbf{Return} \(\hat f_{\mathrm{CATE}}^{\mathrm{(DR2)}}(a,v)\).
\end{algorithmic}}
\caption{Heterogeneous DRPCLNET, Version 2}
\label{algo:DeepHeterogeneousDoublyRobustPCL_Version2}
\end{algorithm}

\section{Neural mean embedding-based proxy causal learning for conditional dose-response}
\label{sec:NeuralMeanEmbedding_ConditionalDoseResponse}

We now extend the neural mean embedding framework to the conditional dose-response function \(f_{\mathrm{ATT}}(a,a')\). As in the previous sections, we proceed by developing corresponding outcome- and treatment-bridge estimators and then combining them into a doubly robust construction. We begin with the outcome-bridge formulation. In this case, the bridge-learning problem itself is unchanged relative to the population-level dose-response setting; the only modification appears in the final integration step, which now conditions on the observed treatment level \(A=a'\).

\subsection{Conditional dose-response estimation: outcome bridge method}
\label{Appendix:ConditionalOutcomeAlgorithm}

Recall that, according to Theorem \ref{theorem:OutcomeBridgeIdentificationAllCausalFunctions}, the conditional dose-response is identified by
\(
f_{\mathrm{ATT}}(a,a')=\E[h_0(a,X,W)\mid A=a'].
\)
The bridge equation remains exactly the same as in the population-level outcome-bridge formulation.
Therefore, the first two stages of the estimator are identical to those of OutcomeNet in Appendix \ref{sec:DFPCL_Xu_Review}. In particular, we reuse the same neural parameterization of the bridge function \(h(a,x,w)\) and the same two-stage training procedure to obtain an estimator \(\hat h\).

The only new ingredient is the final conditional integration step. Using Equation \ref{Eq:DFPCL_SecondStageModel}, we have
\begin{align}
f_{\mathrm{ATT}}(a,a')
&\approx
\E[\hat h(a,X,W)\mid A=a'] \nonumber\\
&=
\E\!\left[
\vh^\top
\left(
\phi_{A,2}^{(h)}(a)\otimes
\phi_{X,2}^{(h)}(X)\otimes
\phi_{W,2}^{(h)}(W)
\right)
\middle| A=a'
\right] \nonumber\\
&=
\vh^\top
\left(
\phi_{A,2}^{(h)}(a)\otimes
\E[\phi_{X,2}^{(h)}(X)\otimes \phi_{W,2}^{(h)}(W)\mid A=a']
\right).
\label{eq:ATT_OutcomeBridge_Expansion}
\end{align}
Thus, once \(\hat h\) has been learned, estimating \(f_{\mathrm{ATT}}(a,a')\) reduces to estimating the conditional mean embedding of the joint second-stage features of \((X,W)\) given \(A=a'\).

To estimate this quantity, we introduce a third-stage dataset
\(
\mathcal D_3^{(h,\mathrm{ATT})}
=
\{(a_i,\mathbf{c}_i)\}_{i=1}^{n_{h,3}},
\quad \text{where} \quad
\mathbf{c}_i
=
\phi_{X,2}^{(h)}(x_i)\otimes \phi_{W,2}^{(h)}(w_i)
\in \mathbb R^{d_{X,2}^{(h)}d_{W,2}^{(h)}}.
\)
We then train a regression network \(g^{(h)}(\cdot;\theta_3^{(h)})\) so that
\[
g^{(h)}(a';\theta_3^{(h)})
\approx
\E[\mathbf{c}\mid A=a'].
\]
More generally, this network may be trained with any differentiable regression loss \(\ell_{h,3}^{\mathrm{ATT}}\). Denoting the corresponding empirical objective by
\begin{equation}
\hat{\gL}_{h,3}^{\mathrm{ATT}}(\theta_3^{(h)})
=
\frac{1}{n_{h,3}}
\sum_{i=1}^{n_{h,3}}
\ell_{h,3}^{\mathrm{ATT}}\!\left(
\mathbf{c}_i,\,
g^{(h)}(a_i;\theta_3^{(h)})
\right),
\label{eq:ATT_OutcomeBridge_ThirdStageLoss}
\end{equation}
we use the squared loss in our implementation.

Finally, the conditional dose-response estimator is
\begin{equation}
\hat f_{\mathrm{ATT}}^{(h)}(a,a')
=
\vh^\top
\left(
\phi_{A,2}^{(h)}(a)\otimes
g^{(h)}(a';\theta_3^{(h)})
\right).
\label{eq:ATT_OutcomeBridgeFinalEstimator}
\end{equation}
In our implementation, the third-stage dataset is constructed from the second-stage split, although one may alternatively use a separate third-stage split.

The resulting procedure is summarized in Algorithm \ref{algo:ConditionalDeepOutcomeBridgeFunction}.

\begin{algorithm}[H]
{\footnotesize
\textbf{Input:} Datasets \(\mathcal D_1^{(h)}=\{(\bar a_i,\bar x_i,\bar z_i,\bar w_i)\}_{i=1}^{n_h}\) and \(\mathcal D_2^{(h)}=\{(\tilde a_i,\tilde x_i,\tilde z_i,\tilde w_i,\tilde y_i)\}_{i=1}^{m_h}\).\\
\textbf{Design choice:} Third-stage regression loss \(\ell_{h,3}^{\mathrm{ATT}}\).\\
\textbf{Output:} Conditional outcome-bridge estimator \(\hat f_{\mathrm{ATT}}^{(h)}(a,a')\).
\begin{algorithmic}[1]
\STATE Train the first two stages exactly as in Algorithm \ref{algo:DeepOutcomeBridgeFunction}.
\STATE Construct the third-stage dataset
\[
\mathcal D_3^{(h,\mathrm{ATT})}
=
\{(a_i,\mathbf{c}_i)\}_{i=1}^{n_{h,3}},
\qquad
\mathbf{c}_i
=
\phi_{X,2}^{(h)}(x_i)\otimes \phi_{W,2}^{(h)}(w_i).
\]
\STATE Train the regression network \(g^{(h)}(\cdot;\theta_3^{(h)})\) on \(\mathcal D_3^{(h,\mathrm{ATT})}\) to approximate \(\E[\mathbf{c}\mid A=a']\).
\STATE Return
\[
\hat f_{\mathrm{ATT}}^{(h)}(a,a')
=
\vh^\top
\left(
\phi_{A,2}^{(h)}(a)\otimes
g^{(h)}(a';\theta_3^{(h)})
\right).
\]
\end{algorithmic}
}
\caption{Conditional OutcomeNet}
\label{algo:ConditionalDeepOutcomeBridgeFunction}
\end{algorithm}

\subsection{Conditional dose-response estimation: treatment bridge method}
\label{Appendix:ConditionalOutcomeAlgorithm_treatment}

We now extend TreatmentNet to the conditional dose-response function $f_{\mathrm{ATT}}(a,a')$. For a fixed reference treatment level $a'$, Theorem \ref{theorem:TreatmentBridgeIdentificationAllCausalFunctions} gives the identifying representation
\[
f_{\mathrm{ATT}}(a,a')=\E[Y\varphi_0(a,a',X,Z)\mid A=a],
\]
where the corresponding treatment bridge satisfies
\[
\E[\varphi_0(a,a',X,Z)\mid A=a,X,W]
=
\frac{p_{X,W\mid A}(X,W\mid a')}{p_{X,W\mid A}(X,W\mid a)}.
\]
Thus, in contrast to the population-level treatment-bridge setting, the bridge now depends on the additional reference level $a'$. In the implementation considered here, we treat $a'$ as fixed and learn a separate treatment bridge for each reference level of interest.

An important simplification is that the first-stage conditional mean embedding of the treatment proxy does not depend on $a'$. Indeed, the regression
\[
\E[\phi_{Z,2}^{(\varphi)}(Z)\mid A=a,X=x,W=w]
=
\left(\mV^{(\varphi)}\right)^\top \phi_{AXW,1}^{(\varphi)}(a,x,w)
\]
is identical to the one used in Appendix \ref{Sec:NeuralTreatmentApproach_Appendix}. Therefore, the first stage may be trained exactly as in the population-level TreatmentNet and, in principle, shared across multiple values of $a'$. The dependence on $a'$ enters only through the second-stage target.

Let $\hat r^{(a')}(a,x,w)$ denote an estimator of the density ratio
\[
\hat r^{(a')}(a,x,w)\approx \frac{p_{X,W\mid A}(x,w\mid a')}{p_{X,W\mid A}(x,w\mid a)},
\]
and let
\(
\mathcal D_1^{(\varphi)}
=
\{(\bar a_i,\bar x_i,\bar w_i,\bar z_i)\}_{i=1}^{n_\varphi},
\quad \text{and} \quad
\mathcal D_2^{(\varphi,a')}
=
\{(\tilde a_i,\tilde x_i,\tilde w_i,\tilde z_i,\tilde{\hat r}_i^{(a')})\}_{i=1}^{m_\varphi}
\)
denote the first- and second-stage data splits. For fixed $a'$, we parameterize the conditional treatment bridge as
\begin{align}
\varphi_{a'}(a,x,z)
&=
\bm{\varphi}^{(a')\top}
\left(
\phi_{AX,2}^{(\varphi)}(a,x)\otimes \phi_{Z,2}^{(\varphi)}(z)
\right),
\label{Eq:Conditional_TreatmentBridgeModel}
\\
\E[\phi_{Z,2}^{(\varphi)}(Z)\mid A=a,X=x,W=w]
&=
\left(\mV^{(\varphi)}\right)^\top \phi_{AXW,1}^{(\varphi)}(a,x,w).
\label{Eq:Conditional_TreatmentBridgeFirstStageModel}
\end{align}

Since the first stage is unchanged, the persistent and auxiliary first-stage operators are exactly those defined in Appendix \ref{Sec:NeuralTreatmentApproach_Appendix}. In particular, for the current second-stage batch $\mathcal B_2^{(\varphi,a')}\subset \mathcal D_2^{(\varphi,a')}$, we compute the auxiliary on-the-fly operator $\check{\mV}_t^{(\varphi)}$ exactly as in Equation \ref{eq:TreatmentBridgeFirstLayerProximalClosedFormSolutionAux}. Using this auxiliary operator, the conditional second-stage feature vector is
\[
\Psi_{2,i}^{(\varphi,a')}
=
\phi_{AX,2}^{(\varphi)}(\tilde a_i,\tilde x_i)
\otimes
\check{\mV}_t^{(\varphi)\top}
\phi_{AXW,1}^{(\varphi)}(\tilde a_i,\tilde x_i,\tilde w_i),
\qquad
i\in \mathcal B_2^{(\varphi,a')}.
\]
The corresponding quadratic second-stage proximal loss is
\begin{align*}
\hat{\gL}_{\varphi,2}^{\mathrm{prox}}\!\left(\theta_2^{(\varphi)},\bm{\varphi}^{(a')};\theta_1^{(\varphi)},\mathcal B_2^{(\varphi,a')}\right)
&=
\frac{1}{|\mathcal B_2^{(\varphi,a')}|}
\sum_{i\in \mathcal B_2^{(\varphi,a')}}
\Big(
\tilde{\hat r}_i^{(a')}
-
\bm{\varphi}^{(a')\top}\Psi_{2,i}^{(\varphi,a')}
\Big)^2\\&
+
\lambda_2^{(\varphi)}
\left\|
\bm{\varphi}^{(a')}-\hat{\bm{\varphi}}_t^{(a')}
\right\|_2^2.
\end{align*}
For fixed features, the proximal closed-form update is
\begin{align*}
\hat{\bm{\varphi}}_{t+1}^{(a')}\!\left(\theta_1^{(\varphi)};\theta_2^{(\varphi)},\mathcal B_2^{(\varphi,a')}\right)
&=
\left(
\Psi_2^{(\varphi,a')}(\mathcal B_2^{(\varphi,a')})
\Psi_2^{(\varphi,a')}(\mathcal B_2^{(\varphi,a')})^\top
+
|\mathcal B_2^{(\varphi,a')}|\lambda_2^{(\varphi)}\mI
\right)^{-1}
\nonumber\\
&\qquad\times
\left(
\Psi_2^{(\varphi,a')}(\mathcal B_2^{(\varphi,a')})
\mR_2^{(a')}(\mathcal B_2^{(\varphi,a')})^\top
+
|\mathcal B_2^{(\varphi,a')}|\lambda_2^{(\varphi)}\hat{\bm{\varphi}}_t^{(a')}
\right),
\end{align*}
where
\[
\Psi_2^{(\varphi,a')}(\mathcal B_2^{(\varphi,a')})
=
\begin{bmatrix}
\Psi_{2,i}^{(\varphi,a')}
\end{bmatrix}_{i\in \mathcal B_2^{(\varphi,a')}},
\qquad
\mR_2^{(a')}(\mathcal B_2^{(\varphi,a')})
=
\begin{bmatrix}
\tilde{\hat r}_i^{(a')}
\end{bmatrix}_{i\in \mathcal B_2^{(\varphi,a')}}.
\]

As in the population-level treatment-bridge estimator, we may replace the squared loss by a general differentiable regression loss $\ell^{(\varphi, \mathrm{ATT})}_{2}$, leading to
\begin{align*}
\hat{\gL}_{\varphi,2}^{\mathrm{gen}}\!\left(\theta_2^{(\varphi)},\bm{\varphi}^{(a')};\theta_1^{(\varphi)},\mathcal B_2^{(\varphi,a')}\right)
&=
\frac{1}{|\mathcal B_2^{(\varphi,a')}|}
\sum_{i\in \mathcal B_2^{(\varphi,a')}}
\ell^{(\varphi, \mathrm{ATT})}_{2}\!\left(
\tilde{\hat r}_i^{(a')},\,
\bm{\varphi}^{(a')\top}\Psi_{2,i}^{(\varphi,a')}
\right)\\&
+
\lambda_2^{(\varphi)}
\left\|
\bm{\varphi}^{(a')}-\hat{\bm{\varphi}}_t^{(a')}
\right\|_2^2.
\end{align*}
In practice, we optimize this objective exactly as in Appendix \ref{Sec:NeuralTreatmentApproach_Appendix}: we first update the second-stage feature extractors by gradient descent, and then refine the linear head by $K_\varphi$ steps of L-BFGS.

Once $\hat\varphi^{(a')}$ has been learned, the remaining task is to estimate the conditional dose-response curve
\[
f_{\mathrm{ATT}}(a,a')=\E[Y\varphi_0(a,a',X,Z)\mid A=a].
\]
For fixed $a'$, this is again a regression function in the treatment level $a$. We therefore introduce a third-stage pseudo-outcome dataset
\(
\mathcal D_3^{(\varphi,a')}
=
\{(a_i,y_i^{\mathrm{pseudo},(a')})\}_{i=1}^{n_{\varphi,3}},
\quad \text{where} \quad
y_i^{\mathrm{pseudo},(a')}
=
y_i\,\hat\varphi^{(a')}(a_i,x_i,z_i),
\)
where
\[
\hat\varphi^{(a')}(a_i,x_i,z_i)
=
\hat{\bm{\varphi}}^{(a')\top}
\left(
\phi_{AX,2}^{(\varphi)}(a_i,x_i)\otimes \phi_{Z,2}^{(\varphi)}(z_i)
\right).
\]
We then train a regression network $f^{(\varphi,a')}(\cdot;\theta_3^{(\varphi,a')})$ such that
\[
f^{(\varphi,a')}(a;\theta_3^{(\varphi,a')})
\approx
\E[Y\hat\varphi^{(a')}(a,X,Z)\mid A=a].
\]
More generally, this network may be trained with any differentiable regression loss $\ell^{(\varphi, \mathrm{ATT})}_{3}$. Writing the corresponding empirical objective as
\begin{equation}
\hat{\gL}_{\varphi,3}^{\mathrm{ATT}}(\theta_3^{(\varphi,a')})
=
\frac{1}{n_{\varphi,3}}
\sum_{i=1}^{n_{\varphi,3}}
\ell^{(\varphi, \mathrm{ATT})}_{3}\!\left(
y_i^{\mathrm{pseudo},(a')},
f^{(\varphi,a')}(a_i;\theta_3^{(\varphi,a')})
\right),
\label{eq:ConditionalTreatmentThirdStageLoss}
\end{equation}
we use the squared loss in our implementation unless otherwise stated. The resulting conditional treatment-bridge estimator is
\begin{equation}
\hat f_{\mathrm{ATT}}^{(\varphi)}(a,a')
=
f^{(\varphi,a')}(a;\theta_3^{(\varphi,a')}).
\label{eq:ConditionalTreatmentBridgeFinalEstimator}
\end{equation}
In our implementation, the third-stage dataset is constructed from the second-stage split, although one may alternatively use a separate third-stage split.

Thus, for each fixed reference treatment $a'$, the conditional TreatmentNet differs from the population-level TreatmentNet only through the second-stage density-ratio target and the resulting anchor-specific third-stage regression. The first-stage conditional mean embedding of the treatment proxy is unchanged and may be shared across anchors.

The procedure is summarized in Algorithm \ref{algo:ConditionalDeepTreatmentBridgeFunction}.

\begin{algorithm}[H]
{\footnotesize
\textbf{Input:} Datasets $\mathcal D_1^{(\varphi)}=\{(\bar a_i,\bar x_i,\bar w_i,\bar z_i)\}_{i=1}^{n_\varphi}$ and $\mathcal D_2^{(\varphi,a')}=\{(\tilde a_i,\tilde x_i,\tilde w_i,\tilde z_i,\tilde{\hat r}_i^{(a')})\}_{i=1}^{m_\varphi}$ for a fixed reference treatment $a'$.\\
\textbf{Design choices:} Second-stage regression loss $\ell^{(\varphi, \mathrm{ATT})}_{2}$; third-stage regression loss $\ell^{(\varphi, \mathrm{ATT})}_{3}$.\\
\textbf{Output:} Conditional treatment-bridge estimator $\hat f_{\mathrm{ATT}}^{(\varphi)}(a,a')$.
\begin{algorithmic}[1]
\STATE Train the first stage exactly as in Algorithm \ref{algo:DeepTreatmentBridgeFunction}.
\STATE Train the second stage exactly as in Algorithm \ref{algo:DeepTreatmentBridgeFunction}, replacing the stage-2 regression target \(\hat r(a,x,w)\) by the anchor-specific target \(\hat r^{(a')}(a,x,w)\).
\STATE Construct the third-stage pseudo-outcome dataset
\[
\mathcal D_3^{(\varphi,a')}
=
\{(a_i,y_i^{\mathrm{pseudo},(a')})\}_{i=1}^{n_{\varphi,3}},
\qquad
y_i^{\mathrm{pseudo},(a')}
=
y_i\,\hat\varphi^{(a')}(a_i,x_i,z_i).
\]
\STATE Train the regression network $f^{(\varphi,a')}(\cdot;\theta_3^{(\varphi,a')})$ on $\mathcal D_3^{(\varphi,a')}$ to approximate $\E[Y\hat\varphi^{(a')}(a,X,Z)\mid A=a]$.
\STATE Return
\[
\hat f_{\mathrm{ATT}}^{(\varphi)}(a,a')
=
f^{(\varphi,a')}(a;\theta_3^{(\varphi,a')}).
\]
\end{algorithmic}
}
\caption{Conditional TreatmentNet}
\label{algo:ConditionalDeepTreatmentBridgeFunction}
\end{algorithm}

\subsection{Conditional dose-response estimation: doubly robust method}
\label{Appendix:ConditionalDRAlgorithm}

We now combine the conditional outcome- and treatment-bridge estimators into a doubly robust estimator of the conditional dose-response function. Recall that, for a fixed reference treatment level \(a'\), the identifying representation is
\[
f_{\mathrm{ATT}}^{\mathrm{(DR)}}(a,a';h_0,\varphi_0)
=
\E[\varphi_0(a,a',X,Z)\{Y-h_0(a,X,W)\}\mid A=a]
+
\E[h_0(a,X,W)\mid A=a'].
\]
Thus, given estimators \(\hat h\) and \(\hat\varphi^{(a')}\), the remaining task is to estimate the conditional expectation terms appearing in this expression as functions of the intervention level \(a\). As in the population-level and heterogeneous settings, we describe two implementations of this final stage. Let
\(
\mathcal D^{(\kappa,a')}
=
\{(a_i,x_i,w_i,z_i,y_i)\}_{i=1}^{n_\kappa}
\)
denote a sample on which both \(\hat h\) and \(\hat\varphi^{(a')}\) can be evaluated. In our implementation, we reuse the second-stage split for this purpose, although one may alternatively use a separate split.

\paragraph{Version 1: direct residual regression.}
For fixed \(a'\), the most direct implementation is to regress the bridge-weighted residual
\[
\hat\varphi^{(a')}(a,X,Z)\{Y-\hat h(a,X,W)\}
\]
on the treatment level \(A\). We therefore construct the pseudo-outcome dataset
\(
\mathcal D_1^{(\kappa,a')}
=
\{(a_i,y_i^{(\kappa,1,a')})\}_{i=1}^{n_\kappa},
\quad \text{where} \quad
y_i^{(\kappa,1,a')}
=
\hat\varphi^{(a')}(a_i,x_i,z_i)\bigl(y_i-\hat h(a_i,x_i,w_i)\bigr).
\)
We then fit a regression network \(k^{(\kappa,1,a')}(\cdot;\theta_1^{(\kappa,a')})\) to approximate
\[
k^{(\kappa,1,a')}(a;\theta_1^{(\kappa,a')})
\approx
\E[\hat\varphi^{(a')}(a,X,Z)\{Y-\hat h(a,X,W)\}\mid A=a].
\]
More generally, this network may be trained with any differentiable regression loss. Writing \(\ell^{(\kappa, \mathrm{ATT})}_{1}\) for this loss, the third-stage objective is
\begin{equation}
\hat{\gL}_{\kappa,1}^{\mathrm{ATT}}(\theta_1^{(\kappa,a')})
=
\frac{1}{n_\kappa}
\sum_{i=1}^{n_\kappa}
\ell^{(\kappa, \mathrm{ATT})}_{1}\!\left(
y_i^{(\kappa,1,a')},
k^{(\kappa,1,a')}(a_i;\theta_1^{(\kappa,a')})
\right).
\label{eq:ConditionalDRVersion1ThirdStageLoss}
\end{equation}

Combining this residual regression with the conditional OutcomeNet estimator yields
\begin{align}
\hat f_{\mathrm{ATT}}^{\mathrm{(DR1)}}(a,a')
&=
\hat f_{\mathrm{ATT}}^{(h)}(a,a')
+
k^{(\kappa,1,a')}(a;\theta_1^{(\kappa,a')}) \nonumber\\
&=
\hat{\vh}^{\top}
\left(
\phi_{A,2}^{(h)}(a)\otimes g^{(h)}(a';\theta_3^{(h)})
\right)
+
k^{(\kappa,1,a')}(a;\theta_1^{(\kappa,a')}).
\label{eq:ConditionalDRVersion1Estimator}
\end{align}
This is the most direct neural implementation of the conditional doubly robust formula, since the additional regression network learns the full bridge-weighted residual correction in one step.

\paragraph{Version 2: decoupled decomposition.}
Alternatively, for fixed \(a'\), we may expand the doubly robust formula as
\begin{align*}
f_{\mathrm{ATT}}^{\mathrm{(DR)}}(a,a';h_0,\varphi_0)
&=
\E[h_0(a,X,W)\mid A=a']
+
\E[Y\varphi_0(a,a',X,Z)\mid A=a]
\\&-
\E[\varphi_0(a,a',X,Z)h_0(a,X,W)\mid A=a].
\end{align*}
The first two terms are already estimated by the conditional OutcomeNet and conditional TreatmentNet procedures. It therefore remains only to estimate the interaction term
\(
\E[\hat\varphi^{(a')}(a,X,Z)\hat h(a,X,W)\mid A=a].
\)
To this end, we construct the pseudo-outcome dataset
\(
\mathcal D_2^{(\kappa,a')}
=
\{(a_i,y_i^{(\kappa,2,a')})\}_{i=1}^{n_\kappa},
\quad \text{where} \quad
y_i^{(\kappa,2,a')}
=
\hat\varphi^{(a')}(a_i,x_i,z_i)\hat h(a_i,x_i,w_i).
\)
We then fit a correction network \(k^{(\kappa,2,a')}(\cdot;\theta_2^{(\kappa,a')})\) such that
\[
k^{(\kappa,2,a')}(a;\theta_2^{(\kappa,a')})
\approx
\E[\hat\varphi^{(a')}(a,X,Z)\hat h(a,X,W)\mid A=a].
\]
As above, this network may be trained with any differentiable regression loss. Denoting this loss by \(\ell^{(\kappa, \mathrm{ATT})}_{2}\), we minimize
\begin{equation}
\hat{\gL}_{\kappa,2}^{\mathrm{ATT}}(\theta_2^{(\kappa,a')})
=
\frac{1}{n_\kappa}
\sum_{i=1}^{n_\kappa}
\ell^{(\kappa, \mathrm{ATT})}_{2}\!\left(
y_i^{(\kappa,2,a')},
k^{(\kappa,2,a')}(a_i;\theta_2^{(\kappa,a')})
\right).
\label{eq:ConditionalDRVersion2ThirdStageLoss}
\end{equation}

The resulting decoupled conditional doubly robust estimator is
\begin{align}
\hat f_{\mathrm{ATT}}^{\mathrm{(DR2)}}(a,a')
&=
\hat f_{\mathrm{ATT}}^{(h)}(a,a')
+
\hat f_{\mathrm{ATT}}^{(\varphi)}(a,a')
-
k^{(\kappa,2,a')}(a;\theta_2^{(\kappa,a')}) \nonumber\\
&=
\hat{\vh}^{\top}
\left(
\phi_{A,2}^{(h)}(a)\otimes g^{(h)}(a';\theta_3^{(h)})
\right)
+
f^{(\varphi,a')}(a;\theta_3^{(\varphi,a')})
-
k^{(\kappa,2,a')}(a;\theta_2^{(\kappa,a')}).
\label{eq:ConditionalDRVersion2Estimator}
\end{align}

The procedures are summarized in Algorithms \ref{algo:ConditionalDeepDoublyRobustPCL_Version1} and \ref{algo:ConditionalDeepDoublyRobustPCL_Version2}.

\begin{algorithm}[H]
{\footnotesize
\textbf{Input:} A dataset \(\mathcal D^{(\kappa,a')}=\{(a_i,x_i,w_i,z_i,y_i)\}_{i=1}^{n_\kappa}\) on which both \(\hat h\) and \(\hat\varphi^{(a')}\) can be evaluated, together with a fixed reference treatment level \(a'\).\\
\textbf{Subroutines:} Algorithms \ref{algo:ConditionalDeepOutcomeBridgeFunction} and \ref{algo:ConditionalDeepTreatmentBridgeFunction}.\\
\textbf{Design choice:} Third-stage regression loss \(\ell^{(\kappa, \mathrm{ATT})}_{1}\).\\
\textbf{Hyperparameters:} Hyperparameters required by Algorithms \ref{algo:ConditionalDeepOutcomeBridgeFunction} and \ref{algo:ConditionalDeepTreatmentBridgeFunction}, together with the optimization hyperparameters for the correction network \(k^{(\kappa,1,a')}(\cdot;\theta_1^{(\kappa,a')})\).\\
\textbf{Output:} Conditional doubly robust estimator \(\hat f_{\mathrm{ATT}}^{\mathrm{(DR1)}}(a,a')\).
\begin{algorithmic}[1]
\STATE Train the conditional OutcomeNet via Algorithm \ref{algo:ConditionalDeepOutcomeBridgeFunction} to obtain \(\hat h(a,x,w)\) and \(\hat f_{\mathrm{ATT}}^{(h)}(a,a')\).
\STATE Train the conditional TreatmentNet via Algorithm \ref{algo:ConditionalDeepTreatmentBridgeFunction} to obtain \(\hat\varphi^{(a')}(a,x,z)\).
\STATE Construct the pseudo-outcome dataset
\[
\mathcal D_1^{(\kappa,a')}
=
\{(a_i,y_i^{(\kappa,1,a')})\}_{i=1}^{n_\kappa},
\qquad
y_i^{(\kappa,1,a')}
=
\hat\varphi^{(a')}(a_i,x_i,z_i)\bigl(y_i-\hat h(a_i,x_i,w_i)\bigr).
\]
\STATE Train the correction network \(k^{(\kappa,1,a')}(\cdot;\theta_1^{(\kappa,a')})\) on \(\mathcal D_1^{(\kappa,a')}\) using the loss in Equation \ref{eq:ConditionalDRVersion1ThirdStageLoss}.
\STATE Define
\[
\hat f_{\mathrm{ATT}}^{\mathrm{(DR1)}}(a,a')
=
\hat f_{\mathrm{ATT}}^{(h)}(a,a')
+
k^{(\kappa,1,a')}(a;\theta_1^{(\kappa,a')}).
\]
\STATE \textbf{Return} \(\hat f_{\mathrm{ATT}}^{\mathrm{(DR1)}}(a,a')\).
\end{algorithmic}}
\caption{Conditional DRPCLNET, Version 1}
\label{algo:ConditionalDeepDoublyRobustPCL_Version1}
\end{algorithm}

\begin{algorithm}[H]
{\footnotesize
\textbf{Input:} A dataset \(\mathcal D^{(\kappa,a')}=\{(a_i,x_i,w_i,z_i,y_i)\}_{i=1}^{n_\kappa}\) on which both \(\hat h\) and \(\hat\varphi^{(a')}\) can be evaluated, together with a fixed reference treatment level \(a'\).\\
\textbf{Subroutines:} Algorithms \ref{algo:ConditionalDeepOutcomeBridgeFunction} and \ref{algo:ConditionalDeepTreatmentBridgeFunction}.\\
\textbf{Design choice:} Third-stage regression loss \(\ell^{(\kappa, \mathrm{ATT})}_{2}\).\\
\textbf{Hyperparameters:} Hyperparameters required by Algorithms \ref{algo:ConditionalDeepOutcomeBridgeFunction} and \ref{algo:ConditionalDeepTreatmentBridgeFunction}, together with the optimization hyperparameters for the correction network \(k^{(\kappa,2,a')}(\cdot;\theta_2^{(\kappa,a')})\).\\
\textbf{Output:} Conditional doubly robust estimator \(\hat f_{\mathrm{ATT}}^{\mathrm{(DR2)}}(a,a')\).
\begin{algorithmic}[1]
\STATE Train the conditional OutcomeNet via Algorithm \ref{algo:ConditionalDeepOutcomeBridgeFunction} to obtain \(\hat h(a,x,w)\) and \(\hat f_{\mathrm{ATT}}^{(h)}(a,a')\).
\STATE Train the conditional TreatmentNet via Algorithm \ref{algo:ConditionalDeepTreatmentBridgeFunction} to obtain \(\hat\varphi^{(a')}(a,x,z)\) and \(\hat f_{\mathrm{ATT}}^{(\varphi)}(a,a')\).
\STATE Construct the pseudo-outcome dataset
\[
\mathcal D_2^{(\kappa,a')}
=
\{(a_i,y_i^{(\kappa,2,a')})\}_{i=1}^{n_\kappa},
\qquad
y_i^{(\kappa,2,a')}
=
\hat\varphi^{(a')}(a_i,x_i,z_i)\hat h(a_i,x_i,w_i).
\]
\STATE Train the correction network \(k^{(\kappa,2,a')}(\cdot;\theta_2^{(\kappa,a')})\) on \(\mathcal D_2^{(\kappa,a')}\) using the loss in Equation \ref{eq:ConditionalDRVersion2ThirdStageLoss}.
\STATE Define
\[
\hat f_{\mathrm{ATT}}^{\mathrm{(DR2)}}(a,a')
=
\hat f_{\mathrm{ATT}}^{(h)}(a,a')
+
\hat f_{\mathrm{ATT}}^{(\varphi)}(a,a')
-
k^{(\kappa,2,a')}(a;\theta_2^{(\kappa,a')}).
\]
\STATE \textbf{Return} \(\hat f_{\mathrm{ATT}}^{\mathrm{(DR2)}}(a,a')\).
\end{algorithmic}}
\caption{Conditional DRPCLNET, Version 2}
\label{algo:ConditionalDeepDoublyRobustPCL_Version2}
\end{algorithm}

\section{Consistency of the proposed algorithms}
In this section, we prove the consistency results of our proposed algorithms. For simplicity of the analysis, we assume independent splits of the dataset for each regression stage.

\subsection{Outcome bridge consistency for dose-response estimation} \label{app:outcome_bridge_consistency}
We derive weak norm convergence rates for the outcome bridge in this subsection. Our proof relies on uniform deviation inequalities and controlling the Rademacher complexities of relevant function classes. 

Let
\[
Q := (A,X,Z), \qquad R := (A,X),
\]
and write
\[
m_0(q) := \mathbb E[Y\mid Q=q].
\]
For a measurable function \(u(a,x,w)\), define the outcome-bridge conditional
expectation operator
\[
(T_h u)(a,x,z)
:=
\mathbb E[u(a,x,W)\mid A=a,X=x,Z=z].
\]
The induced projected seminorm/weak-norm is
\[
\|u\|_{T_h}
:=
\|T_h u\|_{L^2(\sP_Q)}.
\]

Let the first-stage sample be
\[
D_1^{(h)}
=
\{(\bar q_i,\bar w_i)\}_{i=1}^{n_h},
\]
and the second-stage sample be
\[
D_2^{(h)}
=
\{(\tilde y_i,\tilde q_i)\}_{i=1}^{m_h}.
\]
We assume \(D_1^{(h)}\) is an i.i.d. sample from \(\sP_{Q,W}\), 
\(D_2^{(h)}\) is an i.i.d. sample from \(\sP_{Y,Q}\), and the two samples are independent.

We consider a learned-feature bridge class. Let \(\nu_h=(n_h,m_h)\). Let
\[
\Phi_{\nu_h}
\subset
\{\phi:\mathcal W\to\mathbb R^{d_{\nu_h}}\},
\]
be a class of learned outcome-proxy feature maps,
\[
\mathcal G_{\nu_h}
\subset
\{g:\mathcal Q\to\mathbb R^{d_{\nu_h}}\},
\]
be a class of first-stage conditional-mean regressors, and
\[
\Theta_{\nu_h}
\subset
\{\theta:\mathcal R\to\mathbb R^{d_{\nu_h}}\}
\]
be a class of second-stage heads. The bridge class is
\[
\mathcal H_{\nu_h}
:=
\left\{
h_{\theta,\phi}(a,x,w)
=
\langle \theta(a,x),\phi(w)\rangle
:
\theta\in\Theta_{\nu_h},\ \phi\in\Phi_{\nu_h}
\right\}.
\]

The usual tensor-product architecture is subsumed in this notation. Indeed, the
final linear layer in
\[
\langle h, 
\phi_{AX,2}^{(h)}(a,x)\otimes
\phi_{W,2}^{(h)}(w)\rangle_F = \phi_{AX,2}^{(h)}(a,x)^{\top}h\varphi_{W,2}^{(h)}(w)
\]
is absorbed into the map
\(
\theta(a,x)
=
h^{\top}\varphi_{AX,2}^{(h)}(a,x).
\)
Similarly, the first-stage linear layer is absorbed into the class
\(\mathcal G_{\nu_h}\).

For every \(\phi\in\Phi_{\nu_h}\), define the population conditional feature mean
\[
\mu_\phi(q)
:=
\mathbb E[\phi(W)\mid Q=q].
\]
Then
\[
(T_h h_{\theta,\phi})(q)
=
\langle \theta(r),\mu_\phi(q)\rangle.
\]

For each candidate feature map \(\phi\in\Phi_{\nu_h}\), define the first-stage
empirical risk
\[
\widehat R_{h,1}(\phi,g)
:=
\frac1{n_h}
\sum_{i=1}^{n_h}
\|\phi(\bar w_i)-g(\bar q_i)\|_2^2,
\qquad g\in\mathcal G_{\nu_h}.
\]
The profiled first-stage estimator is
\[
\hat g_\phi
\in
\argmin_{g\in\mathcal G_{\nu_h}}
\widehat R_{h,1}(\phi,g).
\]
Thus the first stage is defined for every candidate learned feature map
\(\phi\), not only for a fixed oracle feature.

The second-stage profiled ERM is
\[
(\hat\theta,\hat\phi)
\in
\argmin_{\substack{\theta\in\Theta_{\nu_h}\\ \phi\in\Phi_{\nu_h}}}
\widehat R_{h,2}(\theta,\phi),
\]
where
\[
\widehat R_{h,2}(\theta,\phi)
:=
\frac1{m_h}
\sum_{i=1}^{m_h}
\left\{
\tilde y_i
-
\langle
\theta(\tilde r_i),
\hat g_\phi(\tilde q_i)
\rangle
\right\}^2.
\]
The learned outcome bridge is
\[
\hat h(a,x,w)
:=
\langle
\hat\theta(a,x),
\hat\phi(w)
\rangle.
\]

\paragraph{Boundedness assumptions.}
Assume there exist finite constants \(B_Y,B_\phi,B_g,B_\theta\) such that
\[
|Y|\le B_Y \quad\text{a.s.},
\]
and
\[
\sup_{\phi\in\Phi_{\nu_h}}\sup_{w\in\mathcal W}\|\phi(w)\|_2\le B_\phi,
\]
\[
\sup_{g\in\mathcal G_{\nu_h}}\sup_{q\in\mathcal Q}\|g(q)\|_2\le B_g,
\]
\[
\sup_{\theta\in\Theta_{\nu_h}}\sup_{r\in\mathcal R}\|\theta(r)\|_2\le B_\theta.
\]
Define
\[
M_1 := (B_\phi+B_g)^2,
\qquad
M_2 := (B_Y+B_\theta B_g)^2.
\]

\paragraph{Approximation errors.}
Define the first-stage conditional-mean approximation error
\[
\kappa_{1,\nu_h}
:=
\sup_{\phi\in\Phi_{\nu_h}}
\inf_{g\in\mathcal G_{\nu_h}}
\|g-\mu_\phi\|_{L^2(\sP_Q)}^2.
\]
Define the projected second-stage approximation error
\[
\kappa_{2,\nu_h}
:=
\inf_{\theta\in\Theta_{\nu_h},\ \phi\in\Phi_{\nu_h}}
\left\|
m_0
-
\langle \theta,\mu_\phi\rangle
\right\|_{L^2(\sP_Q)}^2,
\]
where
\[
\langle \theta,\mu_\phi\rangle(q)
:=
\langle \theta(r),\mu_\phi(q)\rangle.
\]

\paragraph{Loss classes.}
Let
\[
\mathcal L_{1,\nu_h}
:=
\left\{
(q,w)\mapsto
\|\phi(w)-g(q)\|_2^2
:
\phi\in\Phi_{\nu_h},\ g\in\mathcal G_{\nu_h}
\right\},
\]
and
\[
\mathcal L_{2,\nu_h}
:=
\left\{
(y,q)\mapsto
\left(
y-\langle\theta(r),g(q)\rangle
\right)^2
:
\theta\in\Theta_{\nu_h},\ g\in\mathcal G_{\nu_h}
\right\}.
\]
For a sample \(S=(s_1,\ldots,s_N)\), define the empirical Rademacher
complexity
\[
\widehat{\mathfrak R}_{S}(\mathcal F)
:=
\mathbb E_\sigma
\left[
\sup_{f\in\mathcal F}
\frac1N
\sum_{i=1}^N
\sigma_i f(s_i)
\right],
\]
where \(\sigma_1,\ldots,\sigma_N\) are independent Rademacher random variables.

We will use the following standard uniform deviation bound: if
\(\mathcal F\) is a class of functions taking values in \([0,M]\), then with
probability at least \(1-\delta\),
\[
\sup_{f\in\mathcal F}
\left|
\sP f-\sP_N f
\right|
\le
2\widehat{\mathfrak R}_{S}(\mathcal F)
+
3M\sqrt{\frac{\log(2/\delta)}{2N}}.
\]

\begin{lemma}[Uniform first-stage error]
Define
\[
\epsilon_{1,\nu_h}
:=
\sup_{\phi\in\Phi_{\nu_h}}
\|\hat g_\phi-\mu_\phi\|_{L^2(\sP_Q)}^2.
\]
Then, with probability at least \(1-\delta\),
\[
\epsilon_{1,\nu_h}
\le
\kappa_{1,\nu_h}
+
2\Delta_{1,\nu_h}(\delta),
\]
where
\[
\Delta_{1,\nu_h}(\delta)
:=
2\widehat{\mathfrak R}_{D_1^{(h)}}(\mathcal L_{1,\nu_h})
+
3M_1\sqrt{\frac{\log(2/\delta)}{2n_h}}.
\]
\end{lemma}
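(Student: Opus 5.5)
The plan is to reduce the first-stage error to an excess-risk statement via a Pythagorean identity, and then control the excess risk with the stated uniform deviation bound applied once to the joint loss class \(\mathcal L_{1,\nu_h}\). Since that class is indexed jointly by \((\phi,g)\), a single high-probability event covers all candidate feature maps at once.

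\emph{Step 1: Pythagorean identity.} For a fixed \(\phi\in\Phi_{\nu_h}\) and any \(g\in\mathcal G_{\nu_h}\), define the population risk
\(R_{h,1}(\phi,g):=\mathbb E\|\phi(W)-g(Q)\|_2^2\).
Recall that \(\mu_\phi(Q)=\mathbb E[\phi(W)\mid Q]\), and that it is bounded by \(B_\phi\), so everything is square-integrable. The cross term vanishes after conditioning on \(Q\), which gives
\[
R_{h,1}(\phi,g)=\mathbb E\|\phi(W)-\mu_\phi(Q)\|_2^2+\|g-\mu_\phi\|_{L^2(\sP_Q)}^2 .
\]
Hence
\(\|\hat g_\phi-\mu_\phi\|_{L^2(\sP_Q)}^2=R_{h,1}(\phi,\hat g_\phi)-\mathbb E\|\phi(W)-\mu_\phi(Q)\|_2^2\).
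The same identity shows that for every \(g\),
\(R_{h,1}(\phi,g)-\mathbb E\|\phi(W)-\mu_\phi(Q)\|_2^2=\|g-\mu_\phi\|^2_{L^2(\sP_Q)}\).

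\emph{Step 2: excess-risk decomposition.} Fix \(\eta>0\), and pick \(g^\ast_\phi\in\mathcal G_{\nu_h}\) with \(\|g^\ast_\phi-\mu_\phi\|^2\le\kappa_{1,\nu_h}+\eta\). This is possible by the definition of \(\kappa_{1,\nu_h}\) as a supremum of infima. Then decompose
\[
\|\hat g_\phi-\mu_\phi\|^2
=\bigl[R_{h,1}(\phi,\hat g_\phi)-\widehat R_{h,1}(\phi,\hat g_\phi)\bigr]
+\bigl[\widehat R_{h,1}(\phi,\hat g_\phi)-\widehat R_{h,1}(\phi,g^\ast_\phi)\bigr]
+\bigl[\widehat R_{h,1}(\phi,g^\ast_\phi)-R_{h,1}(\phi,g^\ast_\phi)\bigr]
+\|g^\ast_\phi-\mu_\phi\|^2 .
\]
The terms are handled as follows.
\begin{itemize}
\item The second bracket is \(\le0\), because \(\hat g_\phi\) minimizes the empirical risk over \(\mathcal G_{\nu_h}\).
\item The first and third brackets are each bounded by \(\sup_{\ell\in\mathcal L_{1,\nu_h}}|\sP\ell-\sP_{n_h}\ell|\), since both \((\phi,\hat g_\phi)\) and \((\phi,g^\ast_\phi)\) index elements of \(\mathcal L_{1,\nu_h}\).
\end{itemize}

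\emph{Step 3: uniform deviation.} Every loss in \(\mathcal L_{1,\nu_h}\) takes values in \([0,(B_\phi+B_g)^2]=[0,M_1]\). The stated Rademacher deviation bound on \(D_1^{(h)}\) (i.i.d.\ with \(n_h\) points) therefore gives, with probability at least \(1-\delta\),
\[
\sup_{\ell\in\mathcal L_{1,\nu_h}}|\sP\ell-\sP_{n_h}\ell|\le\Delta_{1,\nu_h}(\delta).
\]
On this single event, which does not depend on \(\phi\), Step 2 yields \(\|\hat g_\phi-\mu_\phi\|^2\le\kappa_{1,\nu_h}+\eta+2\Delta_{1,\nu_h}(\delta)\) for every \(\phi\) simultaneously. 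Taking the supremum over \(\phi\), and then letting \(\eta\downarrow0\), gives the claim.

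\emph{Main obstacle.} The only delicate point is the uniformity over the learned feature map \(\phi\). The regression target \(\phi(W)\) itself varies with \(\phi\), so a fixed-target regression bound would not suffice. This is resolved by defining \(\mathcal L_{1,\nu_h}\) jointly over \((\phi,g)\), so that one deviation event controls every profiled problem at once. A secondary technical point is that the infimum in \(\kappa_{1,\nu_h}\) may not be attained, and that the minimizer \(\hat g_\phi\) may need a measurable selection; the \(\eta\)-approximate minimizer above handles the former.
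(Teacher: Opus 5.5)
Your proof is correct and follows essentially the same route as the paper. Both arguments use the projection identity to turn $\|\hat g_\phi-\mu_\phi\|_{L^2(\sP_Q)}^2$ into an excess risk, combine empirical optimality of $\hat g_\phi$ with a single uniform deviation event over the jointly indexed class $\mathcal L_{1,\nu_h}$ to get $\|\hat g_\phi-\mu_\phi\|^2\le\|g-\mu_\phi\|^2+2\Delta_{1,\nu_h}(\delta)$ for all $\phi$ at once, and then take the infimum over $g$ and the supremum over $\phi$. The only cosmetic difference is your $\eta$-approximate comparator $g^\ast_\phi$: the paper proves the inequality for every $g\in\mathcal G_{\nu_h}$ and takes the infimum directly, which handles a non-attained infimum automatically.
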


\begin{proof}
For fixed \(\phi\in\Phi_{\nu_h}\), define the population first-stage risk
\[
R_{h,1}(\phi,g)
:=
\mathbb E\left[
\|\phi(W)-g(Q)\|_2^2
\right].
\]
Since
\[
\mu_\phi(Q)
=
\mathbb E[\phi(W)\mid Q],
\]
the usual least-squares projection identity gives, for every measurable
\(g:\mathcal Q\to\mathbb R^{d_{\nu_h}}\),
\[
R_{h,1}(\phi,g)-R_{h,1}(\phi,\mu_\phi)
=
\|g-\mu_\phi\|_{L^2(\sP_Q)}^2.
\]
Indeed,
\[
\begin{aligned}
R_{h,1}(\phi,g)
&=
\mathbb E\left[
\|\phi(W)-\mu_\phi(Q)+\mu_\phi(Q)-g(Q)\|_2^2
\right] \\
&=
R_{h,1}(\phi,\mu_\phi)
+
\|g-\mu_\phi\|_{L^2(\sP_Q)}^2 \\
&\quad
+
2\mathbb E\left[
\left\langle
\phi(W)-\mu_\phi(Q),
\mu_\phi(Q)-g(Q)
\right\rangle
\right],
\end{aligned}
\]
and the cross term is zero by conditional expectation.

Let
\[
\Delta_{1,\nu_h}
:=
\sup_{\phi\in\Phi_{\nu_h},\,g\in\mathcal G_{\nu_h}}
\left|
R_{h,1}(\phi,g)-\widehat R_{h,1}(\phi,g)
\right|.
\]
By empirical risk minimization,
\[
\widehat R_{h,1}(\phi,\hat g_\phi)
\le
\widehat R_{h,1}(\phi,g)
\qquad
\forall g\in\mathcal G_{\nu_h}.
\]
Therefore, for every \(g\in\mathcal G_{\nu_h}\),
\[
\begin{aligned}
R_{h,1}(\phi,\hat g_\phi)-R_{h,1}(\phi,\mu_\phi)
&\le
\widehat R_{h,1}(\phi,\hat g_\phi)-R_{h,1}(\phi,\mu_\phi)
+\Delta_{1,\nu_h} \\
&\le
\widehat R_{h,1}(\phi,g)-R_{h,1}(\phi,\mu_\phi)
+\Delta_{1,\nu_h} \\
&\le
R_{h,1}(\phi,g)-R_{h,1}(\phi,\mu_\phi)
+2\Delta_{1,\nu_h}.
\end{aligned}
\]
Using the projection identity,
\[
\|\hat g_\phi-\mu_\phi\|_{L^2(\sP_Q)}^2
\le
\|g-\mu_\phi\|_{L^2(\sP_Q)}^2
+
2\Delta_{1,\nu_h}.
\]
Taking the infimum over \(g\in\mathcal G_{\nu_h}\) and then the supremum over
\(\phi\in\Phi_{\nu_h}\) yields
\[
\epsilon_{1,\nu_h}
\le
\kappa_{1,\nu_h}
+
2\Delta_{1,\nu_h}.
\]
Finally, applying the uniform Rademacher deviation bound to the bounded loss
class \(\mathcal L_{1,\nu_h}\) gives
\[
\Delta_{1,\nu_h}
\le
\Delta_{1,\nu_h}(\delta)
\]
with probability at least \(1-\delta\).
\end{proof}

\begin{theorem}[Projected outcome-bridge consistency for learned OutcomeNet]
Assume the boundedness conditions above, and assume the two stages are solved
by exact measurable ERM. Then, with probability at least \(1-\delta\),
\[
\|T_h\hat h-m_0\|_{L^2(\sP_Q)}^2
\le
4\kappa_{2,\nu_h}
+
6B_\theta^2\kappa_{1,\nu_h}
+
12B_\theta^2\Delta_{1,\nu_h}(\delta/2)
+
4\Delta_{2,\nu_h}(\delta/2),
\]
where
\[
\Delta_{2,\nu_h}(\delta)
:=
2\widehat{\mathfrak R}_{D_2^{(h)}}(\mathcal L_{2,\nu_h})
+
3M_2\sqrt{\frac{\log(2/\delta)}{2m_h}}.
\]
Equivalently,
\[
\begin{aligned}
\|T_h\hat h-m_0\|_{L^2(\sP_Q)}^2
\le\;&
4\kappa_{2,\nu_h}
+
6B_\theta^2\kappa_{1,\nu_h} \\
&+
24B_\theta^2
\widehat{\mathfrak R}_{D_1^{(h)}}(\mathcal L_{1,\nu_h})
+
8
\widehat{\mathfrak R}_{D_2^{(h)}}(\mathcal L_{2,\nu_h}) \\
&+
36B_\theta^2M_1
\sqrt{\frac{\log(4/\delta)}{2n_h}}
+
12M_2
\sqrt{\frac{\log(4/\delta)}{2m_h}} .
\end{aligned}
\]
If there exists a square-integrable outcome bridge \(h_0\) satisfying
\[
T_hh_0=m_0,
\]
then the same bound controls the projected bridge error:
\[
\|\hat h-h_0\|_{T_h}^2
=
\|T_h\hat h-m_0\|_{L^2(\sP_Q)}^2.
\]
\label{thm:outcome_projected_rademacher}
\end{theorem}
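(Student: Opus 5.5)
The plan is to reduce the projected bridge error to a second-stage excess risk plus a first-stage embedding error, and then control each by the Uniform first-stage error lemma and the stated uniform deviation bound. The starting point is the identity \((T_h\hat h)(q)=\langle\hat\theta(r),\mu_{\hat\phi}(q)\rangle\), which holds because \(\hat\theta\) depends only on \(r=(a,x)\). Adding and subtracting the plug-in predictor \(\langle\hat\theta,\hat g_{\hat\phi}\rangle\) gives
\[
T_h\hat h-m_0=\bigl(\langle\hat\theta,\hat g_{\hat\phi}\rangle-m_0\bigr)+\langle\hat\theta,\mu_{\hat\phi}-\hat g_{\hat\phi}\rangle .
\]
Applying \((u+v)^2\le 2u^2+2v^2\), Cauchy--Schwarz and \(\|\hat\theta(r)\|_2\le B_\theta\), we obtain
\[
\|T_h\hat h-m_0\|^2\le 2\|\langle\hat\theta,\hat g_{\hat\phi}\rangle-m_0\|^2+2B_\theta^2\epsilon_{1,\nu_h}.
\]

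Next I bound the first term by a second-stage ERM argument. Define the population risk \(R_{h,2}(\theta,g):=\E[(Y-\langle\theta(R),g(Q)\rangle)^2]\). Since \(m_0(Q)=\E[Y\mid Q]\), the same projection identity as in the lemma gives
\[
R_{h,2}(\theta,g)-\E[(Y-m_0(Q))^2]=\|\langle\theta,g\rangle-m_0\|^2_{L^2(\sP_Q)} .
\]
Let \(\Delta_{2}:=\sup_{\theta\in\Theta_{\nu_h},\,g\in\mathcal G_{\nu_h}}|R_{h,2}-\widehat R_{h,2}|\). This supremum runs over all \(g\in\mathcal G_{\nu_h}\), not only over fixed oracle functions, so it automatically covers the data-dependent regressors \(\hat g_\phi\); no conditioning argument is needed for \(\hat g_{\hat\phi}\). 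The ERM property of \((\hat\theta,\hat\phi)\) then yields, for every \(\theta\in\Theta_{\nu_h}\) and \(\phi\in\Phi_{\nu_h}\),
\[
\|\langle\hat\theta,\hat g_{\hat\phi}\rangle-m_0\|^2\le\|\langle\theta,\hat g_\phi\rangle-m_0\|^2+2\Delta_2 .
\]
To return to the oracle embedding, I split again:
\[
\|\langle\theta,\hat g_\phi\rangle-m_0\|^2\le 2\|\langle\theta,\mu_\phi\rangle-m_0\|^2+2B_\theta^2\epsilon_{1,\nu_h}.
\]
Taking the infimum over \((\theta,\phi)\) and collecting terms gives
\[
\|T_h\hat h-m_0\|^2\le 4\kappa_{2,\nu_h}+6B_\theta^2\epsilon_{1,\nu_h}+4\Delta_2 .
\]

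Finally I turn this into a high-probability bound. The losses in \(\mathcal L_{2,\nu_h}\) take values in \([0,M_2]\), so the uniform deviation bound gives \(\Delta_2\le\Delta_{2,\nu_h}(\delta/2)\) with probability at least \(1-\delta/2\). The Uniform first-stage error lemma, applied at level \(\delta/2\), gives \(\epsilon_{1,\nu_h}\le\kappa_{1,\nu_h}+2\Delta_{1,\nu_h}(\delta/2)\). A union bound over the two events yields the stated inequality with constants \(4,\ 6B_\theta^2,\ 12B_\theta^2,\ 4\). Expanding the definitions of \(\Delta_{1,\nu_h}(\delta/2)\) and \(\Delta_{2,\nu_h}(\delta/2)\), where \(\log(2/(\delta/2))=\log(4/\delta)\), gives the equivalent explicit form. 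The last claim is immediate: if \(T_hh_0=m_0\), then \(T_h(\hat h-h_0)=T_h\hat h-m_0\).

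The main obstacle is the coupling between the two stages: the second-stage regressor \(\hat g_{\hat\phi}\) is selected jointly with \(\hat\phi\) and depends on the first-stage data. Two choices resolve it. First, \(\epsilon_{1,\nu_h}\) is defined as a supremum over \(\phi\in\Phi_{\nu_h}\), so it applies at \(\phi=\hat\phi\). Second, \(\mathcal L_{2,\nu_h}\) is indexed by all \(g\in\mathcal G_{\nu_h}\), so the second-stage deviation bound holds uniformly for whatever \(g\) the first stage returns. I expect the only delicate check to be the bookkeeping of the factor \(2\) in each quadratic split, so that the constants come out exactly as stated.
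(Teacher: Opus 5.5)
Your proposal is correct and follows the paper's proof essentially step for step. You split $T_h\hat h-m_0$ into the plug-in residual $\langle\hat\theta,\hat g_{\hat\phi}\rangle-m_0$ plus a first-stage term bounded by $B_\theta^2\epsilon_{1,\nu_h}$, make the ERM comparison with a deviation $\Delta_2$ taken uniformly over $\Theta_{\nu_h}\times\mathcal G_{\nu_h}$ (so it covers the data-dependent $\hat g_\phi$), and take a union bound at level $\delta/2$, obtaining the constants $4,\,6B_\theta^2,\,12B_\theta^2,\,4$. The only differences are cosmetic: you order the two quadratic splits differently, and you drop the paper's explicit conditioning on $D_1^{(h)}$, which is harmless because $\Delta_2$ is a supremum over the whole fixed class $\mathcal G_{\nu_h}$.
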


\begin{proof}
For every \((\theta,\phi)\), define the population projected prediction
\[
p_{\theta,\phi}(q)
:=
\langle \theta(r),\mu_\phi(q)\rangle,
\]
and the empirical plug-in prediction
\[
\hat p_{\theta,\phi}(q)
:=
\langle \theta(r),\hat g_\phi(q)\rangle.
\]
By construction,
\[
T_h h_{\theta,\phi}
=
p_{\theta,\phi}.
\]
In particular,
\[
T_h\hat h
=
p_{\hat\theta,\hat\phi}.
\]

Conditional on \(D_1^{(h)}\), the maps \(\hat g_\phi\) are fixed elements of
\(\mathcal G_{\nu_h}\). Define the population second-stage risk
\[
R_{h,2}^{\mathrm{plug}}(\theta,\phi)
:=
\mathbb E\left[
\left\{
Y-\hat p_{\theta,\phi}(Q)
\right\}^2
\right],
\]
and its empirical version
\[
\widehat R_{h,2}^{\mathrm{plug}}(\theta,\phi)
:=
\frac1{m_h}
\sum_{i=1}^{m_h}
\left\{
\tilde y_i-\hat p_{\theta,\phi}(\tilde q_i)
\right\}^2.
\]
Since
\[
m_0(Q)=\mathbb E[Y\mid Q],
\]
the least-squares identity gives
\[
R_{h,2}^{\mathrm{plug}}(\theta,\phi)
=
\mathbb E[(Y-m_0(Q))^2]
+
\|\hat p_{\theta,\phi}-m_0\|_{L^2(\sP_Q)}^2.
\]

Let
\[
\Delta_{2,\nu_h}
:=
\sup_{\theta\in\Theta_{\nu_h},\,g\in\mathcal G_{\nu_h}}
\left|
\mathbb E\left[
\{Y-\langle\theta(R),g(Q)\rangle\}^2
\right]
-
\frac1{m_h}
\sum_{i=1}^{m_h}
\{ \tilde y_i-\langle\theta(\tilde R_i),g(\tilde Q_i)\rangle\}^2
\right|.
\]
Because \(\hat g_\phi\in\mathcal G_{\nu_h}\), this uniform deviation controls the
plug-in class:
\[
\sup_{\theta,\phi}
\left|
R_{h,2}^{\mathrm{plug}}(\theta,\phi)
-
\widehat R_{h,2}^{\mathrm{plug}}(\theta,\phi)
\right|
\le
\Delta_{2,\nu_h}.
\]
By empirical risk minimization,
\[
\widehat R_{h,2}^{\mathrm{plug}}(\hat\theta,\hat\phi)
\le
\widehat R_{h,2}^{\mathrm{plug}}(\theta,\phi)
\qquad
\forall(\theta,\phi)\in\Theta_{\nu_h}\times\Phi_{\nu_h}.
\]
Therefore,
\[
\|\hat p_{\hat\theta,\hat\phi}-m_0\|_{L^2(\sP_Q)}^2
\le
\inf_{\theta,\phi}
\|\hat p_{\theta,\phi}-m_0\|_{L^2(\sP_Q)}^2
+
2\Delta_{2,\nu_h}.
\]

Next compare the plug-in prediction \(\hat p_{\theta,\phi}\) with its population
counterpart \(p_{\theta,\phi}\). For every \((\theta,\phi)\),
\[
\begin{aligned}
\|\hat p_{\theta,\phi}-p_{\theta,\phi}\|_{L^2(\sP_Q)}^2
&=
\mathbb E\left[
\left\{
\left\langle
\theta(R),
\hat g_\phi(Q)-\mu_\phi(Q)
\right\rangle
\right\}^2
\right] \\
&\le
B_\theta^2
\|\hat g_\phi-\mu_\phi\|_{L^2(\sP_Q)}^2 \\
&\le
B_\theta^2\epsilon_{1,\nu_h}.
\end{aligned}
\]
Hence
\[
\inf_{\theta,\phi}
\|\hat p_{\theta,\phi}-m_0\|_{L^2(\sP_Q)}^2
\le
2\inf_{\theta,\phi}
\|p_{\theta,\phi}-m_0\|_{L^2(\sP_Q)}^2
+
2B_\theta^2\epsilon_{1,\nu_h}.
\]
By definition of \(\kappa_{2,\nu_h}\),
\[
\inf_{\theta,\phi}
\|\hat p_{\theta,\phi}-m_0\|_{L^2(\sP_Q)}^2
\le
2\kappa_{2,\nu_h}
+
2B_\theta^2\epsilon_{1,\nu_h}.
\]
Therefore,
\[
\|\hat p_{\hat\theta,\hat\phi}-m_0\|_{L^2(\sP_Q)}^2
\le
2\kappa_{2,\nu_h}
+
2B_\theta^2\epsilon_{1,\nu_h}
+
2\Delta_{2,\nu_h}.
\]

We now pass from the plug-in prediction
\(\hat p_{\hat\theta,\hat\phi}\) to the true projected bridge
\(p_{\hat\theta,\hat\phi}=T_h\hat h\). Using
\[
\|a+b\|^2\le 2\|a\|^2+2\|b\|^2,
\]
we obtain
\[
\begin{aligned}
\|p_{\hat\theta,\hat\phi}-m_0\|_{L^2(\sP_Q)}^2
&\le
2\|\hat p_{\hat\theta,\hat\phi}-m_0\|_{L^2(\sP_Q)}^2
+
2\|p_{\hat\theta,\hat\phi}-\hat p_{\hat\theta,\hat\phi}\|_{L^2(\sP_Q)}^2 \\
&\le
2\|\hat p_{\hat\theta,\hat\phi}-m_0\|_{L^2(\sP_Q)}^2
+
2B_\theta^2\epsilon_{1,\nu_h}.
\end{aligned}
\]
Combining the previous displays gives
\[
\|p_{\hat\theta,\hat\phi}-m_0\|_{L^2(\sP_Q)}^2
\le
4\kappa_{2,\nu_h}
+
6B_\theta^2\epsilon_{1,\nu_h}
+
4\Delta_{2,\nu_h}.
\]
Since
\[
p_{\hat\theta,\hat\phi}=T_h\hat h,
\]
we have
\[
\|T_h\hat h-m_0\|_{L^2(\sP_Q)}^2
\le
4\kappa_{2,\nu_h}
+
6B_\theta^2\epsilon_{1,\nu_h}
+
4\Delta_{2,\nu_h}.
\]

By the first-stage lemma, with probability at least \(1-\delta/2\),
\[
\epsilon_{1,\nu_h}
\le
\kappa_{1,\nu_h}
+
2\Delta_{1,\nu_h}(\delta/2).
\]
By the uniform deviation inequality applied to
\(\mathcal L_{2,\nu_h}\), with probability at least \(1-\delta/2\),
\[
\Delta_{2,\nu_h}
\le
\Delta_{2,\nu_h}(\delta/2).
\]
A union bound gives the stated result with probability at least \(1-\delta\).

Finally, if \(h_0\) is a valid outcome bridge, then
\[
T_hh_0=m_0.
\]
Therefore
\[
\|\hat h-h_0\|_{T_h}^2
=
\|T_h(\hat h-h_0)\|_{L^2(\sP_Q)}^2
=
\|T_h\hat h-m_0\|_{L^2(\sP_Q)}^2.
\]
\end{proof}

\begin{corollary}[Weak-norm consistency]
Suppose
\[
\kappa_{1,\nu_h}\to0,
\qquad
\kappa_{2,\nu_h}\to0,
\]
and
\[
\widehat{\mathfrak R}_{D_1^{(h)}}(\mathcal L_{1,\nu_h})\to0,
\qquad
\widehat{\mathfrak R}_{D_2^{(h)}}(\mathcal L_{2,\nu_h})\to0
\]
in probability, with \(n_h,m_h\to\infty\). Then
\[
\|T_h\hat h-m_0\|_{L^2(\sP_Q)}^2
\to0
\]
in probability. If an outcome bridge \(h_0\) exists with \(T_hh_0=m_0\), then
\[
\|\hat h-h_0\|_{T_h}^2
\to0
\]
in probability.
\end{corollary}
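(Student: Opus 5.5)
The plan is to read the corollary directly off the high-probability bound of Theorem~\ref{thm:outcome_projected_rademacher} by a standard ``fix $\delta$, then let the sample sizes grow'' argument. Throughout, I take the boundedness constants $B_Y,B_\phi,B_g,B_\theta$ to be uniform in $\nu_h$. This is implicit in the corollary, and hence so are $M_1$ and $M_2$. This uniformity is the one point that must be made explicit: if $B_\theta$ or $M_1,M_2$ were allowed to grow with $\nu_h$, the hypotheses as stated would not suffice. In that case one would have to strengthen them to $B_\theta^2\kappa_{1,\nu_h}\to0$, $B_\theta^2\widehat{\mathfrak R}_{D_1^{(h)}}(\mathcal L_{1,\nu_h})\to0$, $B_\theta^2M_1/\sqrt{n_h}\to0$ and $M_2/\sqrt{m_h}\to0$. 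I expect this bookkeeping, rather than any probabilistic step, to be the main thing to state carefully.

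Fix $\varepsilon>0$ and $\delta\in(0,1)$. Write the right-hand side of the ``equivalently'' form of Theorem~\ref{thm:outcome_projected_rademacher} as $D_{\nu_h}+S_{\nu_h}$, where
\[
D_{\nu_h}:=4\kappa_{2,\nu_h}+6B_\theta^2\kappa_{1,\nu_h}
+36B_\theta^2M_1\sqrt{\tfrac{\log(4/\delta)}{2n_h}}
+12M_2\sqrt{\tfrac{\log(4/\delta)}{2m_h}}
\]
is deterministic, and
\[
S_{\nu_h}:=24B_\theta^2\widehat{\mathfrak R}_{D_1^{(h)}}(\mathcal L_{1,\nu_h})
+8\widehat{\mathfrak R}_{D_2^{(h)}}(\mathcal L_{2,\nu_h})
\]
is random. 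By the hypotheses $\kappa_{1,\nu_h},\kappa_{2,\nu_h}\to0$ and $n_h,m_h\to\infty$ with $\delta$ held fixed, we get $D_{\nu_h}\to0$. Hence $D_{\nu_h}<\varepsilon/2$ for all large $\nu_h$.

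On the event where the theorem's bound holds, which has probability at least $1-\delta$, we have
\[
\{\|T_h\hat h-m_0\|^2_{L^2(\sP_Q)}>\varepsilon\}\subseteq\{S_{\nu_h}>\varepsilon/2\}
\]
for all large $\nu_h$. A union bound then gives
\[
\sP\big(\|T_h\hat h-m_0\|^2_{L^2(\sP_Q)}>\varepsilon\big)\le\delta+\sP(S_{\nu_h}>\varepsilon/2).
\]
The last probability tends to zero because each empirical Rademacher complexity tends to zero in probability by hypothesis, and $B_\theta$ is fixed. Therefore the $\limsup$ of the left-hand side is at most $\delta$. Since $\delta$ was arbitrary, $\|T_h\hat h-m_0\|^2_{L^2(\sP_Q)}\to0$ in probability.

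For the second claim, if $h_0$ exists with $T_hh_0=m_0$, then linearity of $T_h$ gives $\|\hat h-h_0\|_{T_h}^2=\|T_h\hat h-m_0\|^2_{L^2(\sP_Q)}$. This identity is already recorded in Theorem~\ref{thm:outcome_projected_rademacher}, so the convergence in probability transfers immediately.
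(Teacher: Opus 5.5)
Your proposal is correct and matches how the paper obtains this corollary: it is stated without a separate proof as a direct consequence of Theorem~\ref{thm:outcome_projected_rademacher}, and your fix-$\delta$ argument spells that out. You let the deterministic terms vanish, control the random Rademacher terms in probability, and then transfer the result to $\|\hat h-h_0\|_{T_h}^2$ via $T_hh_0=m_0$. Your explicit requirement that $B_\theta$, $M_1$, $M_2$ be uniform in $\nu_h$ matches the paper's implicit ``up to fixed bounded constants'' convention.
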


\begin{remark}[What this result does and does not prove]
This theorem establishes convergence rates for the outcome bridge function in the \emph{weak norm} induced by the operator $T_h$
\[
\|\hat h-h_0\|_{T_h}
=
\left\|
\mathbb E[\hat h(A,X,W)-h_0(A,X,W)\mid A,X,Z]
\right\|_{L^2(\sP_{A,X,Z})}.
\]
It does \emph{not} imply
\[
\|\hat h-h_0\|_{L^2(\sP_{A,X,W})}\to0,
\]
in general. Indeed, since $h_0$ is the solution to an ill-posed inverse problem, obtaining strong norm convergence for $\hat{h}$ requires controlling sieve measure of ill-posedness, which generally diverges as sieve dimension grows \citep{blundell2007semi, chen2011rate, chen2018optimal, meunier2024nonparametric,shen2025nonparametricinstrumentalvariableregression, ICLR2025_ead8e195}.  
In the severely ill-posed case, strong-norm convergence can be arbitrarily slow. The result below avoids this difficulty for the outcome-bridge plug-in estimator: using the dual treatment bridge, the dose-response functional is well-posed with respect to the weak residual \(T_h\hat h-m_0\), so projected convergence is sufficient for consistency \citep{deaner2023proxycontrolspaneldata}. This weak-norm control is then used later in the doubly robust analysis.
\end{remark}

\begin{remark}[Approximate optimization]
If the first-stage and second-stage empirical minimizations are solved only up to
optimization errors \(\eta_{1,\nu_h}\) and \(\eta_{2,\nu_h}\), respectively, the same proof
goes through with additional additive terms of order
\[
B_\theta^2\eta_{1,\nu_h}+\eta_{2,\nu_h}.
\]
The theorem above sets these errors to zero because it analyzes the exact
profile-ERM idealization, not the stochastic nonconvex optimizer.
\end{remark}

\subsection{From projected outcome-bridge consistency to dose-response consistency}
We now bound the $L^2$ error for the dose response curve by the weak norm error for the estimated outcome bridge. Our argument adapts \citet[Theorem 4.1]{deaner2023proxycontrolspaneldata}, which informally says that learning dose response curve via proximal g-formula is well-posed.  In the previous section, we showed
\[
\|T_h\hat h-m_0\|_{L^2(\sP_{A,X,Z})}^2
=
O_p(\rho_{h,\nu_h})
\]
for some sequence \(\rho_{h,\nu_h}\to0\).

For a candidate bridge \(h\), define the population plug-in dose-response curve
\[
\bar f_h(a)
:=
\mathbb E[h(a,X,W)].
\]
Our goal is to control
\[
\|\bar f_{\hat h}-f_{\mathrm{ATE}}\|_{L^2(\sP_A)}^2.
\]

Assume that there exists an ATE treatment bridge
\[
\varphi^{\mathrm{ATE}}_0(a,X,Z)
\]
satisfying
\[
\mathbb E[
\varphi^{\mathrm{ATE}}_0(a,X,Z)
\mid A=a,X,W
]
=
\frac{p_A(a)}{p_{A\mid X,W}(a\mid X,W)}.
\]
Assume moreover that
\[
C_\varphi^2
:=
\operatorname*{ess\,sup}_{a\sim \sP_A}
\mathbb E[
\{\varphi^{\mathrm{ATE}}_0(a,X,Z)\}^2
\mid A=a
]
<\infty.
\]

\begin{lemma}[Weak residual controls dose-response error]
For any square-integrable candidate \(h(a,X,W)\),
\[
\left|
f_{\mathrm{ATE}}(a)-\bar f_h(a)
\right|
\le
C_\varphi(a)
\left\|
\mathbb E[Y-h(a,X,W)\mid A=a,X,Z]
\right\|_{L^2(\sP_{X,Z\mid A=a})},
\]
where
\[
C_\varphi(a)^2
:=
\mathbb E[
\{\varphi^{\mathrm{ATE}}_0(a,X,Z)\}^2
\mid A=a
].
\]
\end{lemma}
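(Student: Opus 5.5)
The approach is to use the treatment bridge as a Riesz representer that turns the difference of two averages into a weighted residual, and then apply Cauchy--Schwarz conditionally on \(A=a\). Fix \(a\) and a square-integrable candidate \(h\). The first step is to write the target through the treatment bridge identification of Theorem~\ref{theorem:TreatmentBridgeIdentificationAllCausalFunctions}, namely \(f_{\mathrm{ATE}}(a)=\E[Y\varphi_0^{\mathrm{ATE}}(a,X,Z)\mid A=a]\).

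The second step is to represent the plug-in curve in the same form. We reuse the reweighting computation from the proof of Theorem~\ref{theorem:DoublyRobustIdentifications} (the ATE analogue of the CATE/ATT calculations in Appendix~\ref{sec:Appendix_DoublyRobustIdentification}). Conditioning on \((A=a,X,W)\) and using the bridge equation Eq.~\ref{eq:TreatmentBridgeEq_ATE} gives
\[
\E[\varphi_0^{\mathrm{ATE}}(a,X,Z)h(a,X,W)\mid A=a]=\E\!\left[\tfrac{p_A(a)}{p_{A\mid X,W}(a\mid X,W)}h(a,X,W)\,\middle|\, A=a\right].
\]
By Bayes' rule, \(p_{X,W\mid A}(x,w\mid a)\,p_A(a)/p_{A\mid X,W}(a\mid x,w)=p_{X,W}(x,w)\), so this equals \(\E[h(a,X,W)]=\bar f_h(a)\).

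Subtracting the two representations yields \(f_{\mathrm{ATE}}(a)-\bar f_h(a)=\E[\varphi_0^{\mathrm{ATE}}(a,X,Z)\{Y-h(a,X,W)\}\mid A=a]\). Since \(\varphi_0^{\mathrm{ATE}}(a,X,Z)\) is \((A,X,Z)\)-measurable once \(A=a\) is fixed, the tower property over \((A=a,X,Z)\) replaces \(Y-h(a,X,W)\) by \(\E[Y-h(a,X,W)\mid A=a,X,Z]\). Cauchy--Schwarz in \(L^2(\sP_{X,Z\mid A=a})\) then gives the bound with constant \(C_\varphi(a)\).

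The main obstacle is justifying these manipulations pointwise in \(a\) for continuous \(A\). This needs regular conditional laws, square-integrability of \(h(a,\cdot,\cdot)\) under \(\sP_{X,W\mid A=a}\) and under \(\sP_{X,W}\), the positivity and absolute continuity \(\sP_{X,W}\ll\sP_{X,W\mid A=a}\) implicit in the bridge equation, and finiteness of \(C_\varphi(a)\), which is guaranteed for a.e.\ \(a\) by \(C_\varphi<\infty\). I would state that the identity holds for \(\sP_A\)-a.e.\ \(a\) under these conditions. Squaring and integrating over \(\sP_A\) then gives the \(L^2(\sP_A)\) control by \(C_\varphi^2\|T_h h-m_0\|^2_{L^2(\sP_{A,X,Z})}\) used later.
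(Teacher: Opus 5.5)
Your proposal is correct and follows the paper's proof step for step: treatment-bridge identification of $f_{\mathrm{ATE}}(a)$, then the reweighting identity $\E[\varphi_0^{\mathrm{ATE}}(a,X,Z)h(a,X,W)\mid A=a]=\bar f_h(a)$ obtained by conditioning on $(A=a,X,W)$ and Bayes' rule, then subtraction, the tower property over $(A=a,X,Z)$, and Cauchy--Schwarz. Your remarks on positivity (absolute continuity of $\sP_{X,W}$ with respect to $\sP_{X,W\mid A=a}$) and on the identity holding for $\sP_A$-a.e.\ $a$ spell out regularity conditions the paper leaves implicit, without changing the argument.
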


\begin{proof}
By treatment-bridge identification,
\[
f_{\mathrm{ATE}}(a)
=
\mathbb E[
Y\varphi^{\mathrm{ATE}}_0(a,X,Z)
\mid A=a
].
\]
Also, for any \(h\),
\[
\begin{aligned}
\mathbb E[
h(a,X,W)\varphi^{\mathrm{ATE}}_0(a,X,Z)
\mid A=a
]
&=
\mathbb E\!\left[
h(a,X,W)
\mathbb E[
\varphi^{\mathrm{ATE}}_0(a,X,Z)
\mid A=a,X,W]
\mid A=a
\right]
\\
&=
\mathbb E\!\left[
h(a,X,W)
\frac{p_A(a)}{p_{A\mid X,W}(a\mid X,W)}
\mid A=a
\right].
\end{aligned}
\]
Using Bayes' rule,
\[
p_{X,W\mid A=a}(x,w)
=
\frac{p_{A\mid X,W}(a\mid x,w)p_{X,W}(x,w)}{p_A(a)}.
\]
Hence
\[
\begin{aligned}
&\mathbb E\!\left[
h(a,X,W)
\frac{p_A(a)}{p_{A\mid X,W}(a\mid X,W)}
\mid A=a
\right]
\\
&=
\int h(a,x,w)
\frac{p_A(a)}{p_{A\mid X,W}(a\mid x,w)}
p_{X,W\mid A=a}(x,w)\,dx\,dw
\\
&=
\int h(a,x,w)p_{X,W}(x,w)\,dx\,dw
=
\mathbb E[h(a,X,W)]
=
\bar f_h(a).
\end{aligned}
\]
Therefore,
\[
\begin{aligned}
f_{\mathrm{ATE}}(a)-\bar f_h(a)
&=
\mathbb E[
\varphi^{\mathrm{ATE}}_0(a,X,Z)
\{Y-h(a,X,W)\}
\mid A=a
].
\end{aligned}
\]
Conditioning on \((X,Z)\),
\[
\begin{aligned}
f_{\mathrm{ATE}}(a)-\bar f_h(a)
&=
\mathbb E\!\left[
\varphi^{\mathrm{ATE}}_0(a,X,Z)
\mathbb E[
Y-h(a,X,W)
\mid A=a,X,Z]
\mid A=a
\right].
\end{aligned}
\]
By Cauchy--Schwarz,
\[
\left|
f_{\mathrm{ATE}}(a)-\bar f_h(a)
\right|
\le
C_\varphi(a)
\left\|
\mathbb E[Y-h(a,X,W)\mid A=a,X,Z]
\right\|_{L^2(\sP_{X,Z\mid A=a})}.
\]
\end{proof}

\begin{theorem}[Outcome-bridge dose-response consistency]
\label{thm:outcome_dose_response_consistency}
Suppose
\begin{align}
\label{eq:weak_norm_outcome_bridge}
    \|T_h\hat h-m_0\|_{L^2(\sP_{A,X,Z})}^2
=
O_p(\rho_{h,\nu_h}),
\end{align}
and suppose the treatment bridge above exists with
\[
C_\varphi
:=
\operatorname*{ess\,sup}_{a\sim \sP_A} C_\varphi(a)
<\infty.
\]
Then
\[
\|\bar f_{\hat h}-f_{\mathrm{ATE}}\|_{L^2(\sP_A)}^2
\le
C_\varphi^2
\|T_h\hat h-m_0\|_{L^2(\sP_{A,X,Z})}^2.
\]
Consequently,
\[
\|\bar f_{\hat h}-f_{\mathrm{ATE}}\|_{L^2(\sP_A)}^2
=
O_p(C_\varphi^2\rho_{h,\nu_h}).
\]
In particular, if \(\rho_{h,\nu_h}\to0\), then
\[
\bar f_{\hat h}\to f_{\mathrm{ATE}}
\quad\text{in }L^2(\sP_A)
\]
in probability.
\end{theorem}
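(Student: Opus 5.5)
The plan is to integrate the pointwise bound of the preceding lemma (weak residual controls dose-response error) over \(a\sim\sP_A\). Fix \(a\) in the support of \(\sP_A\). Applying the lemma with \(h=\hat h\) gives
\[
|f_{\mathrm{ATE}}(a)-\bar f_{\hat h}(a)|^2
\le C_\varphi(a)^2\,
\bigl\|\E[Y-\hat h(a,X,W)\mid A=a,X,Z]\bigr\|_{L^2(\sP_{X,Z\mid A=a})}^2 .
\]
Here \(\hat h\) is held fixed, conditionally on the training samples. This is legitimate because \(\bar f_{\hat h}\) is the population plug-in curve, so no extra randomness enters.

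Next, I would identify the inner conditional expectation with the operator residual. On the event \(\{A=a\}\) we have \(\E[Y\mid A=a,X,Z]=m_0(a,X,Z)\) and \(\E[\hat h(a,X,W)\mid A=a,X,Z]=(T_h\hat h)(a,X,Z)\). Hence the squared norm equals \(\int \{(T_h\hat h-m_0)(a,x,z)\}^2\,d\sP_{X,Z\mid A=a}(x,z)\). Bounding \(C_\varphi(a)^2\le C_\varphi^2\) for \(\sP_A\)-a.e.\ \(a\) and integrating over \(\sP_A\), the tower property (disintegrating \(\sP_{A,X,Z}=\sP_A\otimes\sP_{X,Z\mid A}\)) gives
\[
\|\bar f_{\hat h}-f_{\mathrm{ATE}}\|_{L^2(\sP_A)}^2\le C_\varphi^2\|T_h\hat h-m_0\|_{L^2(\sP_{A,X,Z})}^2 .
\]
This is the deterministic inequality claimed in the statement.

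The rate then follows from assumption \eqref{eq:weak_norm_outcome_bridge}. The bound above holds on every realization, and \(C_\varphi\) is a fixed constant, so \(O_p(\rho_{h,\nu_h})\) transfers directly to \(O_p(C_\varphi^2\rho_{h,\nu_h})\). If \(\rho_{h,\nu_h}\to0\), the right-hand side tends to zero in probability, which gives the final consistency claim.

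The only delicate step is measurability and a.e.\ bookkeeping. The lemma's conditional statements must hold for \(\sP_A\)-almost every \(a\) simultaneously, with regular conditional laws, so that the pointwise bound can be integrated. Square-integrability of \(\hat h\) is needed for the lemma to apply; boundedness of the bridge class from Theorem~\ref{thm:outcome_projected_rademacher} supplies it. Treatment-bridge identification also requires positivity so that the density ratio is defined \(\sP_A\)-a.e. I would handle these points by working throughout with versions defined off a \(\sP_A\)-null set.
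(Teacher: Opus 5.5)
Your proposal is correct and follows the paper's proof essentially step for step. You apply the preceding lemma pointwise with $h=\hat h$, bound $C_\varphi(a)\le C_\varphi$, integrate over $\sP_A$ by disintegrating $\sP_{A,X,Z}$, identify the inner conditional mean with $m_0-T_h\hat h$, and transfer the $O_p$ rate through the deterministic inequality; the only difference is that you make this identification before integrating, whereas the paper makes it after, which is immaterial.
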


\begin{proof}
For \(h=\hat h\), the previous lemma gives, for every \(a\),
\[
\left|
f_{\mathrm{ATE}}(a)-\bar f_{\hat h}(a)
\right|^2
\le
C_\varphi(a)^2
\left\|
\mathbb E[Y-\hat h(a,X,W)\mid A=a,X,Z]
\right\|_{L^2(\sP_{X,Z\mid A=a})}^2.
\]
Integrating over \(a\sim \sP_A\),
\[
\begin{aligned}
\|\bar f_{\hat h}-&f_{\mathrm{ATE}}\|_{L^2(\sP_A)}^2
=
\int
\left|
\bar f_{\hat h}(a)-f_{\mathrm{ATE}}(a)
\right|^2
\,d\sP_A(a)
\\
&\le
\int
C_\varphi(a)^2
\left[
\int
\left\{
\mathbb E[
Y-\hat h(a,X,W)
\mid A=a,X=x,Z=z]
\right\}^2
dP_{X,Z\mid A=a}(x,z)
\right]
d\sP_A(a)
\\
&\le
C_\varphi^2
\int
\int
\left\{
\mathbb E[
Y-\hat h(a,X,W)
\mid A=a,X=x,Z=z]
\right\}^2
d\sP_{X,Z\mid A=a}(x,z)
d\sP_A(a)
\\
&=
C_\varphi^2
\int
\left\{
\mathbb E[
Y-\hat h(A,X,W)
\mid A,X,Z]
\right\}^2
dP_{A,X,Z}(a,x,z).
\end{aligned}
\]
Since
\[
m_0(A,X,Z)=\mathbb E[Y\mid A,X,Z],
\]
we have
\[
\mathbb E[
Y-\hat h(A,X,W)
\mid A,X,Z]
=
m_0(A,X,Z)-T_h\hat h(A,X,Z).
\]
Therefore,
\[
\|\bar f_{\hat h}-f_{\mathrm{ATE}}\|_{L^2(\sP_A)}^2
\le
C_\varphi^2
\|T_h\hat h-m_0\|_{L^2(\sP_{A,X,Z})}^2.
\]
The theorem follows immediately from Eq.~\ref{eq:weak_norm_outcome_bridge} in the theorem statement. 
\end{proof}

\begin{theorem}[Empirical outcome-bridge averaging error]
\label{thm:outcome_empirical_averaging_error}
Let \(\hat h\) be trained on a sample independent of an evaluation sample
\(\mathcal D_3^{(h)}=\{(x_i^\circ,w_i^\circ)\}_{i=1}^{t_h}\), where
\((x_i^\circ,w_i^\circ)\stackrel{i.i.d.}{\sim}\sP_{X,W}\). Define
\[
\mu_{\hat h}(a):=\E[\hat h(a,X,W)\mid \hat h],
\qquad
\hat\mu_h(a):=\frac{1}{t_h}\sum_{i=1}^{t_h}
\hat h(a,x_i^\circ,w_i^\circ).
\]
Assume that the learned outcome-bridge class is uniformly bounded, so that
\(|\hat h(a,x,w)|\le B_h\) almost surely. Then
\[
\mathcal E_{\mu,h}
:=
\|\hat\mu_h-\mu_{\hat h}\|_{L^2(\sP_A)}^2
=
O_p(t_h^{-1}).
\]
Consequently, if
\(\|\mu_{\hat h}-f_{\mathrm{ATE}}\|_{L^2(\sP_A)}^2=O_p(\rho_{h,\nu_h})\), then
\[
\|\hat\mu_h-f_{\mathrm{ATE}}\|_{L^2(\sP_A)}^2
=
O_p(\rho_{h,\nu_h}+t_h^{-1}).
\]
\end{theorem}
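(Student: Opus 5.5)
Condition throughout on the training sample used to construct \(\hat h\). Since it is independent of \(\mathcal D_3^{(h)}\), \(\hat h\) acts as a fixed bounded measurable function, and the evaluation points \((x_i^\circ,w_i^\circ)\) remain i.i.d. from \(\sP_{X,W}\).

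\textbf{Step 1 (pointwise mean and variance).} For each fixed \(a\), the quantity \(\hat\mu_h(a)-\mu_{\hat h}(a)\) is an average of \(t_h\) i.i.d. centered variables \(\hat h(a,x_i^\circ,w_i^\circ)-\mu_{\hat h}(a)\). These are bounded by \(2B_h\). Hence
\[
\E\bigl[\{\hat\mu_h(a)-\mu_{\hat h}(a)\}^2\mid \hat h\bigr]=\frac{1}{t_h}\Var\bigl(\hat h(a,X,W)\mid \hat h\bigr)\le \frac{B_h^2}{t_h}.
\]

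\textbf{Step 2 (integrate over \(a\)).} Integrate the bound of Step 1 over \(a\sim\sP_A\), which is a fixed measure independent of the data. Tonelli's theorem justifies exchanging the integral over \(a\) with the expectation over \(\mathcal D_3^{(h)}\); this needs joint measurability of \((a,x,w)\mapsto \hat h(a,x,w)\), which is granted for the neural class. The result is
\[
\E[\mathcal E_{\mu,h}\mid \hat h]\le B_h^2/t_h.
\]

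\textbf{Step 3 (convert to \(O_p\)).} Apply Markov's inequality conditionally:
\[
\sP\bigl(\mathcal E_{\mu,h}>M B_h^2/t_h\mid \hat h\bigr)\le 1/M.
\]
Taking expectations over \(\hat h\) gives \(\mathcal E_{\mu,h}=O_p(t_h^{-1})\).

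\textbf{Step 4 (combine with the bridge error).} For the consequence, use \(\|u+v\|^2\le 2\|u\|^2+2\|v\|^2\) in \(L^2(\sP_A)\):
\[
\|\hat\mu_h-f_{\mathrm{ATE}}\|_{L^2(\sP_A)}^2\le 2\mathcal E_{\mu,h}+2\|\mu_{\hat h}-f_{\mathrm{ATE}}\|_{L^2(\sP_A)}^2.
\]
A sum of \(O_p\) terms is \(O_p\) of the sum of their rates, so this yields \(O_p(\rho_{h,\nu_h}+t_h^{-1})\). Note that \(\mu_{\hat h}=\bar f_{\hat h}\), so Theorem~\ref{thm:outcome_dose_response_consistency} together with Theorem~\ref{thm:outcome_projected_rademacher} supplies the hypothesis \(\|\mu_{\hat h}-f_{\mathrm{ATE}}\|^2=O_p(\rho_{h,\nu_h})\) whenever \(C_\varphi<\infty\).

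\textbf{Main obstacle.} The only delicate point is rigor in Step 2: the data-dependent \(\hat h\) must be treated as fixed. This is exactly what the independence of the splits and conditioning on \(\hat h\) provide. The key observation is that the \(L^2(\sP_A)\) norm is an integral of pointwise variances, so no uniform-in-\(a\) empirical process bound is needed.
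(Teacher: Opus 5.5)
Your proposal is correct and follows essentially the same route as the paper: condition on \(\hat h\), bound the pointwise conditional variance by \(B_h^2/t_h\), integrate over \(a\sim\sP_A\), apply conditional Markov, and finish with \(\|u+v\|^2\le 2\|u\|^2+2\|v\|^2\). Your remarks on Tonelli and joint measurability, and on how Theorems~\ref{thm:outcome_projected_rademacher} and~\ref{thm:outcome_dose_response_consistency} supply the hypothesis when \(C_\varphi<\infty\), are useful refinements rather than departures.
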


\begin{proof}
Condition on the trained bridge \(\hat h\). For every fixed \(a\), the empirical average
\(\hat\mu_h(a)\) is unbiased for \(\mu_{\hat h}(a)\), since
\[
\E[\hat\mu_h(a)\mid \hat h]
=
\E[\hat h(a,X,W)\mid \hat h]
=
\mu_{\hat h}(a).
\]
Moreover, because \((x_i^\circ,w_i^\circ)\) are conditionally i.i.d.,
\[
\E\!\left[
\{\hat\mu_h(a)-\mu_{\hat h}(a)\}^2
\mid \hat h
\right]
=
\frac{1}{t_h}
\Var\!\left(\hat h(a,X,W)\mid \hat h\right)
\le
\frac{B_h^2}{t_h}.
\]
Integrating over \(a\sim \sP_A\) gives
\[
\E\!\left[
\mathcal E_{\mu,h}
\mid \hat h
\right]
=
\int
\E\!\left[
\{\hat\mu_h(a)-\mu_{\hat h}(a)\}^2
\mid \hat h
\right]
d\sP_A(a)
\le
\frac{B_h^2}{t_h}.
\]
Therefore, by Markov's inequality, for every \(\delta>0\),
\[
\Pr\!\left(
\mathcal E_{\mu,h}
>
\frac{B_h^2}{\delta t_h}
\;\middle|\;
\hat h
\right)
\le
\delta,
\]
and hence \(\mathcal E_{\mu,h}=O_p(t_h^{-1})\).

Finally,
\[
\hat\mu_h-f_{\mathrm{ATE}}
=
(\hat\mu_h-\mu_{\hat h})
+
(\mu_{\hat h}-f_{\mathrm{ATE}}),
\]
so
\[
\|\hat\mu_h-f_{\mathrm{ATE}}\|_{L^2(\sP_A)}^2
\le
2\|\hat\mu_h-\mu_{\hat h}\|_{L^2(\sP_A)}^2
+
2\|\mu_{\hat h}-f_{\mathrm{ATE}}\|_{L^2(\sP_A)}^2.
\]
Combining the empirical averaging bound with
\(\|\mu_{\hat h}-f_{\mathrm{ATE}}\|_{L^2(\sP_A)}^2=O_p(\rho_{h,\nu_h})\)
gives the stated rate.
\end{proof}

\subsection{Treatment bridge consistency for dose-response estimation}
\label{app:treatment_bridge_consistency_rademacher}
We obtain convergence rates for \emph{TreatmentNet} by controlling the Rademacher complexity of relevant function classes. Our proof is the counterpart of the corresponding analysis for \emph{OutcomeNet} in Appendix~\ref{app:outcome_bridge_consistency}. For simplicity, we profile out first stage nuisances, and assume each nuisance is estimated on a separate split of the sample. 

Let
\[
Q_\varphi := (A,X,W),
\qquad
B_\varphi := (A,X),
\]
and define the ATE treatment-bridge target
\[
r_0(q_\varphi)
:=
r_0(a,x,w)
=
\frac{p_A(a)}{p_{A\mid X,W}(a\mid x,w)}.
\]
For a measurable function \(u(a,x,z)\), define the treatment-bridge conditional
expectation operator
\[
(T_\varphi u)(a,x,w)
:=
\mathbb E[u(a,x,Z)\mid A=a,X=x,W=w],
\]
and the projected seminorm
\[
\|u\|_{T_\varphi}
:=
\|T_\varphi u\|_{L^2(\sP_{Q_\varphi})}.
\]

\paragraph{Density-ratio plug-in.}
In the implementation, TreatmentNet is trained with an estimated density ratio rather than the true density ratio \(r_0\). We refer to the latter problem as the oracle learning problem, and $r_0$ as an oracle target. Let
\(
\mathcal D_r^{(\varphi)}
\)
be an independent sample used to construct a density-ratio estimator
\(
\hat r:\mathcal Q_\varphi\to\mathbb R.
\)
Throughout the treatment-side analysis, all empirical second-stage regressions use the plug-in target \(\hat r(Q_\varphi)\). We separate its error as
\[
\mathcal E_{r}
:=
\|\hat r-r_0\|_{L^2(\sP_{Q_\varphi})}^2.
\]
All statements below are conditional on \(\mathcal D_r^{(\varphi)}\), except when the final oracle-target bound explicitly adds \(\mathcal E_{r}\).

The first-stage sample is
\[
D_1^{(\varphi)}
=
\{(\bar q_i,\bar z_i)\}_{i=1}^{n_\varphi},
\]
and the second-stage sample is
\[
D_2^{(\varphi)}
=
\{\tilde q_i\}_{i=1}^{m_\varphi}.
\]
The second-stage labels are generated as \(\hat r(\tilde q_i)\). We assume that
\(D_1^{(\varphi)}\), \(D_2^{(\varphi)}\), and \(\mathcal D_r^{(\varphi)}\) are mutually independent.

Let \(\nu_\varphi =(n_\varphi,m_\varphi)\) denote the sample split sizes. Also, let
\[
\Phi_{\nu_\varphi}
\subset
\{\phi:\mathcal Z\to\mathbb R^{d_{\nu_\varphi}}\}
\]
be a class of learned treatment-proxy feature maps,
\[
\mathcal G_{\nu_\varphi}
\subset
\{g:\mathcal Q_\varphi\to\mathbb R^{d_{\nu_\varphi}}\}
\]
be a class of first-stage conditional-mean regressors, and
\[
\Theta_{\nu_\varphi}
\subset
\{\theta:\mathcal B_\varphi\to\mathbb R^{d_{\nu_\varphi}}\}
\]
be a class of second-stage heads. The treatment-bridge class is
\[
\mathcal H_{\nu_\varphi}
:=
\left\{
\varphi_{\theta,\phi}(a,x,z)
=
\langle \theta(a,x),\phi(z)\rangle
:
\theta\in\Theta_{\nu_\varphi},\ \phi\in\Phi_{\nu_\varphi}
\right\}.
\]
This notation includes the tensor-product implementation
\[
\varphi(a,x,z)
=
\left\langle
\bm{\varphi},
\phi_{AX,2}^{(\varphi)}(a,x)\otimes\phi_{Z,2}^{(\varphi)}(z)
\right\rangle,
\]
where \(\bm{\varphi}\) is the final treatment-network head. Equivalently, the final head may be absorbed into the vector-valued map \(\theta(a,x)\).

For every \(\phi\in\Phi_{\nu_\varphi}\), define the population conditional feature mean
\[
\mu_\phi^{(\varphi)}(q_\varphi)
:=
\mathbb E[\phi(Z)\mid Q_\varphi=q_\varphi].
\]
Then
\[
(T_\varphi \varphi_{\theta,\phi})(q_\varphi)
=
\langle \theta(b_\varphi),\mu_\phi^{(\varphi)}(q_\varphi)\rangle.
\]

For each candidate feature map \(\phi\in\Phi_{\nu_\varphi}\), define the first-stage empirical risk
\[
\widehat R_{\varphi,1}(\phi,g)
:=
\frac1{n_\varphi}
\sum_{i=1}^{n_\varphi}
\|\phi(\bar z_i)-g(\bar q_i)\|_2^2,
\qquad g\in\mathcal G_{\nu_\varphi}.
\]
The profiled first-stage estimator is
\[
\hat g_\phi^{(\varphi)}
\in
\operatorname*{arg\,min}_{g\in\mathcal G_{\nu_\varphi}}
\widehat R_{\varphi,1}(\phi,g).
\]

The second-stage profiled ERM is
\[
(\hat\theta^{(\varphi)},\hat\phi^{(\varphi)})
\in
\operatorname*{arg\,min}_{\theta\in\Theta_{\nu_\varphi},\ \phi\in\Phi_{\nu_\varphi}}
\widehat R_{\varphi,2}(\theta,\phi;\hat r),
\]
where
\[
\widehat R_{\varphi,2}(\theta,\phi;\hat r)
:=
\frac1{m_\varphi}
\sum_{i=1}^{m_\varphi}
\left\{
\hat r(\tilde q_i)
-
\left\langle
\theta(\tilde b_i),
\hat g_\phi^{(\varphi)}(\tilde q_i)
\right\rangle
\right\}^2.
\]
The learned treatment bridge is
\[
\hat\varphi(a,x,z)
:=
\left\langle
\hat\theta^{(\varphi)}(a,x),
\hat\phi^{(\varphi)}(z)
\right\rangle.
\]

\begin{assumption}[Treatment-side boundedness]
\label{ass:treatment_boundedness_rademacher}
There exist finite constants
\(
B_r,\quad B_{\hat r},\quad B_\phi,\quad B_g,\quad B_\theta
\)
such that:
\begin{itemize}
    \item The oracle density ratio is bounded:
    \[
    |r_0(Q_\varphi)|\le B_r
    \quad\text{almost surely}.
    \]

    \item The plug-in density-ratio estimator is bounded conditional on
    \(\mathcal D_r^{(\varphi)}\):
    \[
    |\hat r(Q_\varphi)|\le B_{\hat r}
    \quad\text{almost surely}.
    \]

    \item The treatment-proxy feature maps are uniformly bounded:
    \[
    \sup_{\phi\in\Phi_{\nu_\varphi}}
    \sup_{z\in\mathcal Z}
    \|\phi(z)\|_2
    \le B_\phi.
    \]

    \item The first-stage conditional-mean class is uniformly bounded:
    \[
    \sup_{g\in\mathcal G_{\nu_\varphi}}
    \sup_{q\in\mathcal Q_\varphi}
    \|g(q)\|_2
    \le B_g.
    \]

    \item The second-stage head class is uniformly bounded:
    \[
    \sup_{\theta\in\Theta_{\nu_\varphi}}
    \sup_{b\in\mathcal B_\varphi}
    \|\theta(b)\|_2
    \le B_\theta.
    \]
\end{itemize}
Define
\[
M_{\varphi,1}:=(B_\phi+B_g)^2,
\qquad
M_{\varphi,2}:=(B_{\hat r}+B_\theta B_g)^2.
\]
\end{assumption}

\paragraph{Approximation errors.}
Define the first-stage conditional-mean approximation error
\[
\kappa_{\varphi,1,\nu_\varphi}
:=
\sup_{\phi\in\Phi_{\nu_\varphi}}
\inf_{g\in\mathcal G_{\nu_\varphi}}
\|g-\mu_\phi^{(\varphi)}\|_{L^2(\sP_{Q_\varphi})}^2.
\]
For the plug-in second-stage target, define
\[
\kappa_{\varphi,2,\nu_\varphi}^{\hat r}
:=
\inf_{\theta\in\Theta_{\nu_\varphi},\ \phi\in\Phi_{\nu_\varphi}}
\left\|
\hat r
-
\langle \theta,\mu_\phi^{(\varphi)}\rangle
\right\|_{L^2(\sP_{Q_\varphi})}^2,
\]
where
\[
\langle \theta,\mu_\phi^{(\varphi)}\rangle(q_\varphi)
:=
\langle \theta(b_\varphi),\mu_\phi^{(\varphi)}(q_\varphi)\rangle.
\]
For reference, the corresponding oracle approximation error is
\[
\kappa_{\varphi,2,\nu_\varphi}^{0}
:=
\inf_{\theta\in\Theta_{\nu_\varphi},\ \phi\in\Phi_{\nu_\varphi}}
\left\|
r_0
-
\langle \theta,\mu_\phi^{(\varphi)}\rangle
\right\|_{L^2(\sP_{Q_\varphi})}^2.
\]
By the triangle inequality,
\[
\kappa_{\varphi,2,\nu_\varphi}^{\hat r}
\le
2\kappa_{\varphi,2,\nu_\varphi}^{0}
+
2\mathcal E_{r}.
\]

\paragraph{Loss classes.}
Let
\[
\mathcal L_{\varphi,1,\nu_\varphi}
:=
\left\{
(q,z)\mapsto
\|\phi(z)-g(q)\|_2^2
:
\phi\in\Phi_{\nu_\varphi},\ g\in\mathcal G_{\nu_\varphi}
\right\},
\]
and define the plug-in second-stage loss class
\[
\mathcal L_{\varphi,2,\nu_\varphi}(\hat r)
:=
\left\{
q\mapsto
\left(
\hat r(q)-\langle\theta(b),g(q)\rangle
\right)^2
:
\theta\in\Theta_{\nu_\varphi},\ g\in\mathcal G_{\nu_\varphi}
\right\}.
\]
For a sample \(S=(s_1,\ldots,s_N)\), we define the empirical Rademacher complexity of the function class $\mathcal{F}$
\[
\widehat{\mathfrak R}_{S}(\mathcal F)
:=
\mathbb E_\sigma
\left[
\sup_{f\in\mathcal F}
\frac1N
\sum_{i=1}^N
\sigma_i f(s_i)
\right],
\]
where we take expectation with respect to independent Rademacher random variables $\sigma_1, \dots, \sigma_N$. 

We apply the following uniform deviation inequality \citep{wainwright2019high}. If \(\mathcal F\) takes values in
\([0,M]\), then with probability at least \(1-\delta\),
\[
\sup_{f\in\mathcal F}|\sP f- \sP_Nf|
\le
2\widehat{\mathfrak R}_{S}(\mathcal F)
+
3M\sqrt{\frac{\log(2/\delta)}{2N}}.
\]
Define
\[
\Delta_{\varphi,1,\nu_\varphi}(\delta)
:=
2\widehat{\mathfrak R}_{D_1^{(\varphi)}}(\mathcal L_{\varphi,1,\nu_\varphi})
+
3M_{\varphi,1}\sqrt{\frac{\log(2/\delta)}{2n_\varphi}},
\]
and
\[
\Delta_{\varphi,2,\nu_\varphi}^{\hat r}(\delta)
:=
2\widehat{\mathfrak R}_{D_2^{(\varphi)}}(
\mathcal L_{\varphi,2,\nu_\varphi}(\hat r))
+
3M_{\varphi,2}\sqrt{\frac{\log(2/\delta)}{2m_\varphi}}.
\]

\begin{lemma}[Uniform treatment Stage-1 error]
\label{lem:treatment_stage1_rademacher}
Define
\[
\epsilon_{\varphi,1,\nu_\varphi}
:=
\sup_{\phi\in\Phi_{\nu_\varphi}}
\|\hat g_\phi^{(\varphi)}-\mu_\phi^{(\varphi)}\|_{L^2(\sP_{Q_\varphi})}^2.
\]
Then, with probability at least \(1-\delta\),
\[
\epsilon_{\varphi,1,\nu_\varphi}
\le
\kappa_{\varphi,1,\nu_\varphi}
+
2\Delta_{\varphi,1,\nu_\varphi}(\delta).
\]
\end{lemma}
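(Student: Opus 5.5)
The plan is to mirror the proof of the outcome-side first-stage lemma verbatim, with $(Q,W)$ replaced by $(Q_\varphi,Z)$. For a fixed candidate feature map $\phi\in\Phi_{\nu_\varphi}$, define the population first-stage risk
\[
R_{\varphi,1}(\phi,g):=\E\big[\|\phi(Z)-g(Q_\varphi)\|_2^2\big].
\]

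The first step is a least-squares projection identity. Since $\mu^{(\varphi)}_\phi(Q_\varphi)=\E[\phi(Z)\mid Q_\varphi]$, expand $\phi(Z)-g=(\phi(Z)-\mu^{(\varphi)}_\phi)+(\mu^{(\varphi)}_\phi-g)$. The cross term vanishes by the tower property, which gives
\[
R_{\varphi,1}(\phi,g)-R_{\varphi,1}(\phi,\mu^{(\varphi)}_\phi)=\|g-\mu^{(\varphi)}_\phi\|^2_{L^2(\sP_{Q_\varphi})}
\]
for every measurable $g$. Square-integrability holds because $\|\phi\|_2\le B_\phi$ by Assumption~\ref{ass:treatment_boundedness_rademacher}.

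The second step is the ERM comparison. Let
\[
\Delta_{\varphi,1}:=\sup_{\phi,g}\big|R_{\varphi,1}(\phi,g)-\widehat R_{\varphi,1}(\phi,g)\big|.
\]
By optimality of $\hat g^{(\varphi)}_\phi$, for every $g\in\mathcal G_{\nu_\varphi}$ we get $R_{\varphi,1}(\phi,\hat g^{(\varphi)}_\phi)\le R_{\varphi,1}(\phi,g)+2\Delta_{\varphi,1}$. Subtracting $R_{\varphi,1}(\phi,\mu^{(\varphi)}_\phi)$ from both sides and applying the identity yields
\[
\|\hat g^{(\varphi)}_\phi-\mu^{(\varphi)}_\phi\|^2\le\|g-\mu^{(\varphi)}_\phi\|^2+2\Delta_{\varphi,1}.
\]
Take the infimum over $g$, then the supremum over $\phi$. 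Because $\Delta_{\varphi,1}$ is already uniform in $\phi$, this gives $\epsilon_{\varphi,1,\nu_\varphi}\le\kappa_{\varphi,1,\nu_\varphi}+2\Delta_{\varphi,1}$ deterministically.

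The third step bounds $\Delta_{\varphi,1}$ probabilistically. The loss class $\mathcal L_{\varphi,1,\nu_\varphi}$ takes values in $[0,M_{\varphi,1}]$, since $\|\phi(z)-g(q)\|_2\le B_\phi+B_g$. Note also that $D_1^{(\varphi)}$ is i.i.d. from $\sP_{Q_\varphi,Z}$ and does not involve $\hat r$, so no conditioning on $\mathcal D_r^{(\varphi)}$ is needed. The stated Rademacher uniform deviation inequality then gives $\Delta_{\varphi,1}\le\Delta_{\varphi,1,\nu_\varphi}(\delta)$ with probability at least $1-\delta$, which completes the argument.

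The only delicate point is that $\hat g^{(\varphi)}_\phi$ is data-dependent for every $\phi$ simultaneously. This is why the deviation must be taken uniformly over the joint class $\Phi_{\nu_\varphi}\times\mathcal G_{\nu_\varphi}$ rather than for a fixed $\phi$. The loss class $\mathcal L_{\varphi,1,\nu_\varphi}$ is defined jointly for exactly this reason, so one event of probability $1-\delta$ covers all $\phi$ at once. Measurability of the supremum is assumed as in the outcome case.
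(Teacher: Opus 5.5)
Your proposal is correct and follows the paper's proof: the projection identity for the population first-stage risk, the ERM comparison using a deviation that is uniform over the joint class \(\Phi_{\nu_\varphi}\times\mathcal G_{\nu_\varphi}\), then the infimum over \(g\) and the supremum over \(\phi\), and finally the stated Rademacher inequality applied to \(\mathcal L_{\varphi,1,\nu_\varphi}\). You spell out the intermediate steps that the paper's treatment-side proof compresses by mirroring the outcome-side lemma.
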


\begin{proof}
Fix \(\phi\in\Phi_{\nu_\varphi}\). For any measurable
\(g:\mathcal Q_\varphi\to\mathbb R^{d_{\nu_\varphi}}\),
\[
\mathbb E\|\phi(Z)-g(Q_\varphi)\|_2^2
=
\mathbb E\|\phi(Z)-\mu_\phi^{(\varphi)}(Q_\varphi)\|_2^2
+
\|g-\mu_\phi^{(\varphi)}\|_{L^2(\sP_{Q_\varphi})}^2,
\]
where the cross term vanishes because
\[
\mu_\phi^{(\varphi)}(Q_\varphi)
=
\mathbb E[\phi(Z)\mid Q_\varphi].
\]
The uniform deviation bound for
\(\mathcal L_{\varphi,1,\nu_\varphi}\), together with empirical optimality of
\(\hat g_\phi^{(\varphi)}\), gives uniformly over \(\phi\)
\[
\|\hat g_\phi^{(\varphi)}-\mu_\phi^{(\varphi)}\|_{L^2(\sP_{Q_\varphi})}^2
\le
\inf_{g\in\mathcal G_{\nu_\varphi}}
\|g-\mu_\phi^{(\varphi)}\|_{L^2(\sP_{Q_\varphi})}^2
+
2\Delta_{\varphi,1,\nu_\varphi}(\delta).
\]
Taking the supremum over \(\phi\in\Phi_{\nu_\varphi}\) gives the result.
\end{proof}

\begin{theorem}[Projected plug-in treatment-bridge consistency]
\label{thm:treatment_projected_plugin_rademacher}
Assume Assumption~\ref{ass:treatment_boundedness_rademacher} and exact measurable ERM in both stages. Conditional on \(\mathcal D_r^{(\varphi)}\), with probability at least \(1-\delta\),
\begin{align*}
\|T_\varphi\hat\varphi-\hat r\|_{L^2(\sP_{Q_\varphi})}^2
\le\;&
4\kappa_{\varphi,2,\nu_\varphi}^{\hat r}
+
6B_\theta^2\kappa_{\varphi,1,\nu_\varphi}
+
12B_\theta^2\Delta_{\varphi,1,\nu_\varphi}(\delta/2)
+
4\Delta_{\varphi,2,\nu_\varphi}^{\hat r}(\delta/2).
\end{align*}
\end{theorem}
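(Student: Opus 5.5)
The argument mirrors the proof of Theorem~\ref{thm:outcome_projected_rademacher} line by line, with \((Q,W,Y)\) replaced by \((Q_\varphi,Z,\hat r(Q_\varphi))\). Throughout we condition on \(\mathcal D_r^{(\varphi)}\), so that \(\hat r\) is a fixed bounded function and the second-stage labels \(\hat r(\tilde q_i)\) are deterministic functions of i.i.d.\ draws \(\tilde q_i\sim\sP_{Q_\varphi}\). For each \((\theta,\phi)\) we define two predictions:
\[
p_{\theta,\phi}:=\langle\theta,\mu^{(\varphi)}_\phi\rangle=T_\varphi\varphi_{\theta,\phi},\qquad \hat p_{\theta,\phi}:=\langle\theta,\hat g^{(\varphi)}_\phi\rangle .
\]
In particular \(T_\varphi\hat\varphi=p_{\hat\theta^{(\varphi)},\hat\phi^{(\varphi)}}\), and the goal is to bound \(\|p_{\hat\theta,\hat\phi}-\hat r\|^2\).

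\textbf{Second-stage excess risk.} Conditional on \(D_1^{(\varphi)}\), every \(\hat g^{(\varphi)}_\phi\) lies in \(\mathcal G_{\nu_\varphi}\). The population plug-in risk is therefore
\[
R(\theta,\phi)=\E\{\hat r(Q_\varphi)-\hat p_{\theta,\phi}(Q_\varphi)\}^2=\|\hat r-\hat p_{\theta,\phi}\|^2_{L^2(\sP_{Q_\varphi})}.
\]
There is no irreducible noise term here, because the label is itself a function of \(Q_\varphi\); this makes the step slightly simpler than on the outcome side. The supremum of \(|R-\widehat R_{\varphi,2}|\) over the plug-in class is bounded by the supremum over \(\mathcal L_{\varphi,2,\nu_\varphi}(\hat r)\). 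Its values lie in \([0,M_{\varphi,2}]\) by Assumption~\ref{ass:treatment_boundedness_rademacher}. The Rademacher deviation inequality then gives, with probability \(1-\delta/2\), a uniform gap of at most \(\Delta^{\hat r}_{\varphi,2,\nu_\varphi}(\delta/2)\). Since \(D_2^{(\varphi)}\) is independent of \(D_1^{(\varphi)}\) and \(\mathcal D_r^{(\varphi)}\), this bound holds conditionally on both. Empirical optimality then yields
\[
\|\hat p_{\hat\theta,\hat\phi}-\hat r\|^2\le \inf_{\theta,\phi}\|\hat p_{\theta,\phi}-\hat r\|^2+2\Delta^{\hat r}_{\varphi,2}.
\]

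\textbf{Swapping plug-in and population predictions.} By Cauchy--Schwarz and \(\sup\|\theta\|\le B_\theta\),
\[
\|\hat p_{\theta,\phi}-p_{\theta,\phi}\|^2\le B_\theta^2\,\epsilon_{\varphi,1,\nu_\varphi}\quad\text{uniformly in }(\theta,\phi).
\]
Applying \((a+b)^2\le 2a^2+2b^2\) twice gives two consequences. First, the infimum above is at most \(2\kappa^{\hat r}_{\varphi,2,\nu_\varphi}+2B_\theta^2\epsilon_{\varphi,1}\). Second, passing from \(\hat p_{\hat\theta,\hat\phi}\) back to \(p_{\hat\theta,\hat\phi}\) costs a factor 2 plus \(2B_\theta^2\epsilon_{\varphi,1}\). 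Combining these,
\[
\|T_\varphi\hat\varphi-\hat r\|^2\le 4\kappa^{\hat r}_{\varphi,2}+6B_\theta^2\epsilon_{\varphi,1}+4\Delta^{\hat r}_{\varphi,2}.
\]
Lemma~\ref{lem:treatment_stage1_rademacher} at level \(\delta/2\) gives \(\epsilon_{\varphi,1}\le\kappa_{\varphi,1}+2\Delta_{\varphi,1}(\delta/2)\). A union bound over the two events then produces exactly the stated constants \(4,6B_\theta^2,12B_\theta^2,4\).

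\textbf{Main obstacle.} The main subtlety is the uniformity in \(\phi\) of the first-stage bound. The second stage optimizes over \(\phi\), so \(\hat\phi^{(\varphi)}\) is data dependent, and a bound on \(\|\hat g_\phi-\mu_\phi\|\) for each fixed \(\phi\) would not suffice. Lemma~\ref{lem:treatment_stage1_rademacher} handles this by taking the supremum over \(\phi\) inside a single deviation event for the joint class \(\mathcal L_{\varphi,1,\nu_\varphi}\). The rest of the argument only needs that the \(\hat r\)-dependent loss class is treated conditionally on \(\mathcal D_r^{(\varphi)}\), which the sample independence ensures.
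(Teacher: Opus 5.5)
Your proposal is correct and follows the paper's proof essentially step by step. You bound the second-stage excess plug-in risk through the uniform deviation over $\mathcal L_{\varphi,2,\nu_\varphi}(\hat r)$, and you swap plug-in and population predictions twice at cost $B_\theta^2\epsilon_{\varphi,1,\nu_\varphi}$, which yields the constants $4$ and $6B_\theta^2$. You then apply Lemma~\ref{lem:treatment_stage1_rademacher} at level $\delta/2$ and take a union bound; your remark that the plug-in risk has no irreducible noise term is also implicit in the paper's argument.
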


\begin{proof}
For every \((\theta,\phi)\), define the population projected prediction
\[
p_{\theta,\phi}^{(\varphi)}(q)
:=
\langle \theta(b),\mu_\phi^{(\varphi)}(q)\rangle,
\]
and the empirical plug-in prediction
\[
\hat p_{\theta,\phi}^{(\varphi)}(q)
:=
\langle \theta(b),\hat g_\phi^{(\varphi)}(q)\rangle.
\]
By construction,
\[
T_\varphi\varphi_{\theta,\phi}=p_{\theta,\phi}^{(\varphi)}.
\]
In particular,
\[
T_\varphi\hat\varphi
=
p_{\hat\theta^{(\varphi)},\hat\phi^{(\varphi)}}^{(\varphi)}.
\]

Conditional on \(D_1^{(\varphi)}\) and \(\mathcal D_r^{(\varphi)}\), the maps
\(\hat g_\phi^{(\varphi)}\) and the plug-in target \(\hat r\) are fixed. Define
\[
R_{\varphi,2}^{\mathrm{plug}}(\theta,\phi;\hat r)
:=
\mathbb E
\left[
\left\{
\hat r(Q_\varphi)-\hat p_{\theta,\phi}^{(\varphi)}(Q_\varphi)
\right\}^2
\right],
\]
and
\[
\widehat R_{\varphi,2}^{\mathrm{plug}}(\theta,\phi;\hat r)
:=
\frac1{m_\varphi}
\sum_{i=1}^{m_\varphi}
\left\{
\hat r(\tilde q_i)-\hat p_{\theta,\phi}^{(\varphi)}(\tilde q_i)
\right\}^2.
\]
The second-stage ERM minimizes
\(\widehat R_{\varphi,2}^{\mathrm{plug}}(\theta,\phi;\hat r)\).
By the uniform deviation inequality applied to
\(\mathcal L_{\varphi,2,\nu_\varphi}(\hat r)\), with probability at least
\(1-\delta/2\),
\[
\sup_{\theta,\phi}
\left|
R_{\varphi,2}^{\mathrm{plug}}(\theta,\phi;\hat r)
-
\widehat R_{\varphi,2}^{\mathrm{plug}}(\theta,\phi;\hat r)
\right|
\le
\Delta_{\varphi,2,\nu_\varphi}^{\hat r}(\delta/2).
\]
Therefore,
\[
\|\hat p_{\hat\theta,\hat\phi}^{(\varphi)}-\hat r\|_{L^2(\sP_{Q_\varphi})}^2
\le
\inf_{\theta,\phi}
\|\hat p_{\theta,\phi}^{(\varphi)}-\hat r\|_{L^2(\sP_{Q_\varphi})}^2
+
2\Delta_{\varphi,2,\nu_\varphi}^{\hat r}(\delta/2).
\]

For every \((\theta,\phi)\),
\[
\|\hat p_{\theta,\phi}^{(\varphi)}
-
p_{\theta,\phi}^{(\varphi)}
\|_{L^2(\sP_{Q_\varphi})}^2
\le
B_\theta^2
\|\hat g_\phi^{(\varphi)}-\mu_\phi^{(\varphi)}\|_{L^2(\sP_{Q_\varphi})}^2
\le
B_\theta^2\epsilon_{\varphi,1,\nu_\varphi}.
\]
Hence
\[
\inf_{\theta,\phi}
\|\hat p_{\theta,\phi}^{(\varphi)}-\hat r\|_{L^2(\sP_{Q_\varphi})}^2
\le
2\kappa_{\varphi,2,\nu_\varphi}^{\hat r}
+
2B_\theta^2\epsilon_{\varphi,1,\nu_\varphi}.
\]
Thus
\[
\|\hat p_{\hat\theta,\hat\phi}^{(\varphi)}-\hat r\|_{L^2(\sP_{Q_\varphi})}^2
\le
2\kappa_{\varphi,2,\nu_\varphi}^{\hat r}
+
2B_\theta^2\epsilon_{\varphi,1,\nu_\varphi}
+
2\Delta_{\varphi,2,\nu_\varphi}^{\hat r}(\delta/2).
\]

Finally, pass from the empirical plug-in prediction
\(\hat p_{\hat\theta,\hat\phi}^{(\varphi)}\) to the true projected bridge
\(p_{\hat\theta,\hat\phi}^{(\varphi)}=T_\varphi\hat\varphi\). Using
\(\|a+b\|^2\le2\|a\|^2+2\|b\|^2\),
\[
\|T_\varphi\hat\varphi-\hat r\|_{L^2(\sP_{Q_\varphi})}^2
\le
2\|\hat p_{\hat\theta,\hat\phi}^{(\varphi)}-\hat r\|_{L^2(\sP_{Q_\varphi})}^2
+
2B_\theta^2\epsilon_{\varphi,1,\nu_\varphi}.
\]
Combining all of the above yields
\[
\|T_\varphi\hat\varphi-\hat r\|_{L^2(\sP_{Q_\varphi})}^2
\le
4\kappa_{\varphi,2,\nu_\varphi}^{\hat r}
+
6B_\theta^2\epsilon_{\varphi,1,\nu_\varphi}
+
4\Delta_{\varphi,2,\nu_\varphi}^{\hat r}(\delta/2).
\]
By Lemma~\ref{lem:treatment_stage1_rademacher}, with probability at least
\(1-\delta/2\),
\[
\epsilon_{\varphi,1,\nu_\varphi}
\le
\kappa_{\varphi,1,\nu_\varphi}
+
2\Delta_{\varphi,1,\nu_\varphi}(\delta/2).
\]
We complete the proof via a union bound with respect to the high probability events above. 
\end{proof}

\begin{corollary}[Projected treatment-bridge consistency for the oracle target]
\label{cor:treatment_projected_consistency}
Under the assumptions of Theorem~\ref{thm:treatment_projected_plugin_rademacher},
conditional on \(\mathcal D_r^{(\varphi)}\), with probability at least \(1-\delta\),
\begin{align*}
\|T_\varphi\hat\varphi-r_0\|_{L^2(\sP_{Q_\varphi})}^2
\le\;&
8\kappa_{\varphi,2,\nu_\varphi}^{\hat r}
+
12B_\theta^2\kappa_{\varphi,1,\nu_\varphi}
\\
&+
24B_\theta^2\Delta_{\varphi,1,\nu_\varphi}(\delta/2)
+
8\Delta_{\varphi,2,\nu_\varphi}^{\hat r}(\delta/2)
+
2\mathcal E_{r}.
\end{align*}
Equivalently, using
\[
\kappa_{\varphi,2,\nu_\varphi}^{\hat r}
\le
2\kappa_{\varphi,2,\nu_\varphi}^{0}
+
2\mathcal E_{r},
\]
we also have
\begin{align*}
\|T_\varphi\hat\varphi-r_0\|_{L^2(\sP_{Q_\varphi})}^2
\le\;&
16\kappa_{\varphi,2,\nu_\varphi}^{0}
+
12B_\theta^2\kappa_{\varphi,1,\nu_\varphi}
\\
&+
24B_\theta^2\Delta_{\varphi,1,\nu_\varphi}(\delta/2)
+
8\Delta_{\varphi,2,\nu_\varphi}^{\hat r}(\delta/2)
+
18\mathcal E_{r}.
\end{align*}
\end{corollary}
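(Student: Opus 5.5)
This corollary follows from Theorem~\ref{thm:treatment_projected_plugin_rademacher} by one triangle-inequality step on the high-probability event of that theorem. Conditional on \(\mathcal D_r^{(\varphi)}\), \(\hat r\) is a fixed function, so \(\mathcal E_r\) is a deterministic quantity on this conditional probability space.

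Write \(T_\varphi\hat\varphi-r_0=(T_\varphi\hat\varphi-\hat r)+(\hat r-r_0)\) and apply \(\|u+v\|^2\le 2\|u\|^2+2\|v\|^2\) in \(L^2(\sP_{Q_\varphi})\). This gives
\[
\|T_\varphi\hat\varphi-r_0\|_{L^2(\sP_{Q_\varphi})}^2
\le
2\|T_\varphi\hat\varphi-\hat r\|_{L^2(\sP_{Q_\varphi})}^2
+
2\mathcal E_r .
\]
On the event of probability at least \(1-\delta\) from Theorem~\ref{thm:treatment_projected_plugin_rademacher}, we substitute its bound for the first term. Doubling the constants \(4,6B_\theta^2,12B_\theta^2,4\) gives \(8,12B_\theta^2,24B_\theta^2,8\), which is exactly the first display. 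No new probabilistic argument is needed, since we only work on the same event.

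For the second display, insert \(\kappa^{\hat r}_{\varphi,2,\nu_\varphi}\le 2\kappa^0_{\varphi,2,\nu_\varphi}+2\mathcal E_r\), stated just before the loss-class definitions. This inequality follows from \(\|\hat r-p\|^2\le 2\|r_0-p\|^2+2\|\hat r-r_0\|^2\) for each candidate \(p=\langle\theta,\mu^{(\varphi)}_\phi\rangle\), followed by taking the infimum over \((\theta,\phi)\). Substituting turns \(8\kappa^{\hat r}\) into \(16\kappa^0+16\mathcal E_r\), and adding the existing \(2\mathcal E_r\) gives \(18\mathcal E_r\).

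There is essentially no obstacle. The only point needing care is that every bound is stated conditional on \(\mathcal D_r^{(\varphi)}\), and \(\Delta^{\hat r}_{\varphi,2,\nu_\varphi}\) depends on \(\hat r\). This is already handled by the independence of \(\mathcal D_r^{(\varphi)}\) from \(D_1^{(\varphi)}\) and \(D_2^{(\varphi)}\) assumed in Theorem~\ref{thm:treatment_projected_plugin_rademacher}.
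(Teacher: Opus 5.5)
Your proposal is correct and matches the paper's proof: you split \(T_\varphi\hat\varphi-r_0\) through \(\hat r\), apply \(\|u+v\|^2\le 2\|u\|^2+2\|v\|^2\), substitute the bound of Theorem~\ref{thm:treatment_projected_plugin_rademacher} on its event, and then insert \(\kappa^{\hat r}_{\varphi,2,\nu_\varphi}\le 2\kappa^{0}_{\varphi,2,\nu_\varphi}+2\mathcal E_r\) for the second display. Your constant bookkeeping is exactly right: \(8,\ 12B_\theta^2,\ 24B_\theta^2,\ 8\) in the first display, and \(16+2=18\) for the \(\mathcal E_r\) coefficient in the second.
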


\begin{proof}
By the triangle inequality,
\[
\|T_\varphi\hat\varphi-r_0\|_{L^2(\sP_{Q_\varphi})}^2
\le
2\|T_\varphi\hat\varphi-\hat r\|_{L^2(\sP_{Q_\varphi})}^2
+
2\|\hat r-r_0\|_{L^2(\sP_{Q_\varphi})}^2.
\]
The first display follows from Theorem~\ref{thm:treatment_projected_plugin_rademacher}. The second display follows from
\(\kappa_{\varphi,2,\nu_\varphi}^{\hat r}
\le
2\kappa_{\varphi,2,\nu_\varphi}^{0}
+
2\mathcal E_{r}\).
\end{proof}

\begin{corollary}[Projected TreatmentNet consistency]
\label{cor:treatmentnet_projected_consistency_plugin}
Suppose
\[
\kappa_{\varphi,1,\nu_\varphi}\to0,
\qquad
\kappa_{\varphi,2,\nu_\varphi}^{0}\to0,
\qquad
\mathcal E_{r}\to0,
\]
and
\[
\widehat{\mathfrak R}_{D_1^{(\varphi)}}(\mathcal L_{\varphi,1,\nu_\varphi})\to0,
\qquad
\widehat{\mathfrak R}_{D_2^{(\varphi)}}(
\mathcal L_{\varphi,2,\nu_\varphi}(\hat r))\to0
\]
in probability, with \(n_\varphi,m_\varphi\to\infty\). Then
\[
\|T_\varphi\hat\varphi-r_0\|_{L^2(\sP_{Q_\varphi})}^2
\to0
\]
in probability.
\end{corollary}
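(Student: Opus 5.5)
The statement is Corollary~\ref{cor:treatmentnet_projected_consistency_plugin}, and it follows from the second display of Corollary~\ref{cor:treatment_projected_consistency}. The plan is to show that every term on its right-hand side vanishes in probability, and then to lift the conditional high-probability statement to an unconditional $o_p(1)$ statement.

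\textbf{Step 1: the deterministic and stochastic terms.} Fix $\delta\in(0,1)$. The bound contains $16\kappa^0_{\varphi,2,\nu_\varphi}$, $12B_\theta^2\kappa_{\varphi,1,\nu_\varphi}$ and $18\mathcal E_r$. The first two tend to zero by assumption. The third tends to zero in probability, since $\mathcal E_r\to0$ in probability is assumed. The remaining terms are
\[
24B_\theta^2\Delta_{\varphi,1,\nu_\varphi}(\delta/2)+8\Delta^{\hat r}_{\varphi,2,\nu_\varphi}(\delta/2).
\]
Each $\Delta$ is twice an empirical Rademacher complexity plus a term $3M\sqrt{\log(4/\delta)/(2N)}$. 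The Rademacher complexities vanish in probability by hypothesis. The square-root terms vanish because $n_\varphi,m_\varphi\to\infty$ and because $M_{\varphi,1}$ and $M_{\varphi,2}$ are fixed constants under Assumption~\ref{ass:treatment_boundedness_rademacher}. Here I take $B_{\hat r}$ to be uniform in the sample size, which I would state explicitly.

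\textbf{Step 2: from conditional to unconditional.} Corollary~\ref{cor:treatment_projected_consistency} holds conditionally on $\mathcal D_r^{(\varphi)}$. Write $E_\nu$ for the event on which that bound holds. Its conditional probability is at least $1-\delta$, so taking expectations gives $\Pr(E_\nu)\ge1-\delta$ unconditionally. For any $\varepsilon>0$ we then have
\[
\Pr\bigl(\|T_\varphi\hat\varphi-r_0\|^2>\varepsilon\bigr)\le \delta+\Pr(\mathrm{RHS}_\nu(\delta)>\varepsilon).
\]
By Step 1 the second term tends to zero. Taking $\limsup_\nu$ and then letting $\delta\downarrow0$ yields convergence in probability.

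\textbf{Main obstacle.} The only delicate point is the order of quantifiers in $\delta$. The bound must be applied with $\delta$ fixed while $\nu\to\infty$, and only afterwards is $\delta$ sent to zero. A second point is that $\mathcal L_{\varphi,2,\nu_\varphi}(\hat r)$ depends on $\hat r$. The Rademacher hypothesis is therefore read as an unconditional statement about this random class, which is exactly how the corollary's assumptions are phrased.
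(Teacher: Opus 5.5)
Your proposal is correct and follows the paper's intended route. The paper gives this corollary no separate proof. It relies on substituting the hypotheses into the second display of Corollary~\ref{cor:treatment_projected_consistency} and observing that every term vanishes, as it does explicitly for the analogous TreatmentNet dose-response corollary. Your extra steps make precise what that one-line argument leaves implicit: fixing $\delta$ before letting $\nu\to\infty$, lifting the bound conditional on $\mathcal D_r^{(\varphi)}$ to an unconditional one, and requiring the boundedness constants to be uniform in the sample size.
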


\begin{remark}[What this result does and does not prove]
This theorem proves convergence only in the projected treatment-bridge seminorm:
\[
\|\hat\varphi-\varphi_0\|_{T_\varphi}
=
\left\|
\mathbb E[\hat\varphi(A,X,Z)-\varphi_0(A,X,Z)\mid A,X,W]
\right\|_{L^2(\sP_{A,X,W})}.
\]
It does not imply
\[
\|\hat\varphi-\varphi_0\|_{L^2(\sP_{A,X,Z})}\to0
\]
without an additional inverse-stability assumption. No strong bridge convergence is used in the dose-response result below.
\end{remark}

\begin{remark}[Role of density-ratio estimation]
The density-ratio nuisance enters only through
\[
\mathcal E_{r}
=
\|\hat r-r_0\|_{L^2(\sP_{Q_\varphi})}^2.
\]
The treatment bridge is trained against the plug-in target \(\hat r\), while the causal target is defined by \(r_0\). Corollary~\ref{cor:treatment_projected_consistency} separates these two errors explicitly. Establishing rates for \(\mathcal E_{r}\) depends on the chosen density-ratio estimator and is treated as an external nuisance problem.
\end{remark}

\subsection{Dose-response consistency of TreatmentNet}
\label{app:treatment_dose_response_rademacher}

We now convert the projected treatment-bridge result into consistency of the
ATE dose-response curve. This step does not require strong convergence of
\(\hat\varphi\) in \(L^2(\sP_{A,X,Z})\). Instead, it uses the outcome bridge as the
dual object and controls the population dose-response error through the
projected treatment-bridge residual.

Recall that
\[
Q_\varphi := (A,X,W),
\qquad
r_0(a,x,w)
=
\frac{p_A(a)}{p_{A\mid X,W}(a\mid x,w)}.
\]
The treatment-bridge representation of the ATE dose-response curve is
\[
f_{\mathrm{ATE}}(a)
=
\mathbb E[Y\varphi_0(a,X,Z)\mid A=a],
\]
where
\[
T_\varphi\varphi_0=r_0.
\]
Given the learned treatment bridge \(\hat\varphi\), define the population plug-in treatment-side curve
\[
\bar f_{\hat\varphi}^{(\varphi)}(a)
:=
\mathbb E[Y\hat\varphi(a,X,Z)\mid A=a].
\]

\begin{assumption}[Dual outcome bridge and boundedness]
\label{ass:treatment_dr_dual_outcome_bridge}
Assume that:
\begin{itemize}
    \item There exists an outcome bridge \(h_0\) satisfying
    \[
    \mathbb E[Y-h_0(a,X,W)\mid A=a,X,Z]=0.
    \]

    \item The outcome bridge has uniformly bounded conditional second moment:
    \[
    C_h^2
    :=
    \operatorname*{ess\,sup}_{a\sim \sP_A}
    \mathbb E[h_0(a,X,W)^2\mid A=a]
    <\infty.
    \]
\end{itemize}
\end{assumption}

\begin{lemma}[Projected treatment residual controls the population curve]
\label{lem:treatment_projected_to_dose_response}
For any square-integrable candidate treatment bridge \(\varphi(a,X,Z)\),
\[
\left|
\bar f_{\varphi}^{(\varphi)}(a)-f_{\mathrm{ATE}}(a)
\right|
\le
C_h(a)
\left\|
T_\varphi\varphi(a,X,W)-r_0(a,X,W)
\right\|_{L^2(\sP_{X,W\mid A=a})},
\]
where
\[
C_h(a)^2
:=
\mathbb E[h_0(a,X,W)^2\mid A=a].
\]
Consequently,
\[
\|\bar f_{\varphi}^{(\varphi)}-f_{\mathrm{ATE}}\|_{L^2(\sP_A)}^2
\le
C_h^2
\|T_\varphi\varphi-r_0\|_{L^2(\sP_{Q_\varphi})}^2.
\]
\end{lemma}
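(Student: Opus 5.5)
The plan is to mirror the outcome-side lemma (weak residual controls dose-response error), with the roles of the two bridges swapped. The dual outcome bridge \(h_0\) will turn the treatment-side error into an inner product against the projected residual \(T_\varphi\varphi-r_0\).

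First, fix \(a\) and use the outcome bridge equation of Assumption~\ref{ass:treatment_dr_dual_outcome_bridge}. Conditioning on \((A=a,X,Z)\) gives
\[
\bar f_{\varphi}^{(\varphi)}(a)=\E[Y\varphi(a,X,Z)\mid A=a]=\E[\varphi(a,X,Z)\,\E[Y\mid A=a,X,Z]\mid A=a]=\E[\varphi(a,X,Z)h_0(a,X,W)\mid A=a].
\]
Here the last step reinserts \(\E[h_0(a,X,W)\mid A=a,X,Z]\) and uses the tower property. Conditioning instead on \((A=a,X,W)\), we then obtain
\[
\E[\varphi(a,X,Z)h_0(a,X,W)\mid A=a]=\E[h_0(a,X,W)\,(T_\varphi\varphi)(a,X,W)\mid A=a].
\]

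Second, handle the target term. By outcome-bridge identification (Theorem~\ref{theorem:OutcomeBridgeIdentificationAllCausalFunctions}), \(f_{\mathrm{ATE}}(a)=\E[h_0(a,X,W)]\). The Bayes-rule computation already used in the outcome-side lemma gives \(\E[h_0(a,X,W)r_0(a,X,W)\mid A=a]=\E[h_0(a,X,W)]\), because \(r_0\,p_{X,W\mid A=a}=p_{X,W}\). Subtracting the two representations yields
\[
\bar f_{\varphi}^{(\varphi)}(a)-f_{\mathrm{ATE}}(a)=\E\big[h_0(a,X,W)\{T_\varphi\varphi-r_0\}(a,X,W)\mid A=a\big].
\]
Cauchy--Schwarz in \(L^2(\sP_{X,W\mid A=a})\) then gives the pointwise bound with constant \(C_h(a)\).

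Third, square the pointwise bound and integrate over \(a\sim\sP_A\). Bounding \(C_h(a)^2\le C_h^2\) by the essential supremum, the disintegration \(\sP_A\otimes\sP_{X,W\mid A}=\sP_{Q_\varphi}\) produces \(C_h^2\|T_\varphi\varphi-r_0\|^2_{L^2(\sP_{Q_\varphi})}\). This is exactly the same integration step as in Theorem~\ref{thm:outcome_dose_response_consistency}.

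The main obstacle is the first step. It requires that the evaluation \(a\) be plugged into the conditional law \(A=a\), so that \(\E[Y\mid A=a,X,Z]=T_hh_0(a,\cdot)\) holds \(\sP_{X,Z\mid A=a}\)-a.e. for \(\sP_A\)-almost every \(a\). It also requires square-integrability of \(\varphi(a,\cdot)h_0(a,\cdot)\) so that the tower steps are legitimate. I would justify both via regular conditional distributions and the standing \(L^2\) assumptions, using Cauchy--Schwarz with \(C_h(a)<\infty\) and \(\varphi(a,\cdot)\in L^2(\sP_{X,Z\mid A=a})\) to ensure integrability.
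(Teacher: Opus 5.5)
Your proof is correct, but it takes a different route from the paper's.

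\textbf{The paper's route.} The paper starts from the treatment-bridge representation $f_{\mathrm{ATE}}(a)=\E[Y\varphi_0(a,X,Z)\mid A=a]$, which gives $\bar f_{\varphi}^{(\varphi)}(a)-f_{\mathrm{ATE}}(a)=\E[Y\{\varphi(a,X,Z)-\varphi_0(a,X,Z)\}\mid A=a]$. It then uses orthogonality of $Y-h_0$ to functions of $(X,Z)$ to replace $Y$ by $h_0$. Finally it conditions on $(X,W)$ and uses $T_\varphi\varphi_0=r_0$ to produce the projected residual. This is exactly the doubly robust product identity of Lemma~\ref{lem:ate_dr_identity_projected} specialized to $\hat h=0$.

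\textbf{Your route.} You never introduce $\varphi_0$. You write $\bar f_{\varphi}^{(\varphi)}(a)=\E[h_0\,T_\varphi\varphi\mid A=a]$ using only the outcome bridge equation. You write the target as $\E[h_0\,r_0\mid A=a]$ using outcome-bridge identification plus the reweighting identity $r_0\,p_{X,W\mid A=a}=p_{X,W}$. Then you subtract.

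\textbf{What each buys.} Your argument needs only the outcome bridge, which is all Assumption~\ref{ass:treatment_dr_dual_outcome_bridge} actually posits, together with positivity (covered by the bound $|r_0|\le B_r$). It does not need a treatment bridge to exist. So it still controls the TreatmentNet curve when, for example, $r_0$ lies in the closure of the range of $T_\varphi$ but no exact solution $\varphi_0$ exists. The paper's argument instead delegates the density-ratio manipulation to the treatment-bridge identification theorem. In exchange it exhibits the intermediate bias formula $\E[h_0(\varphi-\varphi_0)\mid A=a]$, which makes the link to the DR remainder and the dual role of $h_0$ explicit.
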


\begin{proof}
For any \(\varphi\),
\[
\bar f_\varphi^{(\varphi)}(a)-f_{\mathrm{ATE}}(a)
=
\mathbb E[
Y\{\varphi(a,X,Z)-\varphi_0(a,X,Z)\}
\mid A=a].
\]
Since \(\varphi(a,X,Z)-\varphi_0(a,X,Z)\) is a function of \((X,Z)\), the outcome-bridge equation gives
\[
\mathbb E[
\{Y-h_0(a,X,W)\}
\{\varphi(a,X,Z)-\varphi_0(a,X,Z)\}
\mid A=a]
=0.
\]
Therefore,
\[
\bar f_\varphi^{(\varphi)}(a)-f_{\mathrm{ATE}}(a)
=
\mathbb E[
h_0(a,X,W)
\{\varphi(a,X,Z)-\varphi_0(a,X,Z)\}
\mid A=a].
\]
Conditioning on \((X,W)\),
\[
\bar f_\varphi^{(\varphi)}(a)-f_{\mathrm{ATE}}(a)
=
\mathbb E\!\left[
h_0(a,X,W)
\mathbb E[
\varphi(a,X,Z)-\varphi_0(a,X,Z)
\mid A=a,X,W]
\mid A=a
\right].
\]
Since \(T_\varphi\varphi_0=r_0\), the inner conditional expectation equals
\[
T_\varphi\varphi(a,X,W)-r_0(a,X,W).
\]
Cauchy--Schwarz gives
\[
\left|
\bar f_{\varphi}^{(\varphi)}(a)-f_{\mathrm{ATE}}(a)
\right|^2
\le
C_h(a)^2
\left\|
T_\varphi\varphi(a,X,W)-r_0(a,X,W)
\right\|_{L^2(\sP_{X,W\mid A=a})}^2.
\]
Integrating over \(a\sim \sP_A\) and using
\[
C_h(a)^2\le C_h^2
\]
almost surely proves the final display.
\end{proof}

\begin{theorem}[Population dose-response error of TreatmentNet]
\label{thm:treatment_population_dose_response_plugin}
Let
\[
\mathcal E_{r}
:=
\|\hat r-r_0\|_{L^2(\sP_{Q_\varphi})}^2
\]
be the density-ratio estimation error from Appendix~\ref{app:treatment_bridge_consistency_rademacher}.
Then
\[
\|\bar f_{\hat\varphi}^{(\varphi)}-f_{\mathrm{ATE}}\|_{L^2(\sP_A)}^2
\le
2C_h^2
\left\{
\|T_\varphi\hat\varphi-\hat r\|_{L^2(\sP_{Q_\varphi})}^2
+
\mathcal E_{r}
\right\}.
\]
Consequently, on the event of
Theorem~\ref{thm:treatment_projected_plugin_rademacher},
\begin{align*}
\|\bar f_{\hat\varphi}^{(\varphi)}-f_{\mathrm{ATE}}\|_{L^2(\sP_A)}^2
\le\;&
2C_h^2
\Big[
4\kappa_{\varphi,2,\nu_\varphi}^{\hat r}
+
6B_\theta^2\kappa_{\varphi,1,\nu_\varphi}
\\
&\quad+
12B_\theta^2\Delta_{\varphi,1,\nu_\varphi}(\delta/2)
+
4\Delta_{\varphi,2,\nu_\varphi}^{\hat r}(\delta/2)
+
\mathcal E_{r}
\Big].
\end{align*}
\end{theorem}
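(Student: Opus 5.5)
The plan is to reduce the claim to Lemma~\ref{lem:treatment_projected_to_dose_response} applied with \(\varphi=\hat\varphi\), and then to insert the plug-in target \(\hat r\) with the elementary inequality \(\|u+v\|^2\le 2\|u\|^2+2\|v\|^2\). Two conventions are needed first. We condition on the training samples \(D_1^{(\varphi)}\), \(D_2^{(\varphi)}\) and \(\mathcal D_r^{(\varphi)}\), so that \(\hat\varphi\) is a fixed, bounded, square-integrable function; boundedness follows from Assumption~\ref{ass:treatment_boundedness_rademacher}, because \(|\hat\varphi|\le B_\theta B_\phi\). We also work under Assumption~\ref{ass:treatment_dr_dual_outcome_bridge}, so that \(C_h<\infty\). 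The population curve \(\bar f^{(\varphi)}_{\hat\varphi}\) is then a deterministic function given \(\hat\varphi\), and the lemma applies directly.

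\textbf{Step 1.} The lemma gives
\[
\|\bar f_{\hat\varphi}^{(\varphi)}-f_{\mathrm{ATE}}\|_{L^2(\sP_A)}^2\le C_h^2\|T_\varphi\hat\varphi-r_0\|_{L^2(\sP_{Q_\varphi})}^2 .
\]
Behind this is the fact that the outcome-bridge equation cancels the \(Y-h_0\) part, and that conditioning on \((X,W)\) turns the difference \(\hat\varphi-\varphi_0\) into \(T_\varphi\hat\varphi-r_0\).

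\textbf{Step 2.} Write \(T_\varphi\hat\varphi-r_0=(T_\varphi\hat\varphi-\hat r)+(\hat r-r_0)\). The elementary inequality then gives
\[
\|T_\varphi\hat\varphi-r_0\|^2\le 2\|T_\varphi\hat\varphi-\hat r\|^2+2\mathcal E_r .
\]
Combining this with Step 1 yields the first display of the theorem, with constant \(2C_h^2\).

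\textbf{Step 3.} On the high-probability event of Theorem~\ref{thm:treatment_projected_plugin_rademacher}, which holds conditionally on \(\mathcal D_r^{(\varphi)}\), substitute its bound for \(\|T_\varphi\hat\varphi-\hat r\|^2\). This gives the bracketed expression in the second display.

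The only delicate point is measurability and conditioning. The identity \(T_\varphi\hat\varphi(a,X,W)=\E[\hat\varphi(a,X,Z)\mid A=a,X,W]\) must be evaluated with \(\hat\varphi\) held fixed. This requires that \(\hat\varphi\) be independent of the population law used in the lemma, and the independent sample splits assumed at the start of the appendix guarantee this. With that in place, the argument is a two-line combination of previously established results; no new empirical-process control is needed.
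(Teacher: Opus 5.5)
Your proposal is correct and follows the paper's proof essentially step for step: apply Lemma~\ref{lem:treatment_projected_to_dose_response} with \(\varphi=\hat\varphi\), split \(T_\varphi\hat\varphi-r_0=(T_\varphi\hat\varphi-\hat r)+(\hat r-r_0)\) using \(\|u+v\|^2\le 2\|u\|^2+2\|v\|^2\), then substitute the bound from Theorem~\ref{thm:treatment_projected_plugin_rademacher}. Your independence caveat is harmless but unnecessary: the lemma is a deterministic inequality for any fixed square-integrable candidate, so it applies to \(\hat\varphi\) with \(\hat\varphi\) held fixed.
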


\begin{proof}
By Lemma~\ref{lem:treatment_projected_to_dose_response},
\[
\|\bar f_{\hat\varphi}^{(\varphi)}-f_{\mathrm{ATE}}\|_{L^2(\sP_A)}^2
\le
C_h^2
\|T_\varphi\hat\varphi-r_0\|_{L^2(\sP_{Q_\varphi})}^2.
\]
Using
\[
T_\varphi\hat\varphi-r_0
=
(T_\varphi\hat\varphi-\hat r)+(\hat r-r_0)
\]
and \(\|u+v\|^2\le2\|u\|^2+2\|v\|^2\), we obtain
\[
\|T_\varphi\hat\varphi-r_0\|_{L^2(\sP_{Q_\varphi})}^2
\le
2\|T_\varphi\hat\varphi-\hat r\|_{L^2(\sP_{Q_\varphi})}^2
+
2\mathcal E_{r}.
\]
This proves the first display. The second display follows by substituting the projected plug-in treatment-bridge bound from
Theorem~\ref{thm:treatment_projected_plugin_rademacher}.
\end{proof}

\paragraph{Third-stage regression.}
The empirical TreatmentNet dose-response estimator is obtained by a third-stage regression of
\[
Y\hat\varphi(A,X,Z)
\]
on \(A\). Let
\(
D_3^{(\varphi)}
=
\{(a_i,y_i,x_i,z_i)\}_{i=1}^{t_\varphi}
\)
be an independent third-stage sample with $t_\varphi$ number of observations
\[
\mathcal F_{t_\varphi}^{(\varphi)}
\]
be a scalar regression class \(f:\mathcal A\to\mathbb R\). Define
\[
\hat f_{\mathrm{ATE}}^{(\varphi)}
\in
\operatorname*{arg\,min}_{f\in\mathcal F_{t_\varphi}^{(\varphi)}}
\frac1{t_\varphi}
\sum_{i=1}^{t_\varphi}
\{y_i\hat\varphi(a_i,x_i,z_i)-f(a_i)\}^2.
\]
The population third-stage regression target is
\[
\bar f_{\hat\varphi}^{(\varphi)}(a)
=
\mathbb E[Y\hat\varphi(a,X,Z)\mid A=a].
\]
Define the third-stage approximation error
\[
\kappa_{\varphi,3,t_\varphi}
:=
\inf_{f\in\mathcal F_{t_\varphi}^{(\varphi)}}
\|f-\bar f_{\hat\varphi}^{(\varphi)}\|_{L^2(\sP_A)}^2.
\]
Conditional on \(\hat\varphi\), define the third-stage loss class
\[
\mathcal L_{\varphi,3,t_\varphi}(\hat\varphi)
:=
\left\{
(y,a,x,z)\mapsto
\left(y\hat\varphi(a,x,z)-f(a)\right)^2
:
f\in\mathcal F_{t_\varphi}^{(\varphi)}
\right\}.
\]

\begin{assumption}[Third-stage boundedness]
\label{ass:treatment_third_stage_boundedness}
Conditional on the trained treatment bridge \(\hat\varphi\), assume that:
\begin{itemize}
    \item The third-stage loss class is uniformly bounded:
    \[
    \sup_{\ell\in\mathcal L_{\varphi,3,t_\varphi}(\hat\varphi)}
    |\ell|
    \le
    M_{\varphi,3}
    \]
    for some finite constant \(M_{\varphi,3}\).

    \item The third-stage sample \(D_3^{(\varphi)}\) is independent of the samples used to train \(\hat r\) and \(\hat\varphi\).
\end{itemize}
\end{assumption}

Define the third-stage empirical process error
\[
\Delta_{\varphi,3,t_\varphi}(\delta)
:=
2\widehat{\mathfrak R}_{D_3^{(\varphi)}}(
\mathcal L_{\varphi,3,t_\varphi}(\hat\varphi))
+
3M_{\varphi,3}
\sqrt{\frac{\log(2/\delta)}{2t_\varphi}}.
\]

\begin{lemma}[Third-stage Rademacher bound]
\label{lem:treatment_third_stage_rademacher}
Under Assumption~\ref{ass:treatment_third_stage_boundedness}, conditional on the trained treatment bridge \(\hat\varphi\), with probability at least \(1-\delta\),
\[
\|\hat f_{\mathrm{ATE}}^{(\varphi)}
-
\bar f_{\hat\varphi}^{(\varphi)}\|_{L^2(\sP_A)}^2
\le
\kappa_{\varphi,3,t_\varphi}
+
2\Delta_{\varphi,3,t_\varphi}(\delta).
\]
\end{lemma}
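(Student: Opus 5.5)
This is the same argument as the first-stage lemma: a least-squares projection identity combined with ERM optimality and a uniform deviation bound. We work conditionally on \(\hat\varphi\), which is possible because \(D_3^{(\varphi)}\) is independent of the samples used to train \(\hat r\) and \(\hat\varphi\). Given \(\hat\varphi\), the pseudo-outcome \(\tilde Y:=Y\hat\varphi(A,X,Z)\) is a fixed measurable function of the observation, and \(\bar f_{\hat\varphi}^{(\varphi)}(A)=\E[\tilde Y\mid A,\hat\varphi]\) is its regression function on \(A\).

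Define the conditional population risk \(R_3(f):=\E[\{\tilde Y-f(A)\}^2\mid\hat\varphi]\) and its empirical counterpart \(\widehat R_3(f)\) over \(D_3^{(\varphi)}\).

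\textbf{Step 1 (projection identity).} Expand \(\tilde Y-f(A)=\{\tilde Y-\bar f_{\hat\varphi}^{(\varphi)}(A)\}+\{\bar f_{\hat\varphi}^{(\varphi)}(A)-f(A)\}\). Conditioning on \(A\) kills the cross term, so
\[
R_3(f)-R_3(\bar f_{\hat\varphi}^{(\varphi)})=\|f-\bar f_{\hat\varphi}^{(\varphi)}\|_{L^2(\sP_A)}^2 \quad \text{for every measurable } f.
\]
This needs \(\tilde Y\) square-integrable, which follows from the bounded loss class in Assumption~\ref{ass:treatment_third_stage_boundedness}.

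\textbf{Step 2 (ERM comparison).} Let \(\Delta_3:=\sup_{f\in\mathcal F^{(\varphi)}_{t_\varphi}}|R_3(f)-\widehat R_3(f)|\). Since \(\hat f_{\mathrm{ATE}}^{(\varphi)}\) is an empirical minimizer, for any \(f\in\mathcal F^{(\varphi)}_{t_\varphi}\) we have
\[
R_3(\hat f)\le \widehat R_3(\hat f)+\Delta_3\le \widehat R_3(f)+\Delta_3\le R_3(f)+2\Delta_3 .
\]
Subtracting \(R_3(\bar f_{\hat\varphi}^{(\varphi)})\) and applying Step 1 gives \(\|\hat f-\bar f\|^2\le\|f-\bar f\|^2+2\Delta_3\). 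Taking the infimum over \(f\) turns the first term into \(\kappa_{\varphi,3,t_\varphi}\).

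\textbf{Step 3 (uniform deviation).} Conditionally on \(\hat\varphi\), the loss class \(\mathcal L_{\varphi,3,t_\varphi}(\hat\varphi)\) is a fixed class with values in \([0,M_{\varphi,3}]\), and \(D_3^{(\varphi)}\) is i.i.d. The stated Rademacher uniform deviation inequality therefore gives \(\Delta_3\le\Delta_{\varphi,3,t_\varphi}(\delta)\) with probability at least \(1-\delta\). This completes the bound.

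The only delicate point is the conditioning argument. The loss class depends on the data through \(\hat\varphi\), so the deviation inequality may be applied only after conditioning. That conditioning is legitimate exactly because of the sample-independence clause in Assumption~\ref{ass:treatment_third_stage_boundedness}. Since the probability bound holds for almost every realization of \(\hat\varphi\), it also holds unconditionally.
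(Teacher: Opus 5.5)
Your proposal is correct and follows the paper's proof. Both condition on \(\hat\varphi\), apply the least-squares projection identity for the regression of \(Y\hat\varphi(A,X,Z)\) on \(A\), and combine empirical optimality of \(\hat f_{\mathrm{ATE}}^{(\varphi)}\) with the uniform deviation bound on \(\mathcal L_{\varphi,3,t_\varphi}(\hat\varphi)\); you simply write out the ERM comparison chain and the role of sample independence, which the paper leaves implicit.
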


\begin{proof}
Conditional on \(\hat\varphi\), this is ordinary least-squares regression with response
\[
\Gamma_\varphi
:=
Y\hat\varphi(A,X,Z)
\]
and covariate \(A\). Since
\[
\bar f_{\hat\varphi}^{(\varphi)}(A)
=
\mathbb E[\Gamma_\varphi\mid A,\hat\varphi],
\]
the least-squares projection identity gives
\[
\mathbb E[
\{\Gamma_\varphi-f(A)\}^2
\mid \hat\varphi]
-
\mathbb E[
\{\Gamma_\varphi-\bar f_{\hat\varphi}^{(\varphi)}(A)\}^2
\mid \hat\varphi]
=
\|f-\bar f_{\hat\varphi}^{(\varphi)}\|_{L^2(\sP_A)}^2.
\]
Applying the uniform deviation inequality to
\(\mathcal L_{\varphi,3,t_\varphi}(\hat\varphi)\) and using empirical optimality of
\(\hat f_{\mathrm{ATE}}^{(\varphi)}\) proves the claim.
\end{proof}

\begin{theorem}[TreatmentNet dose-response consistency]
\label{thm:treatment_dose_response_rademacher}
Suppose Assumptions~\ref{ass:treatment_dr_dual_outcome_bridge} and
\ref{ass:treatment_third_stage_boundedness} hold. On the intersection of the high-probability events from
Theorem~\ref{thm:treatment_projected_plugin_rademacher} and
Lemma~\ref{lem:treatment_third_stage_rademacher},
\begin{align*}
\|\hat f_{\mathrm{ATE}}^{(\varphi)}-f_{\mathrm{ATE}}\|_{L^2(\sP_A)}^2
\le\;&
2\kappa_{\varphi,3,t_\varphi}
+
4\Delta_{\varphi,3,t_\varphi}(\delta)
\\
&+
4C_h^2
\Big[
4\kappa_{\varphi,2,\nu_\varphi}^{\hat r}
+
6B_\theta^2\kappa_{\varphi,1,\nu_\varphi}
\\
&\qquad\qquad+
12B_\theta^2\Delta_{\varphi,1,\nu_\varphi}(\delta/2)
+
4\Delta_{\varphi,2,\nu_\varphi}^{\hat r}(\delta/2)
+
\mathcal E_{r}
\Big].
\end{align*}
\end{theorem}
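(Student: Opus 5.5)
The bound follows from a two-term triangle decomposition around the population plug-in curve \(\bar f_{\hat\varphi}^{(\varphi)}(a)=\E[Y\hat\varphi(a,X,Z)\mid A=a]\). We write
\[
\hat f_{\mathrm{ATE}}^{(\varphi)}-f_{\mathrm{ATE}}
=
\bigl(\hat f_{\mathrm{ATE}}^{(\varphi)}-\bar f_{\hat\varphi}^{(\varphi)}\bigr)
+
\bigl(\bar f_{\hat\varphi}^{(\varphi)}-f_{\mathrm{ATE}}\bigr),
\]
and apply \(\|u+v\|_{L^2(\sP_A)}^2\le 2\|u\|^2+2\|v\|^2\). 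The first term is a third-stage regression error, and the second is the population treatment-side bias. This accounts for the leading factors \(2\) and \(4=2\cdot 2\) in the statement.

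For the first term, we condition on the trained bridge \(\hat\varphi\), and hence on \(D_1^{(\varphi)},D_2^{(\varphi)},\mathcal D_r^{(\varphi)}\). By Assumption~\ref{ass:treatment_third_stage_boundedness}, \(D_3^{(\varphi)}\) is independent of these, so Lemma~\ref{lem:treatment_third_stage_rademacher} applies directly. On its event,
\[
\|\hat f_{\mathrm{ATE}}^{(\varphi)}-\bar f_{\hat\varphi}^{(\varphi)}\|_{L^2(\sP_A)}^2\le\kappa_{\varphi,3,t_\varphi}+2\Delta_{\varphi,3,t_\varphi}(\delta).
\]
Multiplying by \(2\) gives the terms \(2\kappa_{\varphi,3,t_\varphi}+4\Delta_{\varphi,3,t_\varphi}(\delta)\).

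For the second term, Theorem~\ref{thm:treatment_population_dose_response_plugin} holds on the event of Theorem~\ref{thm:treatment_projected_plugin_rademacher}. That theorem itself rests on Lemma~\ref{lem:treatment_projected_to_dose_response}, the duality argument using the outcome bridge \(h_0\) from Assumption~\ref{ass:treatment_dr_dual_outcome_bridge}. It gives
\[
\|\bar f_{\hat\varphi}^{(\varphi)}-f_{\mathrm{ATE}}\|^2\le 2C_h^2\bigl[4\kappa^{\hat r}_{\varphi,2,\nu_\varphi}+6B_\theta^2\kappa_{\varphi,1,\nu_\varphi}+12B_\theta^2\Delta_{\varphi,1,\nu_\varphi}(\delta/2)+4\Delta^{\hat r}_{\varphi,2,\nu_\varphi}(\delta/2)+\mathcal E_r\bigr].
\]
Multiplying by \(2\) yields exactly the bracketed \(4C_h^2[\cdots]\) term.

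The only delicate step is combining the two high-probability statements. The third-stage event is conditional on \(\hat\varphi\), while the projected-bridge event is conditional on \(\mathcal D_r^{(\varphi)}\) and concerns \(D_1^{(\varphi)},D_2^{(\varphi)}\). The theorem is stated on the intersection of these events, so no probability bookkeeping is needed beyond noting that the deterministic inequalities hold there. If an unconditional probability is wanted, a union bound gives at least \(1-2\delta\), using sample independence to integrate out the conditioning. Everything else is substitution.
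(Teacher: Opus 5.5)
Your proposal is correct and follows the paper's proof: you split around $\bar f_{\hat\varphi}^{(\varphi)}$ using $(u+v)^2\le 2u^2+2v^2$, bound the third-stage term with Lemma~\ref{lem:treatment_third_stage_rademacher}, and bound the population bias with Theorem~\ref{thm:treatment_population_dose_response_plugin}. Your constants, $2\kappa_{\varphi,3,t_\varphi}+4\Delta_{\varphi,3,t_\varphi}(\delta)$ and $4C_h^2[\cdots]$, match the statement, and your note on intersecting the two high-probability events is accurate.
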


\begin{proof}
Decompose
\[
\hat f_{\mathrm{ATE}}^{(\varphi)}-f_{\mathrm{ATE}}
=
(\hat f_{\mathrm{ATE}}^{(\varphi)}-\bar f_{\hat\varphi}^{(\varphi)})
+
(\bar f_{\hat\varphi}^{(\varphi)}-f_{\mathrm{ATE}}).
\]
Using \((u+v)^2\le2u^2+2v^2\),
\[
\|\hat f_{\mathrm{ATE}}^{(\varphi)}-f_{\mathrm{ATE}}\|_{L^2(\sP_A)}^2
\le
2\|\hat f_{\mathrm{ATE}}^{(\varphi)}-\bar f_{\hat\varphi}^{(\varphi)}\|_{L^2(\sP_A)}^2
+
2\|\bar f_{\hat\varphi}^{(\varphi)}-f_{\mathrm{ATE}}\|_{L^2(\sP_A)}^2.
\]
The first term is controlled by Lemma~\ref{lem:treatment_third_stage_rademacher}. The second term is controlled by Theorem~\ref{thm:treatment_population_dose_response_plugin}. Substituting both bounds gives the result.
\end{proof}

\begin{corollary}[TreatmentNet dose-response consistency]
\label{cor:treatmentnet_ate_consistency}
Suppose
\[
\kappa_{\varphi,1,\nu_\varphi}\to0,
\qquad
\kappa_{\varphi,2,\nu_\varphi}^{0}\to0,
\qquad
\kappa_{\varphi,3,t_\varphi}\to0,
\qquad
\mathcal E_{r}\to0,
\]
and
\[
\Delta_{\varphi,1,\nu_\varphi}(\delta)\to0,
\qquad
\Delta_{\varphi,2,\nu_\varphi}^{\hat r}(\delta)\to0,
\qquad
\Delta_{\varphi,3,t_\varphi}(\delta)\to0
\]
in probability, with \(n_\varphi,m_\varphi,t_\varphi\to\infty\). If \(C_h<\infty\), then
\[
\|\hat f_{\mathrm{ATE}}^{(\varphi)}-f_{\mathrm{ATE}}\|_{L^2(\sP_A)}
\to0
\]
in probability.
\end{corollary}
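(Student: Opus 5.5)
The plan is to read the corollary directly off Theorem~\ref{thm:treatment_dose_response_rademacher}, after replacing the plug-in approximation term by its oracle counterpart. The only real work is turning the fixed-$\delta$ high-probability bound into a statement about convergence in probability.

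First, I will fix an arbitrary $\delta\in(0,1)$ and work on the intersection event $\Omega_\delta$ from Theorem~\ref{thm:treatment_dose_response_rademacher}. That event combines the event of Theorem~\ref{thm:treatment_projected_plugin_rademacher} at level $\delta$ with the event of Lemma~\ref{lem:treatment_third_stage_rademacher} at level $\delta$. By a union bound, and after integrating out the conditioning on $\mathcal D_r^{(\varphi)}$ and $\hat\varphi$ (both conditional probabilities are bounded uniformly in the conditioning variables), $\Omega_\delta$ has unconditional probability at least $1-2\delta$. On $\Omega_\delta$, the theorem gives
\[
\|\hat f_{\mathrm{ATE}}^{(\varphi)}-f_{\mathrm{ATE}}\|_{L^2(\sP_A)}^2
\le
2\kappa_{\varphi,3,t_\varphi}+4\Delta_{\varphi,3,t_\varphi}(\delta)
+4C_h^2\big[4\kappa_{\varphi,2,\nu_\varphi}^{\hat r}+6B_\theta^2\kappa_{\varphi,1,\nu_\varphi}+12B_\theta^2\Delta_{\varphi,1,\nu_\varphi}(\delta/2)+4\Delta^{\hat r}_{\varphi,2,\nu_\varphi}(\delta/2)+\mathcal E_r\big].
\]
Second, I will substitute the triangle-inequality bound $\kappa_{\varphi,2,\nu_\varphi}^{\hat r}\le 2\kappa^0_{\varphi,2,\nu_\varphi}+2\mathcal E_r$ stated before Corollary~\ref{cor:treatment_projected_consistency}. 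This expresses the right-hand side as $U_\nu(\delta)$, a finite nonnegative combination, with constants depending only on $C_h$ and $B_\theta$, of $\kappa_{\varphi,1}$, $\kappa^0_{\varphi,2}$, $\kappa_{\varphi,3}$, $\mathcal E_r$ and the three $\Delta$ terms evaluated at $\delta$ or $\delta/2$.

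Third, I will show $U_\nu(\delta)\to0$ in probability for each fixed $\delta$. The $\kappa$ terms and $\mathcal E_r$ tend to zero by hypothesis. Each $\Delta$ term is twice an empirical Rademacher complexity plus a deterministic term of order $M\sqrt{\log(4/\delta)/N}$ with $N\in\{n_\varphi,m_\varphi,t_\varphi\}\to\infty$. This is the one point needing care: the hypothesis is stated at a fixed level $\delta$, whereas the theorem uses $\delta/2$. I will handle this by noting that $\Delta(\delta/2)-\Delta(\delta)$ is deterministic and vanishes as $N\to\infty$, because the constants $M_{\varphi,1},M_{\varphi,2},M_{\varphi,3}$ are bounded. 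So the assumption at any fixed level transfers to every level.

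Finally, given $\varepsilon,\eta>0$, I will choose $\delta=\eta/4$ and take $\nu$ large enough that $\Pr(U_\nu(\delta)>\varepsilon)<\eta/2$. Then
\[
\Pr\big(\|\hat f^{(\varphi)}_{\mathrm{ATE}}-f_{\mathrm{ATE}}\|^2_{L^2(\sP_A)}>\varepsilon\big)\le \Pr(\Omega_\delta^c)+\Pr(U_\nu(\delta)>\varepsilon)<\eta,
\]
which is the claimed convergence in probability; taking square roots gives the norm form. The main obstacle I expect is this bookkeeping of the conditional high-probability events and the $\delta$-dependence, rather than any analytic estimate. In particular, I must ensure that Assumption~\ref{ass:treatment_third_stage_boundedness} makes $M_{\varphi,3}$ uniform in $\hat\varphi$. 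If it is not uniform, I would first restrict to an event on which $M_{\varphi,3}$ is bounded, which holds under the head and feature bounds $B_\theta,B_\phi$ together with bounded $Y$.
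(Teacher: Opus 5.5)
Your proposal is correct and follows the paper's proof: substitute $\kappa_{\varphi,2,\nu_\varphi}^{\hat r}\le 2\kappa_{\varphi,2,\nu_\varphi}^{0}+2\mathcal E_r$ into the bound of Theorem~\ref{thm:treatment_dose_response_rademacher} and observe that every term vanishes. The paper states this in one line; your additional steps only make explicit the probabilistic bookkeeping it leaves implicit, namely the fixed-$\delta$ union bound (probability at least $1-2\delta$), the transfer of the $\Delta$ hypotheses from level $\delta$ to level $\delta/2$, and the final $\varepsilon$--$\eta$ argument.
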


\begin{proof}
Use
\[
\kappa_{\varphi,2,\nu_\varphi}^{\hat r}
\le
2\kappa_{\varphi,2,\nu_\varphi}^{0}
+
2\mathcal E_{r}
\]
inside Theorem~\ref{thm:treatment_dose_response_rademacher}. The stated assumptions force every term in the bound to vanish.
\end{proof}

\subsection{Doubly robust dose-response consistency}
\label{app:dr_ate_rademacher}

We now analyze DRPCLNET-V1 for the ATE dose-response curve. The bridge estimators
\(\hat h\) and \(\hat\varphi\), as well as the plug-in density-ratio estimator
\(\hat r\), are trained on samples independent of the final-stage residual-regression
sample and are treated as fixed in the final-stage analysis. The result uses only
projected bridge residuals. It does not require strong \(L^2\)-convergence of either
bridge.

Let
\[
(T_h h)(a,x,z)
:=
\mathbb E[h(a,x,W)\mid A=a,X=x,Z=z],
\]
and
\[
(T_\varphi \varphi)(a,x,w)
:=
\mathbb E[\varphi(a,x,Z)\mid A=a,X=x,W=w].
\]
Write
\[
m_0(a,x,z):=\mathbb E[Y\mid A=a,X=x,Z=z],
\qquad
r_0(a,x,w):=\frac{p_A(a)}{p_{A\mid X,W}(a\mid x,w)}.
\]
The oracle outcome and treatment bridge equations are
\[
T_hh_0=m_0,
\qquad
T_\varphi\varphi_0=r_0.
\]
The density-ratio nuisance error is
\[
\mathcal E_{r}
:=
\|\hat r-r_0\|_{L^2(\sP_{A,X,W})}^2.
\]

\paragraph{Population plug-in DR functional.}
Define
\[
\mu_{\hat h}(a)
:=
\mathbb E[\hat h(a,X,W)]
\]
and
\[
\kappa_{\hat h,\hat\varphi}(a)
:=
\mathbb E[
\hat\varphi(a,X,Z)\{Y-\hat h(a,X,W)\}
\mid A=a].
\]
The population plug-in DRPCLNET-V1 functional is
\[
\bar f_{\mathrm{ATE}}^{\mathrm{DR1}}(a)
:=
\mu_{\hat h}(a)+\kappa_{\hat h,\hat\varphi}(a).
\]

\begin{lemma}[ATE doubly robust identity]
\label{lem:ate_dr_identity_projected}
For every \(a\),
\[
f_{\mathrm{ATE}}(a)
-
\bar f_{\mathrm{ATE}}^{\mathrm{DR1}}(a)
=
\mathbb E[
\{\varphi_0(a,X,Z)-\hat\varphi(a,X,Z)\}
\{h_0(a,X,W)-\hat h(a,X,W)\}
\mid A=a].
\]
\end{lemma}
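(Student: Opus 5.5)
The plan is to expand the right-hand side into four cross terms and evaluate each one using either the outcome bridge equation \(T_hh_0=m_0\) or the treatment bridge equation \(T_\varphi\varphi_0=r_0\), together with the treatment-bridge identification \(f_{\mathrm{ATE}}(a)=\E[Y\varphi_0(a,X,Z)\mid A=a]\) from Theorem~\ref{theorem:TreatmentBridgeIdentificationAllCausalFunctions}. Concretely, we write
\[
\{\varphi_0-\hat\varphi\}\{h_0-\hat h\}
=\varphi_0h_0-\varphi_0\hat h-\hat\varphi h_0+\hat\varphi\hat h,
\]
with all functions evaluated at \((a,X,Z)\) or \((a,X,W)\), and take the conditional expectation given \(A=a\) term by term.

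The treatment-bridge terms come first. For any square-integrable \(g(a,X,W)\), the reweighting computation already used in the proof of the lemma bounding the dose-response error by the weak residual gives
\[
\E[\varphi_0(a,X,Z)g(a,X,W)\mid A=a]=\E[g(a,X,W)r_0(a,X,W)\mid A=a]=\E[g(a,X,W)].
\]
The first equality conditions on \((X,W)\) given \(A=a\) and uses \(T_\varphi\varphi_0=r_0\); the second is Bayes' rule. Applying this with \(g=h_0\) and with \(g=\hat h\), we get \(\E[\varphi_0h_0\mid A=a]=\E[h_0(a,X,W)]\) and \(\E[\varphi_0\hat h\mid A=a]=\mu_{\hat h}(a)\).

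The outcome-bridge terms come next. For any square-integrable \(\psi(a,X,Z)\), conditioning on \((X,Z)\) given \(A=a\) and using \(T_hh_0=m_0\) gives
\[
\E[\psi\, h_0\mid A=a]=\E[\psi\, m_0(a,X,Z)\mid A=a]=\E[\psi\, Y\mid A=a].
\]
With \(\psi=\hat\varphi\) this yields \(\E[\hat\varphi h_0\mid A=a]=\E[Y\hat\varphi\mid A=a]\). With \(\psi=\varphi_0\) it yields \(\E[\varphi_0h_0\mid A=a]=\E[Y\varphi_0\mid A=a]=f_{\mathrm{ATE}}(a)\). This is consistent with the previous step, because \(\E[h_0(a,X,W)]=f_{\mathrm{ATE}}(a)\) by Theorem~\ref{theorem:OutcomeBridgeIdentificationAllCausalFunctions}.

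Assembling the four terms gives
\[
\text{RHS}=f_{\mathrm{ATE}}(a)-\mu_{\hat h}(a)-\E[Y\hat\varphi\mid A=a]+\E[\hat\varphi\hat h\mid A=a]
=f_{\mathrm{ATE}}(a)-\mu_{\hat h}(a)-\kappa_{\hat h,\hat\varphi}(a),
\]
which equals \(f_{\mathrm{ATE}}(a)-\bar f^{\mathrm{DR1}}_{\mathrm{ATE}}(a)\), as claimed.

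The algebra itself is routine. The step needing care is justifying the conditioning identities at a fixed \(a\), that is, for \(\sP_A\)-a.e. \(a\), with the bridge equations holding a.e. under the relevant conditional laws. This requires integrability of the products: \(\hat h,\hat\varphi\) are bounded by the class assumptions, and \(h_0,\varphi_0\) are square-integrable conditionally on \(A=a\), so Cauchy--Schwarz makes every product integrable. It also requires that \(\hat h,\hat\varphi\) be treated as fixed functions, which holds because they are trained on independent samples, as stated in Appendix~\ref{app:dr_ate_rademacher}. With these points checked, each tower-property step is valid.
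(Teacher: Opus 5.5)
Your proof is correct and is essentially the paper's argument. It uses the same three ingredients: the outcome-bridge equation tested against \(\hat\varphi\), the treatment-bridge reweighting identity \(\E[\varphi_0(a,X,Z)q(a,X,W)\mid A=a]=\E[q(a,X,W)]\), and identification of \(f_{\mathrm{ATE}}(a)\). The only difference is bookkeeping: you expand the product into four terms, whereas the paper starts from \(\bar f^{\mathrm{DR1}}_{\mathrm{ATE}}(a)\), drops \(\E[\hat\varphi\{Y-h_0\}\mid A=a]=0\), and applies the reweighting identity once to \(q=\hat h-h_0\).
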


\begin{proof}
By the outcome bridge equation,
\[
\mathbb E[Y-h_0(a,X,W)\mid A=a,X,Z]=0.
\]
Since \(\hat\varphi(a,X,Z)\) is a function of \((X,Z)\),
\[
\mathbb E[
\hat\varphi(a,X,Z)\{Y-h_0(a,X,W)\}
\mid A=a]=0.
\]
Therefore,
\[
\bar f_{\mathrm{ATE}}^{\mathrm{DR1}}(a)
=
\mathbb E[\hat h(a,X,W)]
+
\mathbb E[
\hat\varphi(a,X,Z)\{h_0(a,X,W)-\hat h(a,X,W)\}
\mid A=a].
\]
By the outcome bridge representation,
\[
f_{\mathrm{ATE}}(a)=\mathbb E[h_0(a,X,W)].
\]
Hence
\begin{align*}
\bar f_{\mathrm{ATE}}^{\mathrm{DR1}}(a)-f_{\mathrm{ATE}}(a)
&=
\mathbb E[\hat h(a,X,W)-h_0(a,X,W)]
\\
&\quad+
\mathbb E[
\hat\varphi(a,X,Z)\{h_0(a,X,W)-\hat h(a,X,W)\}
\mid A=a].
\end{align*}
The treatment bridge reweighting identity gives, for any square-integrable
\(q(a,X,W)\),
\[
\mathbb E[\varphi_0(a,X,Z)q(a,X,W)\mid A=a]
=
\mathbb E[q(a,X,W)].
\]
Applying this identity with
\[
q(a,X,W)=\hat h(a,X,W)-h_0(a,X,W)
\]
yields
\[
\mathbb E[\hat h(a,X,W)-h_0(a,X,W)]
=
\mathbb E[
\varphi_0(a,X,Z)\{\hat h(a,X,W)-h_0(a,X,W)\}
\mid A=a].
\]
Substitution gives
\[
\bar f_{\mathrm{ATE}}^{\mathrm{DR1}}(a)-f_{\mathrm{ATE}}(a)
=
\mathbb E[
\{\hat\varphi(a,X,Z)-\varphi_0(a,X,Z)\}
\{h_0(a,X,W)-\hat h(a,X,W)\}
\mid A=a].
\]
Multiplying both sides by \(-1\) proves the claim.
\end{proof}

\paragraph{Projected residual bounds for the population DR remainder.}
Define the conditional bridge-error constants
\[
B_{e,\varphi}^2
:=
\operatorname*{ess\,sup}_{a\sim \sP_A}
\mathbb E[
\{\hat\varphi(a,X,Z)-\varphi_0(a,X,Z)\}^2
\mid A=a],
\]
and
\[
B_{e,h}^2
:=
\operatorname*{ess\,sup}_{a\sim \sP_A}
\mathbb E[
\{h_0(a,X,W)-\hat h(a,X,W)\}^2
\mid A=a].
\]
Under the bounded bridge-class assumptions used in the outcome- and treatment-side
consistency analyses, these constants are finite.

Define the projected residuals
\[
\mathcal R_h^{\mathrm{weak}}
:=
\|T_h\hat h-m_0\|_{L^2(\sP_{A,X,Z})}^2,
\]
\[
\mathcal R_\varphi^{\mathrm{weak},\hat r}
:=
\|T_\varphi\hat\varphi-\hat r\|_{L^2(\sP_{A,X,W})}^2,
\]
and
\[
\mathcal R_\varphi^{\mathrm{weak},0}
:=
\|T_\varphi\hat\varphi-r_0\|_{L^2(\sP_{A,X,W})}^2.
\]
Then
\[
\mathcal R_\varphi^{\mathrm{weak},0}
\le
2\mathcal R_\varphi^{\mathrm{weak},\hat r}
+
2\mathcal E_{r}.
\]

\begin{lemma}[Population DR remainder controlled by projected residuals]
\label{lem:dr_projected_population_bound}
Assume \(B_{e,\varphi}<\infty\) and \(B_{e,h}<\infty\). Then
\[
\|\bar f_{\mathrm{ATE}}^{\mathrm{DR1}}-f_{\mathrm{ATE}}\|_{L^2(\sP_A)}^2
\le
B_{e,\varphi}^2
\mathcal R_h^{\mathrm{weak}},
\]
and
\[
\|\bar f_{\mathrm{ATE}}^{\mathrm{DR1}}-f_{\mathrm{ATE}}\|_{L^2(\sP_A)}^2
\le
B_{e,h}^2
\mathcal R_\varphi^{\mathrm{weak},0}.
\]
Consequently,
\[
\|\bar f_{\mathrm{ATE}}^{\mathrm{DR1}}-f_{\mathrm{ATE}}\|_{L^2(\sP_A)}^2
\le
\min
\left\{
B_{e,\varphi}^2\mathcal R_h^{\mathrm{weak}},
\;
B_{e,h}^2\mathcal R_\varphi^{\mathrm{weak},0}
\right\},
\]
and also
\[
\|\bar f_{\mathrm{ATE}}^{\mathrm{DR1}}-f_{\mathrm{ATE}}\|_{L^2(\sP_A)}^2
\le
\min
\left\{
B_{e,\varphi}^2\mathcal R_h^{\mathrm{weak}},
\;
2B_{e,h}^2
\left(
\mathcal R_\varphi^{\mathrm{weak},\hat r}
+
\mathcal E_{r}
\right)
\right\}.
\]
\end{lemma}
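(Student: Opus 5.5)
The plan is to start from the pointwise product-form remainder of Lemma~\ref{lem:ate_dr_identity_projected}:
\[
f_{\mathrm{ATE}}(a)-\bar f_{\mathrm{ATE}}^{\mathrm{DR1}}(a)
=\E[\{\varphi_0-\hat\varphi\}(a,X,Z)\{h_0-\hat h\}(a,X,W)\mid A=a].
\]
In this product, exactly one factor will be projected onto the conditioning set where it becomes a weak-norm residual. Cauchy--Schwarz conditional on \(A=a\) then uses the uniform constants \(B_{e,\varphi}\) or \(B_{e,h}\) for the other factor. Finally, we integrate over \(a\sim\sP_A\), exactly as in the proofs of Theorem~\ref{thm:outcome_dose_response_consistency} and Lemma~\ref{lem:treatment_projected_to_dose_response}.

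\emph{First bound (outcome side).} Since \(\varphi_0-\hat\varphi\) depends on \((X,Z)\) only (for fixed \(a\)), condition on \((A=a,X,Z)\). The product then equals \(\E[\{\varphi_0-\hat\varphi\}(a,X,Z)\,\E[h_0(a,X,W)-\hat h(a,X,W)\mid A=a,X,Z]\mid A=a]\). Because \(T_hh_0=m_0\), the inner term is \((m_0-T_h\hat h)(a,X,Z)\). Conditional Cauchy--Schwarz gives
\[
|f_{\mathrm{ATE}}(a)-\bar f^{\mathrm{DR1}}_{\mathrm{ATE}}(a)|^2\le B_{e,\varphi}^2\,\|(T_h\hat h-m_0)(a,\cdot)\|^2_{L^2(\sP_{X,Z\mid A=a})}.
\]
Integrating in \(a\) collapses the iterated integral to \(\|T_h\hat h-m_0\|^2_{L^2(\sP_{A,X,Z})}=\mathcal R_h^{\mathrm{weak}}\).

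\emph{Second bound (treatment side).} Symmetrically, condition on \((A=a,X,W)\), using that \(h_0-\hat h\) is a function of \((X,W)\). The inner expectation becomes \((r_0-T_\varphi\hat\varphi)(a,X,W)\) because \(T_\varphi\varphi_0=r_0\). The same Cauchy--Schwarz and integration then yield \(B_{e,h}^2\mathcal R_\varphi^{\mathrm{weak},0}\).

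Taking the minimum of the two bounds gives the third display. The last display follows from \(\mathcal R_\varphi^{\mathrm{weak},0}\le 2\mathcal R_\varphi^{\mathrm{weak},\hat r}+2\mathcal E_r\), obtained from \((u+v)^2\le 2u^2+2v^2\).

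The only delicate point, which I expect to be the main obstacle, is measure-theoretic. We must justify that conditioning on \(A=a\) and then on \((X,Z)\) or \((X,W)\) matches the operators \(T_h,T_\varphi\) evaluated at \(A=a\) for \(\sP_A\)-a.e.\ \(a\). We must also justify that integrating \(\|\cdot\|^2_{L^2(\sP_{X,Z\mid A=a})}\) against \(\sP_A\) reproduces the joint \(L^2\) norm. This is handled by regular conditional distributions and Tonelli, as in the earlier theorems. Finiteness of the cross products is ensured by \(B_{e,\varphi},B_{e,h}<\infty\) together with the square-integrability of the bridges.
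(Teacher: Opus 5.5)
Your proposal is correct and follows the paper's proof essentially step for step. The paper likewise starts from the product identity of Lemma~\ref{lem:ate_dr_identity_projected} and conditions on $(X,Z)$ (resp.\ $(X,W)$) so that $T_hh_0=m_0$ (resp.\ $T_\varphi\varphi_0=r_0$) turns one factor into the projected residual; it then applies conditional Cauchy--Schwarz with $B_{e,\varphi}$ (resp.\ $B_{e,h}$), integrates over $\sP_A$, and concludes with $\mathcal R_\varphi^{\mathrm{weak},0}\le 2\mathcal R_\varphi^{\mathrm{weak},\hat r}+2\mathcal E_r$.
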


\begin{proof}
By Lemma~\ref{lem:ate_dr_identity_projected},
\[
f_{\mathrm{ATE}}(a)-\bar f_{\mathrm{ATE}}^{\mathrm{DR1}}(a)
=
\mathbb E[
\{\varphi_0(a,X,Z)-\hat\varphi(a,X,Z)\}
\{h_0(a,X,W)-\hat h(a,X,W)\}
\mid A=a].
\]

First condition on \((X,Z)\). Since
\[
\mathbb E[h_0(a,X,W)-\hat h(a,X,W)\mid A=a,X,Z]
=
m_0(a,X,Z)-T_h\hat h(a,X,Z),
\]
we obtain
\[
f_{\mathrm{ATE}}(a)-\bar f_{\mathrm{ATE}}^{\mathrm{DR1}}(a)
=
\mathbb E[
\{\varphi_0(a,X,Z)-\hat\varphi(a,X,Z)\}
\{m_0(a,X,Z)-T_h\hat h(a,X,Z)\}
\mid A=a].
\]
By Cauchy--Schwarz,
\begin{align*}
&|f_{\mathrm{ATE}}(a)-\bar f_{\mathrm{ATE}}^{\mathrm{DR1}}(a)|^2
\\
&\quad\le
\mathbb E[
\{\varphi_0(a,X,Z)-\hat\varphi(a,X,Z)\}^2
\mid A=a]
\\
&\qquad\qquad\times
\mathbb E[
\{m_0(a,X,Z)-T_h\hat h(a,X,Z)\}^2
\mid A=a].
\end{align*}
Using the definition of \(B_{e,\varphi}\) and integrating over \(a\sim P_A\) gives
\[
\|\bar f_{\mathrm{ATE}}^{\mathrm{DR1}}-f_{\mathrm{ATE}}\|_{L^2(\sP_A)}^2
\le
B_{e,\varphi}^2
\mathcal R_h^{\mathrm{weak}}.
\]

Second condition on \((X,W)\). Since
\[
\mathbb E[\varphi_0(a,X,Z)-\hat\varphi(a,X,Z)\mid A=a,X,W]
=
r_0(a,X,W)-T_\varphi\hat\varphi(a,X,W),
\]
we similarly obtain
\[
f_{\mathrm{ATE}}(a)-\bar f_{\mathrm{ATE}}^{\mathrm{DR1}}(a)
=
\mathbb E[
\{h_0(a,X,W)-\hat h(a,X,W)\}
\{r_0(a,X,W)-T_\varphi\hat\varphi(a,X,W)\}
\mid A=a].
\]
Cauchy--Schwarz and integration over \(a\sim \sP_A\) give
\[
\|\bar f_{\mathrm{ATE}}^{\mathrm{DR1}}-f_{\mathrm{ATE}}\|_{L^2(\sP_A)}^2
\le
B_{e,h}^2
\mathcal R_\varphi^{\mathrm{weak},0}.
\]
Finally,
\[
\mathcal R_\varphi^{\mathrm{weak},0}
=
\|T_\varphi\hat\varphi-r_0\|_{L^2(\sP_{A,X,W})}^2
\le
2\|T_\varphi\hat\varphi-\hat r\|_{L^2(\sP_{A,X,W})}^2
+
2\|\hat r-r_0\|_{L^2(\sP_{A,X,W})}^2,
\]
which gives the last display.
\end{proof}

\paragraph{Final-stage residual regression.}
Define the residual pseudo-outcome
\[
\Gamma_{\mathrm{DR}}
:=
\hat\varphi(A,X,Z)\{Y-\hat h(A,X,W)\}.
\]
Then
\[
\kappa_{\hat h,\hat\varphi}(a)
=
\mathbb E[\Gamma_{\mathrm{DR}}\mid A=a].
\]
Let
\[
D_{\mathrm{DR}}
=
\{(a_i,\Gamma_i)\}_{i=1}^{n_{\mathrm{DR}}}
\]
be an independent final-stage sample. Let \(\nu_{\mathrm{DR}}\) denote the residual-regression sieve index, and let \(\mathcal K_{\mathrm{DR},\nu_{\mathrm{DR}}}\) be a scalar regression class. Define
\[
\hat\kappa_{\mathrm{DR}}
\in
\operatorname*{arg\,min}_{k\in\mathcal K_{\mathrm{DR},\nu_{\mathrm{DR}}}}
\frac1{n_{\mathrm{DR}}}
\sum_{i=1}^{n_{\mathrm{DR}}}
\{\Gamma_i-k(a_i)\}^2.
\]
Define the approximation error
\[
\kappa_{\mathrm{DR},\nu_{\mathrm{DR}}}
:=
\inf_{k\in\mathcal K_{\mathrm{DR},\nu_{\mathrm{DR}}}}
\|k-\kappa_{\hat h,\hat\varphi}\|_{L^2(\sP_A)}^2.
\]
Define the residual-regression loss class
\[
\mathcal L_{\mathrm{DR},\nu_{\mathrm{DR}}}
:=
\left\{
(\gamma,a)\mapsto
(\gamma-k(a))^2
:
k\in\mathcal K_{\mathrm{DR},\nu_{\mathrm{DR}}}
\right\}.
\]

\begin{assumption}[DR final-stage boundedness]
\label{ass:dr_final_stage_boundedness}
Conditional on the trained bridge estimators \((\hat h,\hat\varphi)\), assume:
\begin{itemize}
    \item The residual-regression loss class is uniformly bounded:
    \[
    \sup_{\ell\in\mathcal L_{\mathrm{DR},\nu_{\mathrm{DR}}}}
    |\ell|
    \le
    M_{\mathrm{DR}}
    \]
    for some finite constant \(M_{\mathrm{DR}}\).

    \item The final-stage sample \(D_{\mathrm{DR}}\) is independent of the samples used to train
    \(\hat h\), \(\hat r\), and \(\hat\varphi\).
\end{itemize}
\end{assumption}

Define
\[
\Delta_{\mathrm{DR},\nu_{\mathrm{DR}}}(\delta)
:=
2\widehat{\mathfrak R}_{D_{\mathrm{DR}}}(\mathcal L_{\mathrm{DR},\nu_{\mathrm{DR}}})
+
3M_{\mathrm{DR}}\sqrt{\frac{\log(2/\delta)}{2n_{\mathrm{DR}}}}.
\]

\begin{lemma}[Final-stage residual-regression bound]
\label{lem:dr_final_stage_rademacher}
Under Assumption~\ref{ass:dr_final_stage_boundedness}, conditional on the trained bridge estimators, with probability at least \(1-\delta\),
\[
\|\hat\kappa_{\mathrm{DR}}-\kappa_{\hat h,\hat\varphi}\|_{L^2(\sP_A)}^2
\le
\kappa_{\mathrm{DR},\nu_{\mathrm{DR}}}
+
2\Delta_{\mathrm{DR},\nu_{\mathrm{DR}}}(\delta).
\]
\end{lemma}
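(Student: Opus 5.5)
This is the third-stage regression argument again, now with the bridge-weighted residual \(\Gamma_{\mathrm{DR}}\) as the response; it mirrors Lemma~\ref{lem:treatment_third_stage_rademacher}. We condition on the trained bridges \((\hat h,\hat\varphi)\) and on the density-ratio sample. Under Assumption~\ref{ass:dr_final_stage_boundedness} the final-stage sample \(D_{\mathrm{DR}}\) is independent of these, so \(\hat h\) and \(\hat\varphi\) act as fixed measurable functions. The pairs \((a_i,\Gamma_i)\) are then i.i.d. draws of \((A,\Gamma_{\mathrm{DR}})\), and \(\kappa_{\hat h,\hat\varphi}(A)=\mathbb E[\Gamma_{\mathrm{DR}}\mid A]\) is exactly the regression function we are estimating.

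The first step is the least-squares projection identity. Write the conditional population risk as \(R_{\mathrm{DR}}(k):=\mathbb E[\{\Gamma_{\mathrm{DR}}-k(A)\}^2\mid \hat h,\hat\varphi]\). Expanding around \(\kappa_{\hat h,\hat\varphi}\), the cross term vanishes by the tower property, which gives
\[
R_{\mathrm{DR}}(k)-R_{\mathrm{DR}}(\kappa_{\hat h,\hat\varphi})=\|k-\kappa_{\hat h,\hat\varphi}\|_{L^2(\sP_A)}^2
\]
for every square-integrable \(k\). Square-integrability of \(\Gamma_{\mathrm{DR}}\) follows from the boundedness of the loss class.

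The second step is the uniform deviation bound. Set \(\Delta:=\sup_{k\in\mathcal K_{\mathrm{DR},\nu_{\mathrm{DR}}}}|R_{\mathrm{DR}}(k)-\widehat R_{\mathrm{DR}}(k)|\), where \(\widehat R_{\mathrm{DR}}\) is the empirical risk. The losses in \(\mathcal L_{\mathrm{DR},\nu_{\mathrm{DR}}}\) take values in \([0,M_{\mathrm{DR}}]\). The Rademacher deviation inequality stated earlier therefore gives \(\Delta\le\Delta_{\mathrm{DR},\nu_{\mathrm{DR}}}(\delta)\) with probability at least \(1-\delta\). Because the class does not depend on \(D_{\mathrm{DR}}\), the inequality applies conditionally.

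The third step is the ERM comparison. For any \(k\in\mathcal K_{\mathrm{DR},\nu_{\mathrm{DR}}}\), empirical optimality of \(\hat\kappa_{\mathrm{DR}}\) gives
\[
R_{\mathrm{DR}}(\hat\kappa_{\mathrm{DR}})\le \widehat R_{\mathrm{DR}}(\hat\kappa_{\mathrm{DR}})+\Delta\le \widehat R_{\mathrm{DR}}(k)+\Delta\le R_{\mathrm{DR}}(k)+2\Delta.
\]
Subtracting \(R_{\mathrm{DR}}(\kappa_{\hat h,\hat\varphi})\) from both ends and applying the projection identity yields
\[
\|\hat\kappa_{\mathrm{DR}}-\kappa_{\hat h,\hat\varphi}\|^2_{L^2(\sP_A)}\le\|k-\kappa_{\hat h,\hat\varphi}\|^2_{L^2(\sP_A)}+2\Delta.
\]
Taking the infimum over \(k\) produces \(\kappa_{\mathrm{DR},\nu_{\mathrm{DR}}}\), and substituting the high-probability bound on \(\Delta\) gives the claim.

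There is no real obstacle here; the only point needing care is measurability and independence. The argument must be carried out conditionally on \((\hat h,\hat\varphi)\), so that both the target \(\kappa_{\hat h,\hat\varphi}\) and the loss class are deterministic with respect to \(D_{\mathrm{DR}}\). The sample split assumed in Assumption~\ref{ass:dr_final_stage_boundedness} is exactly what makes this conditioning valid.
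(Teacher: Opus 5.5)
Your proposal is correct and follows the paper's own argument. You condition on \((\hat h,\hat\varphi)\), apply the least-squares projection identity for \(\kappa_{\hat h,\hat\varphi}(A)=\mathbb E[\Gamma_{\mathrm{DR}}\mid A]\) and the uniform Rademacher deviation bound on \(\mathcal L_{\mathrm{DR},\nu_{\mathrm{DR}}}\), then use empirical optimality and take the infimum over the class, spelling out the ERM chain the paper only cites.

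One minor remark: you only need the projection identity for \(k\in\mathcal K_{\mathrm{DR},\nu_{\mathrm{DR}}}\). There \(|\Gamma_{\mathrm{DR}}-k(A)|\le\sqrt{M_{\mathrm{DR}}}\), so you need no separate square-integrability claim for \(\Gamma_{\mathrm{DR}}\) itself.
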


\begin{proof}
Conditional on \((\hat h,\hat\varphi)\), this is ordinary least-squares regression with response
\(\Gamma_{\mathrm{DR}}\) and covariate \(A\). Since
\[
\kappa_{\hat h,\hat\varphi}(A)
=
\mathbb E[\Gamma_{\mathrm{DR}}\mid A,\hat h,\hat\varphi],
\]
the least-squares projection identity gives
\[
\mathbb E[
\{\Gamma_{\mathrm{DR}}-k(A)\}^2
\mid \hat h,\hat\varphi]
-
\mathbb E[
\{\Gamma_{\mathrm{DR}}-\kappa_{\hat h,\hat\varphi}(A)\}^2
\mid \hat h,\hat\varphi]
=
\|k-\kappa_{\hat h,\hat\varphi}\|_{L^2(\sP_A)}^2.
\]
Applying the uniform deviation inequality to
\(\mathcal L_{\mathrm{DR},\nu_{\mathrm{DR}}}\) and using empirical optimality of
\(\hat\kappa_{\mathrm{DR}}\) yields the result.
\end{proof}


The DRPCLNET-V1 estimator is
\[
\hat f_{\mathrm{ATE}}^{\mathrm{DR1}}(a)
:=
\hat\mu_h(a)+\hat\kappa_{\mathrm{DR}}(a).
\]

\begin{theorem}[DRPCLNET-V1 ATE consistency]
\label{thm:dr_ate_rademacher}
Under Assumption~\ref{ass:dr_final_stage_boundedness}, on the event of
Lemma~\ref{lem:dr_final_stage_rademacher},
\begin{align*}
\|\hat f_{\mathrm{ATE}}^{\mathrm{DR1}}-f_{\mathrm{ATE}}\|_{L^2(\sP_A)}^2
\le\;&
4\mathcal E_{\mu,h}
+
4\kappa_{\mathrm{DR},\nu_{\mathrm{DR}}}
+
8\Delta_{\mathrm{DR},\nu_{\mathrm{DR}}}(\delta)
\\
&+
2
\min
\left\{
B_{e,\varphi}^2\mathcal R_h^{\mathrm{weak}},
\;
2B_{e,h}^2
\left(
\mathcal R_\varphi^{\mathrm{weak},\hat r}
+
\mathcal E_{r}
\right)
\right\}.
\end{align*}

Equivalently, using the oracle projected treatment residual,
\[
\|\hat f_{\mathrm{ATE}}^{\mathrm{DR1}}-f_{\mathrm{ATE}}\|_{L^2(\sP_A)}^2
\le
4\mathcal E_{\mu,h}
+
4\kappa_{\mathrm{DR},\nu_{\mathrm{DR}}}
+
8\Delta_{\mathrm{DR},\nu_{\mathrm{DR}}}(\delta)
+
2
\min
\left\{
B_{e,\varphi}^2\mathcal R_h^{\mathrm{weak}},
\;
B_{e,h}^2\mathcal R_\varphi^{\mathrm{weak},0}
\right\}.
\]
Consequently, if
\[
\mathcal E_{\mu,h}\to0,\qquad
\kappa_{\mathrm{DR},\nu_{\mathrm{DR}}}\to0,\qquad
\Delta_{\mathrm{DR},\nu_{\mathrm{DR}}}(\delta)\to0,
\]
and either
\[
\mathcal R_h^{\mathrm{weak}}\to0
\]
or
\[
\mathcal R_\varphi^{\mathrm{weak},\hat r}\to0
\quad\text{and}\quad
\mathcal E_{r}\to0
\]
in probability, then
\[
\|\hat f_{\mathrm{ATE}}^{\mathrm{DR1}}-f_{\mathrm{ATE}}\|_{L^2(\sP_A)}
\to0.
\]
\end{theorem}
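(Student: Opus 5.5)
The bound will follow from a single triangle-inequality decomposition, with each piece handled by a lemma already established. I would write
\[
\hat f_{\mathrm{ATE}}^{\mathrm{DR1}}-f_{\mathrm{ATE}}
=
(\hat\mu_h-\mu_{\hat h})
+
(\hat\kappa_{\mathrm{DR}}-\kappa_{\hat h,\hat\varphi})
+
(\bar f_{\mathrm{ATE}}^{\mathrm{DR1}}-f_{\mathrm{ATE}}),
\]
using that \(\bar f_{\mathrm{ATE}}^{\mathrm{DR1}}=\mu_{\hat h}+\kappa_{\hat h,\hat\varphi}\) by definition. The first two pieces are estimation errors; the third is the population doubly robust remainder.

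To reach the stated constants I would group the two estimation errors and apply \((u+v)^2\le 2u^2+2v^2\) twice. The first application gives
\[
\|\hat f^{\mathrm{DR1}}_{\mathrm{ATE}}-f_{\mathrm{ATE}}\|^2
\le
2\|(\hat\mu_h-\mu_{\hat h})+(\hat\kappa_{\mathrm{DR}}-\kappa_{\hat h,\hat\varphi})\|^2
+
2\|\bar f^{\mathrm{DR1}}_{\mathrm{ATE}}-f_{\mathrm{ATE}}\|^2 .
\]
The second application bounds the grouped term by \(4\mathcal E_{\mu,h}+4\|\hat\kappa_{\mathrm{DR}}-\kappa_{\hat h,\hat\varphi}\|^2\), where \(\mathcal E_{\mu,h}\) is as defined in Theorem~\ref{thm:outcome_empirical_averaging_error}. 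This yields the coefficients \(4,4,2\) in the statement.

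For the residual-regression term I would invoke Lemma~\ref{lem:dr_final_stage_rademacher}. On its event, \(\|\hat\kappa_{\mathrm{DR}}-\kappa_{\hat h,\hat\varphi}\|^2\le\kappa_{\mathrm{DR},\nu_{\mathrm{DR}}}+2\Delta_{\mathrm{DR},\nu_{\mathrm{DR}}}(\delta)\), which after the factor 4 gives \(4\kappa+8\Delta\).

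For the population remainder I would invoke Lemma~\ref{lem:dr_projected_population_bound}. It gives both the min form involving \(\mathcal R_\varphi^{\mathrm{weak},0}\) and the plug-in form \(2B_{e,h}^2(\mathcal R_\varphi^{\mathrm{weak},\hat r}+\mathcal E_r)\), the latter via \(\mathcal R^{\mathrm{weak},0}_\varphi\le 2\mathcal R^{\mathrm{weak},\hat r}_\varphi+2\mathcal E_r\). Multiplying by 2 yields both displayed inequalities.

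For the consistency conclusion I would note that \(B_{e,\varphi}\) and \(B_{e,h}\) are finite under the bounded-class assumptions. The minimum tends to zero whenever either argument does, so each term of the bound vanishes in probability under the stated hypotheses.

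The main point to check is conditioning and independence. \(\hat h\), \(\hat\varphi\) and \(\hat r\) are trained on samples independent of both \(D_{\mathrm{DR}}\) and the evaluation sample. Lemma~\ref{lem:dr_final_stage_rademacher} and the bound \(\mathcal E_{\mu,h}=O_p(t_h^{-1})\) then hold conditionally on the trained bridges, and the deterministic Lemma~\ref{lem:dr_projected_population_bound} applies pointwise in \(a\). The inequality stated "on the event of Lemma~\ref{lem:dr_final_stage_rademacher}" is therefore purely algebraic once \(\mathcal E_{\mu,h}\) is left as a random quantity. The \(o_p\) conclusion follows by removing the conditioning with the usual dominated argument \(P(\cdot)=\mathbb E[P(\cdot\mid\hat h,\hat\varphi)]\).
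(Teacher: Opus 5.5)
Your proposal is correct and matches the paper's proof: the paper also separates the population remainder $\bar f_{\mathrm{ATE}}^{\mathrm{DR1}}-f_{\mathrm{ATE}}$ from the estimation part $(\hat\mu_h-\mu_{\hat h})+(\hat\kappa_{\mathrm{DR}}-\kappa_{\hat h,\hat\varphi})$. It applies $(u+v)^2\le 2u^2+2v^2$ twice and invokes Lemma~\ref{lem:dr_final_stage_rademacher} and Lemma~\ref{lem:dr_projected_population_bound}, which gives the same constants $4,4,8,2$. Your remarks on conditioning on the trained bridges, and on passing from the high-probability event to convergence in probability, spell out steps the paper leaves implicit.
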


\begin{proof}
Decompose
\[
\hat f_{\mathrm{ATE}}^{\mathrm{DR1}}-f_{\mathrm{ATE}}
=
(\hat f_{\mathrm{ATE}}^{\mathrm{DR1}}-\bar f_{\mathrm{ATE}}^{\mathrm{DR1}})
+
(\bar f_{\mathrm{ATE}}^{\mathrm{DR1}}-f_{\mathrm{ATE}}).
\]
Using \((u+v)^2\le2u^2+2v^2\),
\[
\|\hat f_{\mathrm{ATE}}^{\mathrm{DR1}}-f_{\mathrm{ATE}}\|_{L^2(\sP_A)}^2
\le
2\|\hat f_{\mathrm{ATE}}^{\mathrm{DR1}}-\bar f_{\mathrm{ATE}}^{\mathrm{DR1}}\|_{L^2(\sP_A)}^2
+
2\|\bar f_{\mathrm{ATE}}^{\mathrm{DR1}}-f_{\mathrm{ATE}}\|_{L^2(\sP_A)}^2.
\]
Moreover,
\[
\hat f_{\mathrm{ATE}}^{\mathrm{DR1}}-\bar f_{\mathrm{ATE}}^{\mathrm{DR1}}
=
(\hat\mu_h-\mu_{\hat h})
+
(\hat\kappa_{\mathrm{DR}}-\kappa_{\hat h,\hat\varphi}).
\]
Therefore,
\[
\|\hat f_{\mathrm{ATE}}^{\mathrm{DR1}}-\bar f_{\mathrm{ATE}}^{\mathrm{DR1}}\|_{L^2(\sP_A)}^2
\le
2\mathcal E_{\mu,h}
+
2\|\hat\kappa_{\mathrm{DR}}-\kappa_{\hat h,\hat\varphi}\|_{L^2(\sP_A)}^2.
\]
The residual-regression term is bounded by Lemma~\ref{lem:dr_final_stage_rademacher}. The population DR remainder is bounded by Lemma~\ref{lem:dr_projected_population_bound}. Combining these bounds gives the theorem.
\end{proof}

\begin{corollary}[Projected doubly robust consistency]
\label{cor:dr_ate_projected_consistency}
Suppose
\[
\mathcal E_{\mu,h}\to0,\qquad
\kappa_{\mathrm{DR},\nu_{\mathrm{DR}}}\to0,
\qquad
\Delta_{\mathrm{DR},\nu_{\mathrm{DR}}}(\delta)\to0.
\]
If either the outcome-side projected residual vanishes,
\[
\|T_h\hat h-m_0\|_{L^2(\sP_{A,X,Z})}^2\to0,
\]
or the treatment-side plug-in projected residual and density-ratio error vanish,
\[
\|T_\varphi\hat\varphi-\hat r\|_{L^2(\sP_{A,X,W})}^2\to0,
\qquad
\|\hat r-r_0\|_{L^2(\sP_{A,X,W})}^2\to0,
\]
then
\[
\|\hat f_{\mathrm{ATE}}^{\mathrm{DR1}}-f_{\mathrm{ATE}}\|_{L^2(\sP_A)}
\to0
\]
in probability.
\end{corollary}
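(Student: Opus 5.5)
The plan is to read this off directly from Theorem~\ref{thm:dr_ate_rademacher}, whose first display already isolates every term that has to vanish. Fix $\delta\in(0,1)$. On the event of Lemma~\ref{lem:dr_final_stage_rademacher}, which has conditional probability at least $1-\delta$ given the trained $(\hat h,\hat r,\hat\varphi)$, Theorem~\ref{thm:dr_ate_rademacher} gives
\[
\|\hat f_{\mathrm{ATE}}^{\mathrm{DR1}}-f_{\mathrm{ATE}}\|_{L^2(\sP_A)}^2
\le
4\mathcal E_{\mu,h}+4\kappa_{\mathrm{DR},\nu_{\mathrm{DR}}}+8\Delta_{\mathrm{DR},\nu_{\mathrm{DR}}}(\delta)
+2\min\Bigl\{B_{e,\varphi}^2\mathcal R_h^{\mathrm{weak}},\,2B_{e,h}^2\bigl(\mathcal R_\varphi^{\mathrm{weak},\hat r}+\mathcal E_r\bigr)\Bigr\}.
\]
The first three terms tend to zero in probability by hypothesis.

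For the minimum I will use the elementary bound $\min\{u,v\}\le u$ and $\min\{u,v\}\le v$ for $u,v\ge0$. In the outcome-side case, bound the minimum by $B_{e,\varphi}^2\mathcal R_h^{\mathrm{weak}}$; this tends to zero because $\mathcal R_h^{\mathrm{weak}}=\|T_h\hat h-m_0\|^2\to0$. In the treatment-side case, bound it by $2B_{e,h}^2(\mathcal R_\varphi^{\mathrm{weak},\hat r}+\mathcal E_r)$, which tends to zero by the two assumed convergences. In either case the whole right-hand side is $o_p(1)$.

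To pass from a conditional high-probability event to convergence in probability, I argue as follows. For any $\varepsilon>0$,
\[
\Pr\bigl(\|\hat f^{\mathrm{DR1}}_{\mathrm{ATE}}-f_{\mathrm{ATE}}\|^2>\varepsilon\bigr)
\le \delta+\Pr(\text{RHS}>\varepsilon).
\]
Here I integrate the conditional event probability over the independent training samples. Letting the sample sizes grow and then $\delta\downarrow0$ gives the claim. The hypothesis that $\Delta_{\mathrm{DR},\nu_{\mathrm{DR}}}(\delta)\to0$ is understood for each fixed $\delta$, so this step is legitimate.

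The only step needing care is that $B_{e,\varphi}$ and $B_{e,h}$ must stay bounded, i.e.\ be $O_p(1)$, along the sequence; a convergent weak residual is useless if multiplied by a diverging constant. I will verify this from the uniform boundedness of the bridge classes assumed earlier. Those classes give $|\hat h|\le B_\theta B_\phi$ and $|\hat\varphi|\le B_\theta B_\phi$ on each side. I additionally take as standing assumptions the finite conditional second moments $C_h^2$ of $h_0$ (Assumption~\ref{ass:treatment_dr_dual_outcome_bridge}) and $C_\varphi^2$ of $\varphi_0$, as in Theorem~\ref{thm:outcome_dose_response_consistency}. Then $(u+v)^2\le 2u^2+2v^2$ yields $B_{e,h}^2\le 2C_h^2+2B_\theta^2B_\phi^2$, and similarly $B_{e,\varphi}^2$ is bounded, uniformly in the sample sizes. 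With these constants bounded, the argument above concludes.
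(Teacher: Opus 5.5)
Your proposal is correct and follows the paper's route: the corollary is read off from the bound in Theorem~\ref{thm:dr_ate_rademacher}, bounding the minimum by whichever branch vanishes. Your extra care in letting $\delta\downarrow0$ after the sample sizes grow, and in deriving $B_{e,h}^2\le 2C_h^2+2B_\theta^2B_\phi^2$ (and its treatment-side analogue) from the bounded bridge classes, fills in details that the paper leaves implicit when it simply asserts these constants are finite.
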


\begin{remark}[Projected double robustness]
The theorem gives double robustness in projected-residual form. The estimator is consistent if either the outcome-side projected residual
\[
T_h\hat h-m_0
\]
vanishes, or the treatment-side projected residual
\[
T_\varphi\hat\varphi-\hat r
\]
vanishes together with the density-ratio nuisance error
\(\|\hat r-r_0\|_{L^2(\sP_{A,X,W})}^2\). No strong \(L^2\)-convergence of either bridge is required. A second-order product rate in strong bridge errors would require additional strong-norm or conditional-profile assumptions and is not used here.
\end{remark}

\subsection{Doubly robust heterogeneous dose-response consistency}
\label{app:cate_consistency_rademacher}

We now state the consistency results for the heterogeneous dose-response target
\[
f_{\mathrm{CATE}}(a,v)
:=
\E[Y^{(a)}\mid V=v].
\]
Write \(X=(S,V)\), where \(V\) indexes the heterogeneity of interest and \(S\) collects the remaining observed covariates. All CATE errors are measured in \(L^2(\sP_{A,V})\). The analysis is the CATE analogue of the projected dose-response analysis in Appendix~\ref{app:outcome_bridge_consistency} and Appendix~\ref{app:treatment_bridge_consistency_rademacher}. Thus we only introduce the additional CATE notation and state the resulting bounds.

\paragraph{Projected bridge residuals.}
Define
\[
m_0^{\mathrm{CATE}}(a,v,s,z)
:=
\E[Y\mid A=a,V=v,S=s,Z=z],
\]
and, for any candidate outcome bridge \(h(a,v,s,w)\),
\[
(T_h^{\mathrm{CATE}}h)(a,v,s,z)
:=
\E[h(a,v,s,W)\mid A=a,V=v,S=s,Z=z].
\]
The CATE outcome bridge satisfies
\[
T_h^{\mathrm{CATE}}h_0=m_0^{\mathrm{CATE}}.
\]
Similarly, define the CATE treatment-bridge target
\[
r_0^{\mathrm{CATE}}(a,v,s,w)
:=
\frac{p_{A\mid V}(a\mid v)}
{p_{A\mid S,V,W}(a\mid s,v,w)},
\]
and, for any candidate treatment bridge \(\varphi(a,v,s,z)\),
\[
(T_\varphi^{\mathrm{CATE}}\varphi)(a,v,s,w)
:=
\E[\varphi(a,v,s,Z)\mid A=a,V=v,S=s,W=w].
\]
The CATE treatment bridge satisfies
\[
T_\varphi^{\mathrm{CATE}}\varphi_0=r_0^{\mathrm{CATE}}.
\]

The projected bridge analyses from the dose-response case apply after replacing
\[
(A,X,Z)\quad\text{by}\quad(A,V,S,Z),
\qquad
(A,X,W)\quad\text{by}\quad(A,V,S,W).
\]
Therefore, for suitable approximation and Rademacher terms defined exactly as in the dose-response analysis, we write
\[
\mathcal R_{h}^{\mathrm{CATE}}
:=
\|T_h^{\mathrm{CATE}}\hat h-m_0^{\mathrm{CATE}}\|_{L^2(\sP_{A,V,S,Z})}^2
=
O_p(\rho_{h,\nu_h}^{\mathrm{CATE}}).
\]
On the treatment side, TreatmentNet is trained against a plug-in estimate
\(\hat r^{\mathrm{CATE}}\) of \(r_0^{\mathrm{CATE}}\). Define
\[
\mathcal R_{\varphi}^{\mathrm{CATE},\hat r}
:=
\|T_\varphi^{\mathrm{CATE}}\hat\varphi-\hat r^{\mathrm{CATE}}\|_{L^2(\sP_{A,V,S,W})}^2
=
O_p(\rho_{\varphi,\nu_\varphi}^{\mathrm{CATE},\hat r}),
\]
and
\[
\mathcal E_{r}^{\mathrm{CATE}}
:=
\|\hat r^{\mathrm{CATE}}-r_0^{\mathrm{CATE}}\|_{L^2(\sP_{A,V,S,W})}^2.
\]
Then
\[
\mathcal R_{\varphi}^{\mathrm{CATE},0}
:=
\|T_\varphi^{\mathrm{CATE}}\hat\varphi-r_0^{\mathrm{CATE}}\|_{L^2(\sP_{A,V,S,W})}^2
\le
2\mathcal R_{\varphi}^{\mathrm{CATE},\hat r}
+
2\mathcal E_{r}^{\mathrm{CATE}}.
\]

\paragraph{OutcomeNet CATE consistency.}
The population outcome-side CATE plug-in curve is
\[
\bar f_{\hat h}^{\mathrm{CATE}}(a,v)
:=
\E[\hat h(a,v,S,W)\mid V=v].
\]
Assume that the CATE treatment bridge exists and has bounded conditional second moment:
\[
C_\varphi^{\mathrm{CATE}\,2}
:=
\operatorname*{ess\,sup}_{(a,v)\sim P_{A,V}}
\E[\varphi_0(a,v,S,Z)^2\mid A=a,V=v]
<\infty.
\]
Then the same weak-residual argument as in the dose-response case gives
\[
\|\bar f_{\hat h}^{\mathrm{CATE}}-f_{\mathrm{CATE}}\|_{L^2(\sP_{A,V})}^2
\le
C_\varphi^{\mathrm{CATE}\,2}
\mathcal R_{h}^{\mathrm{CATE}}
=
O_p\!\left(
C_\varphi^{\mathrm{CATE}\,2}
\rho_{h,\nu_h}^{\mathrm{CATE}}
\right).
\]
Indeed, this follows from the CATE reweighting identity
\[
\E[
\varphi_0(a,v,S,Z)q(a,v,S,W)
\mid A=a,V=v]
=
\E[q(a,v,S,W)\mid V=v],
\]
applied to \(q=\hat h-h_0\), followed by conditioning on \((S,Z)\) and Cauchy--Schwarz.

\paragraph{Outcome-side CATE embedding regression.}
Unlike the population dose-response case, the outcome-side CATE component requires estimating the conditional average over \((S,W)\) given \(V=v\). For the tensorized OutcomeNet, write
\[
\hat h(a,v,s,w)
=
\left\langle
\ell_{\hat h}^{\mathrm{CATE}}(a,v),
C_{\hat h}^{\mathrm{CATE}}(s,w)
\right\rangle,
\]
where \(C_{\hat h}^{\mathrm{CATE}}(s,w)\) denotes the learned \((S,W)\)-feature vector and
\(\ell_{\hat h}^{\mathrm{CATE}}(a,v)\) denotes the corresponding learned \((A,V)\)-side coefficient vector. Define
\[
\eta_{\hat h}^{\mathrm{CATE}}(v)
:=
\E[C_{\hat h}^{\mathrm{CATE}}(S,W)\mid V=v].
\]
Then
\[
\bar f_{\hat h}^{\mathrm{CATE}}(a,v)
=
\left\langle
\ell_{\hat h}^{\mathrm{CATE}}(a,v),
\eta_{\hat h}^{\mathrm{CATE}}(v)
\right\rangle.
\]

Let
\[
D_{h,3}^{\mathrm{CATE}}
=
\{(V_i,C_i)\}_{i=1}^{n_{h,3}},
\qquad
C_i:=C_{\hat h}^{\mathrm{CATE}}(S_i,W_i),
\]
be an independent third-stage sample, and let
\(\mathcal G_{h,3}^{\mathrm{CATE}}\) be a vector-valued regression class
\(g:\mathcal V\to\mathbb R^{d_C}\). Define
\[
\hat\eta_{\hat h}^{\mathrm{CATE}}
\in
\argmin_{g\in\mathcal G_{h,3}^{\mathrm{CATE}}}
\frac1{n_{h,3}}
\sum_{i=1}^{n_{h,3}}
\|C_i-g(V_i)\|_2^2.
\]
The empirical outcome-side CATE estimator is
\[
\hat f_{\mathrm{CATE}}^{(h)}(a,v)
:=
\left\langle
\ell_{\hat h}^{\mathrm{CATE}}(a,v),
\hat\eta_{\hat h}^{\mathrm{CATE}}(v)
\right\rangle.
\]
Define
\[
\mathcal E_{\mu,h}^{\mathrm{CATE}}
:=
\|\hat f_{\mathrm{CATE}}^{(h)}
-
\bar f_{\hat h}^{\mathrm{CATE}}\|_{L^2(\sP_{A,V})}^2.
\]

Let
\[
\kappa_{h,3}^{\mathrm{CATE}}
:=
\inf_{g\in\mathcal G_{h,3}^{\mathrm{CATE}}}
\|g-\eta_{\hat h}^{\mathrm{CATE}}\|_{L^2(\sP_V)}^2,
\]
and define the third-stage loss class
\[
\mathcal L_{h,3}^{\mathrm{CATE}}
:=
\left\{
(c,v)\mapsto
\|c-g(v)\|_2^2
:
g\in\mathcal G_{h,3}^{\mathrm{CATE}}
\right\}.
\]
Assume \(\mathcal L_{h,3}^{\mathrm{CATE}}\) is uniformly bounded by
\(M_{h,3}^{\mathrm{CATE}}\), and set
\[
\Delta_{h,3}^{\mathrm{CATE}}(\delta)
:=
2\widehat{\mathfrak R}_{D_{h,3}^{\mathrm{CATE}}}
(
\mathcal L_{h,3}^{\mathrm{CATE}}
)
+
3M_{h,3}^{\mathrm{CATE}}
\sqrt{\frac{\log(2/\delta)}{2n_{h,3}}}.
\]

\begin{lemma}[Outcome-side CATE third-stage regression]
\label{lem:cate_outcome_third_stage_short}
Conditional on the trained OutcomeNet features, with probability at least \(1-\delta\),
\[
\|\hat\eta_{\hat h}^{\mathrm{CATE}}-\eta_{\hat h}^{\mathrm{CATE}}\|_{L^2(\sP_V)}^2
\le
\kappa_{h,3}^{\mathrm{CATE}}
+
2\Delta_{h,3}^{\mathrm{CATE}}(\delta).
\]
Consequently, if
\[
B_{\ell,h}^{\mathrm{CATE}\,2}
:=
\operatorname*{ess\,sup}_{v\sim P_V}
\E[
\|\ell_{\hat h}^{\mathrm{CATE}}(A,v)\|_2^2
\mid V=v]
<\infty,
\]
then
\[
\mathcal E_{\mu,h}^{\mathrm{CATE}}
\le
B_{\ell,h}^{\mathrm{CATE}\,2}
\left\{
\kappa_{h,3}^{\mathrm{CATE}}
+
2\Delta_{h,3}^{\mathrm{CATE}}(\delta)
\right\}.
\]
\end{lemma}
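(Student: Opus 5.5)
The plan has two parts. The first display is a vector-valued least-squares bound, proved in the same way as the uniform first-stage lemma and Lemma~\ref{lem:dr_final_stage_rademacher}. The second display then follows by applying Cauchy--Schwarz pointwise in \((a,v)\).

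\textbf{First display.} We condition on the trained OutcomeNet features, so that \(C_{\hat h}^{\mathrm{CATE}}\) and \(\ell_{\hat h}^{\mathrm{CATE}}\) are fixed functions. Because \(D_{h,3}^{\mathrm{CATE}}\) is independent, the pairs \((V_i,C_i)\) are i.i.d., and \(\eta_{\hat h}^{\mathrm{CATE}}(V)=\E[C\mid V]\).

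For any \(g\), the vector least-squares projection identity gives
\[
\E\|C-g(V)\|_2^2-\E\|C-\eta_{\hat h}^{\mathrm{CATE}}(V)\|_2^2=\|g-\eta_{\hat h}^{\mathrm{CATE}}\|_{L^2(\sP_V)}^2.
\]
The cross term vanishes by conditioning on \(V\).

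Let \(\Delta\) be the uniform deviation between population and empirical risk over \(\mathcal L_{h,3}^{\mathrm{CATE}}\). Empirical optimality of \(\hat\eta_{\hat h}^{\mathrm{CATE}}\) then gives, for every \(g\in\mathcal G_{h,3}^{\mathrm{CATE}}\),
\[
\|\hat\eta_{\hat h}^{\mathrm{CATE}}-\eta_{\hat h}^{\mathrm{CATE}}\|^2\le \|g-\eta_{\hat h}^{\mathrm{CATE}}\|^2+2\Delta.
\]
Taking the infimum over \(g\) produces \(\kappa_{h,3}^{\mathrm{CATE}}\). The losses take values in \([0,M_{h,3}^{\mathrm{CATE}}]\), so the stated uniform Rademacher deviation inequality gives \(\Delta\le\Delta_{h,3}^{\mathrm{CATE}}(\delta)\) with probability at least \(1-\delta\).

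\textbf{Second display.} For each \((a,v)\),
\[
\hat f_{\mathrm{CATE}}^{(h)}(a,v)-\bar f_{\hat h}^{\mathrm{CATE}}(a,v)=\langle \ell_{\hat h}^{\mathrm{CATE}}(a,v),\hat\eta_{\hat h}^{\mathrm{CATE}}(v)-\eta_{\hat h}^{\mathrm{CATE}}(v)\rangle .
\]
Cauchy--Schwarz in \(\mathbb R^{d_C}\) bounds its square by \(\|\ell_{\hat h}^{\mathrm{CATE}}(a,v)\|_2^2\,\|\hat\eta_{\hat h}^{\mathrm{CATE}}(v)-\eta_{\hat h}^{\mathrm{CATE}}(v)\|_2^2\).

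We integrate over \(\sP_{A,V}\) by writing it as \(\sP_V\otimes\sP_{A\mid V}\). The second factor depends only on \(v\), so the inner integral over \(A\mid V=v\) equals \(\E[\|\ell_{\hat h}^{\mathrm{CATE}}(A,v)\|_2^2\mid V=v]\). This is at most \(B_{\ell,h}^{\mathrm{CATE}\,2}\) for \(\sP_V\)-a.e.\ \(v\). Hence
\[
\mathcal E_{\mu,h}^{\mathrm{CATE}}\le B_{\ell,h}^{\mathrm{CATE}\,2}\,\|\hat\eta_{\hat h}^{\mathrm{CATE}}-\eta_{\hat h}^{\mathrm{CATE}}\|_{L^2(\sP_V)}^2 .
\]
Substituting the first display finishes the proof on the same event.

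\textbf{Main obstacle.} The only subtle point is measurability and conditioning. The loss class, the target \(\eta_{\hat h}^{\mathrm{CATE}}\), and the constant \(B_{\ell,h}^{\mathrm{CATE}}\) all depend on the trained features. The argument must therefore be run conditionally on the OutcomeNet training samples, relying on independence of \(D_{h,3}^{\mathrm{CATE}}\), exactly as in Lemma~\ref{lem:dr_final_stage_rademacher}.
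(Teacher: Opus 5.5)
Your proposal is correct and matches the paper's proof. The first display comes from the vector-valued least-squares projection identity, empirical optimality, and the uniform Rademacher deviation bound on \(\mathcal L_{h,3}^{\mathrm{CATE}}\); the second follows from pointwise Cauchy--Schwarz integrated over \(\sP_V\otimes\sP_{A\mid V}\) using the definition of \(B_{\ell,h}^{\mathrm{CATE}}\), and you spell out steps the paper only names.
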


\begin{proof}
Conditional on the trained features, this is ordinary vector-valued least-squares regression with response \(C_{\hat h}^{\mathrm{CATE}}(S,W)\) and covariate \(V\). Since
\[
\eta_{\hat h}^{\mathrm{CATE}}(V)
=
\E[C_{\hat h}^{\mathrm{CATE}}(S,W)\mid V,\hat h],
\]
the least-squares projection identity and the standard uniform concentration inequality yields the first claim. The second follows from Cauchy--Schwarz and the definition of \(B_{\ell,h}^{\mathrm{CATE}}\).
\end{proof}

Combining the population outcome bridge bound with Lemma~\ref{lem:cate_outcome_third_stage_short} gives
\[
\|\hat f_{\mathrm{CATE}}^{(h)}-f_{\mathrm{CATE}}\|_{L^2(\sP_{A,V})}^2
\le
2\mathcal E_{\mu,h}^{\mathrm{CATE}}
+
2C_\varphi^{\mathrm{CATE}\,2}\mathcal R_h^{\mathrm{CATE}}.
\]
Hence
\[
\|\hat f_{\mathrm{CATE}}^{(h)}-f_{\mathrm{CATE}}\|_{L^2(\sP_{A,V})}^2
=
O_p\!\left(
\kappa_{h,3}^{\mathrm{CATE}}
+
\Delta_{h,3}^{\mathrm{CATE}}
+
\rho_{h,\nu_h}^{\mathrm{CATE}}
\right),
\]
up to the fixed boundedness constants.

\paragraph{TreatmentNet CATE consistency.}
The population treatment-side plug-in curve is
\[
\bar f_{\hat\varphi}^{\mathrm{CATE}}(a,v)
:=
\E[Y\hat\varphi(a,v,S,Z)\mid A=a,V=v].
\]
Assume that the CATE outcome bridge exists and satisfies
\[
C_h^{\mathrm{CATE}\,2}
:=
\operatorname*{ess\,sup}_{(a,v)\sim P_{A,V}}
\E[h_0(a,v,S,W)^2\mid A=a,V=v]
<\infty.
\]
The treatment-side weak norm argument from the dose-response case gives
\[
\|\bar f_{\hat\varphi}^{\mathrm{CATE}}-f_{\mathrm{CATE}}\|_{L^2(\sP_{A,V})}^2
\le
C_h^{\mathrm{CATE}\,2}
\mathcal R_{\varphi}^{\mathrm{CATE},0}
\le
2C_h^{\mathrm{CATE}\,2}
\left(
\mathcal R_{\varphi}^{\mathrm{CATE},\hat r}
+
\mathcal E_r^{\mathrm{CATE}}
\right).
\]

The empirical TreatmentNet CATE curve is obtained by an ordinary third-stage regression of
\[
\Gamma_{\varphi}^{\mathrm{CATE}}
:=
Y\hat\varphi(A,V,S,Z)
\]
on \((A,V)\). Let
\[
\mathcal E_{\varphi,3}^{\mathrm{CATE}}
:=
\|\hat f_{\mathrm{CATE}}^{(\varphi)}
-
\bar f_{\hat\varphi}^{\mathrm{CATE}}\|_{L^2(\sP_{A,V})}^2.
\]
By the same third-stage Rademacher argument used for TreatmentNet in the dose-response section, with a scalar regression class on \(\mathcal A\times\mathcal V\),
\[
\mathcal E_{\varphi,3}^{\mathrm{CATE}}
\le
\kappa_{\varphi,3}^{\mathrm{CATE}}
+
2\Delta_{\varphi,3}^{\mathrm{CATE}}(\delta)
\]
with high probability. Therefore
\[
\|\hat f_{\mathrm{CATE}}^{(\varphi)}-f_{\mathrm{CATE}}\|_{L^2(\sP_{A,V})}^2
\le
2\mathcal E_{\varphi,3}^{\mathrm{CATE}}
+
4C_h^{\mathrm{CATE}\,2}
\left(
\mathcal R_{\varphi}^{\mathrm{CATE},\hat r}
+
\mathcal E_r^{\mathrm{CATE}}
\right).
\]

\paragraph{Doubly robust CATE consistency.}
Define the population plug-in DR-V1 CATE functional
\[
\bar f_{\mathrm{CATE}}^{\mathrm{DR1}}(a,v)
:=
\mu_{\hat h}^{\mathrm{CATE}}(a,v)
+
\kappa_{\hat h,\hat\varphi}^{\mathrm{CATE}}(a,v),
\]
where
\[
\mu_{\hat h}^{\mathrm{CATE}}(a,v)
:=
\E[\hat h(a,v,S,W)\mid V=v],
\]
and
\[
\kappa_{\hat h,\hat\varphi}^{\mathrm{CATE}}(a,v)
:=
\E[
\hat\varphi(a,v,S,Z)\{Y-\hat h(a,v,S,W)\}
\mid A=a,V=v].
\]
The CATE doubly robust identity is
\[
f_{\mathrm{CATE}}(a,v)
-
\bar f_{\mathrm{CATE}}^{\mathrm{DR1}}(a,v)
=
\E[
\{\varphi_0(a,v,S,Z)-\hat\varphi(a,v,S,Z)\}
\{h_0(a,v,S,W)-\hat h(a,v,S,W)\}
\mid A=a,V=v].
\]
As in the dose-response case, conditioning first on \((S,Z)\) and then on \((S,W)\) gives
\[
\|\bar f_{\mathrm{CATE}}^{\mathrm{DR1}}-f_{\mathrm{CATE}}\|_{L^2(\sP_{A,V})}^2
\le
\min\left\{
B_{e,\varphi}^{\mathrm{CATE}\,2}\mathcal R_h^{\mathrm{CATE}},
\;
B_{e,h}^{\mathrm{CATE}\,2}\mathcal R_\varphi^{\mathrm{CATE},0}
\right\},
\]
where
\[
B_{e,\varphi}^{\mathrm{CATE}\,2}
:=
\operatorname*{ess\,sup}_{(a,v)}
\E[
\{\hat\varphi(a,v,S,Z)-\varphi_0(a,v,S,Z)\}^2
\mid A=a,V=v],
\]
and
\[
B_{e,h}^{\mathrm{CATE}\,2}
:=
\operatorname*{ess\,sup}_{(a,v)}
\E[
\{\hat h(a,v,S,W)-h_0(a,v,S,W)\}^2
\mid A=a,V=v].
\]
Equivalently,
\[
\|\bar f_{\mathrm{CATE}}^{\mathrm{DR1}}-f_{\mathrm{CATE}}\|_{L^2(\sP_{A,V})}^2
\le
\min\left\{
B_{e,\varphi}^{\mathrm{CATE}\,2}\mathcal R_h^{\mathrm{CATE}},
\;
2B_{e,h}^{\mathrm{CATE}\,2}
\left(
\mathcal R_\varphi^{\mathrm{CATE},\hat r}
+
\mathcal E_r^{\mathrm{CATE}}
\right)
\right\}.
\]

Let
\[
\Gamma_{\mathrm{DR}}^{\mathrm{CATE}}
:=
\hat\varphi(A,V,S,Z)\{Y-\hat h(A,V,S,W)\}.
\]
The final DR residual network estimates
\[
\kappa_{\hat h,\hat\varphi}^{\mathrm{CATE}}(a,v)
=
\E[\Gamma_{\mathrm{DR}}^{\mathrm{CATE}}\mid A=a,V=v]
\]
by regressing \(\Gamma_{\mathrm{DR}}^{\mathrm{CATE}}\) on \((A,V)\). Let
\[
\mathcal E_{\mathrm{DR}}^{\mathrm{CATE}}
:=
\|\hat\kappa_{\mathrm{DR}}^{\mathrm{CATE}}
-
\kappa_{\hat h,\hat\varphi}^{\mathrm{CATE}}\|_{L^2(\sP_{A,V})}^2.
\]
By the same final-stage Rademacher argument as in Appendix~\ref{app:dr_ate_rademacher},
\[
\mathcal E_{\mathrm{DR}}^{\mathrm{CATE}}
\le
\kappa_{\mathrm{DR}}^{\mathrm{CATE}}
+
2\Delta_{\mathrm{DR}}^{\mathrm{CATE}}(\delta)
\]
with high probability.

The empirical DR-V1 CATE estimator is
\[
\hat f_{\mathrm{CATE}}^{\mathrm{DR1}}(a,v)
:=
\hat f_{\mathrm{CATE}}^{(h)}(a,v)
+
\hat\kappa_{\mathrm{DR}}^{\mathrm{CATE}}(a,v).
\]
Therefore,
\begin{align*}
\|\hat f_{\mathrm{CATE}}^{\mathrm{DR1}}-f_{\mathrm{CATE}}\|_{L^2(\sP_{A,V})}^2
\le\;&
4\mathcal E_{\mu,h}^{\mathrm{CATE}}
+
4\mathcal E_{\mathrm{DR}}^{\mathrm{CATE}}
\\
&+
2
\min\left\{
B_{e,\varphi}^{\mathrm{CATE}\,2}\mathcal R_h^{\mathrm{CATE}},
\;
2B_{e,h}^{\mathrm{CATE}\,2}
\left(
\mathcal R_\varphi^{\mathrm{CATE},\hat r}
+
\mathcal E_r^{\mathrm{CATE}}
\right)
\right\}.
\end{align*}
In particular,
\[
\|\hat f_{\mathrm{CATE}}^{\mathrm{DR1}}-f_{\mathrm{CATE}}\|_{L^2(\sP_{A,V})}^2
=
O_p\!\left(
\kappa_{h,3}^{\mathrm{CATE}}
+
\Delta_{h,3}^{\mathrm{CATE}}
+
\kappa_{\mathrm{DR}}^{\mathrm{CATE}}
+
\Delta_{\mathrm{DR}}^{\mathrm{CATE}}
+
\min\left\{
\rho_{h,\nu_h}^{\mathrm{CATE}},
\rho_{\varphi,\nu_\varphi}^{\mathrm{CATE},\hat r}
+
\mathcal E_r^{\mathrm{CATE}}
\right\}
\right),
\]
up to fixed boundedness constants. Thus DR-V1 is consistent if the outcome-side CATE embedding regression and the DR final-stage regression are consistent, and either the outcome-side projected bridge residual vanishes or the treatment-side projected residual and density-ratio error vanish.

\subsection{Doubly robust conditional dose-response consistency}
\label{app:att_consistency_rademacher}

We now state the consistency results for the conditional dose-response target
\[
f_{\mathrm{ATT}}(a,a')
:=
\E[Y^{(a)}\mid A=a'].
\]
Throughout this subsection, the anchor \(a'\) is fixed. All errors are therefore pointwise in \(a'\) and measured in \(L^2(\sP_A)\) over the intervention argument \(a\). Uniformity over \(a'\) would require the constants and third-stage regression errors below to be controlled uniformly over anchors.

The ATT analysis is the fixed-anchor analogue of the projected dose-response analysis in Appendix~\ref{app:outcome_bridge_consistency}, Appendix~\ref{app:treatment_bridge_consistency_rademacher}, and Appendix~\ref{app:dr_ate_rademacher}. We therefore only introduce the ATT-specific notation and state the resulting bounds.

\paragraph{Projected bridge residuals.}
The outcome bridge is the same as in the dose-response case. Let
\[
m_0(a,x,z):=\E[Y\mid A=a,X=x,Z=z],
\qquad
(T_hh)(a,x,z):=\E[h(a,x,W)\mid A=a,X=x,Z=z].
\]
Thus \(T_hh_0=m_0\), and the projected OutcomeNet analysis gives
\[
\mathcal R_h
:=
\|T_h\hat h-m_0\|_{L^2(\sP_{A,X,Z})}^2
=
O_p(\rho_{h,\nu_h}),
\]
with \(\rho_{h,\nu_h}\) defined by the outcome-side approximation and Rademacher terms from the dose-response section.

For the treatment bridge, the first-stage conditional embedding is also unchanged: it still estimates the conditional feature mean of \(Z\) given \((A,X,W)\). The fixed-anchor ATT target only changes the second-stage density-ratio target. Define
\[
r_0^{\mathrm{ATT}}(a,a',x,w)
:=
\frac{p_{X,W\mid A}(x,w\mid a')}
{p_{X,W\mid A}(x,w\mid a)}.
\]
For a candidate \(\varphi(a,a',x,z)\), define
\[
(T_\varphi^{\mathrm{ATT}}\varphi)(a,a',x,w)
:=
\E[\varphi(a,a',x,Z)\mid A=a,X=x,W=w].
\]
The fixed-anchor treatment bridge satisfies
\[
T_\varphi^{\mathrm{ATT}}\varphi_0^{\mathrm{ATT}}
=
r_0^{\mathrm{ATT}}.
\]
In practice, TreatmentNet is trained against a plug-in estimate
\(\hat r^{\mathrm{ATT}}\) of \(r_0^{\mathrm{ATT}}\). Define
\[
\mathcal R_\varphi^{\mathrm{ATT},\hat r}(a')
:=
\|T_\varphi^{\mathrm{ATT}}\hat\varphi^{\mathrm{ATT}}
-
\hat r^{\mathrm{ATT}}\|_{L^2(\sP_{A,X,W})}^2,
\]
and
\[
\mathcal E_r^{\mathrm{ATT}}(a')
:=
\|\hat r^{\mathrm{ATT}}-r_0^{\mathrm{ATT}}\|_{L^2(\sP_{A,X,W})}^2.
\]
Then
\[
\mathcal R_\varphi^{\mathrm{ATT},0}(a')
:=
\|T_\varphi^{\mathrm{ATT}}\hat\varphi^{\mathrm{ATT}}
-
r_0^{\mathrm{ATT}}\|_{L^2(\sP_{A,X,W})}^2
\le
2\mathcal R_\varphi^{\mathrm{ATT},\hat r}(a')
+
2\mathcal E_r^{\mathrm{ATT}}(a').
\]
By the treatment-side projected Rademacher theorem from Appendix~\ref{app:treatment_bridge_consistency_rademacher}, with the second-stage target replaced by \(r_0^{\mathrm{ATT}}(\cdot,a',\cdot,\cdot)\),
\[
\mathcal R_\varphi^{\mathrm{ATT},\hat r}(a')
=
O_p(\rho_{\varphi,\nu_\varphi}^{\mathrm{ATT},\hat r}(a')),
\]
where \(\rho_{\varphi,\nu_\varphi}^{\mathrm{ATT},\hat r}(a')\) is the corresponding plug-in projected treatment-bridge rate.

\paragraph{OutcomeNet ATT consistency.}
The population outcome-side ATT plug-in curve is
\[
\mu_{\hat h}^{\mathrm{ATT}}(a,a')
:=
\E[\hat h(a,X,W)\mid A=a'].
\]
Assume that the fixed-anchor treatment bridge exists and has bounded conditional second moment,
\[
C_\varphi^{\mathrm{ATT}}(a')^2
:=
\operatorname*{ess\,sup}_{a\sim \sP_A}
\E[
\{\varphi_0^{\mathrm{ATT}}(a,a',X,Z)\}^2
\mid A=a]
<\infty.
\]
The same weak-residual argument used for the dose-response outcome-side result gives
\[
\|\mu_{\hat h}^{\mathrm{ATT}}(\cdot,a')
-
f_{\mathrm{ATT}}(\cdot,a')\|_{L^2(\sP_A)}^2
\le
C_\varphi^{\mathrm{ATT}}(a')^2
\mathcal R_h
=
O_p\!\left(
C_\varphi^{\mathrm{ATT}}(a')^2\rho_{h,\nu_h}
\right).
\]

The empirical outcome-side ATT estimator requires estimating
\(\E[\hat h(a,X,W)\mid A=a']\). We learn a global conditional embedding of the learned \((X,W)\)-features given \(A\), and only then plug in the anchor. Write the tensorized OutcomeNet as
\[
\hat h(a,x,w)
=
\left\langle
\ell_{\hat h}(a),
C_{\hat h}(x,w)
\right\rangle,
\]
where \(C_{\hat h}(x,w)\) is the learned \((X,W)\)-feature vector and
\(\ell_{\hat h}(a)\) is the corresponding learned \(A\)-side coefficient vector. Define
\[
\eta_{\hat h}^{\mathrm{ATT}}(t)
:=
\E[C_{\hat h}(X,W)\mid A=t],
\qquad
\mu_{\hat h}^{\mathrm{ATT}}(a,a')
=
\left\langle
\ell_{\hat h}(a),
\eta_{\hat h}^{\mathrm{ATT}}(a')
\right\rangle.
\]
Let \(D_{h,3}^{\mathrm{ATT}}=\{(A_i,C_i)\}_{i=1}^{n_{h,3}}\), with
\(C_i=C_{\hat h}(X_i,W_i)\), be an independent third-stage sample. Let
\(\mathcal G_{h,3}^{\mathrm{ATT}}\) be a vector-valued regression class
\(g:\mathcal A\to\mathbb R^{d_C}\), and define
\[
\hat\eta_{\hat h}^{\mathrm{ATT}}
\in
\argmin_{g\in\mathcal G_{h,3}^{\mathrm{ATT}}}
\frac1{n_{h,3}}
\sum_{i=1}^{n_{h,3}}
\|C_i-g(A_i)\|_2^2.
\]
The empirical outcome-side ATT component is
\[
\hat f_{\mathrm{ATT}}^{(h)}(a,a')
:=
\left\langle
\ell_{\hat h}(a),
\hat\eta_{\hat h}^{\mathrm{ATT}}(a')
\right\rangle,
\]
and we define
\[
\mathcal E_{\mu,h}^{\mathrm{ATT}}(a')
:=
\|\hat f_{\mathrm{ATT}}^{(h)}(\cdot,a')
-
\mu_{\hat h}^{\mathrm{ATT}}(\cdot,a')\|_{L^2(\sP_A)}^2.
\]

Let
\[
\kappa_{h,3}^{\mathrm{ATT}}
:=
\inf_{g\in\mathcal G_{h,3}^{\mathrm{ATT}}}
\|g-\eta_{\hat h}^{\mathrm{ATT}}\|_{L^2(\sP_A)}^2
\]
and let
\[
\Delta_{h,3}^{\mathrm{ATT}}(\delta)
:=
2\widehat{\mathfrak R}_{D_{h,3}^{\mathrm{ATT}}}
(
\mathcal L_{h,3}^{\mathrm{ATT}}
)
+
3M_{h,3}^{\mathrm{ATT}}
\sqrt{\frac{\log(2/\delta)}{2n_{h,3}}},
\]
where
\[
\mathcal L_{h,3}^{\mathrm{ATT}}
:=
\left\{
(c,t)\mapsto
\|c-g(t)\|_2^2
:
g\in\mathcal G_{h,3}^{\mathrm{ATT}}
\right\}
\]
is assumed uniformly bounded by \(M_{h,3}^{\mathrm{ATT}}\). Conditional on the trained OutcomeNet features, the standard vector-valued regression argument gives
\[
\|\hat\eta_{\hat h}^{\mathrm{ATT}}-\eta_{\hat h}^{\mathrm{ATT}}\|_{L^2(\sP_A)}^2
\le
\kappa_{h,3}^{\mathrm{ATT}}
+
2\Delta_{h,3}^{\mathrm{ATT}}(\delta)
\]
with probability at least \(1-\delta\).

Since ATT evaluates the learned embedding at the fixed anchor \(a'\), we use the anchor-evaluation error
\[
\mathcal E_{\eta,h}^{\mathrm{ATT}}(a')
:=
\|\hat\eta_{\hat h}^{\mathrm{ATT}}(a')
-
\eta_{\hat h}^{\mathrm{ATT}}(a')\|_2^2.
\]
Then
\[
\mathcal E_{\mu,h}^{\mathrm{ATT}}(a')
\le
B_{\ell,h}^{\mathrm{ATT}\,2}
\mathcal E_{\eta,h}^{\mathrm{ATT}}(a'),
\qquad
B_{\ell,h}^{\mathrm{ATT}\,2}
:=
\E[\|\ell_{\hat h}(A)\|_2^2].
\]
If the fixed-anchor evaluation condition
\[
\mathcal E_{\eta,h}^{\mathrm{ATT}}(a')
\le
C_{\mathrm{eval}}^{\mathrm{ATT}}(a')
\|\hat\eta_{\hat h}^{\mathrm{ATT}}-\eta_{\hat h}^{\mathrm{ATT}}\|_{L^2(\sP_A)}^2
\]
holds, then
\[
\mathcal E_{\mu,h}^{\mathrm{ATT}}(a')
\le
B_{\ell,h}^{\mathrm{ATT}\,2}
C_{\mathrm{eval}}^{\mathrm{ATT}}(a')
\left\{
\kappa_{h,3}^{\mathrm{ATT}}
+
2\Delta_{h,3}^{\mathrm{ATT}}(\delta)
\right\}.
\]
For continuous \(A\), this is a pointwise evaluation condition; alternatively, it may be replaced by a sup-norm guarantee for the third-stage embedding regression.

Combining the population and embedding-regression terms,
\[
\|\hat f_{\mathrm{ATT}}^{(h)}(\cdot,a')
-
f_{\mathrm{ATT}}(\cdot,a')\|_{L^2(\sP_A)}^2
\le
2\mathcal E_{\mu,h}^{\mathrm{ATT}}(a')
+
2C_\varphi^{\mathrm{ATT}}(a')^2\mathcal R_h.
\]

\paragraph{TreatmentNet ATT consistency.}
The fixed-anchor treatment-side plug-in curve is
\[
\bar f_{\hat\varphi}^{\mathrm{ATT}}(a,a')
:=
\E[
Y\hat\varphi^{\mathrm{ATT}}(a,a',X,Z)
\mid A=a].
\]
Assume that an outcome bridge exists and satisfies
\[
C_h^{\mathrm{ATT}}(a')^2
:=
\operatorname*{ess\,sup}_{a\sim \sP_A}
\E[h_0(a,X,W)^2\mid A=a]
<\infty.
\]
Then the same weak-norm argument as in the dose-response TreatmentNet result gives
\[
\|\bar f_{\hat\varphi}^{\mathrm{ATT}}(\cdot,a')
-
f_{\mathrm{ATT}}(\cdot,a')\|_{L^2(\sP_A)}^2
\le
C_h^{\mathrm{ATT}}(a')^2
\mathcal R_\varphi^{\mathrm{ATT},0}(a')
\]
and hence
\[
\|\bar f_{\hat\varphi}^{\mathrm{ATT}}(\cdot,a')
-
f_{\mathrm{ATT}}(\cdot,a')\|_{L^2(\sP_A)}^2
\le
2C_h^{\mathrm{ATT}}(a')^2
\left\{
\mathcal R_\varphi^{\mathrm{ATT},\hat r}(a')
+
\mathcal E_r^{\mathrm{ATT}}(a')
\right\}.
\]

The empirical TreatmentNet ATT estimator is obtained by a final regression of
\[
\Gamma_\varphi^{\mathrm{ATT}}
:=
Y\hat\varphi^{\mathrm{ATT}}(A,a',X,Z)
\]
on \(A\). Let
\[
\mathcal E_{\varphi,3}^{\mathrm{ATT}}(a')
:=
\|\hat f_{\mathrm{ATT}}^{(\varphi)}(\cdot,a')
-
\bar f_{\hat\varphi}^{\mathrm{ATT}}(\cdot,a')\|_{L^2(\sP_A)}^2.
\]
By the same scalar third-stage Rademacher argument used for TreatmentNet in the dose-response section,
\[
\mathcal E_{\varphi,3}^{\mathrm{ATT}}(a')
\le
\kappa_{\varphi,3}^{\mathrm{ATT}}(a')
+
2\Delta_{\varphi,3}^{\mathrm{ATT}}(a';\delta)
\]
with high probability. Therefore,
\[
\|\hat f_{\mathrm{ATT}}^{(\varphi)}(\cdot,a')
-
f_{\mathrm{ATT}}(\cdot,a')\|_{L^2(\sP_A)}^2
\le
2\mathcal E_{\varphi,3}^{\mathrm{ATT}}(a')
+
4C_h^{\mathrm{ATT}}(a')^2
\left\{
\mathcal R_\varphi^{\mathrm{ATT},\hat r}(a')
+
\mathcal E_r^{\mathrm{ATT}}(a')
\right\}.
\]

\paragraph{Doubly robust ATT consistency.}
Define the population plug-in DR-V1 ATT functional
\[
\bar f_{\mathrm{ATT}}^{\mathrm{DR1}}(a,a')
:=
\mu_{\hat h}^{\mathrm{ATT}}(a,a')
+
\kappa_{\hat h,\hat\varphi}^{\mathrm{ATT}}(a,a'),
\]
where
\[
\kappa_{\hat h,\hat\varphi}^{\mathrm{ATT}}(a,a')
:=
\E[
\hat\varphi^{\mathrm{ATT}}(a,a',X,Z)
\{Y-\hat h(a,X,W)\}
\mid A=a].
\]
The fixed-anchor ATT doubly robust identity is
\[
f_{\mathrm{ATT}}(a,a')
-
\bar f_{\mathrm{ATT}}^{\mathrm{DR1}}(a,a')
=
\E[
\{\varphi_0^{\mathrm{ATT}}(a,a',X,Z)-\hat\varphi^{\mathrm{ATT}}(a,a',X,Z)\}
\{h_0(a,X,W)-\hat h(a,X,W)\}
\mid A=a].
\]
Consequently, with
\[
B_{e,\varphi}^{\mathrm{ATT}}(a')^2
:=
\operatorname*{ess\,sup}_{a\sim \sP_A}
\E[
\{\hat\varphi^{\mathrm{ATT}}(a,a',X,Z)-\varphi_0^{\mathrm{ATT}}(a,a',X,Z)\}^2
\mid A=a],
\]
and
\[
B_{e,h}^{\mathrm{ATT}}(a')^2
:=
\operatorname*{ess\,sup}_{a\sim \sP_A}
\E[
\{h_0(a,X,W)-\hat h(a,X,W)\}^2
\mid A=a],
\]
the same conditioning argument as in the ATE DR proof gives
\[
\|\bar f_{\mathrm{ATT}}^{\mathrm{DR1}}(\cdot,a')
-
f_{\mathrm{ATT}}(\cdot,a')\|_{L^2(\sP_A)}^2
\le
\min\left\{
B_{e,\varphi}^{\mathrm{ATT}}(a')^2\mathcal R_h,
\;
B_{e,h}^{\mathrm{ATT}}(a')^2\mathcal R_\varphi^{\mathrm{ATT},0}(a')
\right\}.
\]
Equivalently,
\[
\|\bar f_{\mathrm{ATT}}^{\mathrm{DR1}}(\cdot,a')
-
f_{\mathrm{ATT}}(\cdot,a')\|_{L^2(\sP_A)}^2
\le
\min\left\{
B_{e,\varphi}^{\mathrm{ATT}}(a')^2\mathcal R_h,
\;
2B_{e,h}^{\mathrm{ATT}}(a')^2
\left[
\mathcal R_\varphi^{\mathrm{ATT},\hat r}(a')
+
\mathcal E_r^{\mathrm{ATT}}(a')
\right]
\right\}.
\]

The final DR residual regression estimates
\[
\kappa_{\hat h,\hat\varphi}^{\mathrm{ATT}}(a,a')
=
\E[
\Gamma_{\mathrm{DR}}^{\mathrm{ATT}}
\mid A=a],
\qquad
\Gamma_{\mathrm{DR}}^{\mathrm{ATT}}
:=
\hat\varphi^{\mathrm{ATT}}(A,a',X,Z)
\{Y-\hat h(A,X,W)\}.
\]
Let
\[
\mathcal E_{\mathrm{DR}}^{\mathrm{ATT}}(a')
:=
\|\hat\kappa_{\mathrm{DR}}^{\mathrm{ATT}}(\cdot,a')
-
\kappa_{\hat h,\hat\varphi}^{\mathrm{ATT}}(\cdot,a')\|_{L^2(\sP_A)}^2.
\]
By the same final-stage Rademacher argument as in Appendix~\ref{app:dr_ate_rademacher},
\[
\mathcal E_{\mathrm{DR}}^{\mathrm{ATT}}(a')
\le
\kappa_{\mathrm{DR}}^{\mathrm{ATT}}(a')
+
2\Delta_{\mathrm{DR}}^{\mathrm{ATT}}(a';\delta)
\]
with high probability.

The empirical DR-V1 ATT estimator is
\[
\hat f_{\mathrm{ATT}}^{\mathrm{DR1}}(a,a')
:=
\hat f_{\mathrm{ATT}}^{(h)}(a,a')
+
\hat\kappa_{\mathrm{DR}}^{\mathrm{ATT}}(a,a').
\]
Therefore,
\begin{align*}
&\|\hat f_{\mathrm{ATT}}^{\mathrm{DR1}}(\cdot,a')
-
f_{\mathrm{ATT}}(\cdot,a')\|_{L^2(\sP_A)}^2
\\
&\quad\le
4\mathcal E_{\mu,h}^{\mathrm{ATT}}(a')
+
4\mathcal E_{\mathrm{DR}}^{\mathrm{ATT}}(a')
\\
&\qquad+
2\min\left\{
B_{e,\varphi}^{\mathrm{ATT}}(a')^2\mathcal R_h,
\;
2B_{e,h}^{\mathrm{ATT}}(a')^2
\left[
\mathcal R_\varphi^{\mathrm{ATT},\hat r}(a')
+
\mathcal E_r^{\mathrm{ATT}}(a')
\right]
\right\}.
\end{align*}
In particular,
\begin{align*}
\|\hat f_{\mathrm{ATT}}^{\mathrm{DR1}}(\cdot,a')
-
f_{\mathrm{ATT}}(\cdot,a')\|_{L^2(\sP_A)}^2
=
O_p\!\Big(
&\mathcal E_{\mu,h}^{\mathrm{ATT}}(a')
+
\kappa_{\mathrm{DR}}^{\mathrm{ATT}}(a')
+
\Delta_{\mathrm{DR}}^{\mathrm{ATT}}(a')
\\&+
\min\left\{
\rho_{h,\nu_h},
\rho_{\varphi,\nu_\varphi}^{\mathrm{ATT},\hat r}(a')
+
\mathcal E_r^{\mathrm{ATT}}(a')
\right\}
\Big),
\end{align*}
up to fixed boundedness constants. Thus DR-V1 is consistent for the fixed-anchor ATT curve if the outcome-side anchor embedding regression and the final DR residual regression are consistent, and either the outcome-side projected residual vanishes or the fixed-anchor treatment-side projected residual and density-ratio error vanish.

\section{Supplementary on numerical experiments}
\label{sec:Appendix_NumericalExperimentsSupp}
This appendix provides the experimental details omitted from the main text, including density-ratio estimation procedures, benchmark data-generating processes, implementation choices, hyperparameters, and additional results.

\subsection{Density ratio estimation algorithms used in simulations}
\label{section:densratio_estimation}

In this section, we summarize the density-ratio estimators used in our experiments. For population-level dose-response estimation, the treatment bridge requires the ratio
\[
r_{\mathrm{ATE}}(a,x,w)
=
\frac{p(a)}{p(a \mid x,w)}
=
\frac{p(a)\,p(x,w)}{p(a,x,w)}.
\]
For heterogeneous dose-response estimation, the corresponding ratio is
\[
r_{\mathrm{CATE}}(a,v,s,w)
=
\frac{p(a \mid v)}{p(a \mid v,s,w)}
=
\frac{p(a,v)\,p(v,s,w)}{p(a,v,s,w)\,p(v)}.
\]
We considered three estimation strategies, chosen according to the dimension and structure of the benchmark.

\paragraph{Kernel density estimation.}
For the low-dimensional synthetic dose-response benchmark and the synthetic heterogeneous benchmark, we estimate the required density ratios by kernel density estimation \citep{Chen01012017}. In the population-level setting, we separately estimate the marginal density of \(A\), the marginal density of \((X,W)\), and the joint density of \((A,X,W)\), and then form
\[
\hat r_{\mathrm{ATE}}(a,x,w)
=
\frac{\hat p(a)\,\hat p(x,w)}{\hat p(a,x,w)}.
\]
In the heterogeneous setting, we analogously estimate the four factors \(p(v)\), \(p(a,v)\), \(p(v,s,w)\), and \(p(a,v,s,w)\), and define
\[
\hat r_{\mathrm{CATE}}(a,v,s,w)
=
\frac{\hat p(a,v)\,\hat p(v,s,w)}{\hat p(a,v,s,w)\,\hat p(v)}.
\]
Our implementation uses Gaussian kernels and selects the bandwidth of each density estimator by a simple hold-out likelihood criterion over a log-spaced grid. For numerical stability, densities are evaluated in the log domain.

\paragraph{KLIEP.}
For the dSprite benchmark, direct density-ratio estimation was more stable than separately estimating the numerator and denominator densities. We therefore use the Kullback--Leibler Importance Estimation Procedure (KLIEP) \citep{KLIEP_Sugiyama}, which estimates the ratio directly without separately estimating the constituent densities. Since the treatment and outcome proxy are image-valued, we first compress them using a convolutional \(\beta\)-VAE \citep{higgins2017betavae}. In our implementation, the latent dimension is set to \(16\) and the target coefficient is \(\beta=1.0\). We use the encoder mean \(\mu\) as the learned low-dimensional representation. KLIEP is then applied in this latent space rather than on the raw 4096-dimensional images.

Concretely, we form numerator samples from the product-of-marginals construction \((\widetilde A, W)\), where \(\widetilde A\) is obtained by randomly permuting the treatment samples, and denominator samples from the observed joint \((A,W)\). KLIEP models the ratio by a nonnegative Gaussian kernel expansion
\[
w_\alpha(u)=\sum_{\ell=1}^b \alpha_\ell K_\sigma(u,c_\ell),
\qquad \alpha_\ell \ge 0,
\]
and selects the coefficients by maximizing the empirical log-likelihood on numerator samples subject to a normalization constraint on denominator samples. Kernel widths and the number of kernel centers are selected by the likelihood cross-validation procedure of \citet{KLIEP_Sugiyama}.

\paragraph{Conditional normalizing flows.}
For the high-dimensional synthetic dose-response benchmark, we estimate the population-level density ratio using conditional normalizing flows. Concretely, we fit one flow to the marginal density \(p(a)\) and a second flow to the conditional density \(p(a \mid x,w)\), and then define
\[
\hat r_{\mathrm{ATE}}(a,x,w)
=
\exp\!\bigl(
\log \hat p(a) - \log \hat p(a \mid x,w)
\bigr).
\]
This avoids direct nonparametric density estimation in a regime where KDE is less reliable. Our implementation uses a train/validation split for model fitting and monitors the negative log-likelihood on the validation set. The normalizing-flow family is implemented through conditional flows, with the context given by \((X,W)\). We used this approach only in the high-dimensional synthetic experiment.

\paragraph{ATT ratio from the ATE ratio.}
For conditional dose-response estimation with a fixed anchor treatment \(a'\), we do not fit a separate density-ratio model. Instead, we reuse the estimator of \(r_{\mathrm{ATE}}\) and obtain the required ATT ratio algebraically. In the setting with observed covariates \(X\), the treatment bridge for ATT involves
\[
r_{\mathrm{ATT}}(a,a',x,w)
:=
\frac{p(x,w\mid a')}{p(x,w\mid a)}.
\]
Using Bayes' rule together with the definition of \(r_{\mathrm{ATE}}\), we obtain
\[
r_{\mathrm{ATT}}(a,a',x,w)
=
\frac{p(x,w\mid a')}{p(x,w\mid a)}
=
\frac{p(a)\,p(a',x,w)}{p(a')\,p(a,x,w)}
=
\frac{r_{\mathrm{ATE}}(a,x,w)}{r_{\mathrm{ATE}}(a',x,w)}.
\]
Therefore, once \(\hat r_{\mathrm{ATE}}\) has been fitted, the ATT ratio is obtained by evaluating the same estimator at the observed treatment \(a\) and at \(a'\), and then taking their ratio. In the setting without additional covariates \(X\), the same identity reduces to
\[
r_{\mathrm{ATT}}(a,a',w)
=
\frac{p(w\mid a')}{p(w\mid a)}
=
\frac{r_{\mathrm{ATE}}(a,w)}{r_{\mathrm{ATE}}(a',w)}.
\]
This is computationally convenient in practice, since changing the variable \(a'\) only requires reevaluating the fitted ATE density-ratio estimator rather than retraining it.

\paragraph{Experiment-specific choices.}
In summary, we use KDE for the low-dimensional synthetic dose-response and synthetic heterogeneous benchmarks, KLIEP with \(\beta\)-VAE compression for the dSprite benchmark, and conditional normalizing flows for the high-dimensional synthetic benchmark.

\subsection{Detailed data generating processes}
\label{sec:app_dgps}

This section provides additional formal specifications for benchmarks whose full details are not included in the main text.

\paragraph{High-dimensional benchmark.} 
We adapt the high-dimensional proximal setting from \citet{singh2023kernelmethodsunobservedconfounding}. Each instance is generated through the following sequence:
\begin{itemize}
    \item \textbf{Unobserved Noise:} We draw $\{\epsilon_i\}_{i=1}^3 \sim \mathcal{N}(0, 1)$ and high-dimensional noise vectors $\nu_z \sim \mathcal{U}[-1, 1]^{d_z}$, $\nu_w \sim \mathcal{U}[-1, 1]^{d_w}$, where $d_z$ and $d_w$ are the dimensions of the proxies.
    \item \textbf{Latent Confounding:} The unobserved confounders are defined as $U_z = \epsilon_1 + \epsilon_3$ and $U_w = \epsilon_2 + \epsilon_3$.
    \item \textbf{Proxies:} The proxy variables are constructed by injecting the latent signal into the noise vectors: $Z = \nu_z + 0.25 U_z \mathbf{1}_{d_z}$ and $W = \nu_w + 0.25 U_w \mathbf{1}_{d_w}$.
    \item \textbf{Observed Covariates:} $X \sim \mathcal{N}(0, \mathbf{\Sigma})$, where $\mathbf{\Sigma}$ is a tridiagonal covariance matrix with $\mathbf{\Sigma}_{ii} = 1$ and $\mathbf{\Sigma}_{ij} = 0.5$ for $|i - j| = 1$.
\item \textbf{Treatment mechanism:}
The continuous treatment is assigned as
\begin{equation*}
    A
    =
    \Lambda\!\left(
    3\frac{X^\top \mathbf 1_{d_x}}{\sqrt{d_x}}
    +
    3\frac{Z^\top \mathbf 1_{d_z}}{\sqrt{d_z}}
    \right)
    +
    0.25\,U_w,
\end{equation*}
where \(\Lambda(t)=0.8\,\exp(t)/(1+\exp(t))+0.1\) is the truncated logistic link, mapping the linear signal into \((0.1,0.9)\). The normalization by \(1/\sqrt d\) keeps the scale of the linear terms comparable across dimensions.

\item \textbf{Outcome mechanism:}
The outcome is generated as
\begin{equation*}
    Y
    =
    \theta_{\mathrm{ATE}}(A)
    +
    1.2\left(
    \frac{X^\top \mathbf 1_{d_x}}{\sqrt{d_x}}
    +
    \frac{W^\top \mathbf 1_{d_w}}{\sqrt{d_w}}
    \right)
    +
    A X_1
    +
    0.25\,U_z,
\end{equation*}
where the true structural dose-response component is
\begin{equation*}
    \theta_{\mathrm{ATE}}(a)=a^2+1.2a.
\end{equation*}
\end{itemize}

\paragraph{Image-based dSprites benchmark.} 
We utilize the \emph{dSprites} dataset \citep{dsprites17}, consisting of $64 \times 64$ images generated from latent factors: \emph{scale}, \emph{rotation}, \emph{posX}, and \emph{posY}. We follow the proximal causal adaptation proposed by \citet{xu2021deep}.
\begin{itemize}
    \item \textbf{Treatment:} The treatment $A \in \R^{4096}$ is a flattened dSprite image with additive Gaussian noise. 
    \item \textbf{Causal Function:} The target function is defined by a weighted quadratic of the image pixels: $\theta_{\text{ATE}}(A) = ((\text{vec}(B)^\top A)^2 - 3000) / 500$, where $B_{ij} = |32 - j| / 32$.
    \item \textbf{Outcome:} The response is $Y = 12 (\text{posY} - 0.5)^2 \theta_{\text{ATE}}(A) + \mathcal{N}(0, 0.5^2)$.
    \item \textbf{Proxies:} The treatment proxy $Z \in \R^3$ consists of the ground-truth latent values for \emph{scale}, \emph{rotation}, and \emph{posX}. The outcome proxy $W$ is a separate image that shares the same \emph{posY} as the treatment image, while its other latent factors are held at fixed reference values (\emph{scale} = $0.8$, \emph{rotation} = $0$, \emph{posX} = $0.5$).
\end{itemize}

\paragraph{Synthetic conditional dose-response benchmark.}
We use the same observational data-generating process as in the synthetic low-dimensional dose-response benchmark, but change the estimand to the conditional dose-response curve
\[
f_{\mathrm{ATT}}(a;a') := \mathbb{E}[Y^{(a)} \mid A = a']
\]
for a fixed anchor treatment level \(a' \in \mathbb{R}\).

\textit{Observational data-generating process:}
Let
\[
U_1 \sim \mathrm{Unif}[-1,2], \qquad R \sim \mathrm{Unif}[0,1],
\]
and define
\[
U_2 := R - \mathbf{1}\{0 \le U_1 \le 1\}.
\]
The observed treatment, outcome, and proxy variables are generated as
\[
W_1 = U_2 + \varepsilon_{W,1}, \qquad W_2 = U_1 + \varepsilon_{W,2},
\]
\[
Z_1 = U_2 + \varepsilon_{Z,1}, \qquad Z_2 = U_1 + \varepsilon_{Z,2},
\]
\[
A = U_1 + \varepsilon_A,
\]
\[
Y = 3\cos\!\bigl(2(0.3U_2 + 0.3U_1 + 0.2) + 1.5A\bigr) + \varepsilon_Y,
\]
where
\[
\varepsilon_{W,1}, \varepsilon_{Z,2} \sim \mathrm{Unif}[-1,1], \qquad
\varepsilon_{W,2}, \varepsilon_{Z,1}, \varepsilon_A, \varepsilon_Y \sim \mathcal{N}(0,1),
\]
and all exogenous variables are mutually independent. The treatment proxy is \(Z=(Z_1,Z_2)\), the outcome proxy is \(W=(W_1,W_2)\), and the latent confounding is driven by \((U_1,U_2)\).

\textit{Potential outcomes and the oracle ATT curve:}
Under the intervention \(A=a\), the potential outcome is
\[
Y^{(a)} = 3\cos\!\bigl(0.6U_2 + 0.6U_1 + 0.4 + 1.5a\bigr) + \varepsilon_Y.
\]
Since \(\mathbb{E}[\varepsilon_Y]=0\), the target ATT curve is
\[
f_{\mathrm{ATT}}(a;a') = \mathbb{E}\!\left[3\cos\!\bigl(0.6U_2 + 0.6U_1 + 0.4 + 1.5a\bigr)\,\middle|\, A=a' \right].
\]

We first integrate out \(U_2\) conditional on \(U_1=u\). Because \(U_2 = R - \mathbf{1}\{0 \le u \le 1\}\) with \(R \sim \mathrm{Unif}[0,1]\), we obtain
\[
m(a,u) := \mathbb{E}[Y^{(a)} \mid U_1=u]
= 3 \int_0^1 \cos\!\bigl(0.6r + c(u,a)\bigr)\,dr,
\]
where
\[
c(u,a) := 0.6u - 0.6\mathbf{1}\{0 \le u \le 1\} + 0.4 + 1.5a.
\]
Evaluating the integral yields
\[
m(a,u)
=
5\Bigl[\sin\!\bigl(c(u,a)+0.6\bigr)-\sin\!\bigl(c(u,a)\bigr)\Bigr].
\]

Next, since \(A = U_1 + \varepsilon_A\) with \(\varepsilon_A \sim \mathcal{N}(0,1)\) and \(U_1 \sim \mathrm{Unif}[-1,2]\), Bayes' rule gives the posterior density
\[
p(u \mid A=a')
=
\frac{\phi(a'-u)\,\mathbf{1}\{-1 \le u \le 2\}}
{\int_{-1}^{2} \phi(a'-t)\,dt}
=
\frac{\phi(a'-u)\,\mathbf{1}\{-1 \le u \le 2\}}
{\Phi(2-a')-\Phi(-1-a')},
\]
where \(\phi\) and \(\Phi\) denote the standard normal density and distribution functions, respectively.

Therefore, the oracle conditional dose-response curve is
\[
f_{\mathrm{ATT}}(a;a')
=
\int_{-1}^{2} m(a,u)\,p(u \mid A=a')\,du,
\]
that is,
\[
f_{\mathrm{ATT}}(a;a')
=
\frac{5}{\Phi(2-a')-\Phi(-1-a')}
\int_{-1}^{2}
\Bigl[\sin\!\bigl(c(u,a)+0.6\bigr)-\sin\!\bigl(c(u,a)\bigr)\Bigr]\phi(a'-u)\,du.
\]

This is the ground-truth ATT curve used in the experiments. In practice, we evaluate the one-dimensional integral above numerically on a grid of intervention values \(a\), for each chosen anchor \(a'\).\looseness=-1

\subsection{Neural network structures and hyperparameters for the numerical experiments}
\label{appendix:NN_structures_for_experiments}
In this section, we provide the comprehensive experimental details for all the numerical benchmarks. This includes the specific hyperparameter configurations, and neural network architectures for both the Outcome Bridge and Treatment Bridge estimation procedures.

\subsubsection{Synthetic low-dimensional benchmark}

For all sample sizes \(N \in \{2000,5000,10000,15000,20000\}\), we used the same bridge architectures and the same optimization settings, except for the third-stage learning rate.

\begin{table}[H]
\centering
\caption{OutcomeNet featurizers for the synthetic low-dimensional benchmark.}
\label{tab:outcome_bridge_low_dim}
\begin{tabular}{@{}lccc@{}}
\toprule
\textbf{Layer} & \(\phi_{AZ,1}^{(h)}\) & \(\phi_{W,2}^{(h)}\) & \(\phi_{A,2}^{(h)}\) \\ 
\midrule
Input & Input(3) & Input(2) & Input(1) \\

1a & FC(3, 128) & FC(2, 128) & FC(1, 128) \\
1b & LN, GELU, Dropout(0.05) & LN, GELU, Dropout(0.05) & LN, GELU, Dropout(0.05) \\

2a & FC(128, 256) & FC(128, 256) & FC(128, 256) \\
2b & LN, GELU, Dropout(0.05) & LN, GELU, Dropout(0.05) & LN, GELU, Dropout(0.05) \\

3a & FC(256, 128) & FC(256, 16) & FC(256, 8) \\
3b & LN, GELU & LN, GELU & LN, GELU \\
\bottomrule
\end{tabular}
\end{table}

\begin{table}[H]
\centering
\caption{TreatmentNet featurizers for the synthetic low-dimensional benchmark.}
\label{tab:treatment_bridge_low_dim}
\begin{tabular}{@{}lccc@{}}
\toprule
\textbf{Layer} & \(\phi_{AW,1}^{(\varphi)}\) & \(\phi_{Z,2}^{(\varphi)}\) & \(\phi_{A,2}^{(\varphi)}\) \\ 
\midrule
Input & Input(3) & Input(2) & Input(1) \\

1a & FC(3, 512) & FC(2, 512) & FC(1, 512) \\
1b & LN, GELU, Dropout(0.05) & LN, GELU, Dropout(0.05) & LN, GELU, Dropout(0.05) \\

2a & FC(512, 1024) & FC(512, 1024) & FC(512, 1024) \\
2b & LN, GELU, Dropout(0.05) & LN, GELU, Dropout(0.05) & LN, GELU, Dropout(0.05) \\

3a & FC(1024, 128) & FC(1024, 16) & FC(1024, 32) \\
3b & LN, GELU & LN, GELU & LN, GELU \\
\bottomrule
\end{tabular}
\end{table}

\paragraph{Hyperparameters.}
All gradient-based featurizer updates and third-stage MLP updates used AdamW \citep{loshchilov2019decoupled}; second-stage linear heads were refined using L-BFGS updates. We used the following base configuration throughout this benchmark.
\begin{itemize}
    \item \textbf{Outcome bridge:} first-stage loss = MSE; second-stage loss = log-cosh; learning rates \(10^{-3}\) for all featurizers; \(100\) epochs; \(10\) first-stage updates and \(1\) second-stage update per outer iteration; \(10\) L-BFGS steps for the second-stage head; weight decay \(10^{-5}\). The proximal regularization schedules were \((10^{-4},10^{-2})\) for the first-stage head, \((10^{-5},10^{-3})\) for the inner first-stage solve used in the second stage, and \((10^{-3},10)\) for the second-stage head.
    
    \item \textbf{Treatment bridge:} first-stage loss = MSE; second-stage loss = log-cosh; learning rates \(5\times 10^{-4}\) for the first-stage featurizer and \(10^{-3}\) for both second-stage featurizers; \(100\) epochs; \(10\) first-stage updates and \(1\) second-stage update per outer iteration; \(15\) L-BFGS steps for the second-stage head; weight decay \(10^{-6}\). The proximal regularization schedules were \((10^{-5},10^{-3})\) for the first-stage head, \((10^{-5},10^{-4})\) for the inner first-stage solve, and \((10^{-5},10^{-1})\) for the second-stage head.
    
    \item \textbf{Third-stage regressions:} MLP with hidden widths \((32,64)\), dropout \(0.01\), MSE loss, \(100\) epochs, and weight decay \(10^{-6}\). The learning rate was \(10^{-3}\) for \(N=2000\) and \(5\times 10^{-4}\) for \(N \ge 5000\).
\end{itemize}

All proximal regularization parameters were exponentially annealed in all experiments.

\subsubsection{Synthetic high-dimensional benchmark}

For the synthetic high-dimensional benchmark, we used the same bridge architectures for all sample sizes. The only changes across sample sizes were the optimization settings of the bridge models, mainly the learning rates and the strength of the second-stage proximal regularization.

\begin{table}[H]
\centering
\caption{OutcomeNet featurizers for the synthetic high-dimensional benchmark.}
\label{tab:outcome_bridge_high_dim}
\begin{tabular*}{\textwidth}{@{\extracolsep{\fill}}lcccc@{}}
\toprule
\textbf{Layer} & \(\phi_{AXZ,1}^{(h)}\) & \(\phi_{W,2}^{(h)}\) & \(\phi_{A,2}^{(h)}\) & \(\phi_{X,2}^{(h)}\) \\
\midrule
Input & Input(111) & Input(10) & Input(1) & Input(100) \\

1a & FC(111, 256) & FC(10, 256) & FC(1, 256) & FC(100, 256) \\
1b & LN, GELU & LN, GELU & LN, GELU & LN, GELU \\
1c & BN, Dropout(0.05) & BN, Dropout(0.05) & BN, Dropout(0.05) & BN, Dropout(0.05) \\

2a & FC(256, 512) & FC(256, 512) & FC(256, 512) & FC(256, 512) \\
2b & LN, GELU & LN, GELU & LN, GELU & LN, GELU \\
2c & BN, Dropout(0.05) & BN, Dropout(0.05) & BN, Dropout(0.05) & BN, Dropout(0.05) \\

3a & FC(512, 256) & FC(512, 8) & FC(512, 8) & FC(512, 32) \\
3b & LN, GELU & LN & LN, GELU & LN, GELU \\
\bottomrule
\end{tabular*}
\end{table}

\begin{table}[H]
\centering
\caption{TreatmentNet featurizers for the synthetic high-dimensional benchmark.}
\label{tab:treatment_bridge_high_dim}
\begin{tabular*}{\textwidth}{@{\extracolsep{\fill}}lccc@{}}
\toprule
\textbf{Layer} & \(\phi_{AXW,1}^{(\varphi)}\) & \(\phi_{Z,2}^{(\varphi)}\) & \(\phi_{AX,2}^{(\varphi)}\) \\
\midrule
Input & Input(111) & Input(10) & Input(101) \\

1a & FC(111, 512) & FC(10, 512) & FC(101, 512) \\
1b & LN, GELU, BN & LN, GELU, BN & LN, GELU, BN \\

2a & FC(512, 1024) & FC(512, 1024) & FC(512, 1024) \\
2b & LN, GELU, BN & LN, GELU, BN & LN, GELU, BN \\

3a & FC(1024, 128) & FC(1024, 8) & FC(1024, 16) \\
3b & LN, GELU, BN & LN, GELU, BN & LN, GELU, BN \\
\bottomrule
\end{tabular*}
\end{table}

\paragraph{Hyperparameters.}
All gradient-based featurizer updates and third-stage MLP updates used AdamW \citep{loshchilov2019decoupled}; second-stage linear heads were refined using L-BFGS as described in Appendix~\ref{sec:DFPCL_Xu_Review}. We report below only the settings that changed across experiments or materially affected optimization.

\begin{itemize}
    \item \textbf{Outcome bridge.}
    \begin{itemize}
        \item \emph{All sample sizes:} first-stage loss = MSE; second-stage loss = log-cosh; \(100\) epochs; \(10\) first-stage updates and \(1\) second-stage update per outer iteration; \(5\) L-BFGS steps for the second-stage head; weight decay \(10^{-5}\); linear annealing for all proximal schedules.
        \item \emph{For \(N \le 10000\):} learning rates \(10^{-4}\) for all featurizers; proximal schedules \((5\times 10^{-3}, 10)\) for the first-stage head and the auxiliary inner first-stage solve; \((10,250)\) for the second-stage head.
        \item \emph{For \(N \ge 15000\):} learning rates \(5\times 10^{-5}\) for all featurizers; proximal schedules unchanged for the first-stage terms; \((50,500)\) for the second-stage head.
    \end{itemize}

    \item \textbf{Treatment bridge.}
    \begin{itemize}
        \item \emph{All sample sizes:} first-stage loss = MSE; second-stage loss = log-cosh; \(100\) epochs; \(10\) first-stage updates and \(1\) second-stage update per outer iteration; \(5\) L-BFGS steps for the second-stage head; weight decay \(10^{-8}\); linear annealing for all proximal schedules.
        \item \emph{For \(N \le 10000\):} learning rates \(10^{-4}\) for the first-stage featurizer and both second-stage featurizers; proximal schedules \((10^{-3},10^{-3})\) for the first-stage head and the auxiliary inner first-stage solve; \((10,150)\) for the second-stage head.
        \item \emph{For \(N \ge 15000\):} first-stage learning rate \(5\times 10^{-5}\), second-stage learning rates unchanged at \(10^{-4}\); proximal schedules \((10^{-3},10^{-3})\) for the first-stage terms and \((50,500)\) for the second-stage head.
    \end{itemize}

    \item \textbf{Third-stage regressions.}
    \begin{itemize}
        \item Hidden widths \((128,128)\), dropout \(0.1\), MSE loss, \(100\) epochs, learning rate \(10^{-4}\), and weight decay \(10^{-6}\).
    \end{itemize}
\end{itemize}

\subsubsection{dSprites benchmark}

For the dSprites benchmark, we again used the same bridge architectures for all sample sizes \(N \in \{2000,5000,10000,15000,20000\}\). The bridge networks process image-valued treatments and outcome proxies, so the input dimensions are substantially larger than in the synthetic benchmarks. The optimization settings were nearly constant across sample sizes; the only change was a smaller learning rate for the third-stage regressions at the largest sample sizes.

\begin{table}[H]
\centering
\caption{OutcomeNet featurizers for the dSprites benchmark.}
\label{tab:outcome_bridge_dsprite}
\begin{tabular*}{\textwidth}{@{\extracolsep{\fill}}lccc@{}}
\toprule
\textbf{Layer} & \(\phi_{AZ,1}^{(h)}\) & \(\phi_{W,2}^{(h)}\) & \(\phi_{A,2}^{(h)}\) \\
\midrule
Input & Input(4099) & Input(4096) & Input(4096) \\

1a & FC(4099, 1024) & FC(4096, 1024) & FC(4096, 1024) \\
1b & LN, ReLU & LN, ReLU & LN, ReLU \\

2a & FC(1024, 512) & FC(1024, 512) & FC(1024, 512) \\
2b & LN, ReLU & LN, ReLU & LN, ReLU \\
2c & -- & BN & BN \\

3a & FC(512, 128) & FC(512, 128) & FC(512, 128) \\
3b & LN, ReLU & LN, ReLU & LN, ReLU \\

4a & FC(128, 128) & FC(128, 16) & FC(128, 16) \\
4b & LN, ReLU & LN, ReLU & LN, ReLU \\
\bottomrule
\end{tabular*}
\end{table}

\begin{table}[H]
\centering
\caption{TreatmentNet featurizers for the dSprites benchmark.}
\label{tab:treatment_bridge_dsprite}
\begin{tabular*}{\textwidth}{@{\extracolsep{\fill}}lccc@{}}
\toprule
\textbf{Layer} & \(\phi_{AW,1}^{(\varphi)}\) & \(\phi_{Z,2}^{(\varphi)}\) & \(\phi_{A,2}^{(\varphi)}\) \\
\midrule
Input & Input(8192) & Input(3) & Input(4096) \\

1a & FC(8192, 1024) & FC(3, 8) & FC(4096, 1024) \\
1b & LN, ReLU & LN, ReLU & LN, ReLU \\
1c & BN, Dropout(0.05) & BN, Dropout(0.05) & BN, Dropout(0.05) \\

2a & FC(1024, 512) & FC(8, 4) & FC(1024, 512) \\
2b & LN, ReLU & LN, ReLU & LN, ReLU \\
2c & BN, Dropout(0.05) & BN, Dropout(0.05) & BN, Dropout(0.05) \\

3a & FC(512, 256) & FC(4, 8) & FC(512, 256) \\
3b & LN, ReLU & LN, ReLU & LN, ReLU \\
3c & BN, Dropout(0.05) & -- & BN, Dropout(0.05) \\

4a & FC(256, 128) & -- & FC(256, 32) \\
4b & LN, ReLU & -- & LN, ReLU \\
\bottomrule
\end{tabular*}
\end{table}

\paragraph{Hyperparameters.}
All gradient-based featurizer updates and third-stage MLP updates used AdamW \citep{loshchilov2019decoupled}; second-stage linear heads were refined using L-BFGS as described in Appendix~\ref{sec:DFPCL_Xu_Review}. We used the following base configuration in this benchmark.
\begin{itemize}
    \item \textbf{Outcome bridge.}
    \begin{itemize}
        \item first-stage loss = MSE; second-stage loss = log-cosh;
        \item learning rates \(10^{-4}\) for all featurizers;
        \item \(100\) epochs; \(10\) first-stage updates and \(1\) second-stage update per outer iteration;
        \item \(5\) L-BFGS steps for the second-stage head with learning rate \(10^{-2}\);
        \item weight decay \(10^{-5}\);
        \item exponential annealing with previous-weight proximal regularization enabled.
    \end{itemize}

    \item \textbf{Treatment bridge.}
    \begin{itemize}
        \item first-stage loss = MSE; second-stage loss = log-cosh;
        \item learning rates \(10^{-4}\) for the first-stage featurizer and both second-stage featurizers;
        \item \(50\) epochs; \(10\) first-stage updates and \(1\) second-stage update per outer iteration;
        \item \(5\) L-BFGS steps for the second-stage head with learning rate \(10^{-2}\);
        \item weight decay \(10^{-5}\);
        \item exponential annealing with previous-weight proximal regularization enabled.
    \end{itemize}

    \item \textbf{Third-stage regressions.}
    \begin{itemize}
        \item raw 4096-dimensional treatment image as input and scalar target;
        \item \(100\) epochs, dropout \(0.1\), and weight decay \(10^{-4}\);
        \item learning rate \(10^{-4}\) for \(N \le 10000\) and \(5\times 10^{-5}\) for \(N \ge 15000\).
    \end{itemize}
\end{itemize}

\subsubsection{Synthetic heterogeneous benchmark}

In this benchmark, the only observed covariate entering the heterogeneous effect is \(V\); there is no additional backdoor covariate beyond \(V\). We used the same bridge architectures and the same optimization settings for all sample sizes \(N \in \{2000,5000,10000,15000,20000\}\).

\begin{table}[H]
\centering
\scriptsize
\setlength{\tabcolsep}{2pt}
\renewcommand{\arraystretch}{1.05}
\caption{OutcomeNet featurizers for the synthetic heterogeneous benchmark.}
\label{tab:outcome_bridge_cate}
\begin{adjustbox}{max width=\textwidth}
\begin{tabular*}{\textwidth}{@{\extracolsep{\fill}}lcccc@{}}
\toprule
\textbf{Layer} & \(\phi_{AVZ,1}^{(h)}\) & \(\phi_{W,2}^{(h)}\) & \(\phi_{A,2}^{(h)}\) & \(\phi_{V,2}^{(h)}\) \\
\midrule
Input & Input(5) & Input(3) & Input(1) & Input(1) \\

1a & FC(5, 64) & FC(3, 64) & FC(1, 64) & FC(1, 64) \\
1b & LN, GELU, Dropout(0.01) & LN, GELU, Dropout(0.01) & LN, GELU, Dropout(0.01) & LN, GELU, Dropout(0.01) \\

2a & FC(64, 128) & FC(64, 128) & FC(64, 128) & FC(64, 128) \\
2b & LN, GELU, Dropout(0.01) & LN, GELU, Dropout(0.01) & LN, GELU, Dropout(0.01) & LN, GELU, Dropout(0.01) \\

3a & FC(128, 128) & FC(128, 16) & FC(128, 4) & FC(128, 8) \\
3b & LN, GELU & LN, GELU & LN, GELU & LN, GELU \\
\bottomrule
\end{tabular*}
\end{adjustbox}
\end{table}

\begin{table}[H]
\centering
\scriptsize
\setlength{\tabcolsep}{2pt}
\renewcommand{\arraystretch}{1.05}
\caption{TreatmentNet featurizers for the synthetic heterogeneous benchmark.}
\label{tab:treatment_bridge_cate}
\begin{adjustbox}{max width=\textwidth}
\begin{tabular*}{\textwidth}{@{\extracolsep{\fill}}lccc@{}}
\toprule
\textbf{Layer} & \(\phi_{AVW,1}^{(\varphi)}\) & \(\phi_{Z,2}^{(\varphi)}\) & \(\phi_{AV,2}^{(\varphi)}\) \\
\midrule
Input & Input(5) & Input(3) & Input(2) \\

1a & FC(5, 256) & FC(3, 256) & FC(2, 256) \\
1b & LN, GELU, BN, Dropout(0.10) & LN, GELU, BN, Dropout(0.10) & LN, GELU, BN, Dropout(0.10) \\

2a & FC(256, 512) & FC(256, 512) & FC(256, 512) \\
2b & LN, GELU, BN, Dropout(0.10) & LN, GELU, BN, Dropout(0.10) & LN, GELU, BN, Dropout(0.10) \\

3a & FC(512, 64) & FC(512, 16) & FC(512, 4) \\
3b & LN, GELU & LN, GELU & LN, GELU \\
\bottomrule
\end{tabular*}
\end{adjustbox}
\end{table}

\paragraph{Hyperparameters.}
All gradient-based featurizer updates and third-stage MLP updates used AdamW \citep{loshchilov2019decoupled}; second-stage linear heads were refined using L-BFGS as described in Appendix~\ref{sec:DFPCL_Xu_Review}. We used the following configuration throughout this benchmark.
\begin{itemize}
    \item \textbf{Outcome bridge:} first-stage loss = MSE; second-stage loss = log-cosh; learning rates \(10^{-4}\) for all featurizers; \(100\) epochs; \(10\) first-stage updates and \(1\) second-stage update per outer iteration; \(5\) L-BFGS steps for the second-stage head with learning rate \(10^{-2}\); weight decay \(10^{-5}\). The proximal regularization schedules were \((5\times 10^{-3},10^{-3})\) for the first-stage head, \((10^{-3},5\times 10^{-3})\) for the auxiliary first-stage solve inside the second stage, and \((10^{-2},1)\) for the second-stage head.

    \item \textbf{Treatment bridge:} first-stage loss = MSE; second-stage loss = log-cosh; learning rates \(10^{-3}\) for the first-stage featurizer and both second-stage featurizers; \(100\) epochs; \(10\) first-stage updates and \(1\) second-stage update per outer iteration; \(5\) L-BFGS steps for the second-stage head with learning rate \(10^{-2}\); weight decay \(10^{-6}\). The proximal regularization schedules were \((5\times 10^{-3},10^{-1})\) for the first-stage head, \((5\times 10^{-5},10^{-3})\) for the auxiliary first-stage solve, and \((10^{-3},1)\) for the second-stage head.

    \item \textbf{Third-stage scalar regressions:} input dimension \(2\), hidden widths \((64,128)\), dropout \(0.05\), MSE loss, \(100\) epochs, learning rate \(10^{-3}\), and weight decay \(10^{-6}\).
\end{itemize}
All proximal regularization parameters were exponentially annealed in this benchmark.

\subsection{Further experiments, ablation studies, and compute resources}
\label{appendix:ATT_experiment_and_ablations}
\paragraph{Conditional dose-response on the low-dimensional benchmark}

We evaluate conditional dose-response estimation on the low-dimensional synthetic benchmark by changing the target from the population dose-response to
$f_{\mathrm{ATT}}(a,a')=\E[Y^{(a)}\mid A=a']$ for fixed anchor values $a'\in\{-1,-0.5,0.5,1\}$. The observational data-generating process is the same as in the low-dimensional dose-response experiment. The oracle curve is computed by numerical integration over the posterior law of the latent confounder conditional on $A=a'$ as described in Appendix~\ref{sec:app_dgps}. Figure~\ref{fig:appendix_att_lowdim_all_anchors} reports the estimated conditional response curves for the four anchors. Across anchors, the doubly robust estimators almost always outperform the single-bridge neural estimators, OutcomeNet and TreatmentNet, confirming the benefit of combining outcome- and treatment-side bridge information.

\begin{figure*}[t]
    \centering
    \begin{subfigure}[b]{0.48\textwidth}
        \centering
        \includegraphics[width=\textwidth]{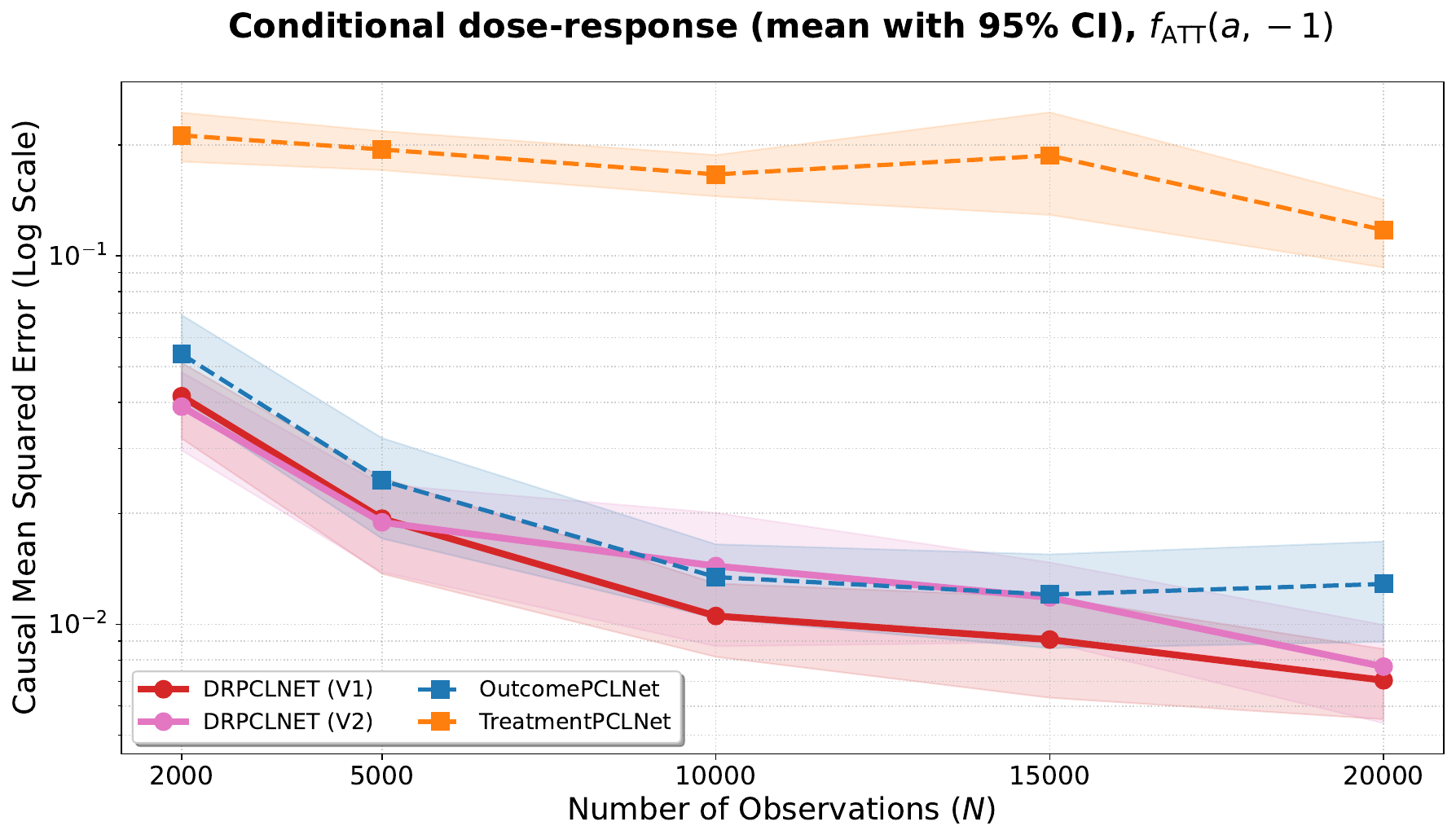}
        \caption{ $a'=-1$}
        \label{fig:att_lowdim_aprime_m1}
    \end{subfigure}
    \hfill
    \begin{subfigure}[b]{0.48\textwidth}
        \centering
        \includegraphics[width=\textwidth]{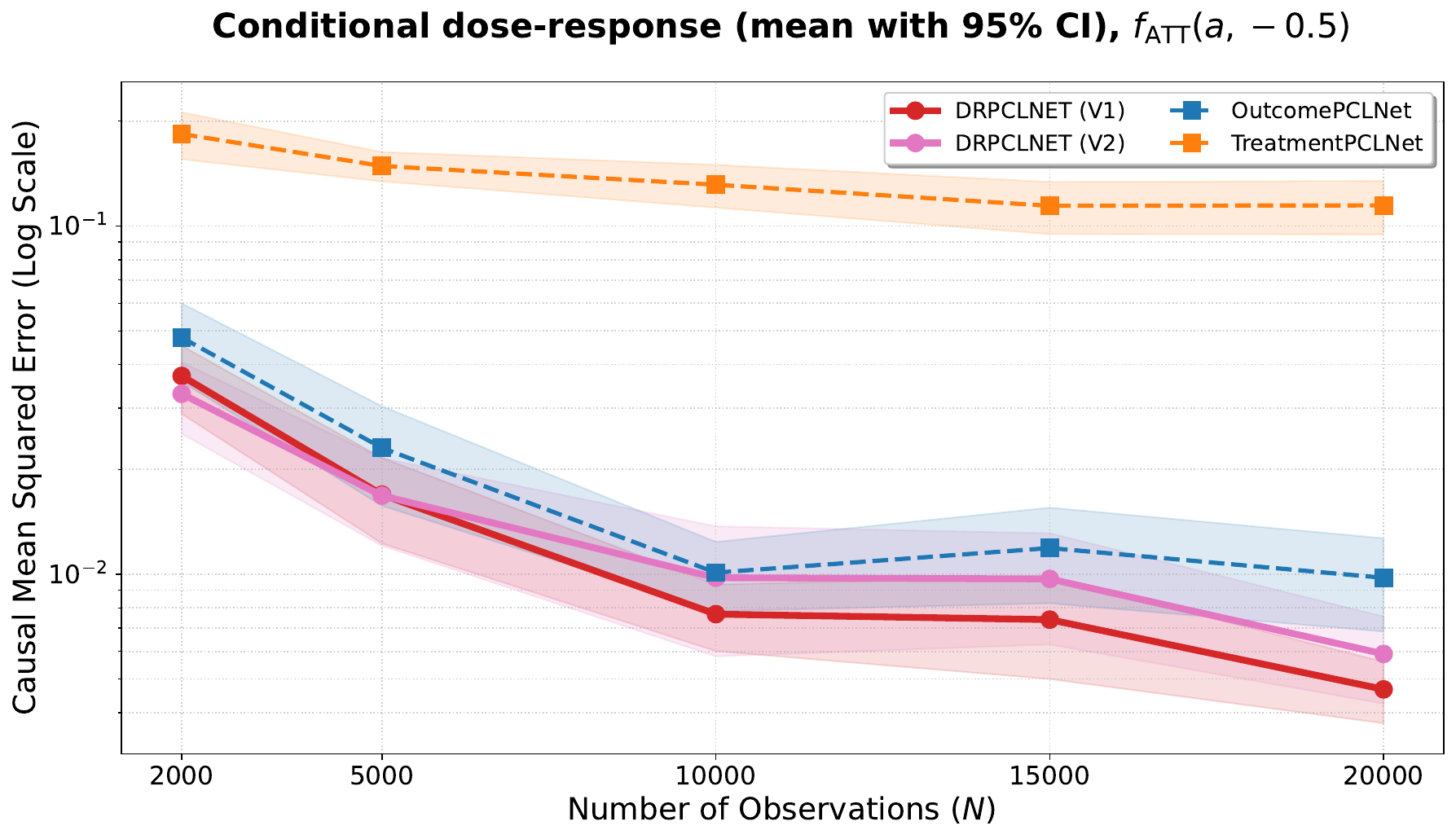}
        \caption{ $a'=-0.5$}
        \label{fig:att_lowdim_aprime_m0p5}
    \end{subfigure}

    \vspace{0.8em}

    \begin{subfigure}[b]{0.48\textwidth}
        \centering
        \includegraphics[width=\textwidth]{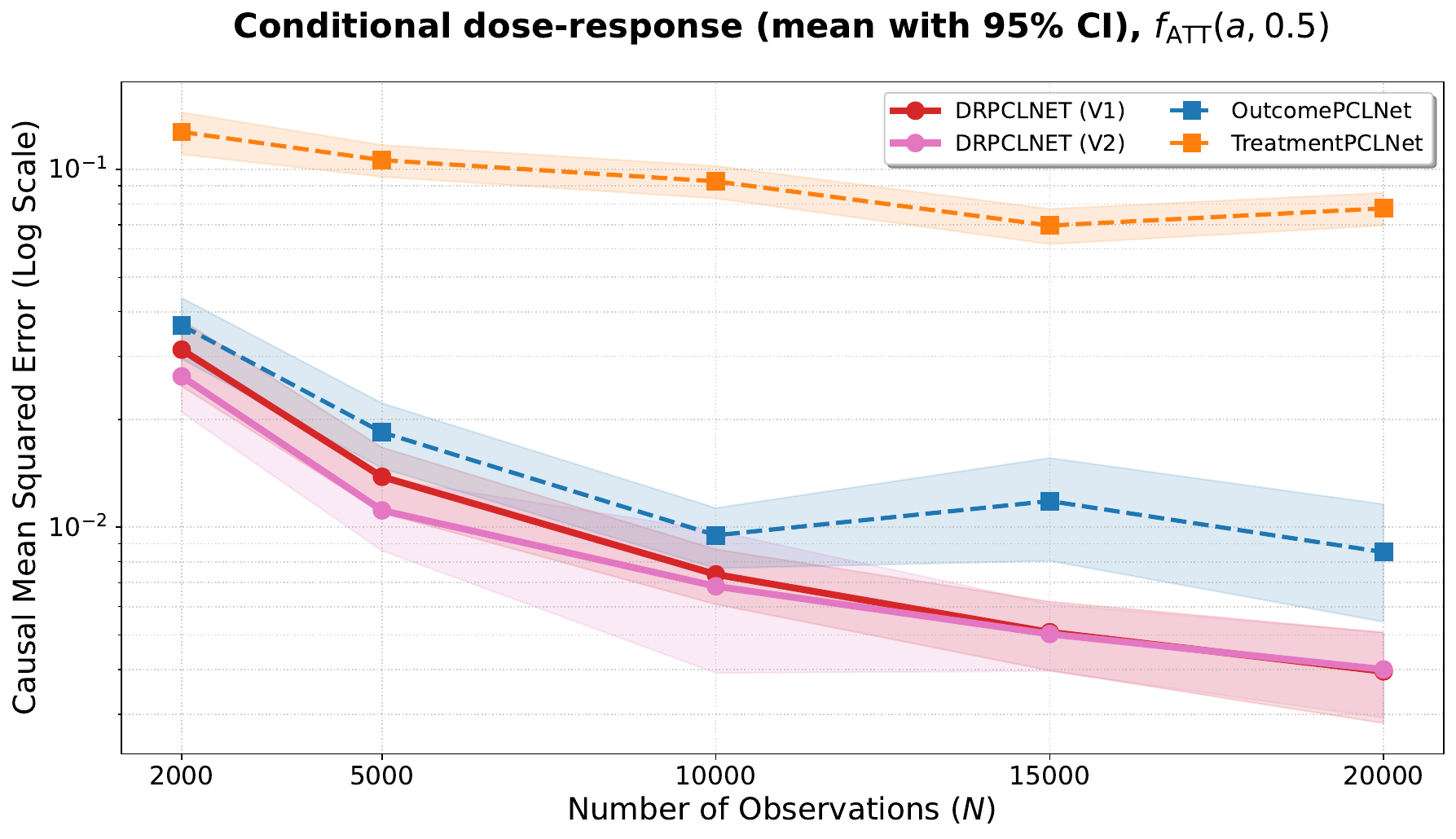}
        \caption{$a'=0.5$}
        \label{fig:att_lowdim_aprime_0p5}
    \end{subfigure}
    \hfill
    \begin{subfigure}[b]{0.48\textwidth}
        \centering
        \includegraphics[width=\textwidth]{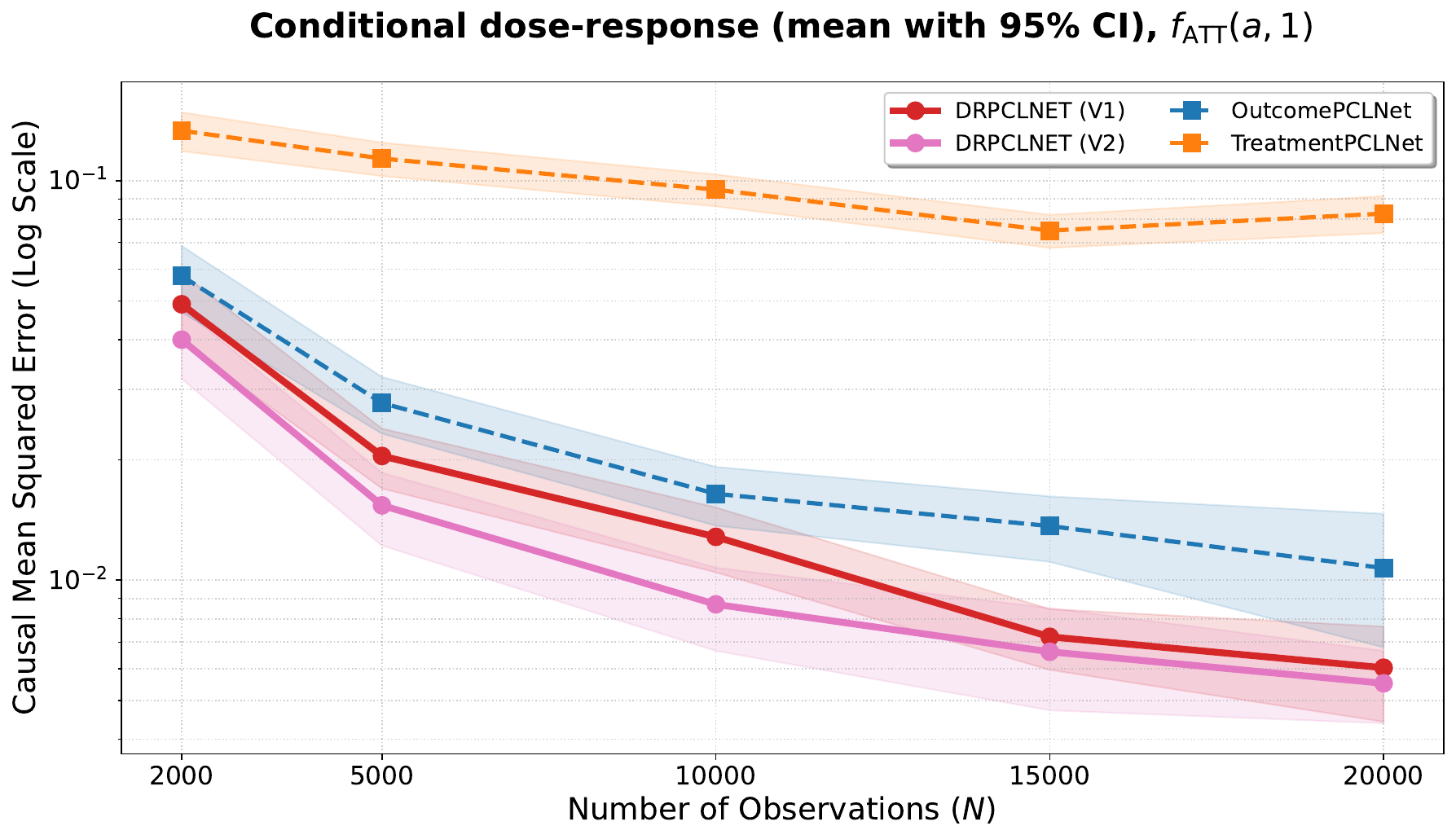}
        \caption{ $a'=1$}
        \label{fig:att_lowdim_aprime_1}
    \end{subfigure}

    \caption{Conditional dose-response estimation on the low-dimensional synthetic benchmark. Each panel reports the estimated curve $\hat f_{\mathrm{ATT}}(a,a')$ for a fixed anchor treatment value $a'$, together with the oracle conditional response curve.}
    \label{fig:appendix_att_lowdim_all_anchors}
\end{figure*}

\noindent\textbf{Controlled bridge misspecification analysis.}
We study controlled bridge misspecification on the synthetic low-dimensional dose-response benchmark. The goal is to assess whether the doubly robust estimator remains stable when one nuisance bridge is corrupted while the other is left unchanged. We fix the sample size to \(N=5000\), first train both bridges normally, and then perturb only the final second-stage linear head of one bridge. For outcome misspecification, we set
\(\vh \leftarrow \vh+|\varepsilon_h|\), where the entries of \(\varepsilon_h\) are sampled independently from \(\mathcal N(0,\sigma^2)\); for treatment misspecification, we analogously set \(\bm{\varphi}\leftarrow\bm{\varphi}+|\varepsilon_\varphi|\). The absolute value is taken entrywise. After perturbing one bridge, we train the doubly robust final-stage network using the perturbed bridge together with the unperturbed complementary bridge. We use noise levels \(\sigma\in\{0.2,0.5\}\). Figure~\ref{fig:misspecification_main_text} reports the resulting mean estimated dose-response curves over repeated runs, with shaded bands indicating one standard deviation. This experiment isolates bridge-level nuisance misspecification and directly compares the single-bridge estimators with their doubly robust counterparts. Despite corrupting one bridge, the doubly robust estimators remain close to the ground-truth causal response curve, illustrating their robustness to single-bridge misspecification in this benchmark.

\begin{figure*}[t]
    \centering
    \includegraphics[width=\textwidth]{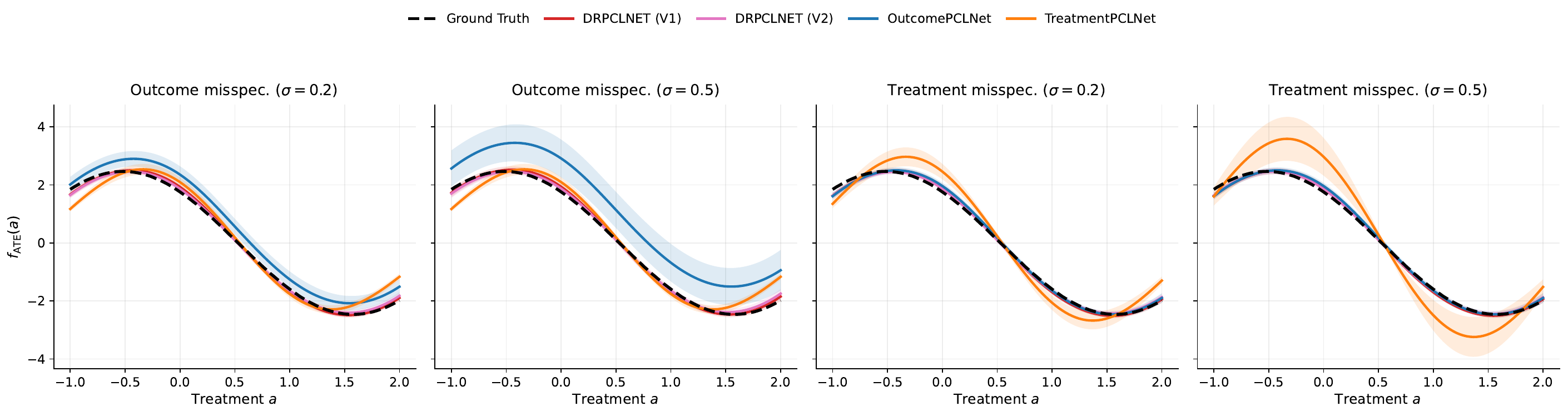}
    \caption{Bridge misspecification on the synthetic low-dimensional dose-response benchmark. Each panel corresponds to one perturbation scenario. After fitting the bridges, we perturb either the outcome-bridge or the treatment-bridge final second-stage head by additive Gaussian noise with standard deviation \(\sigma \in \{0.2,0.5\}\), while keeping the other bridge unchanged. Solid curves denote mean estimated dose-response functions across repeated runs, the dashed black curve is the ground truth, and the shaded regions show one standard error.}
    \label{fig:misspecification_main_text}
\end{figure*}

\paragraph{Bridge robustness under asymmetric proxy informativeness.}
We conduct noisy-proxy experiments similar to \citet[Section 13.5.1]{bozkurt2025density}, where the informativeness of the treatment proxy \(Z\) and outcome proxy \(W\) is varied across six settings. Let
\(
\Lambda(t)=\frac{0.8}{1+\exp(-t)}+0.1.
\)
For Settings 1--2, \(U\sim\mathrm{Beta}(5,4)\); for Settings 3--4, \(U\sim\mathrm{Beta}(8,4)\); and for Settings 5--6, \(U\sim\mathrm{Beta}(3,5)\). All noise variables below are sampled independently, with \(Z_1,W_1\sim\mathcal N(-1,0.1^2)\), \(Z_2,W_2\sim\mathcal N(1,0.1^2)\), and fresh uniform noise terms in each equation. The six settings are:
\[
\begin{array}{ll}
\textbf{Setting 1:}
&
W=\Lambda(U)+\epsilon_W,\quad
Z=(1-U)Z_1+UZ_2+\xi_Z,\\
&
A=0.1U+0.1Z+\epsilon_A,\quad
Y=(2U-1)+\cos(1.5A),
\\[0.5em]
\textbf{Setting 2:}
&
Z=\Lambda(U)+\epsilon_Z,\quad
W=(1-U)W_1+UW_2+\xi_W,\\
&
A=0.1U+0.1Z+\epsilon_A,\quad
Y=(2U-1)+\cos(1.5A),
\\[0.5em]
\textbf{Setting 3:}
&
W=U+\epsilon_W,\quad
Z=\Lambda((1-U)Z_1+UZ_2)+\xi_Z,\\
&
A=0.1U+0.1Z+\epsilon_A,\quad
Y=(2U-1)+\cos(1.5A),
\\[0.5em]
\textbf{Setting 4:}
&
Z=U+\epsilon_Z,\quad
W=\Lambda((1-U)W_1+UW_2)+\xi_W,\\
&
A=0.1U+0.1Z+\epsilon_A,\quad
Y=(2U-1)+\cos(1.5A),
\\[0.5em]
\textbf{Setting 5:}
&
W=-U^2+\epsilon_W,\quad
Z=\Lambda((1-U)Z_1+UZ_2)+\xi_Z,\\
&
A=0.25\sqrt{|U|}-0.2Z+\epsilon_A,\quad
Y=3W-0.1A-\cos(0.5A+5U),
\\[0.5em]
\textbf{Setting 6:}
&
Z=-U^2+\epsilon_Z,\quad
W=\Lambda((1-U)W_1+UW_2+\xi_W^{\pm}),\\
&
A=0.25\sqrt{|U|}-0.2Z+\epsilon_A,\quad
Y=3W-2A-\cos(10A+5U).
\end{array}
\]
Here \(\epsilon_W,\epsilon_Z,\epsilon_A\sim\mathrm{Unif}[0,1]\), \(\xi_Z,\xi_W\sim\mathrm{Unif}[0,100]\), and \(\xi_W^{\pm}\sim\mathrm{Unif}[-100,100]\). The ground-truth response curves are computed by Monte Carlo integration over \(U\). We use \(N=5000\) observations in each setting and report \(10^3\times\) causal MSE, averaged over \(10\) independent runs with standard errors.

\begin{table}[H]
\centering
\small
\caption{Classical kernel baselines under asymmetric proxy informativeness. Entries are \(10^3\times\) causal MSE, reported as mean \(\pm\) standard error over \(10\) runs.}
\label{tab:noisy_proxy_kernel}
\begin{tabular}{@{}lccc@{}}
\toprule
\textbf{Setting} & \textbf{DRKPV} & \textbf{KPV} & \textbf{KAP} \\
\midrule
1 & \(4.570 \pm 0.710\) & \(6.767 \pm 0.780\) & \(\mathbf{2.428 \pm 0.540}\) \\
2 & \(9.953 \pm 0.700\) & \(\mathbf{7.549 \pm 0.740}\) & \(12.570 \pm 1.400\) \\
3 & \(4.141 \pm 0.500\) & \(5.880 \pm 0.560\) & \(\mathbf{1.275 \pm 0.180}\) \\
4 & \(18.210 \pm 2.700\) & \(\mathbf{13.200 \pm 1.900}\) & \(18.090 \pm 1.500\) \\
5 & \(8.311 \pm 2.200\) & \(19.810 \pm 5.000\) & \(\mathbf{3.635 \pm 0.620}\) \\
6 & \(229.200 \pm 50.000\) & \(\mathbf{153.500 \pm 31.000}\) & \(191.900 \pm 21.000\) \\
\bottomrule
\end{tabular}
\end{table}

\begin{table}[H]
\centering
\small
\caption{Neural estimators under asymmetric proxy informativeness. Entries are \(10^3\times\) causal MSE, reported as mean \(\pm\) standard error over \(10\) runs.}
\label{tab:noisy_proxy_neural}
\begin{tabular}{@{}lcccc@{}}
\toprule
\textbf{Setting} & \textbf{DRPCLNET-V1} & \textbf{DRPCLNET-V2} & \textbf{OutcomeNet} & \textbf{TreatmentNet} \\
\midrule
1 & \(3.025 \pm 0.600\) & \(\mathbf{1.895 \pm 0.330}\) & \(4.372 \pm 0.620\) & \(4.087 \pm 0.510\) \\
2 & \(7.663 \pm 0.850\) & \(8.359 \pm 1.300\) & \(7.375 \pm 0.900\) & \(\mathbf{6.276 \pm 1.100}\) \\
3 & \(3.158 \pm 0.540\) & \(\mathbf{1.765 \pm 0.210}\) & \(4.643 \pm 1.100\) & \(4.713 \pm 0.410\) \\
4 & \(\mathbf{11.980 \pm 2.000}\) & \(23.570 \pm 4.400\) & \(13.480 \pm 1.800\) & \(23.690 \pm 3.600\) \\
5 & \(5.924 \pm 1.600\) & \(\mathbf{4.081 \pm 1.100}\) & \(8.571 \pm 1.700\) & \(7.380 \pm 1.800\) \\
6 & \(158.600 \pm 25.000\) & \(108.200 \pm 22.000\) & \(163.100 \pm 32.000\) & \(\mathbf{94.430 \pm 22.000}\) \\
\bottomrule
\end{tabular}
\end{table}

The kernel baselines reflect the designed proxy asymmetry for the single-bridge methods: KAP performs best in the odd-numbered settings, while KPV performs best in the even-numbered settings. However, in this finite-sample benchmark, DRKPV does not dominate the stronger single-bridge kernel baseline and is never the best kernel method across the six settings. The neural estimators are more stable across regimes: DRPCLNET-V1 improves on DRKPV in every setting, and one of the two DRPCLNET variants is the best neural method in four of the six settings. The doubly robust neural estimators do not uniformly dominate the best single-bridge neural ablation, since TreatmentNet is strongest in Settings 2 and 6, but they provide a competitive and stable combination of the outcome- and treatment-bridge routes under asymmetric proxy informativeness.

\paragraph{Heterogeneous response under broken proxy links.}
We further stress-test heterogeneous response estimation by deliberately breaking the proxy--confounder links in the synthetic CATE benchmark. We fix the sample size to \(N=2000\) and consider three variants: a broken \(W\)--\(U\) link, a broken \(Z\)--\(U\) link, and a setting where both links are broken. Starting from the original proxies, we replace
\[
W \leftarrow |W|+\epsilon_W,
\qquad
Z \leftarrow |Z|+\epsilon_Z,
\]
where the absolute value is taken entrywise and the entries of \(\epsilon_W\) and \(\epsilon_Z\) are sampled independently from \(\mathcal N(0,100^2)\). In the first variant only the transformation of \(W\) is applied, in the second only the transformation of \(Z\) is applied, and in the third both transformations are applied. Figure~\ref{fig:cate_broken_proxy_links} shows the resulting heterogeneous response curves. The outcome-bridge estimator is relatively stable in these examples, whereas TreatmentNet degrades visibly when the \(W\)--\(U\) link is broken. The doubly robust estimators remain close to the oracle curve in the single-link break settings, while the setting where both links are broken should be interpreted only as a qualitative stress test rather than a setting covered by the bridge-identification assumptions.

\begin{figure*}[t]
    \centering
    \begin{subfigure}[b]{0.32\textwidth}
        \centering
        \includegraphics[width=\textwidth]{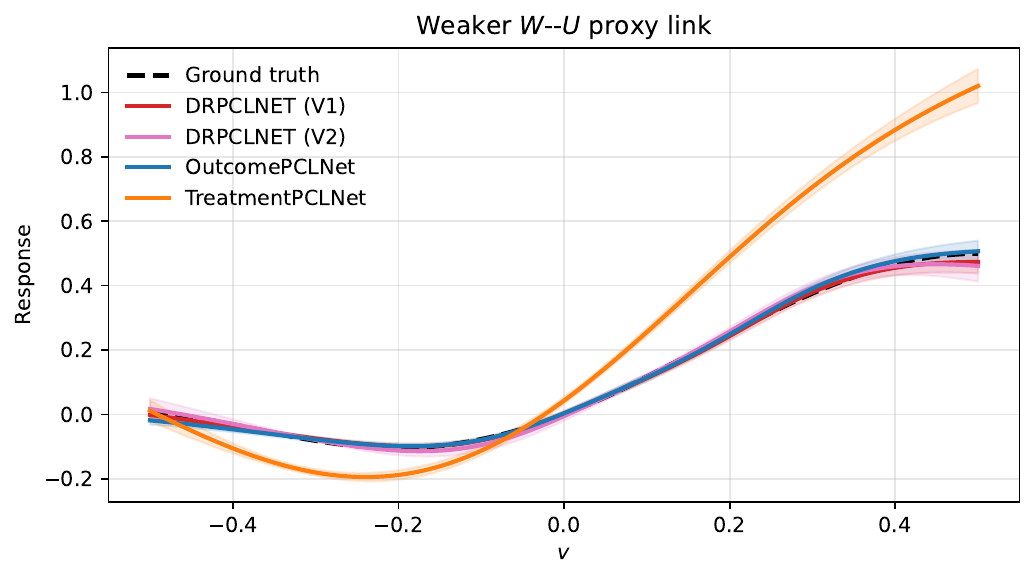}
        \caption{Broken \(W\)--\(U\) link}
        \label{fig:cate_broken_w_link}
    \end{subfigure}
    \hfill
    \begin{subfigure}[b]{0.32\textwidth}
        \centering
        \includegraphics[width=\textwidth]{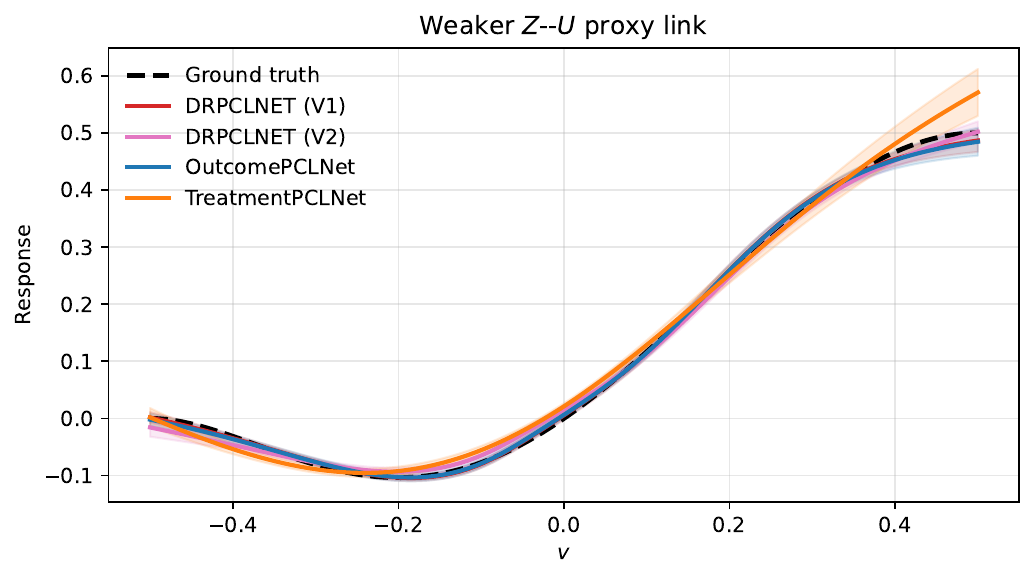}
        \caption{Broken \(Z\)--\(U\) link}
        \label{fig:cate_broken_z_link}
    \end{subfigure}
    \hfill
    \begin{subfigure}[b]{0.32\textwidth}
        \centering
        \includegraphics[width=\textwidth]{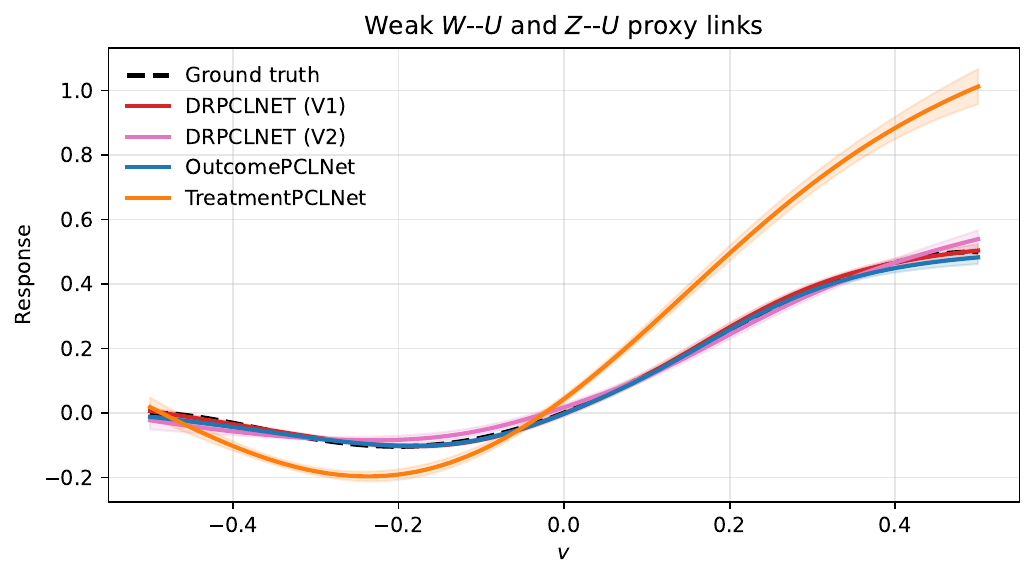}
        \caption{Both links broken}
        \label{fig:cate_broken_both_links}
    \end{subfigure}
    \caption{Heterogeneous response estimation under broken proxy--confounder links with \(N=2000\). The transformations use \(W\leftarrow |W|+\epsilon_W\) and/or \(Z\leftarrow |Z|+\epsilon_Z\), with entrywise Gaussian perturbations of standard deviation \(100\).}
    \label{fig:cate_broken_proxy_links}
\end{figure*}

\paragraph{Second-stage loss ablation.}
We ablate the regression loss used in the second stage of the neural bridge estimators. We compare log-cosh, Huber, MSE optimized with L-BFGS refinement, and MSE with closed-form linear-head updates, denoted by MSE-CF. Unless explicitly labeled MSE-CF, the second-stage linear heads are refined by L-BFGS. The third-stage regressions use MSE in all cases, so this ablation isolates the effect of the second-stage bridge loss. Tables~\ref{tab:loss_ablation_lowdim}, \ref{tab:loss_ablation_dsprite}, and \ref{tab:loss_ablation_cate} report the results at \(N=20000\).

\begin{table}[H]
\centering
\small
\caption{Second-stage loss ablation on the synthetic low-dimensional dose-response benchmark at \(N=20000\). Entries are \(10^2\times\) causal MSE, reported as mean \(\pm\) standard error. Lower is better.}
\label{tab:loss_ablation_lowdim}
\begin{adjustbox}{max width=\textwidth}
\begin{tabular}{@{}lcccc@{}}
\toprule
\textbf{Second-stage loss} & \textbf{DRPCLNET-V1} & \textbf{DRPCLNET-V2} & \textbf{OutcomeNet} & \textbf{TreatmentNet} \\
\midrule
log-cosh & \(0.714 \pm 0.389\) & \(0.693 \pm 0.343\) & \(1.008 \pm 0.833\) & \(8.739 \pm 4.255\) \\
Huber & \(0.683 \pm 0.359\) & \(0.668 \pm 0.319\) & \(1.037 \pm 0.771\) & \(8.606 \pm 4.952\) \\
MSE & \(0.757 \pm 0.381\) & \(0.785 \pm 0.395\) & \(0.948 \pm 0.755\) & \(9.754 \pm 6.537\) \\
MSE-CF & \(0.789 \pm 0.417\) & \(0.737 \pm 0.365\) & \(0.962 \pm 0.741\) & \(9.984 \pm 6.435\) \\
\bottomrule
\end{tabular}
\end{adjustbox}
\end{table}

\begin{table}[H]
\centering
\small
\caption{Second-stage loss ablation on the dSprites dose-response benchmark at \(N=20000\). Entries are causal MSE, reported as mean \(\pm\) standard error. Lower is better.}
\label{tab:loss_ablation_dsprite}
\begin{adjustbox}{max width=\textwidth}
\begin{tabular}{@{}lcccc@{}}
\toprule
\textbf{Second-stage loss} & \textbf{DRPCLNET-V1} & \textbf{DRPCLNET-V2} & \textbf{OutcomeNet} & \textbf{TreatmentNet} \\
\midrule
log-cosh & \(7.925 \pm 1.925\) & \(7.912 \pm 1.940\) & \(8.670 \pm 2.062\) & \(25.443 \pm 1.097\) \\
Huber & \(8.187 \pm 2.061\) & \(8.185 \pm 2.018\) & \(8.946 \pm 2.329\) & \(25.412 \pm 1.024\) \\
MSE & \(11.549 \pm 9.453\) & \(11.543 \pm 9.580\) & \(12.186 \pm 9.458\) & \(25.608 \pm 1.138\) \\
MSE-CF & \(8.577 \pm 2.044\) & \(8.633 \pm 2.155\) & \(9.092 \pm 2.057\) & \(25.285 \pm 1.103\) \\
\bottomrule
\end{tabular}
\end{adjustbox}
\end{table}

\begin{table}[H]
\centering
\small
\caption{Second-stage loss ablation on the synthetic heterogeneous benchmark at \(N=20000\). Entries are \(10^3\times\) causal MSE, reported as mean \(\pm\) standard error. Lower is better.}
\label{tab:loss_ablation_cate}
\begin{adjustbox}{max width=\textwidth}
\begin{tabular}{@{}lcccc@{}}
\toprule
\textbf{Second-stage loss} & \textbf{DRPCLNET-V1} & \textbf{DRPCLNET-V2} & \textbf{OutcomeNet} & \textbf{TreatmentNet} \\
\midrule
log-cosh & \(0.101 \pm 0.045\) & \(0.112 \pm 0.058\) & \(0.325 \pm 0.222\) & \(1.058 \pm 0.394\) \\
Huber & \(0.101 \pm 0.042\) & \(0.114 \pm 0.062\) & \(0.351 \pm 0.243\) & \(1.037 \pm 0.363\) \\
MSE & \(0.106 \pm 0.053\) & \(0.114 \pm 0.059\) & \(0.445 \pm 0.338\) & \(1.068 \pm 0.448\) \\
MSE-CF & \(0.102 \pm 0.046\) & \(0.112 \pm 0.061\) & \(0.362 \pm 0.283\) & \(1.051 \pm 0.368\) \\
\bottomrule
\end{tabular}
\end{adjustbox}
\end{table}

The loss choice has little effect on the low-dimensional and heterogeneous benchmarks: the doubly robust estimators remain stable across log-cosh, Huber, MSE, and MSE-CF, and consistently outperform the corresponding single-bridge neural estimators. The dSprites benchmark shows a clearer difference. Robust losses, especially log-cosh and Huber, improve over MSE and substantially reduce variability; MSE-CF narrows this gap but remains slightly worse than log-cosh for the doubly robust estimators. These results motivate our use of log-cosh as the default second-stage bridge loss in the main experiments.

\paragraph{Compute resources. } All experiments were run on a Linux x86\_64 compute node with Python 3.12.12. GPU experiments used single-node SLURM jobs with one task, eight CPU cores, 10GB system memory, and a 24-hour wall-time limit. Unless otherwise stated, each job used a single NVIDIA RTX A4500 GPU with driver version 580.95.05 and CUDA 13.0; the GPU provided approximately 20GB of memory. 

\end{document}